\documentclass[onecolumn]{IEEEtran}

\usepackage[utf8]{inputenc}
\usepackage{setspace}
\usepackage{color}
\usepackage{soul}
\usepackage{booktabs}
\usepackage{amsmath}
\usepackage{amssymb}
\usepackage{amsthm}
\usepackage{amsfonts}
\usepackage{mathtools}
\usepackage{stmaryrd}
\usepackage{bbm}
\usepackage[mathscr]{euscript}
\usepackage{mathrsfs}

\usepackage{graphicx}
\usepackage{epstopdf}
\usepackage{float}
\usepackage{caption}
\usepackage{subcaption} 

\usepackage{algorithm}
\usepackage{algpseudocode}

\usepackage{multirow}

\usepackage[round,authoryear]{natbib}
\usepackage[colorlinks=true,citecolor=blue,linkcolor=blue]{hyperref}

\newtheorem{theorem}{Theorem}

\newtheorem{remark}{Remark}
\newtheorem{lemma}{Lemma}

\newtheorem{defn}{Definition}

\newtheorem{example}{Example}
\newcommand{\remove}[1]{}

\title{\Large \bf Cascading Bandits with Noisy Observations: \\ Algorithms and Regret Bounds}

\author{
  Yashaswini Kaluguri$^*$
  \and
  \hspace{2cm} 
  Nikhil Karamchandani$^+$
}

\date{\vspace{-5ex}}
\begin{document}

\maketitle
\let\thefootnote\relax
\footnotetext{$^*$Department of Electrical Engineering, Indian Institute of Technology Bombay, Mumbai, India (\texttt{yashkaluguri2005@gmail.com}).}

\let\thefootnote\relax
\footnotetext{$^+$Department of Electrical Engineering, Indian Institute of Technology Bombay, Mumbai, India (\texttt{nikhilk@ee.iitb.ac.in}).}
\begin{abstract}
We propose a novel stochastic cascading bandit setup where probing an ordered shortlist of $K$ out of $L$ arms incurs position-dependent costs, and the learner observes only noisy feedback regarding each arm's true output. We identify a sub-class of noise models for which the optimal shortlist can be found by sorting arms according to a per-arm score: such a score exists for both selection and ordering when the channel has no false positives, and for ordering alone (but not selection) when measurements are costless; however in general neither selection nor ordering is separable once noise and cost are both present. For the case where both selection and ordering scores exist, we provide a UCB-based algorithm with $O((L-K)\log T)$ instance-dependent regret over a time horizon $T$, which matches an algorithm-agnostic lower  bound (up to constant factors). For the more challenging scenario where scores don't exist, we develop a new algorithmic template, DP-UCB, that embeds optimism directly inside an exact  dynamic program, yielding $O\big(LK\log T\big)$ regret. Finally, we validate our theoretical results through extensive numerical evaluations and simulations over both synthetic and real-world datasets.
\end{abstract}
\section{Introduction}
Stochastic multi-armed bandits (MABs) \cite{lattimore2020bandit, cesa2006prediction, auer2002finite} are a canonical framework for studying sequential decision-making, with applications ranging from clinical trials \cite{ThananjeyanKSGG21} and crowdsourcing \cite{DidwaniaNH22} to recommendation systems \cite{yao2022learning} and hyperparameter tuning \cite{li2018hyperband}. 

Cascading bandits \cite{kveton2015cascading, combes2015learning} were introduced to model online learning-to-rank (OLTR) problems, which capture user interactions with ranked lists in applications such as web search and recommendation systems. In this model, there is a catalog of $L$ items (arms), each with their own \textit{attraction} level for the user. In each round, the learner presents an ordered shortlist of $K$ items to the user, who then proceeds to sequentially examine the items in the list, and selects the first attractive item. For example, in the case of web search, this might be manifested as a click. Unlike standard stochastic bandits, the feedback to the leaner is not only that the selected item is attractive, but also that the items preceding it in the list were found unattractive, while items following it remain unobserved. This intricate feedback structure makes this problem interesting as well as challenging, and has therefore led to a significant amount of literature studying various online-learning metrics such as \textit{regret} \cite{kveton2015cascading, vial2022minimax, cheung2019thompson} and  \textit{sample complexity} for best arm identification \cite{zhong2020best}, as well as different aspects such as observations costs \cite{zhou2018cost, cheng2022cascading}, diversity of rankings \cite{combes2015learning, magureanu2017online}, multiple clicks \cite{katariya2016dcm}, and variable list sizes \cite{santara2022learning}. 

In this work, we study a variant of cascading bandits with noisy (and costly) observations, wherein the feedback obtained from each probe provides only a noisy indication of the arm's true output to the learner. Such noisy feedback arises naturally in cascades of machine-learning models \cite{chen2023frugalgpt, enomoro2021learning, jitkrittum2023confidence}, where the output of each model may be assessed by a lightweight scoring or confidence module before deciding whether to accept the output or proceed to another model. For cascading bandits, we find that observation noise (especially in conjunction with observation costs) fundamentally affects both which arms should be selected and how they should be ordered. Our goal is to characterize the structure of the optimal shortlist under different noise models, and to develop online learning algorithms that efficiently learn this shortlist from noisy cascading feedback. 

Costly and imperfect observations have been considered in several variants of the multi-armed bandit problem. Prior work has studied settings in which acquiring observations incurs a cost~\cite{seldin2014prediction,yun2018multi,elumar2025multi}, as well as settings where the learner receives stochastically noisy or corrupted observations~\cite{gajane2018corrupt,agrawal2024crimed,mayekar2023communication}. \cite{lindstahl2020predictive} allow the learner to acquire costly and potentially noisy measurements of arm rewards before making its action decision. A separate line of work considers adversarial corruptions of otherwise stochastic bandit feedback~\cite{lykouris2018stochastic,gupta2019better}, with the goal of designing learning algorithms whose performance degrades gracefully with the amount of adversarial corruption. Such adversarial corruptions have also been studied in cascading bandits and OLTR~\cite{zuo2023adversarial,wang2024adversarial,chattopadhyay2025observation}. In our setting, stochastic observation noise is intrinsic to the cascade and affects both which items should be included in the shortlist and how the selected items should be ordered. \cite{prakash2025cascading} consider a closely related setting with stochastic observation errors, but focus on the special case of \(K=L\).

\subsection{Our Contributions}
\paragraph{Noise-model classification.}
We propose a partition of cascading bandit instances based on their underlying noise model, which determines whether the optimal shortlist selection and ordering can be characterized by per-arm `scores.' This yields a score-based subclass (\textit{Case 1 and Shared-Channel regime} \textit{}), where the optimal shortlist can be obtained by ranking arms according to an explicit score, and a more general regime (\textit{Case 2}), where such a score-based characterization does not exist.
\remove{
We formalize cascading environments through the lens of their underlying noise structures, distinguishing between a score-based subclass (\textbf{Case 1}), where an explicit ranking score exists, and the more challenging general regime (\textbf{Case 2}), where no such score can be defined. Having established this notion of a score in Section~\ref{Oracle_section}, we confirm the non-existence of explicit ranking metrics in the general regime through numerical simulations.
}
\remove{
\paragraph{Algorithm Design:}
We develop optimistic, UCB-based online learning algorithms for both noise regimes. For the class which supports scores ( \textbf{Case 1} ), we propose \textsc{Top-$K$-$P(Y=1)$-UCB} (Algorithm~\ref{alg:ucb_partial_exposure_case1}) to exploit explicit ranking scores. For the more complex  regime ( \textbf{Case 2} ), where no explicit ranking score exists, identifying the reward-maximizing shortlist $\pi^*$ demands solving a non-trivial combinatorial optimization problem. Our central insight is that this optimal sequence can be computed efficiently via dynamic programming, forming the backbone of our main algorithm, \textsc{DP-UCB} (Algorithm~\ref{alg:dp_ucb_case2}). Crucially, while \textsc{Top-$K$-$P(Y=1)$-UCB} relies on prior structural knowledge that the environment obeys \textbf{Case 1} assumptions, \textsc{DP-UCB} is structure-agnostic—it works well across both regimes without requiring the learner to know the underlying noise regime.
}
\paragraph{Algorithm Design:}
We develop Upper confidence Bounds (UCB)-based online learning algorithms for both noise regimes. For the score-based regime (\textit{Case 1 and Shared-Channel regime}), we propose \textsc{Top-$K$-$P(Y=1)$-UCB} (Algorithm~\ref{alg:ucb_partial_exposure_case1}), which exploits the explicit ranking scores. For the general regime (\textit{Case 2}), identifying the reward-maximizing shortlist $\pi^*$ requires solving a non-trivial combinatorial optimization problem. Our central insight is that this problem can be solved more efficiently using dynamic programming, which forms the backbone of our main algorithm, \textsc{DP-UCB} (Algorithm~\ref{alg:dp_ucb_case2}). While \textsc{Top-$K$-$P(Y=1)$-UCB} requires prior knowledge that the environment satisfies the \textit{Case 1 or Shared-Channel regime} assumptions, \textsc{DP-UCB} is structure-agnostic and applies to both regimes without requiring knowledge of the underlying noise regime.
\paragraph{Theoretical guarantees:}
We establish instance-dependent regret upper bounds for \textsc{Top-$K$-$P(Y=1)$-UCB} (Theorem~\ref{thm:instance_dependent_bound}) under both Case~1 and shared channel regimes, complemented by matching lower bounds (up to constant factors) in Theorem~\ref{thm:score_lower_bounds}. These results demonstrate that \textsc{Top-$K$-$P(Y=1)$-UCB} achieves the optimal regret scaling of $\Theta((L-K)\log T)$.
For \textsc{DP-UCB}, Theorem~\ref{thm:instance_regret_DP_UCB} proves an instance-dependent upper bound of $O(LK \log T)$ irrespective of the noise regime. Furthermore, for the special case of $K = L$, Theorem~\ref{thm:DP_const_regret_bounds} shows that the bound collapses to a constant $O(1)$.

\paragraph{Empirical Validation.}
We validate our theoretical findings through extensive experiments on synthetic instances, the real-world MovieLens-1M dataset, and a novel LLM cascade constructed using real-world news headline datasets and LLM-as-a-judge evaluators. Across all  settings, our proposed algorithms achieve significantly better regret performance relative to various benchmarks. Details of our experiments and results are provided in Section~\ref{sec:experiments}.
\section{Problem Formulation}
We study a stochastic cascading bandit problem with a pool of $L$ arms, indexed by $[L]\triangleq\{1,2,\dots,L\}$. At each round $t \in [T]$, the learner selects an ordered tuple of $K \le L$ distinct arms, called a \textbf{shortlist}, $\pi_t = (\pi_t(1), \pi_t(2), \dots, \pi_t(K))$, where $\pi_t(m) \in [L]$ is the arm placed at position $m \in [K]$.

Each arm $i\in[L]$ carries an unobserved intrinsic binary state $X_i\sim\mathrm{Bernoulli}(\mu_i)$, mutually independent across arms and i.i.d.\ across rounds, where $\mu_i=\mathbb P(X_i=1)$ is the true success probability. The learner does not observe $X_i$ directly. Instead, observing arm $i$ yields a noisy binary observation $Y_i\in\{0,1\}$ generated by passing through an arm-specific channel defined by two error rates: the false-positive rate $\epsilon_{0,i}\triangleq\mathbb P(Y_i=1\mid X_i=0)$ and false-negative rate $\epsilon_{1,i}\triangleq\mathbb P(Y_i=0\mid X_i=1)$.\remove{
as summarized in Table~\ref{tab:transition_probabilities}.
\begin{table}[h]
\centering
\caption{Channel transition probabilities for arm $i$.}
\label{tab:transition_probabilities}
\begin{tabular}{c|cc}
\hline
\textbf{Input }$X_i$ & \textbf{Output }$Y_i=0$ & \textbf{Output }$Y_i=1$ \\ \hline
$X_i=0$ & $1-\epsilon_{0,i}$ & $\epsilon_{0,i}$ \\
$X_i=1$ & $\epsilon_{1,i}$ & $1-\epsilon_{1,i}$ \\ \hline
\end{tabular}
\end{table}
}
In each round $t$, the learner examines the shortlist $\pi_t$ sequentially from position $m=1$ to $K$. At position $m$, $Y_{\pi_t(m)}$ is observed if and only if $Y_{\pi_t(j)} = 0$ for all preceding positions $j < m$. If $Y_{\pi_t(m)} = 0$, the learner receives no reward and advances to position $m+1$. If $Y_{\pi_t(m)} = 1$, the cascade halts immediately: all remaining arms $j > m$ remain unobserved, and the learner probes arm $\pi_t(m)$ to reveal $X_{\pi_t(m)}$. Probing at position $m$ incurs a position-dependent cost $m\tau$, where the cost parameter $\tau \in [0, 1/K]$ ensures non-negative net rewards. Upon probing arm $\pi_t(m)$, the learner receives a net reward of $1-m\tau$ if $X_{\pi_t(m)} = 1$, and $0$ if $X_{\pi_t(m)} = 0$. If all $K$ positions yield $0$, the cascade completes without probing any arm, and the net reward is $0$.

To express the expected reward compactly, define the per-arm quantities for each $i \in [L]$:
\begin{equation}
r_i \triangleq \mathbb P(X_i=1, Y_i=1), \qquad g_i \triangleq \mathbb P(Y_i=0),
\label{eq:rg-def}
\end{equation}
representing the probability that observing arm $i$ leads to a successful probe, and the probability that observing arm $i$ does not stop the cascade, respectively. For any shortlist $\pi$, the expected reward is thus given by 
\begin{equation}
R(\pi) = \sum_{m=1}^{K} (1-m\tau) \, \mathbb P\!\left(X_{\pi(m)}=1, Y_{\pi(m)}=1\right) \smashoperator{\prod_{j=1}^{m-1}} \mathbb P\!\left(Y_{\pi(j)}=0\right) = \sum_{m=1}^{K} (1-m\tau) \, r_{\pi(m)} \smashoperator{\prod_{j=1}^{m-1}} g_{\pi(j)}.
\label{eq:reward_rg}
\end{equation}
We denote the optimal \textbf{oracle policy} as $\pi^\star \triangleq \arg\max_{\pi} R(\pi)$, optimized  over\footnote{For simplicity, we will assume $\pi^\star$ to be unique.} all ordered $K$-subsets of $[L]$. Lacking parameter knowledge, the learner must learn $\pi^\star$ from experience. At each round $t$, the learner incurs instantaneous regret $\mathcal{R}_t \triangleq R(\pi^\star) - R(\pi_t)$. Performance over $T$ rounds is evaluated via cumulative regret: $\mathcal R(T) = \sum_{t=1}^{T}\Big(R(\pi^\star)-R(\pi_t)\Big) = TR(\pi^\star)-\sum_{t=1}^{T}R(\pi_t)$. We define the shortlist gap as $\Delta(\pi) \triangleq R(\pi^\star) - R(\pi)$, with maximum and minimum gaps defined over all shortlists as: $\Delta_{\max} \triangleq \max_{\pi} \Delta(\pi)$, $\Delta_{\min} \triangleq \min_{\pi \neq \pi^\star} \Delta(\pi)$. Any played suboptimal shortlist $\pi_t \neq \pi^\star$ thus satisfies $\Delta_{\min} \le \mathcal{R}_t \le \Delta_{\max}$.

\section{Oracle Shortlist Policy}
\label{Oracle_section}
In general, finding the optimal oracle policy $\pi^\star$ requires a combinatorial search over both which $K$ arms to \textbf{select} and how to \textbf{order} them within the cascade. This search simplifies considerably when the instance admits per-arm \emph{scores}: the optimal shortlist can then be constructed by ranking arms according to quantities that depend only on their individual parameters, rather than by comparing entire shortlists. In particular, a selection score $s_i^{(1)}\triangleq f_1(\mu_i,\epsilon_{0,i},\epsilon_{1,i})$ identifies the $K$ arms to include, while an ordering score $s_i^{(2)}\triangleq f_2(\mu_i,\epsilon_{0,i},\epsilon_{1,i})$ determines their order within the shortlist; the two scores need not coincide. Theorem~\ref{thm:score-characterization} ( proof in Appendix~\ref{app:score_proofs_proof}) specifies when such a score-based simplification is possible. This characterization, summarized in Table~\ref{tab:oracle_scores_unified}, partitions the problem into two regimes based on the false-positive rates.
\begin{table*}[t]
\centering
\caption{Selection score $s_i^{(1)}$ and ordering score $s_i^{(2)}$ across the four named regimes (Theorem~\ref{thm:score-characterization}).}
\label{tab:oracle_scores_unified}
\footnotesize
\setlength{\tabcolsep}{6pt}
\renewcommand{\arraystretch}{1.25}
\begin{tabular}{|l|c|c|c|}
\hline
& \multicolumn{2}{c|}{\textbf{Cost-free ($\tau=0$)}} & \textbf{Costly ($\tau>0$)} \\ \cline{2-4}
\textbf{Regime} & $s_i^{(1)}$ & $s_i^{(2)}$ & $s_i^{(1)} = s_i^{(2)}$ \\ \hline
\textbf{Case 1} ($\epsilon_{0,i}=0 \quad \forall i \in [L]$)
  & \multicolumn{2}{c|}{\textbf{(Case 1a)}} 
  & \textbf{(Case 1b)} \\
  & $1-g_i = (1-\epsilon_{1,i})\mu_i$ 
  & order doesn't matter
  & $1-g_i = (1-\epsilon_{1,i})\mu_i$ \\ \hline
\textbf{Case 2} ($\epsilon_{0,i}>0$ for some $i$)
  & \multicolumn{2}{c|}{\textbf{(Case 2a)}} 
  & \textbf{(Case 2b)} \\
  & No score 
  & $\frac{r_i}{1-g_i} = \frac{(1-\epsilon_{1,i})\mu_i}{(1-\epsilon_{1,i})\mu_i + \epsilon_{0,i}(1-\mu_i)}$
  & No score \\ \hline
\end{tabular}
\end{table*}
\remove{
\begin{proof}
See Appendix~\ref{app:score_proofs_proof}.
\end{proof}
}
\remove{
Intersecting the channel condition ($\epsilon_{0,i}=0$ vs.\ $\epsilon_{0,i}>0$) with the cost parameter ($\tau=0$ vs.\ $\tau>0$) partitions the problem into four canonical regimes, summarized in Table~\ref{tab:oracle_scores_unified}:
}
The channel condition ($\epsilon_{0,i}=0 \quad \forall i \in [L]$, or $\epsilon_{0,i}>0$ for some $i \in [L]$) and the cost parameter (\(\tau=0\) or \(\tau>0\)) together define four canonical regimes, summarized in Table~\ref{tab:oracle_scores_unified}.
\begin{theorem}[Characterization of Selection and Ordering Scores]
\label{thm:score-characterization}
The expressions for $s_i^{(1)}$ and $s_i^{(2)}$ are provided in Table~\ref{tab:oracle_scores_unified} under their respective regimes.
\end{theorem}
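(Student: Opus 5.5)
The plan is to handle the four cells of Table~\ref{tab:oracle_scores_unified} separately. The ordering claims rest on an adjacent-swap (exchange) argument applied to \eqref{eq:reward_rg}, and the selection claims rest on a monotonicity argument for Case 1 and explicit counterexamples for Case 2.

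\textbf{Swap identity.} Fix a shortlist $\pi$ and positions $m,m+1$ holding arms $a,b$, and let $G=\prod_{j<m} g_{\pi(j)}$. Swapping $a$ and $b$ changes only these two terms of \eqref{eq:reward_rg}, because every later term contains the product $g_ag_b$, which is symmetric in $a$ and $b$. The two terms are
\begin{equation*}
G\bigl[(1-m\tau) r_a + (1-(m+1)\tau) g_a r_b\bigr].
\end{equation*}
Subtracting the swapped version gives
\begin{equation*}
G\bigl[r_a(1-g_b) - r_b(1-g_a)\bigr] - \tau G\bigl[m(r_a-r_b) + (m+1)(g_a r_b - g_b r_a)\bigr].
\end{equation*}
We treat each regime using this difference.
\begin{itemize}
\item For $\tau=0$, the sign of the difference is the sign of $\frac{r_a}{1-g_a}-\frac{r_b}{1-g_b}$. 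Sorting by $s^{(2)}_i=r_i/(1-g_i)$ is therefore optimal for any fixed selected set, which gives the ordering score of Case 2a. Here $r_i=(1-\epsilon_{1,i})\mu_i$ and $1-g_i=(1-\epsilon_{1,i})\mu_i+\epsilon_{0,i}(1-\mu_i)$.
\item In Case 1, $\epsilon_{0,i}=0$ gives $r_i=1-g_i$ for every arm. Writing $p_i=1-g_i$, the reward becomes
\begin{equation*}
R(\pi)=\sum_m (1-m\tau)\, p_{\pi(m)} \prod_{j<m}(1-p_{\pi(j)}).
\end{equation*}
\item For Case 1a ($\tau=0$), $R(\pi)=1-\prod_{i\in S}(1-p_i)$, where $S$ is the selected set. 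This is symmetric in the order, so order does not matter. It is also increasing in each $p_i$, so the top-$K$ arms by $p_i$ are selected.
\item For Case 1b, the $\tau$-part of the swap difference simplifies to $-\tau G(p_a-p_b)(m-(m+1)+\dots)$. Writing it out gives $\tau G\,p_ap_b\cdot 0 + \tau G(p_b-p_a)(\ldots)$. I will verify the exact expression; I expect it to reduce to $\tau G(p_a-p_b)(1-\ldots)\ge0$ when $p_a\ge p_b$, which means ordering by decreasing $p_i$ is optimal.
\end{itemize}

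\textbf{Selection in Case 1b.} I will use a replacement argument. Take a selected arm $a$ and an unselected arm $c$ with $p_c>p_a$, and substitute $c$ for $a$ in the same position. The reward is affine in $p_a$ with coefficient
\begin{equation*}
G\Bigl[(1-m\tau) - \sum_{k>m}(1-k\tau)\, p_{\pi(k)} \prod_{m<j<k}(1-p_{\pi(j)})\Bigr].
\end{equation*}
The subtracted sum is at most $(1-(m+1)\tau)\bigl(1-\prod_{j>m}(1-p_{\pi(j)})\bigr)\le 1-(m+1)\tau$, so the coefficient is at least $G\tau\ge0$. The substitution therefore weakly increases the reward. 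Combining this with the ordering result yields $s^{(1)}=s^{(2)}=1-g_i$.

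\textbf{Negative claims.} For the ``no score'' entries (Case 2a selection, and Case 2b both), the proof has to show that no function $f(\mu,\epsilon_0,\epsilon_1)$ can work. I plan to use small explicit instances, with $L=3$ or $4$ and $K=1$ or $2$, in which the optimal set is not nested as $K$ varies. For selection, I will give two instances in which arms $a$ and $b$ have identical parameters in both, but the relative preference between them flips depending on a third arm. Any score would have to give the same comparison between $a$ and $b$ in both instances, which is a contradiction. In Case 2a with $K=1$, the optimal choice is $\arg\max r_i$, while with $K=2$ an arm with a high false-positive rate blocks the arm after it. So pairing a high-$r$, high-$(1-g)$ arm against two moderate arms should produce the flip. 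In Case 2b the same device applies, and adding $\tau$ lets the ordering depend on position $m$ through the $\tau$-term. I will exhibit one instance where the relative order of two fixed arms reverses depending on which arms precede them.

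\textbf{Main obstacle.} I expect the hardest step to be constructing these counterexamples cleanly, in particular pinning down exact numbers and verifying them by enumeration. The Case 1b swap algebra is the second risk, and I will check it carefully.
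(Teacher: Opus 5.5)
Your plan is essentially the paper's proof: your swap difference is Lemma~\ref{lem:adjacent_swap}, and your affine-in-$p_a$ replacement with coefficient $G[(1-m\tau)-\Phi]\ge G\tau$ is the slot-substitution Lemma~\ref{lem:slot_sub}. For the no-score cells, the paper (Appendix~\ref{sec:case1-selection}) uses instances with $K$ and $\tau$ held fixed in which two arms with unchanged parameters trade places when other arms are perturbed, which is your two-instance device; the cross-$K$ non-nestedness you also mention would not, on its own, rule out a score that depends on $K$. The Case~1b computation you left open closes exactly: with $r=p$ and $g=1-p$, the $\tau$-free bracket vanishes and $m(p_a-p_b)+(m+1)(p_b-p_a)=-(p_a-p_b)$, so the swap difference is exactly $\tau G(p_a-p_b)$ with no extra factor, as in \eqref{eq:case1_swap}. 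Your closed form $1-\prod_{i\in S}(1-p_i)$ for Case~1a is slightly cleaner than the paper's argument, which relies on the bound $\Phi<1-m\tau$ that is only strict when $\tau>0$.
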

Detailed proof of Theorem \ref{thm:score-characterization} is provided in Appendix~\ref{app:score_proofs_proof}. We collectively refer to Cases~1a and 1b as \textit{Case 1}, and Cases~2a and 2b as \textit{Case 2}. We show that scores are not always feasible in \textit{Case 2} via counterexamples detailed in Appendix~\ref{sec:case1-selection}.
\begin{remark}[Shared-Channel Regime]
\label{rem:shared-channel}
In addition to \textit{Case 1}, if the noise models are common across all arms ($\epsilon_{0,i} = \epsilon_0$, $\epsilon_{1,i} = \epsilon_1$ for all $i$, with $\epsilon_0 + \epsilon_1 < 1$), valid selection and ordering scores exist and coincide. Since $\epsilon_0$ and $\epsilon_1$ are constant, ranking arms by $\mu_i$, $\mathbb{P}(Y_i=1) = 1-g_i$, or $\mathbb{P}(X_i=1 \mid Y_i=1) = \frac{r_i}{1-g_i}$ produces identical orderings. Thus, any of these quantities can serve interchangeably as both $s_i^{(1)}$ and $s_i^{(2)}$ (proven in Appendix~\ref{app:score_proofs_proof}, Part~III).
\end{remark}


\subsection{Optimal Oracle Shortlist: Case 2}

In the absence of per-arm scores, computing $\pi^\star$ requires solving the joint selection and ordering problem. We show that this optimization can be solved efficiently via two dynamic programs, tailored to the zero-cost ($\tau=0$) and positive-cost ($\tau>0$) regimes (some additional intuition behind this is provided in Appendix~\ref{DP_algo}). We first develop generic dynamic-programming subroutines in Section~\ref{sec:case2-dp-subroutines}, and then instantiate them using the true parameters $(r_i,g_i)$ for the oracle in Section~\ref{sec:case2-oracle}, and the corresponding UCB indices $(U_i^r(t),U_i^g(t))$ for the learner in Section~\ref{sec:case2-alg-dpucb}.

\subsubsection{Dynamic-Programming Subroutines}
\label{sec:case2-dp-subroutines}
We construct two subroutines: \textsc{DP-Ordered} for $\tau=0$ and \textsc{DP-Subset} for $\tau>0$. Both accept candidate vectors $\rho=(\rho_1,\dots,\rho_L)$ and $\gamma=(\gamma_1,\dots,\gamma_L)$ in place of the true parameter vectors $r=(r_1,\dots,r_L)$ and $g=(g_1,\dots,g_L)$, respectively. Given $(\rho,\gamma,L,K)$ (and additionally $\tau$ for \textsc{DP-Subset}), each subroutine returns the optimal value and the corresponding ordered shortlist.
\paragraph{Subroutine \textsc{DP-Ordered} ($\tau=0$).}
The pseudo-code is provided in Appendix~\ref{sec:pseudo} (Algorithm~\ref{alg:dp_ordered}). For generic inputs $(\rho,\gamma)$, define the surrogate objective
$
\mathcal R_{\rho,\gamma}(\pi)
\triangleq
\sum_{m=1}^K \rho_{\pi(m)}
\prod_{j=1}^{m-1}\gamma_{\pi(j)},
$
obtained from \eqref{eq:reward_rg} by replacing $(r,g)$ with $(\rho,\gamma)$ and setting $\tau=0$. The corresponding ordering rule sorts arms in descending order of $s_i^{(2)}\triangleq\rho_i/(1-\gamma_i)$; let $\pi$ denote the resulting global ordering of all $L$ arms. For each state $(i,m)$, the subproblem is to select $m$ arms from $(\pi(i),\dots,\pi(L))$, preserving their order under $\pi$, so as to optimize the corresponding surrogate objective. The quantity $W(i,m)$ is the clipped dynamic-programming value associated with this subproblem. With $W(i,0)=0$ for all $i\in[L+1]$ and $W(L+1,m)=-\infty$ for all $m\in[K]$, the recursion is
{\setlength{\abovedisplayskip}{3pt}
\setlength{\belowdisplayskip}{3pt}
\begin{equation}
W(i,m)=\min\Big\{1,\max\Big\{
\underbrace{W(i+1,m)}_{\text{skip }\pi(i)}, \,
\underbrace{\rho_{\pi(i)}
+\gamma_{\pi(i)}W(i+1,m-1)}_{\text{include }\pi(i)}
\Big\}\Big\}.
\label{eq:dpA}
\end{equation}}
The recursion either skips $\pi(i)$ or includes it, contributing $\rho_{\pi(i)}$ and weighting the continuation value by $\gamma_{\pi(i)}$. Backtracking from $W(1,K)$ recovers the corresponding ordered shortlist in $O(L\log L+LK)$ time.
\remove{
\paragraph{Subroutine \textsc{DP-Ordered} ($\tau=0$).}
The formal pseudo-code is provided in Appendix~\ref{sec:pseudo} (Algorithm~\ref{alg:dp_ordered}). Here, any optimal subset is displayed in descending order of the ordering score $s_i^{(2)}\triangleq \rho_i/(1-\gamma_i)$. Let $\pi$ denote the permutation obtained by sorting all $L$ candidates in descending order of $s_i^{(2)}$. We define $W(i,m)$ as the maximum expected reward attainable by selecting $m$ arms from the candidate pool $(\pi(i),\dots,\pi(L))$, with their display order fixed by $\pi$. The base cases are $W(i,0) = 0$ for all $i \in [L+1]$ and $W(L+1,m) = -\infty$ for all $m \in [K]$. For $i \in [L]$ and $m \in [K]$, the recursive equation is given by
{\setlength{\abovedisplayskip}{3pt}
\setlength{\belowdisplayskip}{3pt}
\begin{align}
&W(i,m) = \min\Big\{ 1, \max\Big\{ \underbrace{W(i+1,m)}_{\text{skip } \pi(i)}, \nonumber \\
&\qquad \underbrace{\rho_{\pi(i)} + \gamma_{\pi(i)}W(i+1,m-1)}_{\text{include } \pi(i)} \Big\}\Big\}. \label{eq:dpA}
\end{align}}
Skipping candidate $\pi(i)$ leaves $m$ arms to be selected from $\pi(i+1),\dots,\pi(L)$. Including candidate $\pi(i)$ yields immediate expected reward $\rho{\pi(i)}$ and, with probability $\gamma_{\pi(i)}$, continues to the optimal solution $W(i+1,m-1)$ over the remaining $m-1$ choices. The optimal value $W(1,K)$ and the corresponding shortlist are recovered via backtracking, with an overall computational complexity of $O(L\log L+LK)$.
}
\paragraph{Subroutine \textsc{DP-Subset} ($\tau > 0$).}
Detailed in Appendix~\ref{sec:pseudo} (Algorithm~\ref{alg:dp_subset}), \textsc{DP-Subset} handles the positive-cost regime, where selection and positioning are jointly coupled and the score-based pre-sorting approach no longer applies. The subroutine maintains a state $(S,m)$, where $S\subseteq[L]$ is the set of available arms and $m\in[K]$ is the current position. Analogous to \textsc{DP-Ordered}, $W(S,m)$ denotes the clipped dynamic-programming value associated with the surrogate objective over positions $m,\dots,K$ using distinct arms from $S$. With $W(S,K+1)=0$ for all $S\subseteq[L]$, the recursion is
{\setlength{\abovedisplayskip}{3pt}
\setlength{\belowdisplayskip}{3pt}
\begin{equation}
W(S,m) = \min\Big\{1, \max_{i\in S}\Big\{ (1-m\tau)\rho_i + \gamma_i W(S\setminus\{i\}, m+1)\Big\}\Big\}.
\label{eq:dpB}
\end{equation}}
At position $m$, each candidate $i\in S$ contributes $(1-m\tau)\rho_i$, with $\gamma_i$ weighting the continuation value over the remaining arms. Backtracking from $W([L],1)$ recovers the corresponding ordered shortlist, with computational complexity $O(2^L L K)$.

\remove{
The base case is $W(S,K+1) = 0$ for all $S \subseteq [L]$. For all non-empty $S$ and $m \in [K]$, the recurrence is given by
{\setlength{\abovedisplayskip}{3pt}
\setlength{\belowdisplayskip}{3pt}
\begin{align}
W(S,m) = &\min\Big\{1, \max_{i\in S}\Big\{ (1-m\tau)\rho_i \nonumber \\
&+ \gamma_i W(S\setminus\{i\}, m+1)\Big\}\Big\}. \label{eq:dpB}
\end{align}}
At position $m$, the recursion considers each available arm $i\in S$, collects the position-discounted immediate reward $(1-m\tau)\rho_i$, and, with continuation factor $\gamma_i$, proceeds to position $m+1$ with the reduced candidate set $S\setminus{i}$. The optimal value $W([L],1)$ and the corresponding ordered shortlist are recovered by backtracking through the maximizing arm $i^\star(S,m)$ at each state. The resulting computational complexity is $O(2^L L K)$.
}
\subsubsection{Oracle policy: dynamic programming}
\label{sec:case2-oracle}

The optimal oracle strategy for Case 2 (the no-score regime) is summarized in Algorithm~\ref{alg:oracle_case2}.
Since the oracle knows the true parameters $(r_i,g_i)_{i\in[L]}$ from \eqref{eq:rg-def}, it invokes the dynamic programming subroutines of Section~\ref{sec:case2-dp-subroutines} with $\rho=r$, $\gamma=g$. Under these inputs, the surrogate objective coincides with the true reward in \eqref{eq:reward_rg}. Moreover, all resulting continuation values lie in $[0,1]$, so the clipping in \eqref{eq:dpA} and \eqref{eq:dpB} is inactive.

\textit{Case 2a ($\tau=0$):} For any selected subset, the optimal ordering is in descending order of $s_i^{(2)}=r_i/(1-g_i)$, precisely the ordering used by \textsc{DP-Ordered} (Algorithm~\ref{alg:oracle_case2}). Substituting $(\rho,\gamma)=(r,g)$ into \eqref{eq:dpA} therefore yields
$V_{\pi^\star}(i,m)\triangleq W(i,m)\big|_{(\rho,\gamma)=(r,g)}$, 
with $V_{\pi^\star}(1,K)=R(\pi^\star)$. A complete proof is provided in Appendix~\ref{sec:dp-ordered-proof}.
\remove{
\textit{Case 2a ($\tau=0$):} For any selected subset, the optimal ordering is in descending order of $s_i^{(2)}=r_i/(1-g_i)$, precisely the ordering used by \textsc{DP-Ordered} (Line 2 of Algorithm~\ref{alg:oracle_case2}). Substituting $(\rho,\gamma)=(r,g)$ into \eqref{eq:dpA} yields the oracle value function $V_{\pi^\star}(i,m)\triangleq W(i,m)\big|_{(\rho,\gamma)=(r,g)}$, yielding $V_{\pi^\star}(1,K) = R(\pi^\star)$. We present the complete proof of this identity in Appendix~\ref{sec:dp-ordered-proof}.
}

\textit{Case 2b ($\tau>0$):} Selection and positioning are jointly coupled, so no pre-sorting is possible and the oracle instead invokes \textsc{DP-Subset}  (Line 4 of Algorithm~\ref{alg:oracle_case2}). Substituting $(\rho,\gamma)=(r,g)$ into \eqref{eq:dpB} yields the oracle value function $V_{\pi^\star}(S,m)\triangleq W(S,m)\big|_{(\rho,\gamma)=(r,g)}$, with $V_{\pi^\star}([L],1)=R(\pi^\star)$. We present the complete proof of this identity in Appendix~\ref{sec:dp-subset-proof}.

\remove{
Algorithmically, score-based settings enable independent, global arm rankings, allowing for lightweight UCB algorithms (Section~\ref{sec:case1-alg}). In contrast, no-score instances demand combinatorial optimism embedded directly within dynamic programming (Section~\ref{sec:case2-value}).
Specifically, in Case~2a ($\tau=0$), perturbing an unselected arm flips the shortlist selection preference between two candidate arms, ruling out any decoupled selection score $s_i^{(1)}$. Likewise, in Case~2b ($\tau>0$), perturbing an unselected arm reverses the optimal ordering of candidate arms within the shortlist, ruling out any decoupled ordering score $s_i^{(2)}$. whenever score doesn't exist, the oracle strategy is detailed further in Section~\ref{sec:case2-value}.
}
\begin{algorithm}[H]
\caption{Oracle policy for Case 2 - optimal shortlist $\pi^\star$}
\label{alg:oracle_case2}
\begin{algorithmic}[1]
\Require true parameters $(r_i,g_i)_{i\in[L]}$, pool size $L$, shortlist size $K$, position cost $\tau\ge0$
\If{$\tau=0$} \Comment{Case 2a}
    \State $\big(R(\pi^\star),\ \pi^\star\big) \leftarrow \text{DP-Ordered}\big(r,\ g,\ L,\ K)$
\Else \Comment{Case 2b}
    \State $\big(R(\pi^\star),\ \pi^\star\big) \leftarrow \text{DP-Subset}\big(r,\ g,\ L,\ K,\ \tau)$
\EndIf
\State \Return $\big(R(\pi^\star),\ \pi^\star\big)$
\end{algorithmic}
\end{algorithm}
\section{Online schemes \& Regret bounds}
\label{sec:case1-alg}
Let $O_u(i) \triangleq \mathbf{1}\{i \in \pi_u \text{ and } Y_{\pi_u(k)}(u)=0,\ k=1,\ldots,\pi_u^{-1}(i)-1\}$ indicate whether arm $i$ is reached in round $u$, where $\pi_u^{-1}(i)$ denotes the position of arm $i$ in shortlist $\pi_u$. Also, let $N_i(t) \triangleq \sum_{u=1}^{t} O_u(i)$ denote the number of times arm $i$ is observed till round $t$. 
\remove{
{\color{red}NK: The score based regime as currently written does not invoke any relation to $r_i$, $g_i$, thus not being in line with above statement.}
{\color{blue}KY: removed it r, g are used only for no-score case} }
\subsection{Score-Based UCB and Regret Bounds (Case 1 and Shared-Channel Regime)}
For the score-based regime (Case~1 and Shared-Channel Regime, Remark~\ref{rem:shared-channel}), we propose the \textbf{Top-$K$-$P(Y=1)$-UCB} scheme (detailed in Appendix~\ref{sec:pseudo}, Algorithm~\ref{alg:ucb_partial_exposure_case1}). In this case, $s_i \triangleq \mathbb{P}(Y_i=1)$ serves as both the selection and ordering score. We estimate $s_i$ empirically as $\hat{s}_i(t) \triangleq \frac{1}{N_i(t)}\sum_{u=1}^{t} \mathbf{1}\{Y_i(u)=1\}\,O_u(i)$, and define its upper confidence bound as $U_i^s(t) \triangleq \min\{1, \, \hat{s}_i(t) + \sqrt{2\log t / \max(1, N_i(t))}\}$. At each round $t$, the algorithm selects the $K$ arms with the largest $U_i^s(t)$ indices and and ranks them in descending order  to form the shortlist $\pi_t$.

Indexing arms by their true score $s_1 > s_2 > \dots > s_L$, we define the score selection gap for any suboptimal arm $a > K$ as $\Delta_{a,K}^s \triangleq s_K - s_a$. Then we have the following regret upper bound.

\begin{theorem}[Instance-Dependent Regret Upper Bound]
\label{thm:instance_dependent_bound}
Under Case 1 and the Shared-Channel Regime (Remark~\ref{rem:shared-channel}), the cumulative expected regret of Algorithm~\ref{alg:ucb_partial_exposure_case1} over horizon $T$ satisfies
\begin{equation}
\mathbb E[\mathcal R(T)] \le (1-\tau)\sum_{a=K+1}^{L}\frac{16\log T}{\Delta_{a,K}^s} + O(1).
\end{equation}
\end{theorem}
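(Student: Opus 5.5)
We follow the cascading-UCB analysis of \cite{kveton2015cascading}, adapted to the score $s_i=\mathbb P(Y_i=1)$ and the position costs.

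\textbf{Step 1: regret decomposition.} Fix round $t$ with shortlist $\pi_t$, selected by the indices $U^s_i(t)$. Under Case~1 or the Shared-Channel regime, Theorem~\ref{thm:score-characterization} and Remark~\ref{rem:shared-channel} say that $\pi^\star=(1,\dots,K)$ in this order, and that the algorithm always sorts by its own indices. We bound $R(\pi^\star)-R(\pi_t)$ by a sum over suboptimal arms $a>K$ that are both placed in $\pi_t$ and observed. The natural device is to pair each such $a$ with a distinct optimal arm $k\le K$ that is absent from $\pi_t$; the permutation-swap argument of Kveton et al.\ is the template here. Replacing items one at a time and telescoping the products $\prod g$ gives
\[
\mathbb E_t[\Delta(\pi_t)]\le (1-\tau)\sum_{a>K}\mathbb E_t\big[O_t(a)\big](s_{k}-s_a).
\]
Two facts make this work. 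First, each swap changes the reward only when the swapped position is reached. Second, given that position, the reward changes by at most $(1-m\tau)(s_k-s_a)\le(1-\tau)(s_k-s_a)$. In Case~1, $r_i=1-g_i=s_i$. In the shared-channel case, $r_i=c\,s_i$ up to an affine relation in $\mu_i$ with a common slope, and we need to check that the factor stays $\le 1$.

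The ordering loss of a correctly selected but misordered list also has to be controlled. In Case~1 it is controlled by the same exchange inequality, since adjacent swaps with $s$ monotone cost at most the score gap. We charge it to the event that UCBs are invalid.

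\textbf{Step 2: good event.} Define $E_t=\{|\hat s_i(t)-s_i|\le\sqrt{2\log t/N_i(t)}\ \forall i\}$. Observations of an arm are i.i.d.\ Bernoulli$(s_i)$ at observation times, because $O_u(i)$ is determined by earlier arms. Hoeffding's inequality with a union bound over arms and counts then gives $\sum_t\mathbb P(E_t^c)\le\sum_t 2L t\cdot t^{-4}=O(1)$. On $E_t$, any suboptimal arm $a$ in the list must satisfy $U^s_a(t)\ge U^s_k(t)\ge s_k\ge s_K$. Hence $\Delta^s_{a,K}\le 2\sqrt{2\log T/N_a(t-1)}$, that is, $N_a<8\log T/(\Delta^s_{a,K})^2$.

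\textbf{Step 3: summation.} Using $s_k-s_a\le \Delta^s_{k,a}$, we bound $\sum_t O_t(a)(s_k-s_a)\mathbf 1\{E_t\}$. The key subtlety is that $k$ varies and $s_k$ could exceed $s_K$ considerably. This is the main obstacle, since the bound must be stated with $\Delta^s_{a,K}$ only.

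To handle it, we first note that on $E_t$ we have $s_k-s_a\le U_a-s_a\le 2\sqrt{2\log T/N_a}$. We then apply the standard peeling/integral bound $\sum_{n\le n_a}\min(\Delta,2\sqrt{2\log T/n})$, where $n_a=8\log T/\Delta_{a,K}^2$. Summing the terms $2\sqrt{2\log T/n}$ over $n\le n_a$ gives at most $4\sqrt{2\log T\,n_a}=16\log T/\Delta^s_{a,K}$. This yields $(1-\tau)\sum_{a>K}16\log T/\Delta^s_{a,K}$, and adding $\Delta_{\max}\sum_t\mathbb P(E_t^c)=O(1)$ completes the proof.

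We expect Step~1 to be the delicate part. The crux is constructing the matching between suboptimal and optimal arms, and verifying in the shared-channel case that the per-position reward difference is dominated by $(1-\tau)$ times the score gap. If the constant there exceeds one, we would absorb it into the leading factor.
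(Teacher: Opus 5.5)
\textbf{Gap 1: ordering loss on the good event.} Your Steps~2--3 are sound. In fact, the summation $\sum_{n\le n_a}2\sqrt{2\log T/n}\le 16\log T/\Delta^s_{a,K}$ is cleaner than the paper's LP-style argument. Step~1, however, has a genuine gap: the ordering loss. In Case~1b and in the shared-channel regime, playing exactly the optimal set in the wrong order has positive regret. For example, in Case~1b an adjacent inversion of $b,b+1$ at position $i$ costs $P_{i-1}\tau(s_b-s_{b+1})$. Such inversions happen \emph{on} the good event. Valid confidence bounds only give $U^s_i(t)\ge s_i$, and $U^s_{b+1}(t)\ge U^s_b(t)$ is compatible with that whenever $N_{b+1}(t)<8\log t/(s_b-s_{b+1})^2$ (e.g.\ when both indices are clipped at $1$). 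Consequently:
\begin{itemize}
\item this loss cannot be charged to $E_t^c$;
\item your displayed inequality gives $0$ on such rounds, since no suboptimal arm is present;
\item charging inversions Kveton-style to observations of the lower-ranked arm would add a term of order $\sum_b\log T/(s_b-s_{b+1})$, which the stated bound does not allow.
\end{itemize}
The missing idea is the paper's argument (Lemma~\ref{lem:disruption_bound}) that inversions occur only $O(1)$ times in expectation. With high probability each suboptimal arm is \emph{selected} only $O(\log t)$ times. This needs a Chernoff step converting selections into observations, because a selected arm need not be observed. Hence the optimal set is played in all but $O(\log t)$ of the first $t$ rounds. In those rounds, optimal arm $b+1$ is observed with probability at least $\prod_{j\le K,\,j\ne b+1}(1-s_j)$. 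So $N_{b+1}(t)$ grows linearly, and $\sum_t\mathbb P\big(N_{b+1}(t)<8\log t/(s_b-s_{b+1})^2\big)=O(1)$.

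\textbf{Gap 2: the pairing step.} Substituting an \emph{absent} optimal arm $k$ in place of $a$ can produce a misordered comparison list, even when the algorithm ordered its own optimal arms correctly. Your inequality then fails. Take Case~1b with $\pi^\star=(1,2,3)$, $\pi_t=(1,a,2)$ (so $k=3$), $(s_1,s_2,s_3,s_a)=(0.9,0.8,0.3,0.29)$ and $\tau=0.1$. The regret is about $5.2\times10^{-3}$, while $(1-\tau)O_t(a)(s_3-s_a)=9\times10^{-4}$.

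The paper instead conditions on $\mathcal E_t=\{U^s_1(t)>\dots>U^s_K(t)\}$ and uses the position-wise hybrid identity of Lemma~\ref{lem:hybrid_bound}. Positions holding optimal arms contribute non-positive terms (Lemma~\ref{lem:optimal_arm_gap}). A suboptimal $a$ at position $m$ is charged the \emph{position} gap $s_m-s_a$, and on $G_t\cap\mathcal E_t$ we still have $U^s_a(t)\ge U^s_m(t)\ge s_m\ge s_K$. So your Step~3 goes through with $s_m$ in place of $s_k$.

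\textbf{Shared-channel constant.} Your hedge does not deliver the stated constant. In the notation of Lemma~\ref{lem:slot_sub}, the substitution difference is $P_{m-1}(s_k-s_a)\big[(1-m\tau)\tfrac{1-\epsilon_1}{1-\epsilon_0-\epsilon_1}-\Phi\big]$. Its factor exceeds $1-\tau$ when $\epsilon_0>0$; for instance, with $K=1$ we have $\Phi=0$ and the factor is $(1-\tau)\tfrac{1-\epsilon_1}{1-\epsilon_0-\epsilon_1}$. The paper's own $R_{11}$ lemma is likewise proved only under Case~1.
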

\begin{proof}[Proof sketch]
The complete proof is deferred to Appendix~\ref{app:regret_proof}. Let $G_t$ (Appendix~\ref{app:aux_lemmas}, Definition~\ref{def:good_event}) denote the good event that all empirical estimates lie within their confidence radii, and let $\mathcal E_t \triangleq \{U_1^s(t) > \dots > U_K^s(t)\}$ be the event that the UCB indices preserve the true relative order of the top-$K$ arms. Splitting the instantaneous regret across $(G_t, \mathcal E_t)$ and their complements yields $\mathbb E[\mathcal R(T)] \le \sum_{t=1}^{T}\mathbb E[\mathcal R_t\mathbf 1\{G_t\cap\mathcal E_t\}] + \sum_{t=1}^{T}\mathbb E[\mathcal R_t\mathbf 1\{G_t\cap\mathcal E_t^c\}] + \sum_{t=1}^{T}\mathbb E[\mathcal R_t\mathbf 1\{G_t^c\}] \triangleq \mathcal R_{11} + \mathcal R_{12} + \mathcal R_2$. Under $G_t \cap \mathcal E_t$, positive regret arises solely when a suboptimal arm $a > K$ enters the shortlist, yielding $\mathcal R_{11} \le (1-\tau)\sum_{a=K+1}^{L}\frac{16\log T}{\Delta_{a,K}^s}$ for both Case~1 and the Shared-Channel Regime (Lemma~\ref{lem:r11_bound}). On $G_t \cap \mathcal E_t^c$, UCB index inversions occur among top-$K$ arms; however, we prove that the optimal arms are indeed observed $\Omega(T)$ times, thus bounding the probability of index inversions and establishing $\mathcal R_{12} = O(1)$ (Lemmas~\ref{lem:obs_concentration}--\ref{lem:disruption_bound}). Finally, the concentration failure bound $\mathbb P(G_t^c) \le 4Lt^{-3}$ (Lemma~\ref{lem:concentration_failure}) ensures $\mathcal R_2 \le \frac{2L\pi^2}{3}\Delta_{\max} = O(1)$, thus completing the proof.
\end{proof}

\subsection{From Oracle to Algorithm: DP-UCB}
\label{sec:case2-alg-dpucb}
For $N_i(t)>0$, $\hat r_i(t)$ and $\hat g_i(t)$ are unbiased empirical estimators of $r_i = \mathbb P(X_i=1, Y_i=1)$ and $g_i = \mathbb P(Y_i=0)$, respectively. These estimators and their  clipped upper confidence bounds (UCBs) are given by:
\remove{
 Let $O_u(i) \triangleq \mathbf{1}\big\{ i \in \pi_u \ \land Y_{\pi_u(k)}(u) = 0\text{ for }$ \\$k = 1, \dots, \pi_u^{-1}(i)-1 \big\}$ where $\pi_u^{-1}(i)$ denotes the position of arm $i$ in shortlist $\pi_u$ indicate whether arm $i$ is reached, and let $N_i(t) \triangleq \sum_{u=1}^{t}O_u(i)$ count its cumulative observations through round $t$. Conditioned on $N_i(t) > 0$, $\hat r_i(t)$ and $\hat g_i(t)$ are unbiased empirical estimators of $r_i$ and $g_i$. These estimators and their clipped upper confidence bounds (UCBs) are defined as:}
{\setlength{\abovedisplayskip}{3pt}
\setlength{\belowdisplayskip}{3pt}
\begin{align}
  \hat r_i(t) &= \frac{\sum_{u=1}^{t} \mathbf 1 \{Y_i(u)=1, X_i(u)=1\}\,O_u(i)}{N_i(t)}, \nonumber \\
  \hat g_i(t) &= \frac{\sum_{u=1}^{t} \mathbf 1 \{Y_i(u)=0\}\,O_u(i)}{N_i(t)}, \label{eq:emp_rg} \\
  U_i^r(t) &= \min\Big\{1, \, \hat r_i(t) + \sqrt{\tfrac{2\log t}{\max(1, N_i(t))}}\Big\}, \nonumber \\
  U_i^g(t) &= \min\Big\{1, \, \hat g_i(t) + \sqrt{\tfrac{2\log t}{\max(1, N_i(t))}}\Big\}, \label{eq:ucb_rg}
\end{align}}
\remove{Operationally, whenever arm $i$ is reached, i.e., $O_u(i)=1$, the signal $Y_i(u)$ is observed and $N_i(t)$ is incremented. If $Y_i(u)=0$, the cascade proceeds and the event $\mathbf{1}\{Y_i(u)=0\}$ contributes to the estimate $\hat g_i(t)$. If $Y_i(u)=1$, the cascade halts and the learner probes the arm to observe $X_i(u)$, with the joint event $\mathbf{1}\{Y_i(u)=1,X_i(u)=1\}$ contributing to $\hat r_i(t)$. Normalizing these cumulative event counts by $N_i(t)$ yields the unbiased empirical estimates $\hat g_i(t)$ and $\hat r_i(t)$.
}
Since the learner does not know $(r_i,g_i)$, DP-UCB (Algorithm~\ref{alg:dp_ucb_case2}) replaces them in each round with the optimistic estimates $(U_i^r(t),U_i^g(t))$ from \eqref{eq:ucb_rg}. It then invokes the same dynamic-programming subroutines as the oracle (Algorithm~\ref{alg:oracle_case2}), with inputs $\rho=U^r(t)$, $\gamma=U^g(t)$. The returned shortlist $\pi_t$ is played, cascade feedback is observed, and the statistics are updated.

\textit{Case 2a ($\tau=0$):} \textsc{DP-Ordered} (Line 5 of \textsc{DP-UCB}, Algorithm~\ref{alg:dp_ucb_case2}) ranks all $L$ arms in descending order of the optimistic ordering score $U_i^{s_2}(t)=U_i^r(t)/(1-U_i^g(t))$. Substituting $(\rho,\gamma)=(U^r(t),U^g(t))$ into the general recursion \eqref{eq:dpA} gives
\begin{equation}
\hat V_{\pi_t}(i,m)\triangleq W(i,m)\big|_{(\rho,\gamma)=(U^r(t),U^g(t))}.
\label{eq:vhat_tau0}
\end{equation}
Backtracking from state $(1,K)$ yields the optimistic shortlist $\pi_t$.

\textit{Case 2b ($\tau>0$):} With no global pre-sorting, \textsc{DP-Subset} (Line 7 of \textsc{DP-UCB}, Algorithm~\ref{alg:dp_ucb_case2}) substitutes $(\rho,\gamma)=(U^r(t),U^g(t))$ into the general recursion \eqref{eq:dpB} to obtain
\begin{equation}
\hat V_{\pi_t}(S,m)\triangleq W(S,m)\big|_{(\rho,\gamma)=(U^r(t),U^g(t))},
\label{eq:vhat_tau_pos}
\end{equation}
initialized with $\hat{V}_{\pi_t}(S, K{+}1) = 0$. Backtracking from state $([L],1)$ yields the optimistic shortlist $\pi_t$.
We have shown that $\hat{V}_{\pi_t}(i,m)$ and $\hat{V}_{\pi_t}([L], 1)$ are valid UCBs of $R({\pi_t})$ in Lemmas~\ref{lmm:2_algo_sameproof} and \ref{lem:valid_UCB}.
\begin{algorithm}[H]
\caption{DP-UCB }
\label{alg:dp_ucb_case2}
\begin{algorithmic}[1]
\Require pool size $L$, shortlist size $K$, position cost $\tau\ge0$, horizon $T$
\State Initialize $N_i(1)=0,\ \hat r_i(1)=0,\ \hat g_i(1)=0$ for all $i\in[L]$
\For{$t=1$ to $T$}
    \State Compute $U_i^r(t),\ U_i^g(t)$ via \eqref{eq:ucb_rg} for all $i\in[L]$
    \If{$\tau=0$} \Comment{Case 2a}
        \State $\big(\hat V_t, \pi_t\big) \leftarrow \text{DP-Ordered}\big(U^r(t), U^g(t), L, K)$
    \Else \Comment{Case 2b}
        \State $\big(\hat V_t, \pi_t\big) \leftarrow \text{DP-Subset}\big(U^r(t), U^g(t), L, K, \tau)$
    \EndIf
    \State Play the shortlist $\pi_t$; observe cascade feedback
    \State Update $N_i(t{+}1),\ \hat r_i(t{+}1),\ \hat g_i(t{+}1)$ for every observed arm
\EndFor
\end{algorithmic}
\end{algorithm}


We have the following regret bounds for DP-UCB. 
\begin{theorem}[Instance-Dependent Regret Bound, $L> K$]
\label{thm:instance_regret_DP_UCB}
 The cumulative regret of Algorithm~\ref{alg:dp_ucb_case2}, run with $\tau=0$ (Case 2a, invoking \textsc{DP-Ordered}) or $\tau>0$ (Case 2b, invoking \textsc{DP-Subset}), satisfies:
{\setlength{\abovedisplayskip}{3pt}
\setlength{\belowdisplayskip}{3pt}
\begin{align}
  \mathbb E[\mathcal R(T)] &= O\!\left(\frac{LK(1-\tau)\log T}{\Delta_{\min}}\right). \label{eq:general-LK-bound}
\end{align}}
\end{theorem}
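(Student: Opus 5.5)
The plan is the standard combinatorial-UCB template built around the optimistic DP value. Let $G_t$ be the good event (as in Definition~\ref{def:good_event}, now for both $\hat r_i$ and $\hat g_i$) that $|\hat r_i(t)-r_i|$ and $|\hat g_i(t)-g_i|$ are at most $c_i(t)\triangleq\sqrt{2\log t/\max(1,N_i(t))}$ for all $i$. By the analogue of Lemma~\ref{lem:concentration_failure}, a union bound over arms and over the two statistics gives $\mathbb P(G_t^c)=O(Lt^{-3})$. Hence $\sum_t \mathbb E[\mathcal R_t\mathbf 1\{G_t^c\}]=O(L\Delta_{\max})=O(1)$, and it remains to bound regret on $G_t$.

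\textbf{Step 1 (optimism).} On $G_t$ we have $U_i^r(t)\ge r_i$ and $U_i^g(t)\ge g_i$. The DP recursions \eqref{eq:dpA}--\eqref{eq:dpB} are monotone nondecreasing in $(\rho,\gamma)$ when continuation values are nonnegative, and the clipping at $1$ preserves both monotonicity and the bound $R\le 1$. Using Lemmas~\ref{lmm:2_algo_sameproof} and \ref{lem:valid_UCB}, this gives $\hat V_t\ge \mathcal R_{U^r,U^g}(\pi^\star)\wedge 1\ge R(\pi^\star)$. Since $\pi_t$ attains $\hat V_t$, we get $\mathcal R_t\le \hat V_t-R(\pi_t)=\mathcal R_{U^r,U^g}(\pi_t)-R(\pi_t)$, up to clipping, which only helps. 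For Case 2a there is one extra point to check: the DP restricted to the optimistic-score ordering still returns the surrogate maximizer over all ordered subsets. That is exactly the exchange argument of Appendix~\ref{sec:dp-ordered-proof} applied to arbitrary $(\rho,\gamma)$ with $\gamma<1$. If $\gamma_i=1$, the score is $+\infty$, which is consistent.

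\textbf{Step 2 (telescoping decomposition).} I expand the surrogate-versus-true difference position by position, swapping one factor at a time:
\[
\mathcal R_{U}(\pi_t)-R(\pi_t)\le \sum_{m=1}^K (1-m\tau)\Big[(U^r_{\pi_t(m)}-r_{\pi_t(m)}) + (U^g_{\pi_t(m)}-g_{\pi_t(m)})\Big]\prod_{j<m} g_{\pi_t(j)},
\]
using that tails are bounded by $1$ after clipping. The factor $\prod_{j<m}g_{\pi_t(j)}$ is exactly $\mathbb P(O_t(\pi_t(m))=1\mid\mathcal F_{t-1})$. On $G_t$, each bracket is at most $4c_{\pi_t(m)}(t)$, and the prefactor is at most $(1-\tau)$. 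So $\mathcal R_t\le 4(1-\tau)\sum_m O$-weighted confidence widths, with the observation probability taken in expectation.

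\textbf{Step 3 (gap-dependent counting).} This is the main obstacle, since the observation weights are probabilistic (triggering). I follow the triggering-probability-modulated argument of Wang--Chen / Kveton et al. for cascading bandits. On $\{\mathcal R_t>0\}$ we have $\mathcal R_t\ge\Delta_{\min}$, so $\Delta_{\min}\le 4(1-\tau)\sum_m \mathbb E[O_t(\pi_t(m))]\,c_{\pi_t(m)}(t)$. Hence there is some observed position with $c_i(t)\ge \Delta_{\min}/(8K(1-\tau))$, or, more carefully, one can use $\mathcal R_t\le 2\sum_m(\cdots)-\Delta_{\min}$ and the standard reverse-amortization.

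Replacing expectation of observation with actual observation via the tower property, each arm $i$ contributes regret only while $N_i(t)\lesssim K(1-\tau)^2\log T/\Delta_{\min}^2$. Summing $\sum_{n\le n_0}\sqrt{2\log T/n}$ gives $O(\sqrt{n_0\log T})=O(K(1-\tau)\log T/\Delta_{\min})$ per arm. The cleanest route is the standard splitting of widths into geometric bands $2^{-k}$, which avoids an extra $K$ factor: bad rounds are charged to arms whose count is below the band threshold.

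Summing over the $L$ arms yields $O(LK(1-\tau)\log T/\Delta_{\min})$, plus the $O(1)$ contribution from $G_t^c$. The delicate part is turning $\mathbb E[O_t]$ weights into realized counts while keeping the dependence linear in $K$ rather than $K^2$. If the band argument fails, I would accept the cruder pigeonhole bound, which gives an extra factor of $K$ in the constant, and then refine it.
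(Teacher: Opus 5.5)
\textbf{Steps 1 and 2.} These reproduce the paper's argument. Optimism plus DP optimality gives $\mathcal R_t\le\hat V_{\pi_t}(1)-V_{\pi_t}(1)$ on $G_t$ (Lemma~\ref{lem:valid_UCB}), and the suffix telescoping is Lemma~\ref{lem:unif_diff}. Two small corrections apply here. First, you should telescope the clipped recursion of Lemma~\ref{lmm:2_algo_sameproof} rather than the unclipped surrogate $\mathcal R_{U^r,U^g}(\pi_t)$, because only the clipped tails are bounded by $1$. Second, the coefficient of $U^g_{\pi_t(m)}-g_{\pi_t(m)}$ is $\hat V_{\pi_t}(m+1)\le 1$, not $(1-m\tau)$. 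So the $(1-\tau)$ prefactor survives only up to constants. This is harmless, because $1-\tau\ge 1/2$ when $K\ge 2$ and the $g$-term vanishes when $K=1$; the paper makes the same slip.

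\textbf{Step 3 is the gap.} This is where the theorem is actually proved, and you hedge between three routes without carrying any of them out. Your accounting is also wrong. The cutoff $N_i(t)\lesssim K(1-\tau)^2\log T/\Delta_{\min}^2$ is off by a factor of $K$, and it is inconsistent with the per-arm bound $O(K(1-\tau)\log T/\Delta_{\min})$ you then claim. The geometric-band argument you present as what keeps the dependence linear in $K$ is not needed.

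The missing concrete step is how to convert the single $-\Delta_{\min}$ into per-position negative terms. Start from $\Delta_t\le X_t\triangleq 4(1-\tau)\sum_k O_{\pi_t(k),t}\,c_{\pi_t(k)}(t)$. The paper writes $\Delta_t\le 2X_t-\Delta_t\le 2X_t-\Delta_{\min}$ on $\{\pi_t\neq\pi^\star\}$. It then uses $\sum_k O_{\pi_t(k),t}\le K$ to replace $\Delta_{\min}$ by $\frac{\Delta_{\min}}{K}\sum_k O_{\pi_t(k),t}$, which gives
\[
\Delta_t\le 8(1-\tau)\sum_{k=1}^{K}O_{\pi_t(k),t}\Bigl[\sqrt{2\log T/N_{\pi_t(k)}(t)}-\tfrac{\Delta_{\min}}{16K}\Bigr]^+ .
\]
The per-arm cutoff is therefore $N_T=512K^2\log T/\Delta_{\min}^2$, which is quadratic in $K$. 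The summation $\sum_{n\le N_T}\sqrt{2\log T/n}\le 2\sqrt{2N_T\log T}$ then leaves a single factor of $K$ per arm, hence $LK$ overall.

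The triggering step you call delicate is in fact exact. The weight $O_{\pi_t(k),t}=\prod_{j<k}g_{\pi_t(j)}$ is the conditional probability, given $\mathcal F_{t-1}$, that position $k$ is observed, and the bracket is $\mathcal F_{t-1}$-measurable. By the tower property, the weighted sum therefore has the same expectation as the sum over realized observations, along which $N_a$ increases by exactly one. No banding of widths or of triggering probabilities is required.

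Your fallback, charging each bad round to one position whose width is of order $\Delta_{\min}/K$, would by your own account give only $O(LK^2\log T/\Delta_{\min})$. Even that needs more care than you give it, since that position need not be observed in round $t$. As written, the proposal does not establish the stated $LK$ bound, although the reverse-amortization route you mention in passing is exactly what closes it.
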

\begin{proof}[Proof sketch]
The full proof is in Appendix~\ref{DP_UCB}. To unify Case 2a ($\tau=0$) and Case 2b ($\tau > 0$), Lemma~\ref{lmm:2_algo_sameproof} establishes that the state-wise optimistic suffix values $\hat{V}_{\pi_t}(i,m)$ in \eqref{eq:vhat_tau0} and $\hat{V}_{\pi_t}(S,m)$ in \eqref{eq:vhat_tau_pos} reduce to the identical positional recurrence $\hat{V}_{\pi_t}(m) = \min\{1, \, (1-m\tau)U_{\pi_t(m)}^r(t) + U_{\pi_t(m)}^g(t)\hat{V}_{\pi_t}(m+1)\}$. The true expected value at suffix $m$ is given by $V_{\pi_t}(m) = (1-m\tau)r_{\pi_t(m)} + g_{\pi_t(m)}V_{\pi_t}(m+1)$, where $V_{\pi_t}(1) = R(\pi_t)$ is the total expected reward of policy $\pi_t$. Letting $G_t$ denote the good event that all empirical estimates lie within their UCB confidence intervals at round $t$, we decompose cumulative regret as $\mathbb{E}[\mathcal{R}(T)] = \sum_{t=1}^{T} \mathbb{E}[\mathcal{R}_t \mathbf{1}\{G_t\}] + \sum_{t=1}^{T} \mathbb{E}[\mathcal{R}_t \mathbf{1}\{G_t^c\}] \triangleq \mathcal{R}_1(T) + \mathcal{R}_2(T)$. Under $G_t$, optimism from Lemma~\ref{lem:valid_UCB} and dynamic programming optimality yield $V_{\pi^\star}(1) \le \hat{V}_{\pi^\star}(1) \le \hat{V}_{\pi_t}(1)$. Unrolling the unified recurrence bounds the optimism gap as $\hat{V}_{\pi_t}(1) - V_{\pi_t}(1) \le 4(1-\tau)\sum_{k=1}^{K} \big(\prod_{j=1}^{k-1} g_{\pi_t(j)}\big) \sqrt{2 \log t / N_{\pi_t(k)}(t)}$ (Lemma~\ref{lem:unif_diff}), proving $\mathcal{R}_1(T) \le \frac{256(1-\tau)LK\log T}{\Delta_{\min}}$. Under $G_t^c$, the failure confidence bound $\mathbb{P}(G_t^c) \le \frac{2L}{t^2}$ (Lemma~\ref{lem:concentration_failure}) guarantees $\mathcal{R}_2(T) \le \sum_{t=1}^T \frac{2L}{t^2} \le \frac{2\pi^2 L}{3} ={O}(1)$, completing the proof.
\end{proof}

When $L=K$, selection is unnecessary, leaving only ordering. This  setting admits a constant regret bound:

\begin{theorem}[Constant Expected Regret for $L=K$]
\label{thm:DP_const_regret_bounds}
Consider Algorithm~\ref{alg:dp_ucb_case2} run with $L=K$. The cumulative expected regret satisfies $\mathbb{E}[\mathcal{R}(T)] = O(1)$ for both $\tau = 0$ and $\tau > 0$.
  
\end{theorem}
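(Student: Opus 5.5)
The plan is to reuse the regret decomposition from the proof of Theorem~\ref{thm:instance_regret_DP_UCB}. The key difference when $L=K$ is that every arm sits in every shortlist, so the first-position arm is observed in every round. We write
\[
\mathbb E[\mathcal R(T)] = \sum_{t}\mathbb E[\mathcal R_t\mathbf 1\{G_t\}] + \sum_t \mathbb E[\mathcal R_t \mathbf 1\{G_t^c\}].
\]
The second term is already $O(1)$ by Lemma~\ref{lem:concentration_failure}, using $\mathbb P(G_t^c)\le 2L/t^2$. All the work is therefore in showing that, on $G_t$, a suboptimal ordering is played in only $O(1)$ rounds in expectation, rather than $O(\log T)$.

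\textbf{Step 1: observation counts grow linearly.} Since $\pi_t$ is a permutation of all $L$ arms, arm $\pi_t(k)$ is reached with probability $\prod_{j<k} g_{\pi_t(j)}\ge g_{\min}^{K-1}$, where $g_{\min}=\min_i g_i$. I would assume $g_{\min}>0$ (otherwise the positions after an arm with $g_i=0$ are irrelevant and can be handled separately). Every arm occupies some position $k\le K$ in every round, so $\mathbb E[N_i(t)]\ge c\,t$ with $c=g_{\min}^{K-1}$. An Azuma/Freedman bound on the martingale $N_i(t)-\sum_{u\le t}\mathbb P(O_u(i)=1\mid\mathcal F_{u-1})$ then gives
\[
\mathbb P\big(N_i(t)<ct/2\big)\le e^{-c^2t/8}.
\]
This is summable, so it contributes $O(1)$. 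I would reuse the structure of Lemma~\ref{lem:obs_concentration} here, which played the same role for $\mathcal R_{12}$.

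\textbf{Step 2: small confidence radii force the optimal ordering.} On $G_t\cap\{N_i(t)\ge ct/2\ \forall i\}$, Lemma~\ref{lem:unif_diff} gives
\[
\hat V_{\pi_t}(1)-V_{\pi_t}(1)\le 4(1-\tau)K\sqrt{4\log t/(ct)}.
\]
Optimism gives $V_{\pi^\star}(1)\le\hat V_{\pi_t}(1)$, so $\Delta(\pi_t)\le 4(1-\tau)K\sqrt{4\log t/(ct)}$. This quantity drops below $\Delta_{\min}$ once $t\ge t_0$, where $t_0=O\!\big(K^2\log(K/\Delta_{\min})/(c\Delta_{\min}^2)\big)$ is independent of $T$. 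Hence $\pi_t=\pi^\star$ for all $t\ge t_0$ on this event, and the rounds $t<t_0$ contribute at most $t_0\Delta_{\max}=O(1)$.

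\textbf{Combining.} Summing the three contributions gives
\[
\mathbb E[\mathcal R(T)]\le t_0\Delta_{\max}+\sum_t\big(2L/t^2+Le^{-c^2t/8}\big)\Delta_{\max}=O(1),
\]
for both $\tau=0$ and $\tau>0$. Lemma~\ref{lmm:2_algo_sameproof} already unifies the recurrence for both DP variants.

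\textbf{Main obstacle.} Step 1 is where I expect the difficulty. Unlike the $L>K$ case, no forced exploration is needed, but the linear growth of the counts depends on $g_{\min}>0$, and the resulting constant $c=g_{\min}^{K-1}$ can be exponentially small. The cleanest fix I would try is to bound the gap contribution of position $k$ by $\prod_{j<k}g_{\pi_t(j)}\sqrt{\cdot/N_{\pi_t(k)}}$ directly, as in Lemma~\ref{lem:unif_diff}. The reach probability multiplying the radius is then exactly the rate at which that arm is observed, so arms that are rarely reached also contribute proportionally little. If that weighted argument turns out to be delicate, I would accept the constant's dependence on $g_{\min}$, since the theorem claims only $O(1)$.
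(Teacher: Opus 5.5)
Your proof is correct under $g_{\min}>0$, and for $\tau>0$ it is essentially the paper's argument. The paper likewise combines optimism, optimality of the DP output and Lemma~\ref{lem:unif_diff} to show that on $G_t$ a suboptimal order forces $\min_k N_{\pi_t(k)}(t)\le 32K^2\log t/\Delta_{\min}^2$. It then shows every count grows linearly, so that beyond a $T$-independent time this event has summable probability.

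The differences are in two places. First, for linear growth the paper avoids martingales. It lower-bounds $N_i(t)$ pathwise by $\sum_{n\le t}D_i(n)$, where $D_i(n)$ is an event on the round-$n$ signals alone (all other arms emit $Y=0$) under which arm $i$ is reached wherever it sits. The $D_i(n)$ are therefore i.i.d.\ and policy-independent, and Hoeffding applies directly. Your Azuma step is equally valid, and bounding the reach probability by $\prod_{j\neq i}g_j$ rather than $g_{\min}^{K-1}$ gives you the same linear rate.

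Second, for $\tau=0$ the paper gives a separate argument: it bounds adjacent inversions of the optimistic ratios $U_i^r(t)/(1-U_i^g(t))$ through normalized gaps $\widetilde\Delta_{i,i+1}$. You instead reuse the optimism-gap argument for both regimes. Your unification is cleaner and yields a single constant in terms of $\Delta_{\min}$. However, it needs $\hat V_{\pi^\star}(1)\le\hat V_{\pi_t}(1)$, i.e.\ that sorting by optimistic ratio maximizes the \emph{clipped} optimistic value over all $K!$ orders. This is true: for $x\in[0,1]$, $\gamma_a,\gamma_b<1$ and $\rho_a/(1-\gamma_a)\ge\rho_b/(1-\gamma_b)$ one checks $\min\{1,\rho_a+\gamma_a\min\{1,\rho_b+\gamma_b x\}\}\ge\min\{1,\rho_b+\gamma_b\min\{1,\rho_a+\gamma_a x\}\}$. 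Since the maps $x\mapsto\min\{1,\rho_i+\gamma_i x\}$ are nondecreasing, adjacent exchanges suffice. You should state this rather than absorb it into ``optimism''.

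Separately, drop the parenthetical suggesting that $g_i=0$ can be handled separately: positivity is necessary, not a technicality. Take $L=K=2$ with an arm $g_1=0$ that belongs first in $\pi^\star$. Then arm $2$ is observed only in rounds where the suboptimal order is played, so DP-UCB keeps returning to that order and its regret grows like $\log T$. The paper makes the same assumption via $\prod_{k\neq i}g_k>0$. For the same reason, the reach-weighted refinement in your last paragraph cannot replace linear growth of the counts; on its own it only reproduces the logarithmic bound of Theorem~\ref{thm:instance_regret_DP_UCB}.
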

\begin{proof}[Proof sketch]
The complete proof is deferred to Appendix~\ref{pf:L=K}. Instantaneous regret arises only from misordering the arms. Crucially, when $L=K$, every arm is included in the shortlist at every round and is reached with probability uniformly lower-bounded by $\bar g_i\triangleq\prod_{k\neq i}g_k>0$. Consequently, each observation count $N_i(t)$ grows linearly with $t$, allowing the ordering scores to be learned sufficiently quickly to yield $O(1)$ cumulative regret.
\end{proof}

\section{Lower Bounds}
\label{sec:lower_bounds}

We begin by establishing information-theoretic lower bounds on cumulative regret for the score-based regimes. A policy is \textit{uniformly consistent} if $\mathbb{E}[\mathcal{R}(T)] = o(T^\alpha)$ for every $\alpha > 0$ across all problem instances.

\begin{theorem}[Regret Lower Bounds for Score-Based Regimes]
\label{thm:score_lower_bounds}
Under Case 1 and the Shared-Channel Regime, any uniformly consistent policy satisfies an asymptotic regret lower bound of $\Omega\big((L-K)\log T\big)$.
\end{theorem}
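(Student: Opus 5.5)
We follow the standard change-of-measure route of Lai--Robbins, adapted to cascading feedback as in \cite{kveton2015cascading,combes2015learning}. First fix a Case~1 (or Shared-Channel) instance with scores $s_1>\dots>s_K>s_{K+1}\ge\dots\ge s_L$, where $s_i=\mathbb P(Y_i=1)=1-g_i$. We restrict to the subclass of instances in which each arm's parameters $(\mu_i,\epsilon_{0,i},\epsilon_{1,i})$ vary only through $\mu_i$, with the channel held fixed. For each suboptimal arm $a>K$ we build a perturbed instance $\nu^{(a)}$ that changes only $\mu_a$, to a value $\mu_a'$ making $s_a'$ slightly larger than $s_K$. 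Then $a$ belongs to the optimal shortlist of $\nu^{(a)}$, while arm $K$ does not. Under Case~1, score order gives both selection and ordering by Theorem~\ref{thm:score-characterization}, and under the Shared-Channel regime by Remark~\ref{rem:shared-channel}. So the optimal shortlists of the two instances differ in membership.

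\textbf{Step 1: divergence decomposition.} Only the feedback about arm $a$ changes its law between $\nu$ and $\nu^{(a)}$, and that feedback is revealed only in rounds where $a$ is reached ($O_u(a)=1$). In those rounds we observe $Y_a$, and $X_a$ when $Y_a=1$. The chain rule for KL divergence therefore gives
\[
\mathrm{KL}(\mathbb P_\nu^{T},\mathbb P_{\nu^{(a)}}^{T})=\mathbb E_\nu[N_a(T)]\,d_a ,
\]
where $d_a$ is the KL divergence between the laws of $(Y_a,X_a\mathbf 1\{Y_a=1\})$ under $\mu_a$ and under $\mu_a'$. Since $\epsilon_{0,i},\epsilon_{1,i}$ are fixed and $\mu_a,\mu_a'\in(0,1)$, $d_a$ is finite and positive.

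\textbf{Step 2: consistency implies many observations.} Pick the event $E=\{T-N'_K(T)\ \text{large}\}$, or more simply compare $\mathbb E[\#\{t:a\in\pi_t\}]$ under the two instances. Under $\nu^{(a)}$, every round in which $a$ is excluded incurs regret at least the positive gap $\Delta'_{\min}$. Uniform consistency therefore forces $a$ to be excluded only $o(T^\alpha)$ times under $\nu^{(a)}$, and forces $a$ to be included only $o(T^\alpha)$ times under $\nu$. The Bretagnolle--Huber inequality (or the data-processing inequality with the binary KL, as in Lattimore--Szepesv\'ari) then yields
\[
\liminf_{T\to\infty}\frac{\mathbb E_\nu[N_a(T)]}{\log T}\ge\frac1{d_a}.
\]

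\textbf{Step 3: observations cost regret.} This is the cascade-specific step and the main obstacle. An observation of $a$ does not by itself imply regret, so we need a lower bound on regret per observation. If $a$ is reached in round $t$, then $\pi_t\ne\pi^\star$, because $a\notin\pi^\star$, and hence $\mathcal R_t\ge\Delta_{\min}>0$. Counting observations, a single round can include several suboptimal arms; each such round contributes at most $L-K$ observations but regret at least $\Delta_{\min}$. To make the bound additive over arms, we instead use the sharper claim that replacing any suboptimal arm by a missing optimal arm at the same position strictly increases $R$ by a per-arm amount $\delta_a>0$. This follows from the score-exchange argument used in proving Theorem~\ref{thm:score-characterization}: swap $a$ for the best excluded arm among $1,\dots,K$ and reorder. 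We then show
\[
\mathcal R_t\ge c\sum_{a>K}\delta_a\,O_t(a)
\]
with $c$ depending only on the instance, using that reach probabilities are bounded below by $\prod_j g_j>0$ (or by working with expected inclusion counts). Summing over $t$ and combining with Step~2 gives
\[
\liminf_{T\to\infty}\frac{\mathbb E[\mathcal R(T)]}{\log T}\ge c\sum_{a=K+1}^{L}\frac{\delta_a}{d_a}=\Omega(L-K),
\]
with the $\Omega$ taken at fixed instance-dependent constants, which completes the argument.
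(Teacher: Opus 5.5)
Your overall structure (KL chain rule over the rounds in which $a$ is reached, change of measure via uniform consistency, then a regret-per-observation inequality) matches the paper's. However, the alternative instance you build does not exist in general in Case~1. You hold the channel fixed and move only $\mu_a$, so $s'_a=(1-\epsilon_{1,a})\mu'_a\le 1-\epsilon_{1,a}$. Whenever $1-\epsilon_{1,a}\le s_K$ (e.g.\ $\epsilon_{1,a}=0.8$, $\epsilon_{1,K}=0$, $\mu_K=0.5$), no choice of $\mu'_a$ makes $a$ optimal, and Step~2 has nothing to compare against. Within your restricted class this cannot be patched. There the learner knows $s_a\le 1-\epsilon_{1,a}$, and a UCB rule that clips arm $a$'s index at $1-\epsilon_{1,a}$ stays consistent on the class. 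When $1-\epsilon_{1,a}<s_K$, that rule selects $a$ only $O(1)$ times in expectation. So your argument can only yield a sum over $\{a>K:\,1-\epsilon_{1,a}>s_K\}$, possibly far fewer than $L-K$ arms.

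The fix uses the fact that with $\epsilon_{0,a}=0$ the event $Y_a=1$ forces $X_a=1$. Arm $a$'s feedback is then just $Y_a\sim\mathrm{Bern}(s_a)$, and the instance is effectively parametrized by $s$. Perturb $s_a$ itself to $s_K+\varepsilon$ (e.g.\ $\epsilon'_{1,a}=0$, $\mu'_a=s_K+\varepsilon$). This stays in Case~1 and gives per-observation divergence $d(s_a\,\|\,s_K+\varepsilon)$. It is exactly the paper's choice $u'_a=s_K+\varepsilon$, $v'_a=1$, and it relies on consistency over Case~1 instances with arbitrary false-negative rates. In the Shared-Channel regime your $\mu_a$-only perturbation is fine, since arms $a$ and $K$ share the channel and $\mu'_a=\mu_K+\varepsilon$ works. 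It is in fact cleaner than the paper's separate perturbation of $u_a$ and $v_a$, because it never leaves the shared-channel class.

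Step~3 is asserted rather than proved, and the worry behind it is misplaced. A shortlist contains at most $\min(K,L-K)$ suboptimal arms. Let $\Delta$ be the smallest gap among shortlists whose \emph{set} differs from the optimal set; note that in Case~1a the paper's $\Delta_{\min}$ is $0$, since every reordering of the optimal set is optimal. Then $\mathcal R_t\ge \frac{\Delta}{K}\sum_{a>K}\mathbf 1\{a\in\pi_t\}\ge \frac{\Delta}{K}\sum_{a>K}O_t(a)$. Combined with Step~2, this already gives $\liminf_{T}\mathbb E[\mathcal R(T)]/\log T\ge\frac{\Delta}{K}\sum_{a>K}1/\mathrm{KL}_a$. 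Here $\mathrm{KL}_a$ is your per-observation divergence; your name $d_a$ clashes with the paper's position-substitution constant. This is a sum over all $L-K$ arms with an instance-dependent constant, and the paper's own constants $\tau'$ and $d_a$ depend on $K$ as well.

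If you want gap-weighted terms, your swap idea works, but it needs specific bookkeeping:
\begin{itemize}
\item Replace the suboptimal arms by distinct missing optimal arms, working from the last position backwards. Then the prefix factor $P_{m-1}$ in the slot-substitution identity is always the reach probability of that position under $\pi_t$.
\item Lower-bound the remaining factor uniformly over suffixes. In Case~1 it equals $(s_j-s_a)[(1-m\tau)-\Phi]$, where $(1-m\tau)-\Phi\ge\tau$ for $\tau>0$, and equals the product of $g$ over the current suffix when $\tau=0$.
\item Finish with a reordering, which cannot decrease reward.
\end{itemize}
The paper instead applies the hybrid telescoping identity to the sorted shortlist $\pi_t^\star$, bounds $W_m(s)\ge\tau'$, and transfers reach probabilities back to $\pi_t$ through $d_a$.
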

\begin{proof}[Proof sketch]
Full proofs are provided in Appendix~\ref{app:Lower_proof} (Case~1) and Appendix~\ref{lb:lower_bound} (Shared-Channel). By Lemma~\ref{lem:regret_decompo}, instantaneous regret satisfies $\mathcal{R}_t \ge R(\pi^\star) - R(\pi_t^\star)$, where $\pi_t^\star$ denotes the selected shortlist arranged in descending order of the true arm scores. We lower-bound expected regret via arm observation counts as $\mathbb{E}[\mathcal{R}(T)] \ge \sum_{t=1}^{T}\mathbb{E}[R(\pi^\star) - R(\pi_t^\star)] \ge C \sum_{a=K+1}^{L} \omega_a \mathbb{E}[N_a(T)]$ for constants $C, \omega_a > 0$(Lemma~\ref{lem:observation_bound} for Case~1; Lemma~\ref{lem:observation_bound_shared} for Shared-Channel). The core structural step decomposes the trajectory KL-divergence via the Markov property of the cascade observation sequence (Lemmas~\ref{lemma:cascaded_divergence_decomposition} and~\ref{for_perm} in Appendix~\ref{llowerb}). Applying the change-of-measure inequality for uniformly consistent policies \cite{garivier2011kl, kaufmann2016complexity} yields $\mathbb{E}[N_a(T)] = \Omega(\log T)$ for each of the $L-K$ suboptimal arms, completing the proof.
\end{proof}
\remove{
In the general regime (Case 2), establishing an explicit lower bound requires solving an optimization problem following the Lai--Robbins framework \cite{lai1985asymptotically} (see Appendix~\ref{sec:general-lb} for details). However, due to non-linear coupling between parameters across arbitrary noise configurations, expressing regret directly in terms of arm observation counts $\mathbb{E}[N_a(T)]$ as in \eqref{eq:regret_obs_lb} remains challenging, making an exact analytical solution for general instances difficult to obtain.
}
For the score-based regime, the lower bound derived above order-wise matches the regret upper bound in Theorem~\ref{thm:instance_dependent_bound}, establishing the order-optimality of our proposed scheme. In the general regime (Case~2), deriving an explicit lower bound instead requires solving an optimization problem based on the Lai--Robbins framework~\cite{lai1985asymptotically}; see Appendix~\ref{sec:general-lb}. However, the nonlinear coupling of parameters under general noise configurations makes it difficult to express regret directly in terms of the arm observation counts $\mathbb{E}[N_a(T)]$,  precluding a tractable analytical solution for general instances.
\section{Experiments}
\label{sec:experiments}
We benchmark our proposed policies against two baselines detailed in Appendix~\ref{exps}: \textbf{Perm-UCB} (Algorithm~\ref{perm_algo}) and \textbf{Corr-UCB} (Algorithm~\ref{alg:c_ucb_shortlist}). Perm-UCB treats all $\frac{L!}{(L-K)!}$ candidate shortlists as independent actions under UCB1 \cite{auer2002finite}, ignoring correlations between overlapping shortlists and resulting in a  regret bound of $O\left(\frac{L!}{(L-K)!} \log T\right)$. Corr-UCB adapts C-UCB~\cite{gupta2021multi} to exploit these correlations by estimating the required `pseudo-rewards'  from cascading feedback; see Appendices~\ref{corr} and \ref{corr1}. Nevertheless, Theorem~\ref{competitive proof} shows that Corr-UCB offers no improvement in the regret order, retaining the same $O\left(\frac{L!}{(L-K)!}\log T\right)$ bound.
For Case~2, we additionally consider heuristics based on $\mathbb{P}(Y=1)$ (Algorithm \ref{alg:ucb_partial_exposure_case1}) and $\mathbb{P}(X=1\mid Y=1)$ (Algorithm~\ref{alg:top_k_ratio_ucb}) to evaluate the performance of natural score-based approximations in the absence of a valid per-arm score.
\remove{
\paragraph{Evaluation Regimes.} We evaluate two settings: \textbf{(1) Case~1 (Score):} We benchmark \textsc{Top-$K$-$P(Y=1)$-UCB} (Algorithm~\ref{alg:ucb_partial_exposure_case1}) and \textsc{DP-UCB} (Algorithm~\ref{alg:dp_ucb_case2}) against \textsc{Perm-UCB} (Algorithm~\ref{perm_algo}) and \textsc{Corr-UCB} (Algorithm~\ref{alg:c_ucb_shortlist}), validating the logarithmic regret bound in Theorem~\ref{thm:instance_dependent_bound}. \textbf{(2) Case~2 (General):} We evaluate \textsc{DP-UCB} against the baselines and greedy UCB heuristics based on $P(Y=1)$ and $P(X=1 \mid Y=1)$ (Algorithm~\ref{alg:top_k_ratio_ucb}), showing how score-based approximations induce linear regret whereas \textsc{DP-UCB} achieves significantly lower regret (Theorems~\ref{thm:instance_regret_DP_UCB}--\ref{thm:DP_const_regret_bounds}).
}


\subsection{Synthetic Data}
\label{sec:exp_synthetic}
We consider several synthetic instances and evaluate the performance of the various schemes over $T=500{,}000$ rounds. We report cumulative regret averaged over 100 independent Monte Carlo trials, with shaded regions indicating one standard deviation. Full instance details $(\boldsymbol{\mu},\boldsymbol{\epsilon}_1,\boldsymbol{\epsilon}_0)$ and additional experimental results are provided in Appendix~\ref{app:extra_experiments}.
\paragraph{Case 1 (Score-Based Regime).} When false-positive rates vanish ($\boldsymbol{\epsilon}_0 = \mathbf{0}_L$), explicit ranking scores exist via $P(Y_i=1)$. Figures~\ref{fig:unified_empirical_results}(a) ($K < L$) and \ref{fig:unified_empirical_results}(b) ($K = L$) show that \textsc{DP-UCB} and \textsc{Top-$K$-$P(Y=1)$-UCB} achieve sub-linear regret, vastly outperforming \textsc{Perm-UCB} and \textsc{Corr-UCB}. Figure~\ref{fig:unified_empirical_results}(c) further shows that \textsc{Top-$K$-$P(Y=1)$-UCB} outperforms \textsc{DP-UCB} by directly exploiting the score-based structure, consistent with Theorems~\ref{thm:instance_dependent_bound}, \ref{thm:instance_regret_DP_UCB}, and \ref{thm:DP_const_regret_bounds}.
\paragraph{Case 2 (No-Score Regime).} When $\boldsymbol{\epsilon}_0 \neq \mathbf{0}_L$, no valid per-arm ranking score exists. Heuristics that treat $\mathbb{P}(Y_i=1)$ or $\mathbb{P}(X_i=1\mid Y_i=1)$ as ranking scores can select or order arms suboptimally, resulting in linear regret. In contrast, \textsc{DP-UCB} achieves sub-linear regret for both $K<L$ (Figure~\ref{fig:unified_empirical_results}(d)) and $K = L$ (Figure~\ref{fig:unified_empirical_results}(e)), consistent with Theorems~\ref{thm:instance_regret_DP_UCB} and \ref{thm:DP_const_regret_bounds}.

\paragraph{Shared-Channel Regime.} When noise models are common across arms ($\boldsymbol{\epsilon}_1 = 0.3 \cdot \mathbf{1}_6, \boldsymbol{\epsilon}_0 = 0.2 \cdot \mathbf{1}_6$), valid per-arm ranking scores exist; in particular, both $\mathbb{P}(Y_i=1)$ and $\mathbb{P}(X_i=1\mid Y_i=1)$ serve as ranking scores. Accordingly, \textsc{Top-$P(X=1\mid Y=1)$-UCB}, \textsc{Top-$P(Y=1)$-UCB}, and \textsc{DP-UCB} achieve sub-linear regret while significantly outperforming the other baselines (Figure~\ref{fig:unified_empirical_results}(f)).

\begin{figure*}[h!]
    \centering
    \begin{subfigure}[b]{0.3\textwidth}
        \centering
        \includegraphics[width=\textwidth]{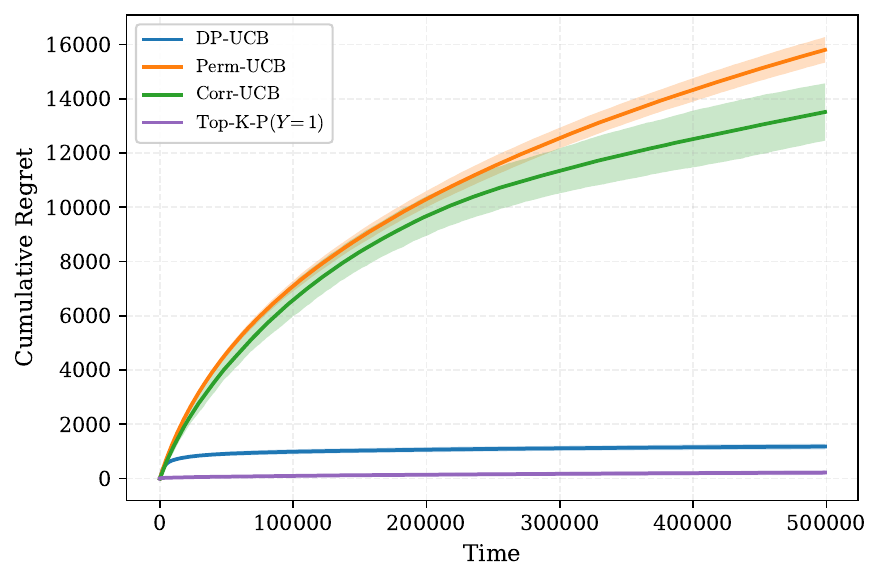}
        \caption{Case 1: $L=6, K=3, \tau=0.20$}
        \label{fig:oracle_results_5}
    \end{subfigure}
    \hfill
    \begin{subfigure}[b]{0.3\textwidth}
        \centering
        \includegraphics[width=\textwidth]{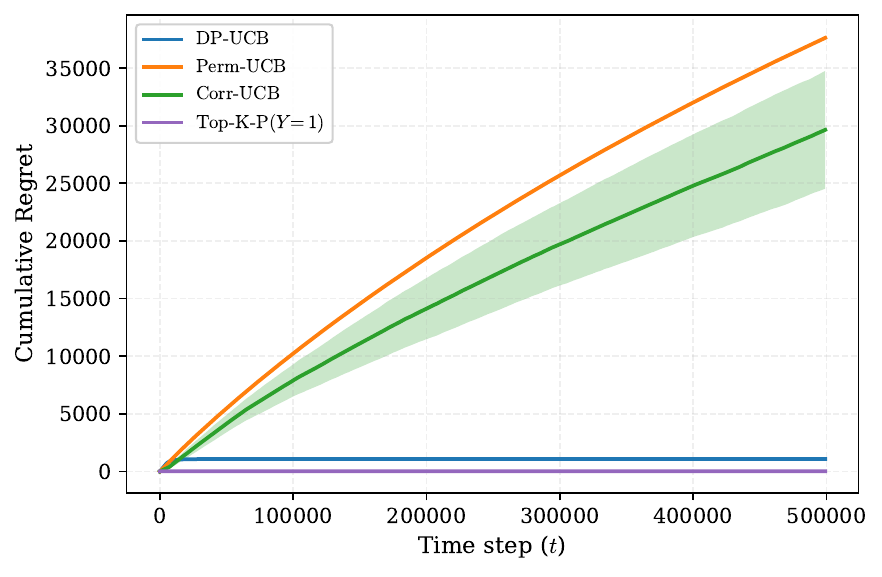}
        \caption{Case 1: $L=6, K=6, \tau=0.15$}
        \label{fig:oracle_results_6}
    \end{subfigure}
    \hfill
    \begin{subfigure}[b]{0.3\textwidth}
        \centering
        \includegraphics[width=\textwidth]{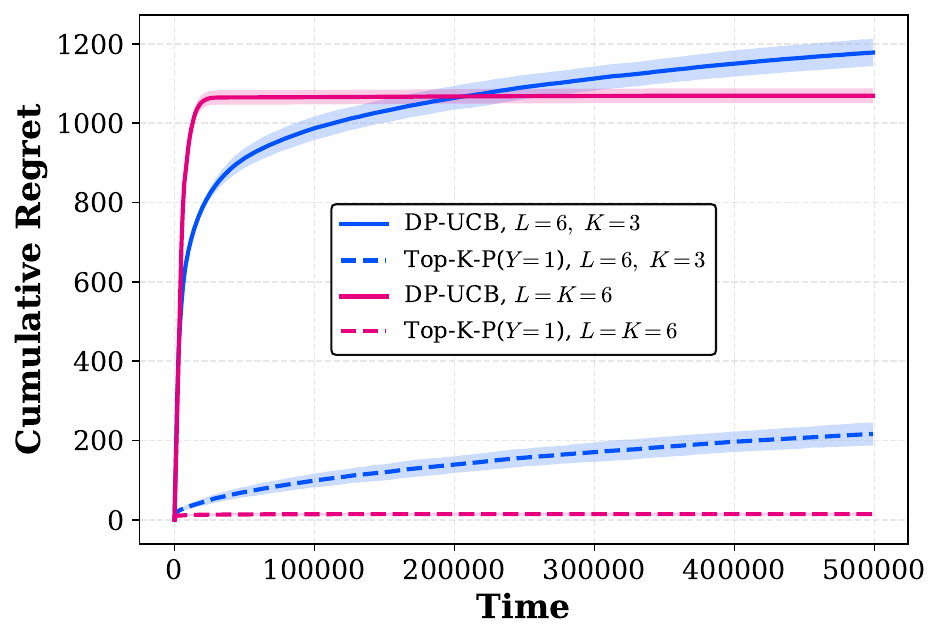}
        \caption{Case 1: $K < L$ vs. $K = L$}
        \label{fig:dp_ucb_vs_topk_l6}
    \end{subfigure}

    \vspace{0.4em}

    \begin{subfigure}[b]{0.3\textwidth}
        \centering
        \includegraphics[width=\textwidth]{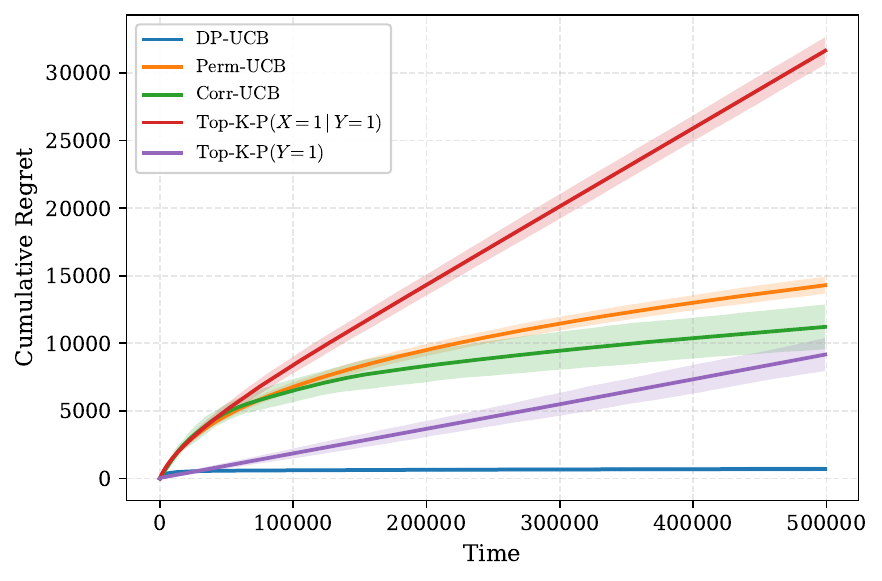}
        \caption{Case 2: $\tau = 0.20, L=6, K = 3$}
        \label{fig:oracle_results_tau_02}
    \end{subfigure}
    \hfill
    \begin{subfigure}[b]{0.3\textwidth}
        \centering
        \includegraphics[width=\textwidth]{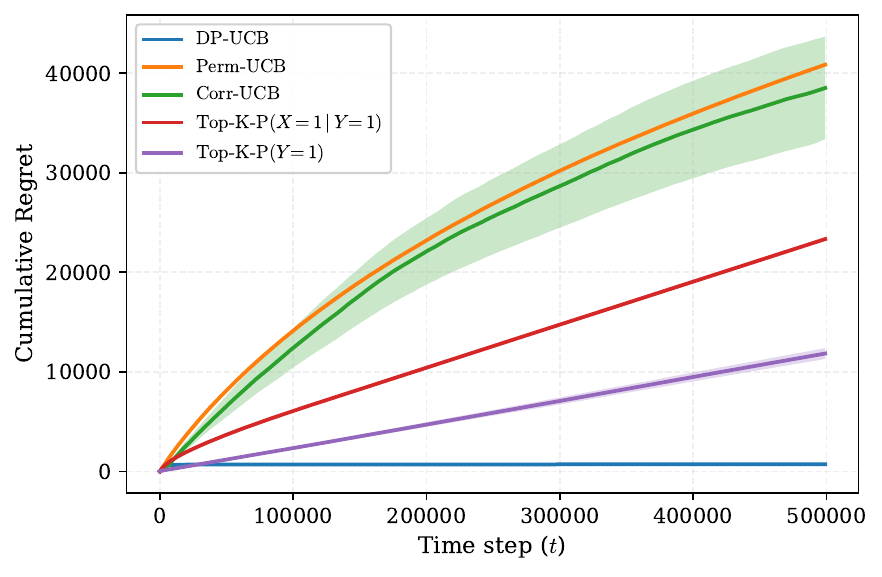}
        \caption{Case 2: $\tau = 0.15, K = L = 6$}
        \label{fig:oracle_results_full_feedback}
    \end{subfigure}
    \hfill
    \begin{subfigure}[b]{0.3\textwidth}
        \centering
        \includegraphics[width=\textwidth]{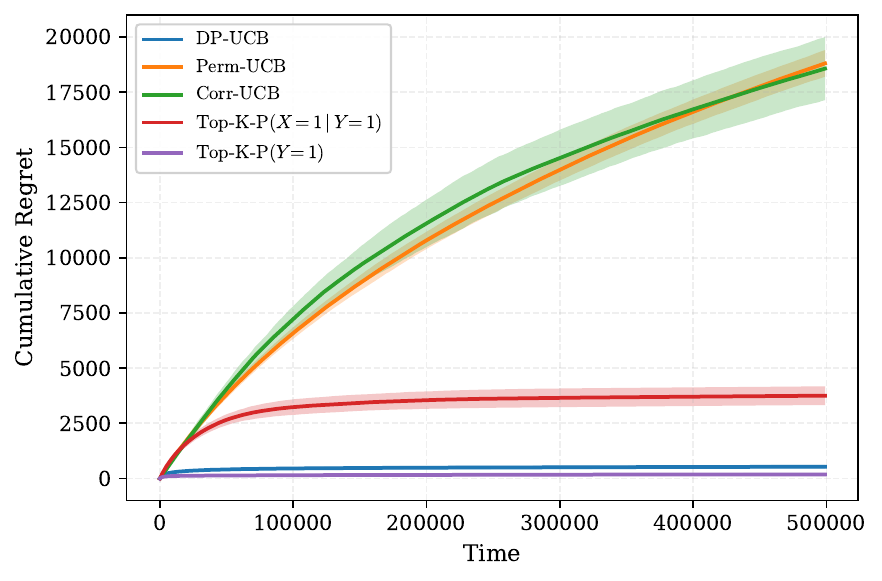}
        \caption{Shared-Channel: $\tau = 0.30$}
        \label{fig:common_eps_tau_03}
    \end{subfigure}

    \caption{Cumulative regret vs.\ time. 
\textbf{(a)--(c) Case 1 (Score):} $\boldsymbol{\epsilon}_0=\mathbf{0}_6$, \textsc{Top-$K$-$P(Y=1)$-UCB} vs. \textsc{DP-UCB}. 
\textbf{(d)--(e) Case 2 (no-score):} Regret for $K < L$ and $K = L$. 
\textbf{(f) Shared-Channel:} $K=3, L=6, \tau=0.15$.}
\label{fig:unified_empirical_results}
    \label{fig:unified_empirical_results}
\end{figure*}

\subsection{MovieLens Real-World Benchmark}
\label{sec:exp_movielens}
We validate our framework on the MovieLens-1M dataset~\cite{harper2015movielens}, modeling the $L=10$ highest rated movie genres as arms. For each genre $i$, ratings are quantized into binary outcomes $X_i\in \{0,1\}$, with $X_i=1$ for ratings at least $4$ ; the arm mean $\mu_i$ is the empirical fraction of ratings in genre $i$ that are at least $4$. Noisy feedback $Y_i\in \{0,1\}$ is generated by passing $X_i$ through synthetic false-negative ($\boldsymbol{\epsilon}_1$) and false-positive ($\boldsymbol{\epsilon}_0$) channels.



Figure~\ref{fig:movielens_results_combined} reports cumulative regret over $T=500{,}000$ rounds, averaged across 100 independent trials. In the score-based regime ($\boldsymbol{\epsilon}_0=\mathbf{0}_{10}$), \textsc{DP-UCB} and \textsc{Top-$K$-$P(Y=1)$-UCB} achieve sub-linear regret and substantially outperform the other baselines. In the no-score regime ($\boldsymbol{\epsilon}_0\neq\mathbf{0}_{10}$), heuristics treating $\mathbb{P}(Y_i=1)$ or $\mathbb{P}(X_i=1\mid Y_i=1)$ as ranking scores incur linear regret for both $\tau=0.15$ and $\tau=0$ (Figures~\ref{fig:movielens_results_combined}(b)-(c)), while \textsc{DP-UCB} maintains sub-linear regret, consistent with Theorem~\ref{thm:instance_regret_DP_UCB}. Further details regarding the dataset construction, parameter configurations, and additional results are provided in Appendix~\ref{_app:movielens_details}.


\begin{figure*}[h!]
    \centering
    \begin{subfigure}[b]{0.3\textwidth}
        \centering
        \includegraphics[width=\textwidth]{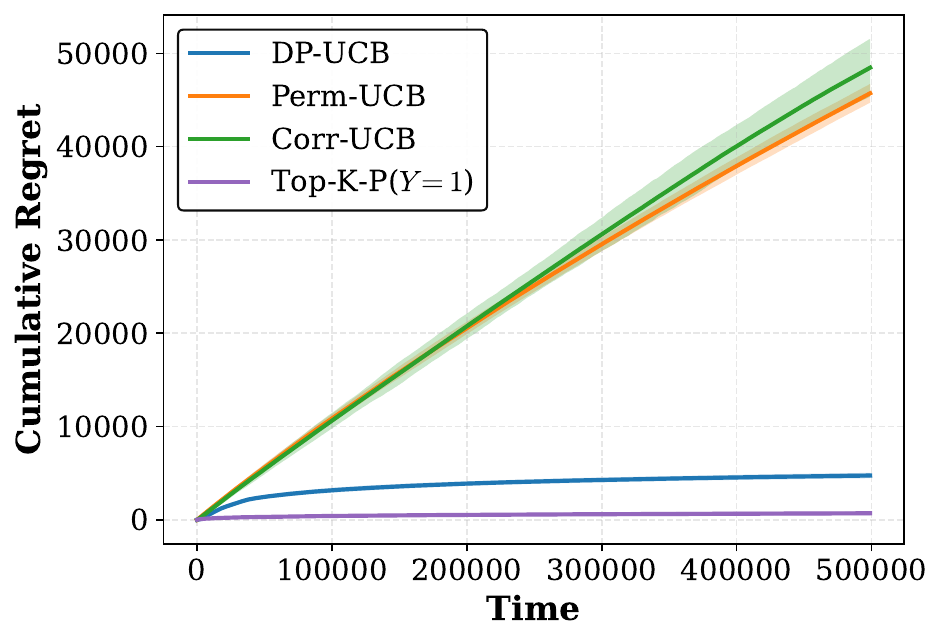}
        \caption{$L = 10, K=3, \tau=0.20$}
        \label{fig:movielens_score_based}
    \end{subfigure}
    \hfill
    \begin{subfigure}[b]{0.3\textwidth}
        \centering
        \includegraphics[width=\textwidth]{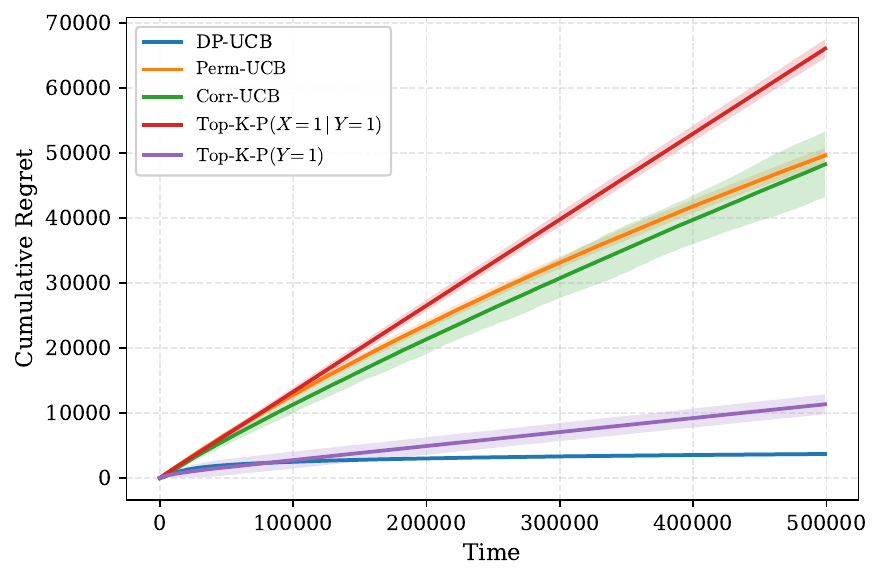}
        \caption{$L = 10, K=3, \tau = 0.15$}
        \label{fig:movielens_tau_015}
    \end{subfigure}
    \hfill
    \begin{subfigure}[b]{0.3\textwidth}
        \centering
        \includegraphics[width=\textwidth]{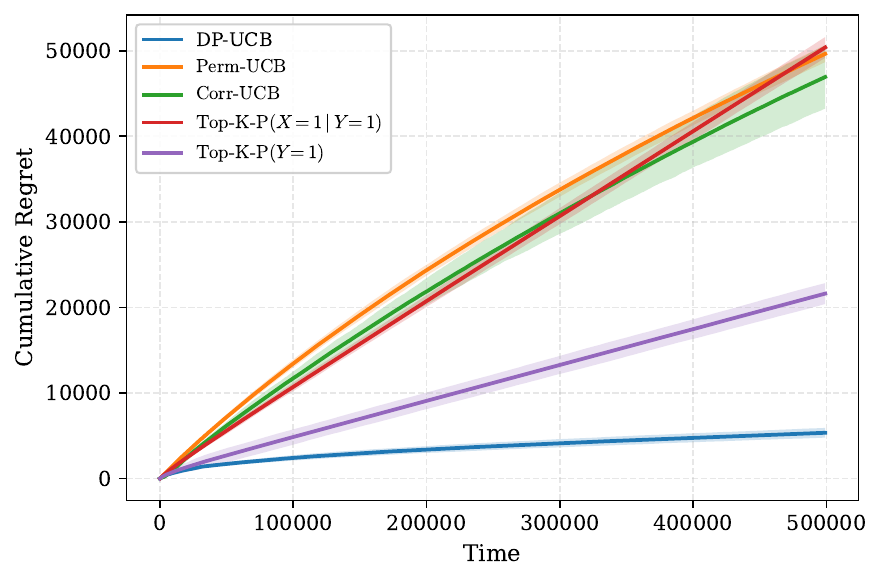}
        \caption{$\boldsymbol{\epsilon}_0 \neq \mathbf{0}_{10}, L= 10, K=3, \tau = 0$}
        \label{fig:movielens_tau_0}
    \end{subfigure}

    \caption{Cumulative regret vs.\ time on MovieLens-1M. 
\textbf{(a) Case 1 (Score regime):} $\boldsymbol{\epsilon}_0 = \mathbf{0}_{10}$. 
\textbf{(b) Case 2 (No-score regime):} $\tau = 0.15$. 
\textbf{(c) Case 2 (No-score regime):} $\tau = 0$.}
\label{fig:movielens_results_combined}
\end{figure*}

\subsection{LLM Headlines Cascade}
\label{sec:exp_llm}
To demonstrate an application of our framework to LLM routing via model cascades~\cite{chen2023frugalgpt}, we consider sentiment prediction on a financial news headline dataset~\cite{sinha2021impact}. Here, each of the $L = 5$ arms corresponds to an auto-regressive LLM generator paired with an LLM judge that acts as a noisy evaluation channel. For each sampled headline, $X_i\in \{0,1\}$ indicates whether generator $i$ correctly predicts the gold-standard sentiment, while $Y_i\in \{0,1\}$ denotes the judge's assessment. A rejection ($Y_i=0$) routes the headline to the next LLM in the cascade, whereas an acceptance ($Y_i=1$) terminates the cascade. The arm accuracies $\boldsymbol{\mu}$ and judge false-negative and false-positive rates $(\boldsymbol{\epsilon}_1,\boldsymbol{\epsilon}_0)$ are estimated  from the dataset and reported in Table~\ref{tab:llm_cascade_params}. We evaluate regret over $T=1{,}000{,}000$ rounds, averaging results across 100 independent experiments. Details regarding model architectures, noise parameters, and dataset pre-processing are provided in Appendix~\ref{app:LLM_data}.
\remove{
The arm accuracies $\boldsymbol{\mu}$, false-negative rates $\boldsymbol{\epsilon}_1$, and false-positive rates $\boldsymbol{\epsilon}_0$ are the empirical averages over the dataset listed in Table~\ref{tab:llm_cascade_params}. All evaluations are conducted across $T = 1{,}000{,}000$ rounds and averaged over 100 independent experiments.
}
Figure~\ref{fig:llm_results_combined} compares performance under two  configurations: $K=3$, $L = 5$, $\tau=0.20$ (Figure~\ref{fig:llm_results_combined}(a)) and $K=L=5$, $\tau=0.18$ (Figure~\ref{fig:llm_results_combined}(b)). In both settings, heuristics that treat $\mathbb{P}(Y_i=1)$ or $\mathbb{P}(X_i=1\mid Y_i=1)$ as ranking scores yield suboptimal cascades and incur linear regret. In contrast, \textsc{DP-UCB} achieves sub-linear regret for both $K<L$ and $K=L$, consistent with Theorems~\ref{thm:instance_regret_DP_UCB} and \ref{thm:DP_const_regret_bounds}.
\section{Conclusion}
\remove{
In this work, we introduced a novel framework for cascading bandits under noisy observations, modeling key real-world setups like LLM-as-a-judge cascades. Within this setting, we adapted \textsc{Top-$K$-$P(Y=1)$-UCB}—which achieves optimal $\Theta((L-K)\log T)$ regret in relevant regimes—and proposed a novel DP-UCB algorithm for general regimes. DP-UCB significantly outperform existing baselines like Perm-UCB and Corr-UCB. While DP-UCB offers lower computational complexity than fixed-order searches, scaling to very large action spaces remains computationally intensive. Developing an algorithm that is simultaneously computationally efficient and regret-optimal is an exciting direction for future work.
}
We study cascading bandits with noisy observations, motivated by applications such as LLM cascades in which model outputs are screened by smaller, imperfect evaluation models. We characterize when the optimal policy admits a per-arm score-based representation. For the score-based regime, we propose \textsc{Top-$K$-$P(Y=1)$-UCB} and establish an order-optimal regret guarantee. For the more challenging no-score regime, we develop \textsc{DP-UCB}, which uses dynamic programming to jointly optimize shortlist selection and ordering, and provide analytical regret guarantees. Extensive experiments on synthetic and real-world settings demonstrate that our proposed algorithms substantially outperform other baselines. Several directions remain open. Although dynamic programming is considerably more efficient than exhaustive combinatorial search, its complexity can still be prohibitive for large action spaces (under positive observation cost), motivating more scalable algorithms. Further, while our lower bound establishes order-optimality in the score-based regime, obtaining tractable regret lower bounds for the general no-score regime remains open.
\begin{table*}
    \centering
    \footnotesize
    \caption{Empirical parameters for LLM-judge cascade arms across the financial headlines dataset.}
    \begin{tabular}{c l l c c c}
        \hline
        Arm & Generator (LLM) & Discriminator (Judge) & $\mu$ & $\epsilon_1$ & $\epsilon_0$ \\
        \hline
        $1$ & SmolLM2-1.7B-Instruct & bert-base-uncased & $0.7036$ & $0.0387$ & $0.1596$ \\
        $2$ & Qwen2.5-0.5B-Instruct & roberta-base & $0.6496$ & $0.0328$ & $0.2182$ \\
        $3$ & phi-2 & electra-base-discriminator & $0.7462$ & $0.0444$ & $0.3463$ \\
        $4$ & stablelm-2-1\_6b-chat & albert-base-v2 & $0.6579$ & $0.0761$ & $0.2074$ \\
        $5$ & SmolLM2-360M-Instruct & DeBERTa-v3-base & $0.4155$ & $0.1328$ & $0.0398$ \\
        \hline
    \end{tabular}
    \label{tab:llm_cascade_params}
\end{table*}
\begin{figure}[H]
    \centering
    \begin{subfigure}[b]{0.49\linewidth}
        \centering
        \includegraphics[width=\textwidth]{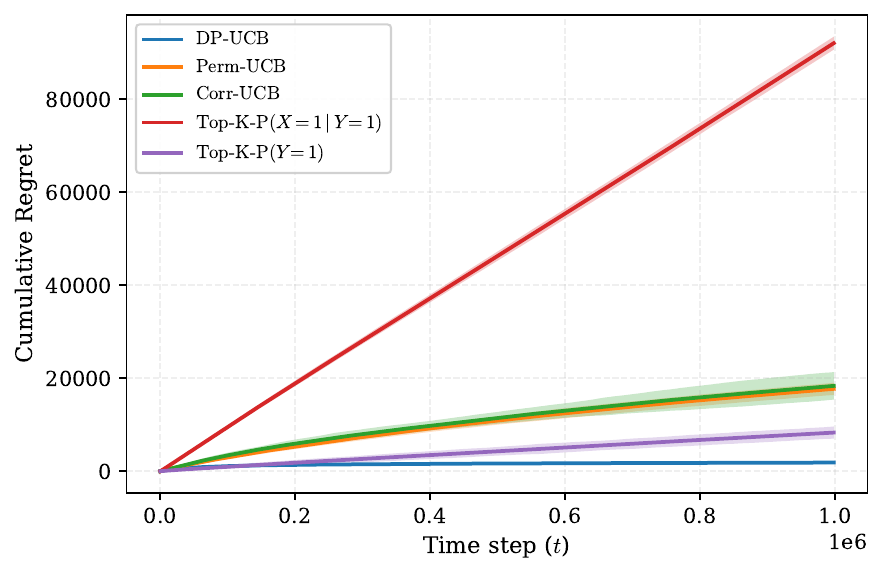}
        \caption{$L=5, K=3, \tau=0.20$}
        \label{fig:llm_k3_comparison}
    \end{subfigure}
    \hfill
    \begin{subfigure}[b]{0.49\linewidth}
        \centering
        \includegraphics[width=\textwidth]{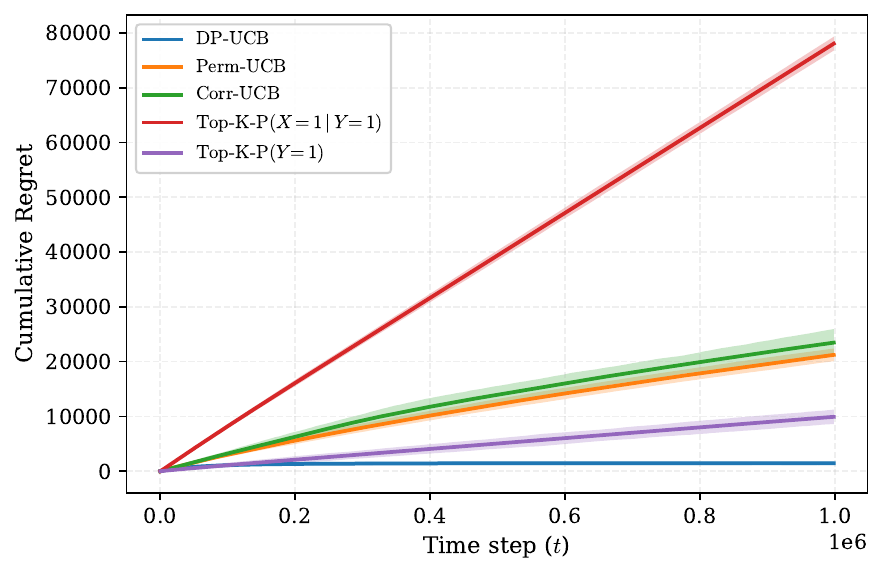}
        \caption{$L=K=5, \tau=0.18$}
        \label{fig:llm_k5_comparison}
    \end{subfigure}

    \caption{Cumulative regret on LLM Headlines.}
    \label{fig:llm_results_combined}
\end{figure}

\bibliographystyle{unsrtnat}
\bibliography{references}

\onecolumn
\section*{APPENDIX \\ Cascading Bandits with Noisy Observations: \\ Algorithms and Regret Bounds \\ Supplementary Materials}

\section{Proof of Theorem~\ref{thm:score-characterization}}
\label{app:score_proofs_proof}

Recall $r_i=\mathbb P(X_i=1,Y_i=1)$, $g_i=\mathbb P(Y_i=0)$. We use the
four regime names Case 1a ($\epsilon_{0,i}=0,\tau=0$), Case 1b ($\epsilon_{0,i}=0,\tau>0$),
Case 2a ($\epsilon_{0,i}>0$ for some $i$, $\tau=0$), Case 2b ($\epsilon_{0,i}>0$ for
some $i$, $\tau>0$) exactly as in Table~\ref{tab:oracle_scores_unified}. For a
shortlist $\pi$ and position $m\in\{1,\dots,K+1\}$, write $\Phi_\pi(m)$ for the
expected reward accumulated from position $m$ through position $K$ under $\pi$,
\[
\Phi_\pi(m)\triangleq\sum_{i=m}^{K}(1-i\tau)\,r_{\pi(i)}\prod_{j=m}^{i-1}g_{\pi(j)}.
\]

\subsubsection*{Two exchange identities}

\begin{lemma}[Adjacent swap]
\label{lem:adjacent_swap}
Let $\pi'$ swap the arms at positions $i,i+1$ of $\pi$, and let
$P_{i-1}=\prod_{j=1}^{i-1}g_{\pi(j)}$. Then
\begin{equation}
R(\pi)-R(\pi')=P_{i-1}\Big[(1-i\tau)(r_{\pi(i)}-r_{\pi(i+1)})
+\big(1-(i+1)\tau\big)\big(g_{\pi(i)}r_{\pi(i+1)}-g_{\pi(i+1)}r_{\pi(i)}\big)\Big].
\label{eq:swap_general}
\end{equation}
\end{lemma}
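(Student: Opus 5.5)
The plan is a direct term-by-term comparison of the reward expression \eqref{eq:reward_rg} for $\pi$ and $\pi'$. Write $a=\pi(i)$ and $b=\pi(i+1)$. Since $\pi'$ agrees with $\pi$ everywhere except at positions $i$ and $i+1$, the summands for positions $m<i$ are identical in $R(\pi)$ and $R(\pi')$. They cancel in the difference.

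For positions $m>i+1$, each summand contains the prefix product $\prod_{j=1}^{m-1}g_{\pi(j)}$. This product includes both factors $g_a$ and $g_b$, and multiplication is commutative, so the prefix products under $\pi$ and $\pi'$ coincide. The arm $\pi(m)=\pi'(m)$ is the same in both, so these summands cancel as well. Equivalently, in the notation just introduced, $\Phi_\pi(i+2)=\Phi_{\pi'}(i+2)$, and it is multiplied by the same factor $P_{i-1}g_ag_b$ in both rewards.

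Only the two terms at positions $i$ and $i+1$ remain, and both share the common prefix $P_{i-1}$. For $\pi$ they contribute $P_{i-1}\big[(1-i\tau)r_a+(1-(i+1)\tau)g_a r_b\big]$. For $\pi'$ they contribute $P_{i-1}\big[(1-i\tau)r_b+(1-(i+1)\tau)g_b r_a\big]$. Subtracting and grouping the $(1-i\tau)$ and $(1-(i+1)\tau)$ coefficients gives exactly \eqref{eq:swap_general}.

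There is no real obstacle here. The only point needing care is the justification that the tail terms cancel, which rests on commutativity of the prefix product once both swapped arms have been passed. The edge case $i+1=K$ simply has an empty tail. I would present the argument as a short three-line calculation.
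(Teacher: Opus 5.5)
Your proposal is correct and follows essentially the same route as the paper's proof. Both write out the two rewards, note that all summands outside positions $i,i+1$ coincide because the arms and the prefix products (by commutativity for $m>i+1$) agree, and subtract the two remaining terms sharing the common factor $P_{i-1}$.
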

\begin{proof}
Let $a=\pi(i)$, $b=\pi(i+1)$, so $\pi'(i)=b$, $\pi'(i+1)=a$, and $\pi,\pi'$ agree
elsewhere. From \eqref{eq:reward_rg},
\begin{align}
R(\pi)&=\sum_{m\ne i,i+1}(1-m\tau)r_{\pi(m)}\prod_{j<m}g_{\pi(j)}
+P_{i-1}(1-i\tau)r_a+P_{i-1}g_a(1-(i+1)\tau)r_b,
\label{eq:Rpi}\\
R(\pi')&=\sum_{m\ne i,i+1}(1-m\tau)r_{\pi(m)}\prod_{j<m}g_{\pi(j)}
+P_{i-1}(1-i\tau)r_b+P_{i-1}g_b(1-(i+1)\tau)r_a.
\label{eq:Rpip}
\end{align}
Subtracting \eqref{eq:Rpip} from \eqref{eq:Rpi}, the sums over $m\ne i,i+1$ cancel
since $\pi(m)=\pi'(m)$ and $\prod_{j<m}g_{\pi(j)}=\prod_{j<m}g_{\pi'(j)}$ there, leaving
\begin{equation}
R(\pi)-R(\pi')=P_{i-1}(1-i\tau)(r_a-r_b)+P_{i-1}(1-(i+1)\tau)(g_ar_b-g_br_a).
\label{eq:swap_step}
\end{equation}
Factoring $P_{i-1}$ out of \eqref{eq:swap_step} and substituting $a=\pi(i)$, $b=\pi(i+1)$
gives \eqref{eq:swap_general}.
\end{proof}

\begin{lemma}[Slot substitution]
\label{lem:slot_sub}
Let $\pi,\pi''$ agree except at position $m$, with $\pi(m)=i$, $\pi''(m)=j\notin\pi$.
Let $P_{m-1}=\prod_{k=1}^{m-1}g_{\pi(k)}$ and $\Phi=\Phi_\pi(m+1)=\Phi_{\pi''}(m+1)$. Then
\begin{equation}
R(\pi)-R(\pi'')=P_{m-1}\Big[(1-m\tau)(r_i-r_j)+(g_i-g_j)\Phi\Big],\qquad 0\le\Phi\le1-(m+1)\tau<1-m\tau.
\label{eq:slot_general}
\end{equation}
\end{lemma}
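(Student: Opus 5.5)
Since $\pi$ and $\pi''$ agree everywhere except at position $m$, I will decompose $R$ from \eqref{eq:reward_rg} into three parts: the prefix (positions $1,\dots,m-1$), the term at position $m$, and the suffix (positions $m+1,\dots,K$).

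\textbf{Prefix.} For positions $k<m$ the arm and its prefix product $\prod_{j<k}g_{\pi(j)}$ coincide under $\pi$ and $\pi''$. So the prefix contributions are identical and cancel in the difference.

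\textbf{Position $m$.} Both shortlists share the prefix product $P_{m-1}$. The contributions are $P_{m-1}(1-m\tau)r_i$ for $\pi$ and $P_{m-1}(1-m\tau)r_j$ for $\pi''$.

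\textbf{Suffix.} For $k>m$, factor the prefix product at position $k$ as $P_{m-1}\,g_{\pi(m)}\prod_{\ell=m+1}^{k-1}g_{\pi(\ell)}$. Summing over $k$ then gives a suffix contribution of $P_{m-1}g_i\Phi_\pi(m+1)$ for $\pi$ and $P_{m-1}g_j\Phi_{\pi''}(m+1)$ for $\pi''$. Since the arms at positions $m+1,\dots,K$ are the same in both lists, $\Phi_\pi(m+1)=\Phi_{\pi''}(m+1)=\Phi$. Subtracting the two expressions gives \eqref{eq:slot_general}. This mirrors the proof of Lemma~\ref{lem:adjacent_swap}.

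\textbf{Bounds on $\Phi$.} Nonnegativity is immediate, since $\tau\le 1/K$ makes every factor $1-k\tau$ nonnegative, and $r,g\ge0$. For the upper bound I will show by backward induction on $m'$ that $\Phi_\pi(m')\le 1-m'\tau$, with base case $\Phi_\pi(K+1)=0$. The inductive step uses the recursion
\[
\Phi_\pi(m')=(1-m'\tau)r_a+g_a\Phi_\pi(m'+1),\qquad a=\pi(m').
\]
By the hypothesis this is at most $(1-m'\tau)(r_a+g_a)$. Because $r_a+g_a=\mathbb P(X_a=1,Y_a=1)+\mathbb P(Y_a=0)\le 1$, the bound $\Phi_\pi(m')\le 1-m'\tau$ follows.

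Applying this at $m'=m+1$ gives $\Phi\le 1-(m+1)\tau$, which is at most $1-m\tau$. \textbf{Main obstacle.} The strict inequality $1-(m+1)\tau<1-m\tau$ requires $\tau>0$. When $\tau=0$ I would read it as non-strict, or note that $r_a+g_a<1$ whenever $\mathbb P(X_a=0,Y_a=1)>0$. Apart from this caveat on strictness, the argument is purely algebraic.
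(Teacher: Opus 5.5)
Your proof is correct and follows the paper's argument essentially step for step: the common prefix cancels, each tail is written as $P_{m-1}\big[(1-m\tau)r+g\,\Phi\big]$ and the two are subtracted, and the bound on $\Phi$ comes from the same backward induction on $\Phi_\pi(l)=(1-l\tau)r_{\pi(l)}+g_{\pi(l)}\Phi_\pi(l+1)$ using $r_{\pi(l)}+g_{\pi(l)}\le 1$. Your strictness caveat is apt, since the paper simply invokes $\tau>0$ at that step even though Part~II applies the lemma for all $\tau$. Both versions also share one more edge case: at $m=K$ the base claim $\Phi=0\le 1-(K+1)\tau$ needs $\tau\le 1/(K+1)$, and the standing assumption $\tau\le 1/K$ does not guarantee this.
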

\begin{proof}
From \eqref{eq:reward_rg} and the identity satisfied by $\Phi_\pi$ above,
\begin{align}
R(\pi)&=\sum_{k<m}(1-k\tau)r_{\pi(k)}\prod_{l<k}g_{\pi(l)}
+P_{m-1}\big[(1-m\tau)r_i+g_i\Phi\big],
\label{eq:Rpi_slot}\\
R(\pi'')&=\sum_{k<m}(1-k\tau)r_{\pi(k)}\prod_{l<k}g_{\pi(l)}
+P_{m-1}\big[(1-m\tau)r_j+g_j\Phi\big],
\label{eq:Rpipp_slot}
\end{align}
where the sums over $k<m$ are identical in \eqref{eq:Rpi_slot} and
\eqref{eq:Rpipp_slot} since $\pi,\pi''$ agree on positions $1,\dots,m-1$, and the
suffix quantity from $m+1$ onward is the same $\Phi$ in both since $\pi,\pi''$ agree on
positions $m+1,\dots,K$. Subtracting \eqref{eq:Rpipp_slot} from \eqref{eq:Rpi_slot},
the $k<m$ sums cancel, leaving
\begin{equation}
R(\pi)-R(\pi'')=P_{m-1}(1-m\tau)(r_i-r_j)+P_{m-1}(g_i-g_j)\Phi,
\label{eq:slot_step}
\end{equation}
and factoring $P_{m-1}$ out of \eqref{eq:slot_step} gives \eqref{eq:slot_general}.

For the bound on $\Phi$, note $\Phi_\pi(K+1)=0\le1-(K+1)\tau$. If $\Phi_\pi(l+1)\le1-(l+1)\tau$ for some $l\ge m$, then
using $r_{\pi(l)}+g_{\pi(l)}\le1$,
\begin{equation}
\Phi_\pi(l)=(1-l\tau)r_{\pi(l)}+g_{\pi(l)}\Phi_\pi(l+1)
\le(1-l\tau)r_{\pi(l)}+g_{\pi(l)}(1-l\tau)=(1-l\tau)\big(r_{\pi(l)}+g_{\pi(l)}\big)\le1-l\tau.
\label{eq:Phi_induction}
\end{equation}
Applying \eqref{eq:Phi_induction} inductively from $l=K$ down to $l=m+1$ gives
$\Phi=\Phi_\pi(m+1)\le1-(m+1)\tau$, and $1-(m+1)\tau<1-m\tau$ since $\tau>0$.
\end{proof}

Lemma~\ref{lem:adjacent_swap} governs the ordering score (Part I); Lemma
\ref{lem:slot_sub} governs the selection score (Part II).

\subsubsection*{Part I: ordering score}

\paragraph{Case 1a/1b ($\epsilon_{0,i}=0$, all $\tau$):}
$\epsilon_{0,i}=0$ means a positive signal can only come from a true success, so $\mathbb P(X_i=1|Y_i=1)=1$
for every arm
\begin{equation}
\mathbb P(X_i=1,Y_i=1)=r_i\ \Longrightarrow\ g_i=1-r_i.
\label{eq:case1_gi}
\end{equation}
Using \eqref{eq:case1_gi} for both $\pi(i)$ and $\pi(i+1)$, the cross term in
\eqref{eq:swap_general} becomes
\begin{equation}
g_{\pi(i)}r_{\pi(i+1)}-g_{\pi(i+1)}r_{\pi(i)}
=(1-r_{\pi(i)})r_{\pi(i+1)}-(1-r_{\pi(i+1)})r_{\pi(i)}
=-\big(r_{\pi(i)}-r_{\pi(i+1)}\big).
\label{eq:case1_cross}
\end{equation}
Substituting \eqref{eq:case1_cross} back into \eqref{eq:swap_general},
\begin{equation}
R(\pi)-R(\pi')=P_{i-1}\big(r_{\pi(i)}-r_{\pi(i+1)}\big)\Big[(1-i\tau)-\big(1-(i+1)\tau\big)\Big]
=P_{i-1}\,\tau\,\big(r_{\pi(i)}-r_{\pi(i+1)}\big).
\label{eq:case1_swap}
\end{equation}
\emph{Case 1a} ($\tau=0$): \eqref{eq:case1_swap} is $0$ for every $i$, so swapping
any two arms leaves the reward unchanged and ordering plays no role.
\emph{Case 1b} ($\tau>0$): since $P_{i-1}>0$ and $\tau>0$, \eqref{eq:case1_swap}
gives
\begin{equation}
R(\pi)>R(\pi')\iff r_{\pi(i)}>r_{\pi(i+1)}
\label{eq:case1b_criterion}
\end{equation}
at every position and for every $\pi$. If two adjacent arms are out of order by
\eqref{eq:case1b_criterion}, swapping them strictly increases the reward, so an
optimal shortlist must have its arms sorted in decreasing order of
$s_i^{(2)}=r_i=(1-\epsilon_{1,i})\mu_i$.

\paragraph{Case 2a ($\epsilon_{0,i}>0$, $\tau=0$):}
Setting $\tau=0$ in \eqref{eq:swap_general},
\begin{equation}
R(\pi)-R(\pi')=P_{i-1}\Big[r_{\pi(i)}\big(1-g_{\pi(i+1)}\big)-r_{\pi(i+1)}\big(1-g_{\pi(i)}\big)\Big].
\label{eq:case2a_swap}
\end{equation}
Both $1-g_{\pi(i)}$ and $1-g_{\pi(i+1)}$ are positive, so dividing
\eqref{eq:case2a_swap} by $P_{i-1}(1-g_{\pi(i)})(1-g_{\pi(i+1)})$ turns it into
\begin{equation}
R(\pi)>R(\pi')\iff \frac{r_{\pi(i)}}{1-g_{\pi(i)}}>\frac{r_{\pi(i+1)}}{1-g_{\pi(i+1)}}.
\label{eq:case2a_criterion}
\end{equation}
By the same swap argument as above, an optimal shortlist sorts its arms in
decreasing order of
\begin{equation}
s_i^{(2)}=\frac{r_i}{1-g_i}=\frac{(1-\epsilon_{1,i})\mu_i}{(1-\epsilon_{1,i})\mu_i+\epsilon_{0,i}(1-\mu_i)}.
\label{eq:case2a_score}
\end{equation}

\paragraph{Case 2b ($\epsilon_{0,i}>0$ for some arm, $\tau>0$):}
Set $\delta_i\triangleq(1-r_i)-g_i$. A short computation from the definitions of
$r_i,g_i$ gives
\begin{equation}
\delta_i=\epsilon_{0,i}(1-\mu_i)\ge0,\qquad \delta_i>0\iff\epsilon_{0,i}>0.
\label{eq:delta_def}
\end{equation}
So $\delta_i$ measures exactly how far arm $i$ is from the clean regime
$g_i=1-r_i$ used above. Writing $g_i=1-r_i-\delta_i$ and substituting into the
cross term of \eqref{eq:swap_general} as in \eqref{eq:case1_cross},
\begin{equation}
g_{\pi(i)}r_{\pi(i+1)}-g_{\pi(i+1)}r_{\pi(i)}
=-\big(r_{\pi(i)}-r_{\pi(i+1)}\big)+\big(\delta_{\pi(i+1)}r_{\pi(i)}-\delta_{\pi(i)}r_{\pi(i+1)}\big),
\label{eq:case2b_cross}
\end{equation}
so \eqref{eq:swap_general} now has an extra term beyond \eqref{eq:case1_swap}:
\begin{equation}
R(\pi)-R(\pi')=P_{i-1}\Big[\tau\big(r_{\pi(i)}-r_{\pi(i+1)}\big)
+\big(1-(i+1)\tau\big)\big(\delta_{\pi(i+1)}r_{\pi(i)}-\delta_{\pi(i)}r_{\pi(i+1)}\big)\Big].
\label{eq:swap_case2b}
\end{equation}
In \eqref{eq:swap_case2b}, the first term depends only on
$r_{\pi(i)},r_{\pi(i+1)}$, while the second is scaled by $1-(i+1)\tau$, which
shrinks as $i$ grows, that is, as the pair is compared later in the cascade. Fix
two arms $u,v$, not both with $\delta=0$, and suppose $\tau(r_u-r_v)$ and
$\delta_vr_u-\delta_ur_v$ have opposite sign. Comparing $u,v$ early in the cascade
(small $i$) makes the second term in \eqref{eq:swap_case2b} large enough to decide
the sign of $R(\pi)-R(\pi')$; comparing them late (large $i$) shrinks that term
until the first one decides instead, favoring the other arm, with
$r_u,r_v,\delta_u,\delta_v$ unchanged throughout. A score built only from an arm's
own parameters cannot flip sign just because its position in the cascade changed, so
no ordering score exists once some arm has $\epsilon_{0,i}>0$ (when not all $\epsilon_{0,i}$'s and $\epsilon_{1,i}$'s  are same) and $\tau>0$.
Section~\ref{sec:case2-ordering} shows this reversal numerically, including at
$L=K$.

\subsubsection*{Part II: selection score}

\paragraph{Case 1a/1b ($\epsilon_{0,i}=0$, all $\tau$):}
Fix a shortlist $\pi$, a position $m$, and an excluded arm $j\notin\pi$, and let
$\pi''$ replace $\pi(m)$ by $j$. Write $i=\pi(m)$. Since $\epsilon_{0,i}=0$ for
every arm, $g_i=1-r_i$ and $g_j=1-r_j$ by \eqref{eq:case1_gi}, so
\begin{equation}
g_i-g_j=-(r_i-r_j).
\label{eq:case1_gdiff}
\end{equation}
Substituting \eqref{eq:case1_gdiff} into \eqref{eq:slot_general},
\begin{equation}
R(\pi)-R(\pi'')=P_{m-1}\big(r_i-r_j\big)\big[(1-m\tau)-\Phi\big].
\label{eq:case1_slot}
\end{equation}
By Lemma~\ref{lem:slot_sub}, $(1-m\tau)-\Phi>0$, so \eqref{eq:case1_slot} gives
\begin{equation}
R(\pi)>R(\pi'')\iff r_i>r_j,
\label{eq:case1_slot_criterion}
\end{equation}
regardless of $m$, of $\tau$, and of which arms occupy the other positions. If a
selected arm's $r$-value is smaller than some excluded arm's, swapping them in by
\eqref{eq:case1_slot_criterion} strictly improves the reward, so the optimal
shortlist must be exactly the $K$ arms with the largest values of
$s_i^{(1)}=r_i=(1-\epsilon_{1,i})\mu_i$.

\paragraph{Case 2a/2b ($\epsilon_{0,i}>0$ for some arm, any $\tau$):}
Once some arm has $\epsilon_{0,i}>0$, $g_i=1-r_i$ need not hold, and
\eqref{eq:slot_general} keeps its general form
\begin{equation}
R(\pi)-R(\pi'')=P_{m-1}\Big[(1-m\tau)(r_i-r_j)+(g_i-g_j)\Phi\Big],\qquad i=\pi(m),\ j\notin\pi.
\label{eq:case2_slot}
\end{equation}
Choose $i,j$ with $(1-m\tau)(r_i-r_j)>0$ but $g_i-g_j>0$, so the two terms in
\eqref{eq:case2_slot} disagree; this is possible precisely because
$\epsilon_{0,i}>0$ lets $r_i$ and $g_i$ move somewhat independently, rather than
being tied together as in \eqref{eq:case1_gdiff}. By Lemma~\ref{lem:slot_sub}, $\Phi$
ranges continuously over $[0,1-(m+1)\tau]$ as the arms placed after position $m$
vary, so \eqref{eq:case2_slot} is positive for a small $\Phi$ and negative for a
large $\Phi$, that is,
\begin{equation}
R(\pi)>R(\pi'')\ \text{for one choice of the remaining arms, but}\ R(\pi)<R(\pi'')\ \text{for another,}
\label{eq:case2_reversal}
\end{equation}
with $i,j$ and their own parameters unchanged throughout. A selection score can
only see $i$ and $j$ themselves, so it cannot reproduce the reversal in
\eqref{eq:case2_reversal}. Hence no selection score $s_i^{(1)}$ exists once
$\epsilon_{0,i}>0$ for some arm. Section~\ref{sec:case1-selection} exhibits
exactly this reversal numerically: two fixed reference arms trade places in the
optimal subset purely because the other arms in the pool are perturbed.

Here is the revised proof formatted cleanly, first establishing that $\mu_i$ itself acts as the score before showing that $1-g_i$ and $r_i/(1-g_i)$ also act as equivalent scores:

\subsubsection*{Part III: $\mu_i$ under a Shared Channel}

Suppose now $\epsilon_{0,i}=\epsilon_0>0$ and $\epsilon_{1,i}=\epsilon_1$ for every arm $i$, representing the shared-channel case. Applying Lemma~\ref{lem:adjacent_swap} with $\tau = 0$, the adjacent swap difference between arm $a = \pi(i)$ and arm $b = \pi(i+1)$ simplifies under shared noise to:
\begin{align}
R(\pi) - R(\pi') &= P_{i-1}\Big[(r_a - r_b) + (g_a r_b - g_b r_a)\Big] \nonumber \
&= P_{i-1}(1-\epsilon_0-\epsilon_1)(1-\epsilon_1)(\mu_a - \mu_b).
\label{eq:shared_mui_swap}
\end{align}
Since $P_{i-1} > 0$, $1-\epsilon_1 > 0$, and the channel is informative ($1-\epsilon_0-\epsilon_1 > 0$), we have $R(\pi) - R(\pi') \ge 0$ if and only if $\mu_a \ge \mu_b$. Thus, ranking arms in non-increasing order of their true means $\mu_i$ maximizes total expected reward, establishing that $\mu_i$ serves as both a valid selection score $s_i^{(1)}$ and ordering score $s_i^{(2)}$.

To show that $1-g_i$ and $r_i/(1-g_i)$ also act as equivalent scores inducing this exact same ranking, we examine their explicit functional dependence on $\mu_i$. Using $\delta_i = \epsilon_0(1-\mu_i)$, $1-g_i$ satisfies:
\begin{equation}
1-g_i = r_i + \delta_i = (1-\epsilon_1)\mu_i + \epsilon_0(1-\mu_i) = \epsilon_0 + (1-\epsilon_0-\epsilon_1)\mu_i.
\label{eq:shared_1mg}
\end{equation}
Differentiating \eqref{eq:shared_1mg} with respect to $\mu_i$ yields $\frac{d}{d\mu_i}(1-g_i) = 1-\epsilon_0-\epsilon_1 > 0$. Thus, $1-g_i$ is strictly increasing in $\mu_i$ and ranks arms identically to $\mu_i$.

For the ratio $r_i/(1-g_i)$, let $a \triangleq 1-\epsilon_1 > 0$ and $b \triangleq \epsilon_0 > 0$, so that $r_i = a\mu_i$ and $1-g_i = b + (a-b)\mu_i$. The ratio becomes:
\begin{equation}
\frac{r_i}{1-g_i} = \frac{a\mu_i}{b + (a-b)\mu_i}.
\label{eq:shared_ratio}
\end{equation}
Differentiating \eqref{eq:shared_ratio} with respect to $\mu_i$ while holding $a, b$ constant across arms yields:
\begin{equation}
\frac{d}{d\mu_i}\left(\frac{a\mu_i}{b+(a-b)\mu_i}\right) = \frac{ab}{\big(b+(a-b)\mu_i\big)^2} > 0,
\label{eq:shared_derivative}
\end{equation}
since $a, b > 0$ and the denominator is positive. Consequently, $r_i/(1-g_i)$ is also strictly increasing in $\mu_i$.

Since $1-g_i$ and $r_i/(1-g_i)$ are each strictly increasing monotonic transformations of $\mu_i$ alone, comparing any two arms $i$ and $j$ under any of these metrics yields identical pairwise ordering:
\begin{equation}
\mu_i \ge \mu_j \iff 1-g_i \ge 1-g_j \iff \frac{r_i}{1-g_i} \ge \frac{r_j}{1-g_j}.
\end{equation}
This monotonicity guarantees that ordering or selecting arms by any of $\mu_i$, $1-g_i$, or $r_i/(1-g_i)$ yields the identical optimal shortlist and order. Thus, $\mu_i$ (or equivalently $1-g_i$ or $r_i/(1-g_i)$) suffices as both the selection score $s_i^{(1)}$ and ordering score $s_i^{(2)}$ in the shared-channel case.

\subsubsection*{Conclusion}
The four arguments above, establish
both equivalences of Theorem~\ref{thm:score-characterization}, and the scores
derived in \eqref{eq:case1b_criterion}, \eqref{eq:case2a_score}, and
\eqref{eq:case1_slot_criterion} match Table~\ref{tab:oracle_scores_unified}
exactly.

\section{Case 2: Numerical Non-Separability of Arm Selection}
\label{sec:case1-selection}

To establish that arm selection and ordering cannot be decoupled into individual arm scores, we present concrete numerical counterexamples. In each example, we fix the parameters of two target reference arms (arms $0$ and $1$). In Example 1, we perturb the remaining arms, whereas in Examples 2--6, we perturb only a single arm across two problem instances. An oracle exhaustively evaluates all $\binom{L}{K}K!$ ordered shortlists to identify the true optimal cascade $\pi^*$.

Specifically, for Case~2a ($\tau=0$), perturbing a non-candidate arm flips the selection preference between candidate arms for the shortlist, proving that no context-independent selection score $s_i^{(1)}$ can exist. Similarly, for Case~2b ($\tau>0$), perturbing an external arm alters the global interaction and reverses the relative optimal ordering of reference arms within the shortlist, demonstrating that no independent ordering score $s_i^{(2)}$ exists.

Arm parameters are denoted by $(\mu_i,\epsilon_{0,i},\epsilon_{1,i})$ under the general Binary Asymmetric Channel (BAC); the Binary Symmetric Channel (BSC) corresponds to $\epsilon_{0,i}=\epsilon_{1,i}=\epsilon_i$.

\paragraph{Example 1 (BSC, $L=6$, $K=3$, $\tau=0.1$, Case 2b).}
\begin{center}
\begin{tabular}{c|cc|cc}
\hline
Arm & \multicolumn{2}{c|}{Fixed} & \multicolumn{2}{c}{Perturbed} \\
 & $\mu$ & $\epsilon$ & Instance 1 $(\mu,\epsilon)$ & Instance 2 $(\mu,\epsilon)$ \\ \hline
0 & 0.5120 & 0.0417 & -- & -- \\
1 & 0.7627 & 0.2638 & -- & -- \\
2 & -- & -- & (0.731, 0.081) & (0.466, 0.141) \\
3 & -- & -- & (0.645, 0.011) & (0.808, 0.297) \\
4 & -- & -- & (0.480, 0.417) & (0.789, 0.281) \\
5 & -- & -- & (0.202, 0.046) & (0.234, 0.337) \\ \hline
\end{tabular}
\end{center}
Instance 1: $\pi^*=(3,2,1)$, reward $R(\pi^*)=0.8098$ (arm~1 selected, arm~0 excluded).
Instance 2: $\pi^*=(0,3,4)$, reward $R(\pi^*)=0.7366$ (arm~0 selected, arm~1 excluded).

\paragraph{Example 2 (BSC, $L=6$, $K=3$, $\tau=0.15$, Case 2b).}
\begin{center}
\begin{tabular}{c|cc}
\hline
Arm & $\mu$ & $\epsilon$ \\ \hline
0 & 0.4925 & 0.3955 \\
1 & 0.3178 & 0.1143 \\
2 & 0.3615 & 0.1834 \\
3 & 0.2315 & 0.0303 \\
4 & 0.4126 & 0.3129 \\
5 (Inst.\ 1) & 0.5839 & 0.0780 \\
5 (Inst.\ 2) & 0.5635 & 0.4217 \\ \hline
\end{tabular}
\end{center}
Instance 1: $\pi^*=(5,2,0)$, reward $R(\pi^*)=0.5876$ (arm~0 selected, arm~1 excluded).
Instance 2: $\pi^*=(1,2,5)$, reward $R(\pi^*)=0.4391$ (arm~1 selected, arm~0 excluded).

\paragraph{Example 3 (BSC, $L=6$, $K=2$, $\tau=0.3$, Case 2b).}
\begin{center}
\begin{tabular}{c|cc}
\hline
Arm & $\mu$ & $\epsilon$ \\ \hline
0 & 0.9452 & 0.4273 \\
1 & 0.8270 & 0.3280 \\
2 & 0.4488 & 0.0522 \\
3 & 0.4221 & 0.4346 \\
4 & 0.5731 & 0.1724 \\
5 (Inst.\ 1) & 0.6839 & 0.1565 \\
5 (Inst.\ 2) & 0.7287 & 0.2357 \\ \hline
\end{tabular}
\end{center}
Instance 1: $\pi^*=(5,1)$, reward $R(\pi^*)=0.4869$ (arm~1 selected, arm~0 excluded).
Instance 2: $\pi^*=(0,5)$, reward $R(\pi^*)=0.4759$ (arm~0 selected, arm~1 excluded).

\paragraph{Example 4 (BAC, $L=5$, $K=3$, $\tau=0.18$, Case 2b).}
\begin{center}
\begin{tabular}{c|ccc}
\hline
Arm & $\mu$ & $\epsilon_0$ & $\epsilon_1$ \\ \hline
0 & 0.3092 & 0.4417 & 0.1652 \\
1 & 0.4315 & 0.1050 & 0.4333 \\
2 & 0.2152 & 0.2505 & 0.1473 \\
3 & 0.7913 & 0.3286 & 0.4429 \\
4 (Inst.\ 1) & 0.5609 & 0.4345 & 0.3397 \\
4 (Inst.\ 2) & 0.2901 & 0.4246 & 0.0304 \\ \hline
\end{tabular}
\end{center}
Instance 1: $\pi^*=(3,4,0)$, reward $R(\pi^*)=0.5033$ (arm~0 selected, arm~1 excluded).
Instance 2: $\pi^*=(3,1,4)$, reward $R(\pi^*)=0.4824$ (arm~1 selected, arm~0 excluded).

\paragraph{Example 5 (BAC, $L=6$, $K=3$, $\tau=0$, Case 2a).}
\begin{center}
\begin{tabular}{c|ccc}
\hline
Arm & $\mu$ & $\epsilon_0$ & $\epsilon_1$ \\ \hline
0 & 0.5436 & 0.0059 & 0.2597 \\
1 & 0.7726 & 0.3288 & 0.2069 \\
2 & 0.2779 & 0.3323 & 0.4097 \\
3 & 0.2664 & 0.1513 & 0.1302 \\
4 & 0.5098 & 0.0721 & 0.0110 \\
5 (Inst.\ 1) & 0.6558 & 0.0039 & 0.2147 \\
5 (Inst.\ 2) & 0.8102 & 0.4354 & 0.0171 \\ \hline
\end{tabular}
\end{center}
Instance 1: $\pi^*=(5,4,1)$, reward $R(\pi^*)=0.8953$ (arm~1 selected, arm~0 excluded).
Instance 2: $\pi^*=(0,4,5)$, reward $R(\pi^*)=0.9205$ (arm~0 selected, arm~1 excluded).

\paragraph{Example 6 (BAC, $L=6$, $K=4$, $\tau=0$, Case 2a).}
\begin{center}
\begin{tabular}{c|ccc}
\hline
Arm & $\mu$ & $\epsilon_0$ & $\epsilon_1$ \\ \hline
0 & 0.8529 & 0.0421 & 0.4353 \\
1 & 0.6934 & 0.4300 & 0.1556 \\
2 & 0.7783 & 0.0412 & 0.0657 \\
3 & 0.9101 & 0.2075 & 0.1484 \\
4 & 0.3461 & 0.2150 & 0.0404 \\
5 (Inst.\ 1) & 0.8688 & 0.0664 & 0.1303 \\
5 (Inst.\ 2) & 0.8520 & 0.1785 & 0.0457 \\ \hline
\end{tabular}
\end{center}
Instance 1: $\pi^*=(5,2,3,1)$, reward $R(\pi^*)=0.9827$ (arm~1 selected, arm~0 excluded).
Instance 2: $\pi^*=(2,0,3,5)$, reward $R(\pi^*)=0.9815$ (arm~0 selected, arm~1 excluded).

\medskip
\noindent Across all six examples, arms $0$ and $1$ retain identical individual parameters $(\mu_i,\epsilon_{0,i},\epsilon_{1,i})$. However, perturbing the remaining arms (in Example 1) or a single arm (in Examples 2--6) forces a reversal in which non-perturbed arm is selected into the optimal shortlist $\pi^*$. Examples 1--4 illustrate this phenomenon for Case~2b ($\tau>0$), while Examples 5--6 demonstrate it for Case~2a ($\tau=0$). Consequently, these instances rule out the existence of any context-independent selection score $s_i^{(1)}$ across both cost regimes, directly validating Theorem~\ref{thm:score-characterization}.

\subsection{Case 2: Numerical Non-Separability of Arm Ordering}
\label{sec:case2-ordering}

Having shown that shortlist selection depends on global context, we now evaluate whether ordering within a fixed, selected shortlist can be governed by a context-independent score. The following examples hold the set of selected candidate arms fixed across both instances, perturbing only a single auxiliary arm outside the compared pair, to test if pairwise relative order remains invariant.

\paragraph{Example 1 (BAC, $L=6$, $K=3$, $\tau=0.18$, Case 2b).}
\begin{center}
\begin{tabular}{c|ccc}
\hline
Arm & $\mu$ & $\epsilon_0$ & $\epsilon_1$ \\ \hline
0 & 0.7497 & 0.3834 & 0.1255 \\
1 & 0.9154 & 0.1938 & 0.4260 \\
2 & 0.3989 & 0.4349 & 0.2128 \\
3 & 0.4882 & 0.2650 & 0.4404 \\
4 & 0.6478 & 0.2886 & 0.3720 \\
5 (Inst.\ 1) & 0.7257 & 0.0513 & 0.0860 \\
5 (Inst.\ 2) & 0.5153 & 0.4225 & 0.0873 \\ \hline
\end{tabular}
\end{center}
Instance 1: $\pi^*=(5,0,1)$, reward $R(\pi^*)=0.6986$ (arm~0 precedes arm~1).
Instance 2: $\pi^*=(1,0,5)$, reward $R(\pi^*)=0.6477$ (arm~1 precedes arm~0).

\paragraph{Example 2 (BAC, $L=5$, $K=3$, $\tau=0.3$, Case 2b).}
\begin{center}
\begin{tabular}{c|ccc}
\hline
Arm & $\mu$ & $\epsilon_0$ & $\epsilon_1$ \\ \hline
0 & 0.7448 & 0.3604 & 0.2383 \\
1 & 0.9279 & 0.2209 & 0.4356 \\
2 & 0.6598 & 0.0606 & 0.4040 \\
3 & 0.4642 & 0.3549 & 0.1957 \\
4 (Inst.\ 1) & 0.5372 & 0.4371 & 0.3227 \\
4 (Inst.\ 2) & 0.7531 & 0.2336 & 0.2408 \\ \hline
\end{tabular}
\end{center}
Instance 1: $\pi^*=(1,0,2)$, reward $R(\pi^*)=0.4772$ (arm~1 precedes arm~0).
Instance 2: $\pi^*=(4,0,1)$, reward $R(\pi^*)=0.4909$ (arm~0 precedes arm~1).

\paragraph{Example 3 (BAC, $L=5$, $K=5$, $\tau=0.1$, Case 2b, boundary $L=K$).}
\begin{center}
\begin{tabular}{c|ccc}
\hline
Arm & $\mu$ & $\epsilon_0$ & $\epsilon_1$ \\ \hline
0 & 0.5828 & 0.0567 & 0.2305 \\
1 & 0.6489 & 0.3188 & 0.0274 \\
2 & 0.8827 & 0.0900 & 0.2533 \\
3 & 0.9436 & 0.3799 & 0.0656 \\
4 (Inst.\ 1) & 0.0760 & 0.2065 & 0.3890 \\
4 (Inst.\ 2) & 0.7325 & 0.2232 & 0.1609 \\ \hline
\end{tabular}
\end{center}
Instance 1: $\pi^*=(3,2,0,1,4)$, reward $R(\pi^*)=0.8612$ (arm~0 precedes arm~1).
Instance 2: $\pi^*=(3,2,4,1,0)$, reward $R(\pi^*)=0.8629$ (arm~1 precedes arm~0).

\medskip
\noindent All three examples instantiate Case~2b ($\tau>0$). Notably, in Example 3 ($K=L$), every arm is trivially included in the shortlist, eliminating any selection ambiguity. Nevertheless, altering only arm 4 reverses the optimal relative ordering of arms $0$ and $1$ within the cascade. This demonstrates that for Case~2b, perturbing a single suboptimal or auxiliary arm directly alters the relative optimal ordering of candidate arms, confirming that no scalar ordering score $s_i^{(2)}(\mu_i,\epsilon_{0,i},\epsilon_{1,i})$ exists whenever $\epsilon_{0,i}>0$ for some arm under positive inspection cost ($\tau>0$), in complete alignment with Appendix~\ref{app:score_proofs_proof}.

\section{Proofs for Dynamic Programming Oracles}
\label{sec:dp-oracle-proofs}

In this section, we establish the correctness of the dynamic programming oracles introduced in Section~\ref{sec:case2-oracle}. Specifically, we prove that \textsc{DP-Ordered} (Algorithm~\ref{alg:dp_ordered}) and \textsc{DP-Subset} (Algorithm~\ref{alg:dp_subset}) correctly compute the value function $V_{\pi^\star}$ corresponding to the optimal expected reward $R(\pi^\star)$ for Case~2a ($\tau=0$) and Case~2b ($\tau>0$), respectively.
\subsection{Proof of Correctness for \textsc{DP-Ordered} ($\tau=0$)}
\label{sec:dp-ordered-proof}

In this subsection, we prove that \textsc{DP-Ordered} (Algorithm~\ref{alg:dp_ordered}) computes the exact optimal expected reward $R(\pi^\star)$ and yields the optimal shortlist $\pi^\star$ for $\tau=0$.

Let $\Pi_K$ denote the set of all ordered $K$-tuples of distinct arms chosen from $[L]$. For Case~2a ($\tau = 0$), the true arm parameters $(r_i, g_i)_{i \in [L]}$ from \eqref{eq:rg-def} satisfy $r_i \in [0, 1]$ and $g_i \in [0, 1)$. The expected cumulative reward $R(\pi)$ for a shortlist $\pi \in \Pi_K$ under \eqref{eq:reward_rg} is given by:
\begin{equation*}
R(\pi) = \sum_{m=1}^{K} r_{\pi(m)} \prod_{j=1}^{m-1} g_{\pi(j)}.
\end{equation*}

\begin{theorem}
\label{thm:dp_ordered_correctness}
Under Case~2a ($\tau = 0$), let the candidate pool $[L]$ be pre-sorted such that $\frac{r_1}{1 - g_1} \ge \frac{r_2}{1 - g_2} \ge \dots \ge \frac{r_L}{1 - g_L}$. Then, the value function $V_{\pi^\star}(1, K)$ computed by Algorithm~\ref{alg:dp_ordered} satisfies:
\begin{equation*}
V_{\pi^\star}(1, K) = R(\pi^\star) \triangleq \max_{\pi \in \Pi_K} R(\pi).
\end{equation*}
\end{theorem}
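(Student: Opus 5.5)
The argument has two parts. First, an exchange argument shows that some optimal shortlist lists its arms in the pre-sorted order. Second, induction on the DP table shows that $V_{\pi^\star}(i,m)$ equals the best reward achievable by an increasing $m$-subsequence of $(i,\dots,L)$.

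\emph{Step 1 (order reduction).} Take any $\pi\in\Pi_K$. By Lemma~\ref{lem:adjacent_swap} with $\tau=0$, i.e.\ \eqref{eq:case2a_swap}, swapping an adjacent pair that violates the ratio order $r_i/(1-g_i)$ does not decrease $R$. Ties give equality. If $g_i=1$ for some arm the ratio is undefined, but then $r_i=0$ and we use the undivided form \eqref{eq:case2a_swap}, whose bracket is still $\ge 0$. Bubble-sorting $\pi$ therefore produces $\tilde\pi$, consisting of the same arms listed in increasing index order, with $R(\tilde\pi)\ge R(\pi)$. Hence
\[
\max_{\pi\in\Pi_K}R(\pi)=\max_{i_1<\dots<i_K}R(i_1,\dots,i_K).
\]

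\emph{Step 2 (DP correctness).} For $m\ge1$ define $F(i,m)$ as the maximum of $\sum_{k=1}^{m} r_{i_k}\prod_{j<k}g_{i_j}$ over $i\le i_1<\dots<i_m\le L$, with $F(i,m)=-\infty$ if $L-i+1<m$, and set $F(i,0)=0$. I will prove $V_{\pi^\star}(i,m)=F(i,m)$ by backward induction on $i$, for each $m$. Split on whether $i_1=i$. If $i_1>i$, the value is $F(i+1,m)$. If $i_1=i$, the objective factors as $r_i+g_i\cdot(\text{objective over the remaining } m-1 \text{ indices})$. Since $g_i\ge0$, maximizing the inner part is optimal, which gives $r_i+g_iF(i+1,m-1)$. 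Taking the max of the two branches yields exactly the unclipped recursion \eqref{eq:dpA}. When the include branch has $F(i+1,m-1)=-\infty$, the case is infeasible; it is handled by the convention $g_i\cdot(-\infty)=-\infty$ for $g_i>0$. Since $r_i=0$ whenever $g_i=0$ is not forced, the $g_i=0$ case should be treated as giving the value $r_i$ only when $m=1$, so I will note that the $-\infty$ base case only arises when fewer than $m$ arms remain.

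\emph{Step 3 (clipping inactive).} By the argument of \eqref{eq:Phi_induction} with $\tau=0$, using $r_i+g_i\le1$, every finite $F(i,m)$ lies in $[0,1]$. So the $\min\{1,\cdot\}$ in \eqref{eq:dpA} never binds, and the clipped and unclipped recursions agree. Combining Steps 1–3 gives $V_{\pi^\star}(1,K)=F(1,K)=R(\pi^\star)$. Backtracking returns the maximizing subsequence, which by Step 1 is optimal.

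The main obstacle is Step 1. Local swaps must be shown to reach a globally ordered arrangement without loss, and ties and degenerate $g_i=1$ arms must be handled. The bubble-sort argument handles this because each swap is weakly improving and the sort terminates.
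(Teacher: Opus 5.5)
Your proposal is correct and follows essentially the paper's route. Both use an adjacent-interchange argument to restrict attention to subsequences of the ratio-sorted order, then run backward induction on $i$ with a skip/include split, and both show the $\min\{1,\cdot\}$ clip is inactive because every sub-shortlist reward lies in $[0,1]$. Your bubble-sort reduction, made of weakly improving swaps into index order, is slightly cleaner than the paper's claim that arm $i$ ``must'' be placed first, since it handles ties without needing uniqueness; your somewhat garbled remark on $0\cdot(-\infty)$ when $g_i=0$ flags a convention the paper leaves implicit.
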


\begin{proof}
By the adjacent interchange argument (Appendix \ref{app:score_proofs_proof}), any optimal shortlist $\pi^\star \in \Pi_K$ must present its selected arms in non-increasing order of the ratio $\frac{r_i}{1 - g_i}$. Since the candidate pool $[L]$ is pre-sorted such that $\frac{r_1}{1 - g_1} \ge \frac{r_2}{1 - g_2} \ge \dots \ge \frac{r_L}{1 - g_L}$, it suffices to restrict the search space $\Pi_K$ to ordered subsequences drawn from $(1, 2, \dots, L)$.

\paragraph{Expected Reward and Probability Bound.}
For any sub-shortlist $\sigma = (\sigma(1), \sigma(2), \dots, \sigma(m))$ of length $m \le K$ chosen from a candidate subset of $[L]$, its expected cumulative reward follows directly from \eqref{eq:reward_rg} (with $\tau = 0$):
\begin{equation*}
R(\sigma) \triangleq \sum_{k=1}^{m} r_{\sigma(k)} \prod_{j=1}^{k-1} g_{\sigma(j)} = r_{\sigma(1)} + g_{\sigma(1)} R(\sigma(2), \dots, \sigma(m)).
\end{equation*}
Since $R(\sigma)$ represents the probability of at least one successful engagement across positions $1, \dots, m$, it holds by definition that $0 \le R(\sigma) \le 1$ for all valid sub-shortlists $\sigma$.

\paragraph{Assumptions and Base Cases.}
We define $V_{\pi^\star}(i, m)$ for $i \in \{1, 2, \dots, L+1\}$ and $m \in \{0, 1, \dots, K\}$ as the maximum expected cumulative reward attainable by selecting an ordered shortlist of size $m$ from the remaining candidate set $\{i, i+1, \dots, L\}$.

The boundary conditions are:
\begin{itemize}
    \item $V_{\pi^\star}(i, 0) = 0$ for all $i \in \{1, 2, \dots, L+1\}$ (selecting $0$ arms yields zero reward).
    \item $V_{\pi^\star}(L+1, m) = -\infty$ for all $m \in \{1, 2, \dots, K\}$ (selecting $m > 0$ arms from an empty candidate set is infeasible).
\end{itemize}

\paragraph{Recurrence Relation with Projection.}
For any candidate arm $i \in \{1, 2, \dots, L\}$ and shortlist size $m \in \{1, 2, \dots, K\}$, the Bellman equation evaluated by Algorithm~\ref{alg:dp_ordered} includes a projection operator $\min\{1, \cdot\}$ to maintain consistency with the probabilistic upper bound:
\begin{equation*}
V_{\pi^\star}(i, m) = \min \left\{ 1, \; \max \left\{ V_{\pi^\star}(i+1, m), \; r_i + g_i V_{\pi^\star}(i+1, m-1) \right\} \right\}.
\end{equation*}

\paragraph{Induction Step.}
We prove by backward induction on $i$ from $L+1$ down to $1$ that for all $m \in \{0, 1, \dots, K\}$:
\begin{enumerate}
    \item $V_{\pi^\star}(i, m) = \max_{\sigma \subseteq \{i, \dots, L\}, |\sigma| = m} R(\sigma)$,
    \item $0 \le V_{\pi^\star}(i, m) \le 1$ whenever $m \ge 1$ and $\{i, \dots, L\}$ contains at least $m$ elements.
\end{enumerate}

\begin{itemize}
    \item \textbf{Base Step ($i = L+1$):} No candidate arms remain. $V_{\pi^\star}(L+1, 0) = 0 \le 1$ and $V_{\pi^\star}(L+1, m) = -\infty$ for $m \ge 1$, matching the maximum expected reward over an empty set of candidate arms.

    \item \textbf{Inductive Hypothesis:} Assume that for index $i+1 \le L+1$, $V_{\pi^\star}(i+1, m) = \max_{\sigma \subseteq \{i+1, \dots, L\}, |\sigma| = m} R(\sigma)$ and $0 \le V_{\pi^\star}(i+1, m) \le 1$ for all valid sub-shortlists of size $m \in \{0, 1, \dots, K\}$ drawn from $\{i+1, \dots, L\}$.

    \item \textbf{Inductive Step for Index $i$:} Consider forming a shortlist $\sigma = (\sigma(1), \dots, \sigma(m))$ of size $m$ from candidates $\{i, i+1, \dots, L\}$. Partition all candidate shortlists $\sigma$ into two mutually exclusive sets:
    \begin{enumerate}
        \item \textbf{Arm $i \notin \sigma$:} The $m$ arms are chosen strictly from $\{i+1, \dots, L\}$. By the inductive hypothesis, the maximum expected reward is $V_{\pi^\star}(i+1, m) \le 1$.
        \item \textbf{Arm $i \in \sigma$:} By Appendix~A, arm $i$ has the highest ratio $\frac{r_i}{1 - g_i}$ among $\{i, i+1, \dots, L\}$ and must be placed first, i.e., $\sigma(1) = i$. The remaining $m-1$ positions are filled by sub-shortlist $\sigma' = (\sigma(2), \dots, \sigma(m)) \subseteq \{i+1, \dots, L\}$. Decomposing $R(\sigma)$ gives:
        \begin{equation*}
        R(\sigma) = r_i + g_i R(\sigma').
        \end{equation*}
        Maximizing over all valid sub-shortlists $\sigma' \subseteq \{i+1, \dots, L\}$ of length $m-1$ yields:
        \begin{equation*}
        \max_{\substack{\sigma \subseteq \{i,\dots,L\} \\ |\sigma|=m, i \in \sigma}} R(\sigma) = r_i + g_i \left( \max_{\substack{\sigma' \subseteq \{i+1,\dots,L\} \\ |\sigma'|=m-1}} R(\sigma') \right) = r_i + g_i V_{\pi^\star}(i+1, m-1),
        \end{equation*}
        where the second equality follows directly from the inductive hypothesis.
    \end{enumerate}
\end{itemize}

Since $R(\sigma)$ is a valid cumulative probability for every shortlist $\sigma$, we have $R(\sigma) \le 1$ for all candidate shortlists. Thus:
\begin{equation*}
\max \left\{ V_{\pi^\star}(i+1, m), \; r_i + g_i V_{\pi^\star}(i+1, m-1) \right\} = \max_{\substack{\sigma \subseteq \{i,\dots,L\} \\ |\sigma|=m}} R(\sigma) \le 1.
\end{equation*}
Because the term inside the projection operator is already bounded above by $1$, applying $\min\{1, \cdot\}$ leaves the value unchanged:
\begin{align*}
V_{\pi^\star}(i, m) &= \min \left\{ 1, \; \max \left\{ V_{\pi^\star}(i+1, m), \; r_i + g_i V_{\pi^\star}(i+1, m-1) \right\} \right\} \\
&= \max \left\{ V_{\pi^\star}(i+1, m), \; r_i + g_i V_{\pi^\star}(i+1, m-1) \right\} \\
&= \max_{\substack{\sigma \subseteq \{i,\dots,L\} \\ |\sigma|=m}} R(\sigma).
\end{align*}
Furthermore, since $r_i \ge 0$, $g_i \ge 0$, and $V_{\pi^\star}(i+1, m-1) \ge 0$ by the inductive hypothesis, $V_{\pi^\star}(i, m) \ge 0$, establishing $0 \le V_{\pi^\star}(i, m) \le 1$. This completes the induction.

\paragraph{Conclusion.}
Evaluating the value function at $i = 1$ and $m = K$ gives:
\begin{equation*}
V_{\pi^\star}(1, K) = \max_{\pi \in \Pi_K} R(\pi) = R(\pi^\star).
\end{equation*}
This completes the proof.
\end{proof}

\subsection{Correctness of \textsc{DP-Subset} ($\tau > 0$)}
\label{sec:dp-subset-proof}

We now prove the optimality of \textsc{DP-Subset} (Algorithm~\ref{alg:dp_subset}) for the position-cost regime ($\tau > 0$). Because item selection and position sequencing are jointly coupled in Case~2b, the algorithm operates recursively over candidate subsets $S \subseteq [L]$.
\begin{theorem}
\label{thm:dp_subset_correctness}
For Case 2b ($\tau > 0$), let $(r_i, g_i)_{i \in [L]}$ be the true arm parameters defined in \eqref{eq:rg-def} with $r_i \in [0, 1]$ and $g_i \in [0, 1]$, and let $\tau > 0$ be the position cost penalty. The value function $V_{\pi^\star}([L], 1)$ computed by Algorithm~\ref{alg:dp_subset} satisfies:
\begin{equation*}
V_{\pi^\star}([L], 1) = R(\pi^\star) \triangleq \max_{\pi \in \Pi_K} R(\pi),
\end{equation*}
where $\Pi_K$ is the set of all ordered $K$-tuples of distinct arms from $[L]$, and $R(\pi)$ is the expected reward defined in \eqref{eq:reward_rg} for $\tau > 0$:
\begin{equation*}
R(\pi) = \sum_{m=1}^{K} (1 - m\tau) \, r_{\pi(m)} \prod_{j=1}^{m-1} g_{\pi(j)}.
\end{equation*}
\end{theorem}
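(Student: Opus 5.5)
The plan is a backward induction on the position index $m$, from $K+1$ down to $1$. We show simultaneously that the clipped recursion \eqref{eq:dpB}, evaluated at $(\rho,\gamma)=(r,g)$, computes the optimal suffix value and that the clipping never binds. For $S\subseteq[L]$ with $|S|\ge K-m+1$, let $\Pi(S,m)$ denote the set of ordered tuples $(\sigma(m),\dots,\sigma(K))$ of distinct arms from $S$. For $\sigma\in\Pi(S,m)$, define the suffix reward
\[
\Phi_\sigma(m)=\sum_{k=m}^{K}(1-k\tau)\,r_{\sigma(k)}\prod_{j=m}^{k-1}g_{\sigma(j)},
\]
exactly as in Appendix~\ref{app:score_proofs_proof}. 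The inductive claim is
\[
V_{\pi^\star}(S,m)=\max_{\sigma\in\Pi(S,m)}\Phi_\sigma(m)\quad\text{and}\quad 0\le V_{\pi^\star}(S,m)\le 1-m\tau .
\]
The base case $m=K+1$ holds trivially: the only tuple is empty, so $\Phi=0$, which matches $W(S,K+1)=0$.

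For the inductive step, fix $(S,m)$ and partition $\Pi(S,m)$ according to the arm $i=\sigma(m)$ placed first. The one-step decomposition
\[
\Phi_\sigma(m)=(1-m\tau)\,r_i+g_i\,\Phi_{\sigma'}(m+1),\qquad \sigma'\in\Pi(S\setminus\{i\},m+1),
\]
follows directly from the definition; it is the same identity used in the proof of Lemma~\ref{lem:slot_sub}. Since $g_i\ge0$, the map $\sigma'\mapsto(1-m\tau)r_i+g_i\Phi_{\sigma'}(m+1)$ is nondecreasing, so maximizing over $\sigma'$ first and then over $i\in S$ gives
\[
\max_{\sigma}\Phi_\sigma(m)=\max_{i\in S}\bigl\{(1-m\tau)r_i+g_iV_{\pi^\star}(S\setminus\{i\},m+1)\bigr\}
\]
by the induction hypothesis. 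Two feasibility checks are needed here. Infeasible sets $S$ with fewer than $K-m+1$ arms can be handled by the convention $-\infty$, as in \textsc{DP-Ordered}, or simply ignored because the top-level call $([L],1)$ only reaches sets with $|S|=L-m+1\ge K-m+1$. Also, $S\setminus\{i\}$ is feasible whenever $S$ is.

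The clipping step is where the real content lies. I would reuse the induction \eqref{eq:Phi_induction} from Lemma~\ref{lem:slot_sub}. Since $r_i+g_i=\mathbb P(X_i=1,Y_i=1)+\mathbb P(Y_i=0)\le1$ and $V_{\pi^\star}(S\setminus\{i\},m+1)\le1-(m+1)\tau\le1-m\tau$, every candidate satisfies
\[
(1-m\tau)r_i+g_iV\le(1-m\tau)(r_i+g_i)\le1-m\tau\le1 .
\]
Nonnegativity follows from $r_i,g_i\ge0$, $V\ge0$, and $\tau\le1/K$. Hence $\min\{1,\cdot\}$ is inactive, and the clipped recursion equals the unclipped maximum. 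This completes the induction.

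Finally, at $(S,m)=([L],1)$ we have $\Pi([L],1)=\Pi_K$ and $\Phi_\pi(1)=R(\pi)$, so $V_{\pi^\star}([L],1)=\max_{\pi}R(\pi)=R(\pi^\star)$. Recording the maximizing arm $i^\star(S,m)$ at each state during the induction shows that backtracking returns a maximizer, which is $\pi^\star$ under the uniqueness assumption. I expect no serious obstacle beyond two points of bookkeeping. First, the bound must be carried in the sharper form $1-m\tau$ rather than just $1$, so that nonnegativity of $1-m\tau$ (from $\tau\le1/K$) and the clipping argument close cleanly. Second, the feasibility and $-\infty$ conventions for small $S$ must be handled carefully.
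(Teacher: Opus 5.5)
Your proposal is correct and takes essentially the paper's route: backward induction on the position $m$, splitting on the head arm $\sigma(m)$, and showing the $\min\{1,\cdot\}$ projection never binds. Your algebraic bound via $r_i+g_i\le 1$, together with the explicit monotonicity and feasibility remarks, makes precise what the paper only asserts from the probabilistic meaning of $R^{(m)}$.

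One small fix: as stated, the invariant $V_{\pi^\star}(S,m)\le 1-m\tau$ is false at the base $m=K+1$ whenever $\tau>1/(K+1)$. Either assert it only for $m\le K$ (the step at $m=K$ needs just $0\le 1-K\tau$), or carry only $V\le 1$ as the paper does. The weaker bound already closes the clipping argument, since $(1-m\tau)r_i+g_i\cdot 1\le r_i+g_i\le 1$, so the sharper form is not actually required.
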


\begin{proof}
When $\tau > 0$, position costs coupling position index $m$ and arm selection break score-based pre-sorting. Thus, shortlist selection and positioning must be solved jointly over subsets $S \subseteq [L]$.

\paragraph{Sub-shortlist Reward and Probability Bounds.}
Let $S \subseteq [L]$ be any subset of candidate arms, and let $\pi_{m:K} = (\pi(m), \pi(m+1), \dots, \pi(K))$ denote an ordered sub-shortlist of length $K - m + 1$ constructed from distinct elements of $S$ to fill positions $m, m+1, \dots, K$. From \eqref{eq:reward_rg}, the cumulative expected reward contributed from position $m$ onward is given by:
\begin{equation*}
R^{(m)}(\pi_{m:K}) \triangleq \sum_{j=m}^{K} (1 - j\tau) \, r_{\pi(j)} \prod_{l=m}^{j-1} g_{\pi(l)} = (1 - m\tau) \, r_{\pi(m)} + g_{\pi(m)} R^{(m+1)}(\pi_{m+1:K}).
\end{equation*}
Since $R^{(m)}(\pi_{m:K})$ represents a expected return under linear position penalties $(1 - j\tau)$, it is naturally bounded above by $1$, i.e., $R^{(m)}(\pi_{m:K}) \le 1$ for all valid configurations.

\paragraph{Assumptions and Base Cases.}
We define $V_{\pi^\star}(S, m)$ for all non-empty subsets $S \subseteq [L]$ and positions $m \in \{1, 2, \dots, K+1\}$ as the maximum expected cumulative reward attainable by filling shortlist positions $m, m+1, \dots, K$ using a sequence of $K - m + 1$ distinct arms chosen from $S$.

The terminal boundary condition is defined for position $m = K + 1$:
\begin{equation*}
V_{\pi^\star}(S, K+1) = 0 \quad \text{for all } S \subseteq [L] \quad \text{(no remaining positions to fill yields zero reward)}.
\end{equation*}

\paragraph{Recurrence Relation with Projection.}
For any candidate subset $S \subseteq [L]$ with $|S| \ge 1$ and position $m \in \{1, 2, \dots, K\}$, the dynamic programming Bellman equation evaluated by Algorithm~\ref{alg:dp_subset} selects the optimal arm $i \in S$ for position $m$ with a projection operator $\min\{1, \cdot\}$:
\begin{equation*}
V_{\pi^\star}(S, m) = \min \left\{ 1, \; \max_{i \in S} \left[ (1 - m\tau) r_i + g_i V_{\pi^\star}(S \setminus \{i\}, m+1) \right] \right\}.
\end{equation*}

\paragraph{Induction Step.}
We prove by backward induction on position $m$ from $K+1$ down to $1$ that for all non-empty subsets $S \subseteq [L]$ with $|S| \ge \min(|S|, K - m + 1)$:
\begin{enumerate}
    \item $V_{\pi^\star}(S, m) = \max_{\pi_{m:K} \subseteq S} R^{(m)}(\pi_{m:K})$,
    \item $V_{\pi^\star}(S, m) \le 1$.
\end{enumerate}

\begin{itemize}
    \item \textbf{Base Step ($m = K + 1$):} Positions beyond $K$ carry no reward. $V_{\pi^\star}(S, K+1) = 0 \le 1$ for all $S \subseteq [L]$, which matches the empty suffix sum in $R^{(K+1)}(\cdot) = 0$.

    \item \textbf{Inductive Hypothesis:} Assume that for position $m + 1 \le K + 1$, $V_{\pi^\star}(S', m+1) = \max_{\pi_{m+1:K} \subseteq S'} R^{(m+1)}(\pi_{m+1:K})$ and $V_{\pi^\star}(S', m+1) \le 1$ holds for all candidate subsets $S' \subseteq [L]$.

    \item \textbf{Inductive Step for Position $m$:} Consider placing an arm $i \in S$ at position $m$, leaving the remaining subset $S \setminus \{i\}$ available to fill positions $m+1, \dots, K$. The optimal expected reward from position $m$ onward, conditioned on selecting arm $i$ at position $m$, is:
    \begin{align*}
    \max_{\substack{\pi_{m:K} \subseteq S \\ \pi(m) = i}} R^{(m)}(\pi_{m:K}) &= (1 - m\tau) r_i + g_i \left( \max_{\pi_{m+1:K} \subseteq S \setminus \{i\}} R^{(m+1)}(\pi_{m+1:K}) \right) \\
    &= (1 - m\tau) r_i + g_i V_{\pi^\star}(S \setminus \{i\}, m+1),
    \end{align*}
    where the second equality follows directly from the inductive hypothesis applied to subset $S' = S \setminus \{i\}$ at position $m+1$.
\end{itemize}

Maximizing over all possible choice of head arm $i \in S$ yields:
\begin{equation*}
\max_{\pi_{m:K} \subseteq S} R^{(m)}(\pi_{m:K}) = \max_{i \in S} \left[ (1 - m\tau) r_i + g_i V_{\pi^\star}(S \setminus \{i\}, m+1) \right].
\end{equation*}

Since $R^{(m)}(\pi_{m:K}) \le 1$ for all valid shortlists, the maximum value inside the brackets is bounded above by $1$:
\begin{equation*}
\max_{i \in S} \left[ (1 - m\tau) r_i + g_i V_{\pi^\star}(S \setminus \{i\}, m+1) \right] = \max_{\pi_{m:K} \subseteq S} R^{(m)}(\pi_{m:K}) \le 1.
\end{equation*}

Therefore, applying the projection operator $\min\{1, \cdot\}$ leaves the evaluated value unchanged:
\begin{align*}
V_{\pi^\star}(S, m) &= \min \left\{ 1, \; \max_{i \in S} \left[ (1 - m\tau) r_i + g_i V_{\pi^\star}(S \setminus \{i\}, m+1) \right] \right\} \\
&= \max_{i \in S} \left[ (1 - m\tau) r_i + g_i V_{\pi^\star}(S \setminus \{i\}, m+1) \right] \\
&= \max_{\pi_{m:K} \subseteq S} R^{(m)}(\pi_{m:K}),
\end{align*}
which establishes $V_{\pi^\star}(S, m) \le 1$ and completes the induction for position $m$.

\paragraph{Conclusion.}
Evaluating the value function at $S = [L]$ (the complete candidate pool) starting at position $m = 1$ yields:
\begin{equation*}
V_{\pi^\star}([L], 1) = \max_{\pi \in \Pi_K} R^{(1)}(\pi) = \max_{\pi \in \Pi_K} R(\pi) = R(\pi^\star).
\end{equation*}
This completes the proof.
\end{proof}

\section{Pseudo-codes of Algorithms}

\label{sec:pseudo}
\begin{algorithm}[H]
\caption{Top-$K$-$P(Y=1)$-UCB-Based Shortlist Selection and Ordering (Case 1)}
\label{alg:ucb_partial_exposure_case1}
\begin{algorithmic}[1]
\State \textbf{Input:} Pool size $L$, shortlist size $K$, cost $\tau$
\State Initialize $N_i(1)=0$, $\hat s_i(1)=0$ for all $i\in[L]$
\For{$t=1$ to $T$}
    \State Compute UCB indices for all $i \in [L]$:
    \Statex \qquad $U_i^s(t) = \min\Big\{1, \, \hat s_i(t) + \sqrt{\tfrac{2\log t}{\max(1, N_i(t))}}\Big\}$
    \State Select $S_t \subset [L]$ as the $K$ arms with the largest $U_i^s(t)$
    \State Form shortlist $\pi_t$ by sorting $S_t$ such that $U_{\pi_t(1)}^s(t) \ge \dots \ge U_{\pi_t(K)}^s(t)$
    \State Play $\pi_t$, observing feedback until the first $Y=1$ or all $K$ arms are observed
    \State Update $N_i(t+1)$ and $\hat s_i(t+1)$ for every observed arm $i$
\EndFor
\end{algorithmic}
\end{algorithm}

\subsection{DP-based Algorithms}
\label{DP_algo}
For a shortlist $\pi$ and position $m\in\{1,\dots,K\}$, define the \textbf{suffix value} $V_\pi(m)$ as the expected remaining reward accumulated from position $m$ through position $K$, using $r_i,g_i$ from \eqref{eq:rg-def}:
\[
V_\pi(m)\triangleq\sum_{i=m}^{K}(1-i\tau)\,r_{\pi(i)}\prod_{j=m}^{i-1}g_{\pi(j)}, \qquad V_\pi(K+1)\triangleq0.
\]
Here, $\prod_{j=m}^{i-1}g_{\pi(j)}$ is the probability of a user reaching position $i$ starting from position $m$, and $(1-i\tau)r_{\pi(i)}$ is the immediate expected reward at position $i$. Expressed recursively,
\begin{align}
V_\pi(m) &= (1-m\tau)r_{\pi(m)} + g_{\pi(m)}V_\pi(m+1), \nonumber \\
&\quad m = 1, \dots, K,
\label{eq:suffix_recursion}
\end{align}
where $g_{\pi(m)}V_\pi(m+1)$ scales the remaining suffix value by the probability of continuing past position $m$. The total expected shortlist reward is $R(\pi) = V_\pi(1)$ (consistent with \eqref{eq:reward_rg}), This recursion is what makes dynamic programming possible: the value of a suffix depends only on the current position and the arms assigned to it, not on how the state was reached. When $\tau = 0$, we exploit the score ordering $\frac{r_i}{1 - g_i}$ to solve the combinatorial problem via \textsf{DP-Ordered} (Algorithm~\ref{alg:dp_ordered}), yielding the optimal policy $\pi^\star \triangleq \arg\max_{\pi} R(\pi)$. On the other hand, when $\tau > 0$, no ordering as well as selection score exists, requiring \textsf{DP-Subset} (Algorithm~\ref{alg:dp_subset}) to optimize the objective.

\begin{algorithm}[H]
\caption{\textsc{DP-Ordered}$(\rho,\gamma,L,K)$ for $\tau=0$}
\label{alg:dp_ordered}
\begin{algorithmic}[1]
\Require vectors $\rho$, 
$\gamma\in[0,1)^L$, $L$, $K$
\State Compute scores $s_i=\rho_i/(1-\gamma_i)$ for all $i\in[L]$
\State Form the global order $\pi$: sort all $L$ arms in descending order of $s_i$
\State Set $W(i,0)=0$ for $i\in[L{+}1]$ and $W(L{+}1,m)=-\infty$ for $m\in[K]$
\For{$i=L$ down to $1$}
    \For{$m=1$ to $K$}
        \State $a \leftarrow W(i{+}1,m)$ \Comment{skip $\pi(i)$}
        \State $b \leftarrow \rho_{\pi(i)}+\gamma_{\pi(i)}\,W(i{+}1,m{-}1)$ \Comment{include $\pi(i)$}
        \State $W(i,m) \leftarrow \min\{1,\ \max\{a,b\}\}$
        \State $D(i,m) \leftarrow \mathbf{1}\{b > a\}$ \Comment{backtracking decision}
    \EndFor
\EndFor
\State \textbf{Backtrack:} initialize $\sigma \leftarrow \emptyset$, $i\leftarrow1$, $m\leftarrow K$
\While{$i\le L$ and $m>0$}
    \If{$D(i,m)=1$}
        \State Append $\pi(i)$ to $\sigma$; \ $m\leftarrow m-1$
    \EndIf
    \State $i\leftarrow i+1$
\EndWhile
\State \Return $\big(W(1,K),\ \sigma\big)$ \Comment{$\sigma$ is already in the relative order of $\pi$}
\end{algorithmic}
\end{algorithm}

\begin{algorithm}[H]
\caption{\textsc{DP-Subset}$(\rho,\gamma,L,K,\tau)$ --- subset DP for $\tau>0$}
\label{alg:dp_subset}
\begin{algorithmic}[1]
\Require vectors $\rho$, $\gamma\in[0,1]^L$, $L$, $K$, $\tau$
\State Set $W(S,K{+}1)=0$ for all $S\subseteq[L]$
\For{all $S\subseteq[L]$ with $|S|\ge1$, in increasing order of $|S|$}
    \For{$m=\min(|S|,K)$ down to $1$}
        \State $i^\star(S,m) \leftarrow \arg\max_{i\in S}\big[(1{-}m\tau)\rho_i+\gamma_i\,W(S\setminus\{i\},m{+}1)\big]$
        \State $W(S,m) \leftarrow \min\Big\{1,\ (1{-}m\tau)\rho_{i^\star(S,m)}+\gamma_{i^\star(S,m)}\,W\big(S\setminus\{i^\star(S,m)\},m{+}1\big)\Big\}$
    \EndFor
\EndFor
\State \textbf{Backtrack:} initialize $\sigma \leftarrow \emptyset$, $S\leftarrow[L]$, $m\leftarrow1$
\While{$m\le K$}
    \State $i \leftarrow i^\star(S,m)$; append $i$ to $\sigma$ at position $m$
    \State $S \leftarrow S\setminus\{i\}$; \ $m\leftarrow m+1$
\EndWhile
\State \Return $\big(W([L],1),\ \sigma\big)$
\end{algorithmic}
\end{algorithm}


\section{Auxiliary Lemmas}
\label{app:aux_lemmas}
In this section, we provide proofs for several helper lemmas that are used throughout the main regret proofs.
\subsection{Concentration Event and Failure Probability}
Let us define:
\begin{defn}[Good Event $G_t$]
\label{def:good_event}
For any time step $t \ge 1$, we define the concentration event $G_t$ as:
\begin{equation}
    G_t = \left( \bigcap_{s=1}^t \bigcap_{i=1}^L \left\{ |\hat{r}_i(s) - r_i| < \sqrt{\frac{2 \log t}{s}} \right\} \right) \cap \left( \bigcap_{u=1}^t \bigcap_{i=1}^L \left\{ |\hat{g}_i(u) - g_i| < \sqrt{\frac{2 \log t}{u}} \right\} \right),
\end{equation}
where $r_i = P(X_i = 1, Y_i = 1)$ and $g_i = P(Y_i = 0)$.
\end{defn}

\begin{lemma}[Failure Probability Bound]
\label{lem:concentration_failure}
The probability of the complementary event $G_t^c$ at step $t$ satisfies:
\begin{equation}
    \mathbb{P}(G_t^c) \le 4 L t^{-3}.
\end{equation}
Consequently, summing over time steps $t=1, \dots, T$ yields:
\begin{equation}
    \sum_{t=1}^T \mathbb{P}(G_t^c) \le \frac{2 L \pi^2}{3}.
\end{equation}
\end{lemma}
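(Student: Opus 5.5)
We prove the bound with a union bound over arms, sample sizes, and the two statistics, applying Hoeffding's inequality to each fixed-sample-size empirical mean. The event $G_t$ constrains the estimates $\hat r_i$ and $\hat g_i$ at every possible number of observations $s\in\{1,\dots,t\}$. The first step is therefore to reinterpret $\hat r_i(s)$ and $\hat g_i(u)$ as the empirical means built from the first $s$ (resp.\ $u$) observations of arm $i$, rather than as time-indexed quantities. This is the standard device in cascading-bandit analyses: because the decision to observe arm $i$ at round $u$ depends only on the past and on other arms' current signals, the successive observed pairs $(X_i,Y_i)$ of arm $i$ form an i.i.d.\ sequence. This can be formalized with a "stack of rewards" construction, in which each arm is pre-assigned an i.i.d.\ sequence of draws that is consumed in order. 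The $k$-th observed indicator $\mathbf 1\{X_i=1,Y_i=1\}$ is then Bernoulli$(r_i)$, and the $k$-th observed $\mathbf 1\{Y_i=0\}$ is Bernoulli$(g_i)$. Note that in the cascade, when $Y_i=1$ the learner probes and observes $X_i$, so both indicators are always determined upon observation.

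With this reduction, for each fixed $i$ and $s$, Hoeffding's inequality gives
\[
\mathbb P\Big(|\hat r_i(s)-r_i|\ge\sqrt{2\log t/s}\Big)\le 2\exp(-2s\cdot 2\log t/s)=2t^{-4},
\]
and the same bound holds for $\hat g_i(u)$. The union bound over $i\in[L]$, $s\in[t]$, and the two families then yields
\[
\mathbb P(G_t^c)\le L\cdot t\cdot 2\cdot 2t^{-4}=4Lt^{-3}.
\]
Summing over $t$ gives $\sum_{t}4Lt^{-3}\le 4L\zeta(3)$. To match the stated constant, it suffices to note that $4L\zeta(3)\le 4L\zeta(2)=\tfrac{2L\pi^2}{3}$, since $t^{-3}\le t^{-2}$.

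The main obstacle is the first step: making rigorous that the samples of arm $i$ collected under the adaptive cascade policy behave as an i.i.d.\ sequence, so that Hoeffding applies at each fixed count $s$. The definition of $G_t$ indexes estimates by count, which is exactly what makes the stack-of-rewards coupling work. I would state this coupling explicitly, noting that the observation event $O_u(i)$ is measurable with respect to the history and to the other arms' round-$u$ draws, which are independent of arm $i$'s next unused draw. After that, the rest is routine arithmetic.
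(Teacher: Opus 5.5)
Your proposal is correct and follows the paper's proof: a union bound over arms $i\in[L]$, sample counts $s\in[t]$ and the two statistics $\hat r_i,\hat g_i$, Hoeffding giving $2t^{-4}$ per term, and then $t^{-3}\le t^{-2}$ to obtain the constant $2L\pi^2/3$. Your explicit stack-of-rewards (count-indexed) coupling is a welcome addition. It makes rigorous the i.i.d.-per-count step that the paper only invokes as ``standard UCB concentration.''
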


\begin{proof}
Applying the union bound to the complement event $G_t^c$:
\begin{align*}
    G_t^c &= \left( \bigcup_{s=1}^t \bigcup_{i=1}^L \left\{ |\hat{r}_i(s) - r_i| \ge \sqrt{\frac{2 \log t}{s}} \right\} \right) \cup \left( \bigcup_{u=1}^t \bigcup_{i=1}^L \left\{ |\hat{g}_i(u) - g_i| \ge \sqrt{\frac{2 \log t}{u}} \right\} \right).
\end{align*}
By Hoeffding's inequality and standard UCB concentration:
\begin{align*}
    \mathbb{P}(G_t^c) &\le \sum_{i=1}^L \sum_{s=1}^t \mathbb{P}\left( |\hat{r}_i(s) - r_i| \ge \sqrt{\frac{2 \log t}{s}} \right) + \sum_{i=1}^L \sum_{u=1}^t \mathbb{P}\left( |\hat{g}_i(u) - g_i| \ge \sqrt{\frac{2 \log t}{u}} \right) \\
    &\le \sum_{i=1}^L \sum_{s=1}^t 2 t^{-4} + \sum_{i=1}^L \sum_{u=1}^t 2 t^{-4} \\
    &= 4 L t^{-3}.
\end{align*}
Summing across time $t \ge 1$:
\[
    \sum_{t=1}^T \mathbb{P}(G_t^c) \le \sum_{t=1}^{\infty} 4 L t^{-3} < 4 L \sum_{t=1}^{\infty} t^{-2} = \frac{2 L \pi^2}{3}. \qedhere
\]
\end{proof}

\subsection{Reward Decomposition Identities}
\label{app:aux_reward_identities}

\begin{lemma}[Sum-by-Parts Identity]
\label{lem:sum_by_parts}
Let $S_i(\pi) \triangleq \prod_{j=1}^{i-1} (1 - s_{\pi(j)})$ for $1 \le i \le K+1$, with $S_1(\pi) \triangleq 1$. The expected reward $R_\pi$ satisfies:
\begin{equation}
\label{eq:sum_by_parts}
R_\pi = (1 - \tau) - \tau \sum_{i=2}^K S_i(\pi) - (1 - K\tau) S_{K+1}(\pi).
\end{equation}
\end{lemma}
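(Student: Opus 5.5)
The identity is stated in terms of the Case~1 score $s_i=\mathbb P(Y_i=1)=1-g_i$. I will therefore work in Case~1 ($\epsilon_{0,i}=0$ for all $i$), where \eqref{eq:case1_gi} gives $r_i=1-g_i=s_i$. With this substitution \eqref{eq:reward_rg} reads
\begin{equation*}
R_\pi=\sum_{m=1}^{K}(1-m\tau)\,s_{\pi(m)}\prod_{j=1}^{m-1}\big(1-s_{\pi(j)}\big)=\sum_{m=1}^{K}(1-m\tau)\,s_{\pi(m)}\,S_m(\pi).
\end{equation*}
The plan is a summation-by-parts (Abel summation) argument driven by one telescoping observation.

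The key step is the identity $s_{\pi(m)}S_m(\pi)=S_m(\pi)-S_{m+1}(\pi)$. It follows from $S_{m+1}(\pi)=S_m(\pi)\,(1-s_{\pi(m)})$, which is immediate from the product definition. Probabilistically, it says that the probability of stopping exactly at position $m$ equals the probability of reaching $m$ minus the probability of reaching $m+1$. Substituting it gives
\begin{equation*}
R_\pi=\sum_{m=1}^{K}(1-m\tau)S_m(\pi)-\sum_{m=1}^{K}(1-m\tau)S_{m+1}(\pi).
\end{equation*}

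Next I reindex the second sum as $\sum_{m=2}^{K+1}(1-(m-1)\tau)S_m(\pi)$ and collect terms:
\begin{itemize}
\item the $m=1$ term of the first sum contributes $(1-\tau)S_1=1-\tau$;
\item for $2\le m\le K$ the coefficient of $S_m(\pi)$ is $(1-m\tau)-(1-(m-1)\tau)=-\tau$;
\item the $m=K+1$ term of the second sum contributes $-(1-K\tau)S_{K+1}(\pi)$.
\end{itemize}
Adding these yields exactly \eqref{eq:sum_by_parts}.

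There is no real obstacle; the only points needing care are the boundary terms and the convention $S_1=1$ (the empty product). I will also remark that the identity uses $r_i=s_i$, so it relies on the Case~1 structure. It does not hold verbatim in the shared-channel regime, where $r_i\neq 1-g_i$.
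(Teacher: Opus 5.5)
Your proposal is correct and follows essentially the same route as the paper. You substitute the telescoping identity $s_{\pi(m)}S_m(\pi)=S_m(\pi)-S_{m+1}(\pi)$ into $R_\pi=\sum_{m}(1-m\tau)s_{\pi(m)}S_m(\pi)$ and collect the boundary and interior coefficients by Abel summation, and your explicit remark that this form of $R_\pi$ needs the Case~1 relation $r_i=1-g_i=s_i$ (so it does not carry over verbatim to the shared-channel regime) states an assumption the paper's proof uses only implicitly.
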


\begin{proof}
By definition, $s_{\pi(i)} S_i(\pi) = S_i(\pi) - S_{i+1}(\pi)$. Substituting this telescoping identity into the expected reward expression $R_\pi = \sum_{i=1}^K (1 - i\tau) s_{\pi(i)} S_i(\pi)$ and expanding gives:
\begin{align*}
R_\pi &= \sum_{i=1}^K (1 - i\tau)\big(S_i(\pi) - S_{i+1}(\pi)\big) \\
&= (1 - \tau) S_1(\pi) + \sum_{i=2}^K \big((1 - i\tau) - (1 - (i-1)\tau)\big) S_i(\pi) - (1 - K\tau) S_{K+1}(\pi) \\
&= (1 - \tau) - \tau \sum_{i=2}^K S_i(\pi) - (1 - K\tau) S_{K+1}(\pi),
\end{align*}
where we used $S_1(\pi) = 1$. This completes the proof.
\end{proof}

\begin{lemma}[Exact Single-Swap Hybrid Decomposition for case 1]
\label{lem:hybrid_bound}
The strategy regret admits the exact telescoping decomposition:
\begin{equation}
\label{eq:strategy_regret_exact}
R_{\pi^*} - R_{\pi_t} = \sum_{m=1}^K \left[ \prod_{j=1}^{m-1} (1 - s_{\pi_t(j)}) \right] (s_m - s_{\pi_t(m)}) \cdot W_m(s),
\end{equation}
where $W_m(s)$ is the position-dependent weight factor defined as:
\begin{equation}
\label{eq:Wm_definition}
W_m(s) \triangleq \tau \sum_{i=m+1}^K \prod_{k=m+1}^{i-1}(1 - s_k) + (1 - K\tau) \prod_{k=m+1}^K(1 - s_k).
\end{equation}
\end{lemma}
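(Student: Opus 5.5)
We prove \eqref{eq:strategy_regret_exact} by telescoping along a chain of hybrid shortlists that interpolate between $\pi_t$ and $\pi^*$, changing one position at a time. Throughout we are in Case~1, so $g_i=1-s_i$ and $r_i=s_i$ by \eqref{eq:case1_gi}. Arms are indexed so that $\pi^*=(1,2,\dots,K)$; this ordering is optimal by Parts~I--II of Appendix~\ref{app:score_proofs_proof}.

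For $m=0,1,\dots,K$, define the hybrid tuple
\[
H_m \triangleq \big(\pi_t(1),\dots,\pi_t(m),\,m+1,\dots,K\big).
\]
Then $H_0=\pi^*$ and $H_K=\pi_t$, so
\[
R_{\pi^*}-R_{\pi_t}=\sum_{m=1}^K\big[R(H_{m-1})-R(H_m)\big].
\]
A hybrid may repeat an arm, since $\pi_t(j)$ for $j\le m$ may lie in $\{m+1,\dots,K\}$. The identity we want is purely algebraic, however: it only involves evaluating the polynomial expression \eqref{eq:reward_rg} on a tuple of indices. We therefore treat $R(H_m)$ as that formal expression, and distinctness is never used.

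Consecutive hybrids $H_{m-1}$ and $H_m$ agree everywhere except at position $m$, which holds arm $m$ in $H_{m-1}$ and arm $\pi_t(m)$ in $H_m$. Lemma~\ref{lem:slot_sub} is an algebraic identity whose computation \eqref{eq:slot_step} nowhere uses $j\notin\pi$. Applying it gives
\[
R(H_{m-1})-R(H_m)=P_{m-1}\Big[(1-m\tau)\big(s_m-s_{\pi_t(m)}\big)+\big(g_m-g_{\pi_t(m)}\big)\Phi_m\Big].
\]
Here $P_{m-1}=\prod_{j=1}^{m-1}(1-s_{\pi_t(j)})$, because the prefix of both hybrids comes from $\pi_t$. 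The suffix value is
\[
\Phi_m=\sum_{i=m+1}^K(1-i\tau)\,s_i\prod_{k=m+1}^{i-1}(1-s_k),
\]
which is computed on the common suffix $(m+1,\dots,K)$ of $\pi^*$. Substituting $g_m-g_{\pi_t(m)}=-(s_m-s_{\pi_t(m)})$, as in \eqref{eq:case1_slot}, yields
\[
R(H_{m-1})-R(H_m)=P_{m-1}\big(s_m-s_{\pi_t(m)}\big)\big[(1-m\tau)-\Phi_m\big].
\]

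It remains to show that $(1-m\tau)-\Phi_m=W_m(s)$. We evaluate $\Phi_m$ by the summation-by-parts argument of Lemma~\ref{lem:sum_by_parts}, shifted to start at position $m+1$. Set $S'_i\triangleq\prod_{k=m+1}^{i-1}(1-s_k)$, so that $S'_{m+1}=1$ and $s_iS'_i=S'_i-S'_{i+1}$. Telescoping gives
\[
\Phi_m=(1-(m+1)\tau)-\tau\sum_{i=m+2}^K S'_i-(1-K\tau)S'_{K+1}.
\]
Therefore
\[
(1-m\tau)-\Phi_m=\tau+\tau\sum_{i=m+2}^K S'_i+(1-K\tau)S'_{K+1}.
\]
Since $S'_{m+1}=1$, the lone $\tau$ is exactly the $i=m+1$ term of $\tau\sum_{i=m+1}^K S'_i$, and the right-hand side is precisely \eqref{eq:Wm_definition}. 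Summing over $m$ then gives \eqref{eq:strategy_regret_exact}.

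The only delicate point is legitimizing the hybrids that contain repeated arms, that is, checking that Lemma~\ref{lem:slot_sub} and the telescoping are used purely as algebraic identities. The remaining work is index bookkeeping in the shifted summation by parts, in particular the $i=m+1$ term and the boundary case $m=K$. For $m=K$ the sum is empty, $\Phi_K=0$, and $W_K=(1-K\tau)$; this should be checked directly, and it agrees with \eqref{eq:Wm_definition} since both the sum and the empty product reduce as required.
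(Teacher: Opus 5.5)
Your proof is correct and follows the paper's route: the same hybrid chain $(\pi_t(1),\dots,\pi_t(m),m+1,\dots,K)$ telescoped from $\pi^*$ to $\pi_t$, with each per-step difference evaluated by summation by parts. The only difference is the order of operations. The paper applies Lemma~\ref{lem:sum_by_parts} to each whole hybrid and then differences the survival products $S_i$, so $m=K$ needs no separate check; you difference first via Lemma~\ref{lem:slot_sub} and then sum by parts on the common suffix. Your remark that hybrids can repeat arms, so the identity must be read as a formal algebraic one, is valid and is left implicit in the paper.
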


\begin{proof}
For any $m \in \{0, 1, \dots, K\}$, define the hybrid shortlist $\mathbf{h}^{(m)} \triangleq (\pi_t(1), \dots, \pi_t(m), m+1, \dots, K)$, such that $\mathbf{h}^{(0)} = \pi^*$ and $\mathbf{h}^{(K)} = \pi_t$. Expressing the overall strategy regret as a telescoping sum across adjacent hybrid shortlists gives:
\begin{equation}
\label{eq:telescope}
R_{\pi^*} - R_{\pi_t} = \sum_{m=1}^K \left[ R_{\mathbf{h}^{(m-1)}} - R_{\mathbf{h}^{(m)}} \right].
\end{equation}

To evaluate the single-step difference $R_{\mathbf{h}^{(m-1)}} - R_{\mathbf{h}^{(m)}}$, we apply Lemma~\ref{lem:sum_by_parts} to both shortlists. Subtracting $R_{\mathbf{h}^{(m)}}$ from $R_{\mathbf{h}^{(m-1)}}$, the leading constant $(1 - \tau)$ cancels out, yielding:
\begin{equation}
\label{eq:diff_expanded_raw}
R_{\mathbf{h}^{(m-1)}} - R_{\mathbf{h}^{(m)}} = -\tau \sum_{i=2}^K \left[ S_i(\mathbf{h}^{(m-1)}) - S_i(\mathbf{h}^{(m)}) \right] - (1 - K\tau) \left[ S_{K+1}(\mathbf{h}^{(m-1)}) - S_{K+1}(\mathbf{h}^{(m)}) \right].
\end{equation}

To compute the point-wise difference in survival probabilities $S_i(\mathbf{h}^{(m-1)}) - S_i(\mathbf{h}^{(m)})$, observe that for positions $1 \le i \le m$, the prefixes of length $i-1$ are identical in both shortlists (consisting of $\pi_t(1), \dots, \pi_t(i-1)$). Consequently, $S_i(\mathbf{h}^{(m-1)}) = S_i(\mathbf{h}^{(m)})$, implying that $S_i(\mathbf{h}^{(m-1)}) - S_i(\mathbf{h}^{(m)}) = 0$.

For position $i = m+1$, the shortlists differ exclusively at index $m$, where $\mathbf{h}^{(m-1)}_m = m$ (with score $s_m$) and $\mathbf{h}^{(m)}_m = \pi_t(m)$ (with score $s_{\pi_t(m)}$). Thus, the difference simplifies to:
\begin{equation}
\label{eq:S_diff_m_plus_1}
S_{m+1}(\mathbf{h}^{(m-1)}) - S_{m+1}(\mathbf{h}^{(m)}) = \left[ \prod_{j=1}^{m-1} (1 - s_{\pi_t(j)}) \right] (s_{\pi_t(m)} - s_m).
\end{equation}

For tail positions $i > m+1$, both shortlists resume matching along the optimal order $m+1, \dots, i-1$. The discrepancy at position $m$ propagates according to:
\begin{equation}
\label{eq:S_diff_general}
S_i(\mathbf{h}^{(m-1)}) - S_i(\mathbf{h}^{(m)}) = \left[ \prod_{j=1}^{m-1} (1 - s_{\pi_t(j)}) \right] (s_{\pi_t(m)} - s_m) \prod_{k=m+1}^{i-1} (1 - s_k).
\end{equation}

Since $S_i(\mathbf{h}^{(m-1)}) - S_i(\mathbf{h}^{(m)}) = 0$ for all $i \le m$, the summation $\sum_{i=2}^K$ in \eqref{eq:diff_expanded_raw} truncates to $\sum_{i=m+1}^K$. Substituting \eqref{eq:S_diff_general} into \eqref{eq:diff_expanded_raw} and factoring out the common prefix $\left[ \prod_{j=1}^{m-1} (1 - s_{\pi_t(j)}) \right] (s_m - s_{\pi_t(m)})$, we obtain:
\begin{align*}
R_{\mathbf{h}^{(m-1)}} - R_{\mathbf{h}^{(m)}} &= -\tau \sum_{i=m+1}^K \left[ \prod_{j=1}^{m-1} (1 - s_{\pi_t(j)}) \right] (s_{\pi_t(m)} - s_m) \prod_{k=m+1}^{i-1} (1 - s_k) \\
&\quad - (1 - K\tau) \left[ \prod_{j=1}^{m-1} (1 - s_{\pi_t(j)}) \right] (s_{\pi_t(m)} - s_m) \prod_{k=m+1}^K (1 - s_k) \\
&= \left[ \prod_{j=1}^{m-1} (1 - s_{\pi_t(j)}) \right] (s_m - s_{\pi_t(m)}) \left[ \tau \sum_{i=m+1}^K \prod_{k=m+1}^{i-1} (1 - s_k) + (1 - K\tau) \prod_{k=m+1}^K (1 - s_k) \right] \\
&= \left[ \prod_{j=1}^{m-1} (1 - s_{\pi_t(j)}) \right] (s_m - s_{\pi_t(m)}) \cdot W_m(s).
\end{align*}

Substituting this step-difference back into the telescoping sum \eqref{eq:telescope} completes the proof.
\end{proof}

\begin{lemma}[Regret Decomposition in general setting for case 2:]
\label{lem:regret_decomp}
Let $\pi^*$ denote the optimal shortlist maximizing expected reward, and let $\pi_t$ be the shortlist chosen at time $t$. The instantaneous regret $R_t \triangleq R_{\pi^*} - R_{\pi_t}$ decomposes as:
\begin{equation}
\label{eq:regret_decomp_lemma1}
R_{\pi^*} - R_{\pi_t} = \sum_{m=1}^{K} \sum_{a \in [L]} \mathbb{I}\{\pi_t(m) = a\} S_{m, \pi_t} \Delta_m(a),
\end{equation}
where $S_{m, \pi} \triangleq \prod_{j=1}^{m-1} g_{\pi(j)}$ (with $S_{1,\pi}=1$) is the survival product, and the single-position suboptimality gap $\Delta_m(a)$ is:
\begin{equation}
\label{eq:delta_m_def}
\Delta_m(a) = (1 - m\tau)(r_{\pi^*(m)} - r_a) + (g_{\pi^*(m)} - g_a) R_{\pi^*, m+1},
\end{equation}
with $R_{\pi^*, m+1} \triangleq \sum_{j=m+1}^{K} (1 - j\tau) r_{\pi^*(j)} \prod_{l=m+1}^{j-1} g_{\pi^*(l)}$ (and $R_{\pi^*, K+1} = 0$).
\end{lemma}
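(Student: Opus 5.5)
We prove the identity by a hybrid (telescoping) argument, mirroring the proof of Lemma~\ref{lem:hybrid_bound} but working with the general $(r,g)$ form of the reward rather than the score form. For $m\in\{0,1,\dots,K\}$ we define the hybrid shortlist $\mathbf h^{(m)}\triangleq(\pi_t(1),\dots,\pi_t(m),\pi^*(m+1),\dots,\pi^*(K))$. Then $\mathbf h^{(0)}=\pi^*$ and $\mathbf h^{(K)}=\pi_t$, so
\[
R_{\pi^*}-R_{\pi_t}=\sum_{m=1}^K\big[R_{\mathbf h^{(m-1)}}-R_{\mathbf h^{(m)}}\big].
\]
Distinctness of arms in a hybrid may fail if some $\pi_t(j)$ with $j\le m$ also equals some $\pi^*(k)$ with $k>m$. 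We will simply note that the reward formula \eqref{eq:reward_rg} is a well-defined algebraic expression for any $K$-tuple. The identity is purely algebraic, so feasibility of the intermediate tuples is irrelevant.

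The key step is computing each telescoping increment. The tuples $\mathbf h^{(m-1)}$ and $\mathbf h^{(m)}$ agree on positions $1,\dots,m-1$ (both equal $\pi_t$) and on positions $m+1,\dots,K$ (both equal $\pi^*$). They differ only at position $m$, where the entries are $\pi^*(m)$ and $\pi_t(m)$ respectively. This is exactly the setting of the slot-substitution Lemma~\ref{lem:slot_sub}. Its algebraic content, equations \eqref{eq:Rpi_slot}--\eqref{eq:slot_step}, does not use $j\notin\pi$ and only splits the reward into the prefix, the position-$m$ term, and the common suffix. Applying it gives
\[
R_{\mathbf h^{(m-1)}}-R_{\mathbf h^{(m)}}=\Big(\prod_{j=1}^{m-1}g_{\pi_t(j)}\Big)\Big[(1-m\tau)\big(r_{\pi^*(m)}-r_{\pi_t(m)}\big)+\big(g_{\pi^*(m)}-g_{\pi_t(m)}\big)\Phi\Big].
\]
Here $\Phi$ is the common suffix value from position $m+1$. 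Since that suffix is $(\pi^*(m+1),\dots,\pi^*(K))$, we get $\Phi=\Phi_{\pi^*}(m+1)=R_{\pi^*,m+1}$. The prefix product is $S_{m,\pi_t}$, and the bracket is $\Delta_m(\pi_t(m))$ as defined in \eqref{eq:delta_m_def}.

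Finally, we rewrite $\Delta_m(\pi_t(m))$ as $\sum_{a\in[L]}\mathbb I\{\pi_t(m)=a\}\Delta_m(a)$, which is trivial since exactly one indicator is nonzero. Summing over $m$ then yields \eqref{eq:regret_decomp_lemma1}.

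The only point needing care is the bookkeeping that the prefix and suffix terms cancel exactly in each increment. For $k<m$ the summands coincide because both the arms and the survival products are identical. For $k>m$ the summand in each hybrid equals (its prefix product up to $m-1$) times $g_{\text{(entry at }m)}$ times the $\pi^*$-suffix term; factoring gives $g\cdot\Phi$. We will write this out explicitly, as in \eqref{eq:Rpi_slot}, rather than citing the lemma verbatim, to avoid its distinctness hypothesis.
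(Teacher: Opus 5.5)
Your proof is correct and follows the paper's argument essentially verbatim: the same hybrid shortlists $h^{(m)}$, the same telescoping sum, and the same prefix/suffix cancellation, which gives $S_{m,\pi_t}\Delta_m(\pi_t(m))$ with common suffix value $R_{\pi^*,m+1}$. Your remark that intermediate hybrids may repeat arms, and that this is harmless because the identity uses the reward formula only as an algebraic expression, makes explicit a point the paper leaves implicit.
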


\begin{proof}
Consider the sequence of intermediate hybrid states $h^{(0)}, h^{(1)}, \dots, h^{(K)}$, where $h^{(m)}$ takes its first $m$ arms from $\pi_t$ and its remaining $K-m$ arms from $\pi^*$:
\begin{equation}
h^{(m)} = \left(\pi_t(1), \pi_t(2), \dots, \pi_t(m), \pi^*(m+1), \dots, \pi^*(K)\right),
\end{equation}
with boundary states $h^{(0)} = \pi^*$ and $h^{(K)} = \pi_t$. Expressing $R_{\pi^*} - R_{\pi_t}$ as a telescoping sum:
\begin{equation}
R_{\pi^*} - R_{\pi_t} = R_{h^{(0)}} - R_{h^{(K)}} = \sum_{m=1}^{K} \left[ R_{h^{(m-1)}} - R_{h^{(m)}} \right].
\end{equation}
Writing out $R_{h^{(m-1)}}$ and $R_{h^{(m)}}$ explicitly:
\begin{align}
R_{h^{(m-1)}} &= \sum_{i=1}^{m-1} (1 - i\tau) r_{\pi_t(i)} S_{i, \pi_t} + S_{m, \pi_t} \left[ (1 - m\tau) r_{\pi^*(m)} + g_{\pi^*(m)} R_{\pi^*, m+1} \right], \\
R_{h^{(m)}} &= \sum_{i=1}^{m-1} (1 - i\tau) r_{\pi_t(i)} S_{i, \pi_t} + S_{m, \pi_t} \left[ (1 - m\tau) r_{\pi_t(m)} + g_{\pi_t(m)} R_{\pi^*, m+1} \right].
\end{align}
Subtracting these expressions yields:
\begin{equation}
R_{h^{(m-1)}} - R_{h^{(m)}} = S_{m, \pi_t} \left[ (1 - m\tau)(r_{\pi^*(m)} - r_{\pi_t(m)}) + (g_{\pi^*(m)} - g_{\pi_t(m)}) R_{\pi^*, m+1} \right].
\end{equation}
Setting $\pi_t(m) = a$ gives $\Delta_m(a)$ as defined in \eqref{eq:delta_m_def}. Summing over all positions $m \in [K]$ and arms $a \in [L]$ via the indicator $\mathbb{I}\{\pi_t(m) = a\}$ completes the proof.
\end{proof}

\section{Divergence decomposition}
\label{llowerb}
In this section we have supporting Lemmas used for proving Lower bounds.
Let us define:
\[
u_i = P(Y_i = 1) \quad \text{and} \quad v_i = P(X_i = 1 \mid Y_i = 1)
\]

\subsection*{State Space Definition}
Let us define the state space of the Markov chain as:
\begin{equation}
\label{states}
\mathcal{X} = \{ 0, (1, 0), (1, 1), (2, 0), (2, 1), \dots, (K, 0), (K, 1) \}
\end{equation}

where:
\begin{itemize}
    \item If $x = 0$, then all arms give $\{Y_i\}_{i=1}^K = 0$.
    \item If $x = (i, 1)$, then the first relevant arm is in position $i$, which means $Y_{\pi(i)} = 1$ and $X_{\pi(i)} = 1$.
    \item If $x = (i, 0)$, then $Y_{\pi(i)} = 1$ and $X_{\pi(i)} = 0$.
\end{itemize}

\subsection*{Transition Probabilities}
Given that the chosen list of arms is $\pi$, the transition probability from state $x$ to state $y$ (where $y^{(1)}$ denotes the first component of the tuple $y$) is given by:
\begin{equation}
P(y \mid \pi, u, v) =
\begin{cases}
\displaystyle \prod_{s=1}^{K} (1 - u_{\pi(s)}) & \text{if } y = 0 \\[12pt]
\displaystyle (1 - v_{\pi(y^{(1)})}) \, u_{\pi(y^{(1)})} \prod_{s=1}^{y^{(1)}-1} (1 - u_{\pi(s)}) & \text{if } y \in \{ (1,0), \dots, (K,0) \} \\[12pt]
\displaystyle v_{\pi(y^{(1)})} \, u_{\pi(y^{(1)})} \prod_{s=1}^{y^{(1)}-1} (1 - u_{\pi(s)}) & \text{if } y \in \{ (1,1), \dots, (K,1) \}
\end{cases}
\end{equation}

\begin{lemma}[Divergence Decomposition for Shortlist $\pi$]
\label{for_perm}
The KL-divergence between two probability distributions parameterized by $(u, v)$ and $(u', v')$ under policy/shortlist $\pi$ is given by:
\begin{equation}
D_\pi((u, v) \parallel (u', v')) = \sum_{i=1}^{K} \left[ \prod_{s=1}^{i-1} (1 - u_{\pi(s)}) \right] \Big[ D(u_{\pi(i)} \parallel u'_{\pi(i)}) + u_{\pi(i)} D(v_{\pi(i)} \parallel v'_{\pi(i)}) \Big]
\end{equation}
\end{lemma}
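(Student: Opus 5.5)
The plan is to compute the KL divergence directly from the outcome distribution of a single round under shortlist $\pi$, using the explicit transition probabilities $P(y\mid\pi,u,v)$ above. Write $\bar u_i \triangleq \prod_{s=1}^{i-1}(1-u_{\pi(s)})$ for the probability of reaching position $i$ (and similarly $\bar u_i'$ under the primed parameters). The divergence is
\[
D_\pi((u,v)\parallel(u',v')) = \sum_{y\in\mathcal X} P(y\mid\pi,u,v)\log\frac{P(y\mid\pi,u,v)}{P(y\mid\pi,u',v')}.
\]
The cleanest route is a chain-rule argument, rather than brute-force expansion. The observation process is sequential: at position $i$, given the cascade is still alive, one observes $Y_{\pi(i)}\sim\mathrm{Bern}(u_{\pi(i)})$. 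Only if $Y_{\pi(i)}=1$ does one additionally observe $X_{\pi(i)}\sim\mathrm{Bern}(v_{\pi(i)})$, after which the process stops.

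The first step is to recast the outcome $y$ as a deterministic function of the stopped sequence $(Y_{\pi(1)},\dots,Y_{\pi(\kappa)},X_{\pi(\kappa)})$, where $\kappa$ is the stopping position, and vice versa, so that the KL between outcome laws equals the KL between laws of this sequence. I would then apply the chain rule for KL divergence over the sequential factors. The conditional law of $Y_{\pi(i)}$ given that position $i$ is reached is $\mathrm{Bern}(u_{\pi(i)})$ versus $\mathrm{Bern}(u'_{\pi(i)})$. The conditional law of $X_{\pi(i)}$ given the cascade stopped at $i$ is $\mathrm{Bern}(v_{\pi(i)})$ versus $\mathrm{Bern}(v'_{\pi(i)})$.

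Taking expectations under the unprimed model, position $i$ is reached with probability $\bar u_i$, contributing $\bar u_i\,D(u_{\pi(i)}\parallel u'_{\pi(i)})$. The probe at $i$ occurs with probability $\bar u_i u_{\pi(i)}$, contributing $\bar u_i u_{\pi(i)} D(v_{\pi(i)}\parallel v'_{\pi(i)})$. Summing over $i=1,\dots,K$ gives exactly the claimed formula.

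Alternatively, as a check, the same result follows by splitting the log-ratio of the products in $P(y\mid\cdot)$ into sums of per-factor log-ratios and exchanging the order of summation. Concretely, the term $\log\frac{1-u_{\pi(s)}}{1-u'_{\pi(s)}}$ appears for every outcome whose stopping index exceeds $s$ (including $y=0$), and these outcomes have total mass $\bar u_{s+1}$. The term $\log\frac{u_{\pi(i)}}{u'_{\pi(i)}}$ has total mass $\bar u_i u_{\pi(i)}$. Grouping the two together gives $\bar u_i D(u_{\pi(i)}\parallel u'_{\pi(i)})$, and the $v$-terms group into $\bar u_i u_{\pi(i)} D(v_{\pi(i)}\parallel v'_{\pi(i)})$.

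The main obstacle is bookkeeping: correctly accounting for the all-zero state $y=0$, which contributes only the $(1-u)$ log-ratios, together with ensuring the telescoping mass identity $\bar u_i - \bar u_i u_{\pi(i)} = \bar u_{i+1}$. Degenerate cases such as $u'=0$ would be handled by the usual convention that the divergence is infinite whenever absolute continuity fails.
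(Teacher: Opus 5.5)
Your proposal is correct, and your primary argument takes a different route from the paper. The paper argues by brute force: it writes $D_\pi$ as a sum over the $2K+1$ outcomes in $\mathcal X$, with the full survival products inside the logarithms. It then simply asserts that ``simplifying and regrouping terms for each position $i$'' gives the formula, without showing the regrouping. Your chain-rule argument instead identifies each outcome bijectively with the stopped observation sequence. It then splits the divergence into per-step conditional Bernoulli divergences, each weighted by the probability, under the unprimed model, that the step is executed: $\bar u_i$ for observing $Y_{\pi(i)}$ and $\bar u_i u_{\pi(i)}$ for probing $X_{\pi(i)}$. This makes the weights visibly the reach and probe probabilities and avoids all log-ratio bookkeeping. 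It is also the same mechanism (chain rule over a sequential observation process) that the paper then applies across rounds in Lemma~\ref{lemma:cascaded_divergence_decomposition}. One point you should make explicit is that the sequence has random length. Pad it after stopping with a deterministic dummy symbol; its conditional law is identical under both parameter sets, so it contributes zero to the divergence. Your secondary check is exactly the paper's route, but with the omitted regrouping actually carried out. The mass $\bar u_{s+1}$ attaches to $\log\frac{1-u_{\pi(s)}}{1-u'_{\pi(s)}}$, the mass $\bar u_i u_{\pi(i)}$ attaches to $\log\frac{u_{\pi(i)}}{u'_{\pi(i)}}$, and the identity $\bar u_{i+1}=\bar u_i(1-u_{\pi(i)})$ is precisely what turns the paper's displayed expansion into the stated sum.
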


\begin{proof}
By definition of KL-divergence for transition distributions over state space $\mathcal{X}$:
\begin{align}
D_\pi((u, v) \parallel (u', v')) &= \sum_{s=1}^{K} (1 - v_{\pi(s)}) u_{\pi(s)} \left( \prod_{i=1}^{s-1} (1 - u_{\pi(i)}) \right) \log \left( \frac{(1 - v_{\pi(s)}) u_{\pi(s)} \prod_{i=1}^{s-1} (1 - u_{\pi(i)})}{(1 - v'_{\pi(s)}) u'_{\pi(s)} \prod_{i=1}^{s-1} (1 - u'_{\pi(i)})} \right) \nonumber \\
&\quad + \sum_{s=1}^{K} v_{\pi(s)} u_{\pi(s)} \left( \prod_{i=1}^{s-1} (1 - u_{\pi(i)}) \right) \log \left( \frac{v_{\pi(s)} u_{\pi(s)} \prod_{i=1}^{s-1} (1 - u_{\pi(i)})}{v'_{\pi(s)} u'_{\pi(s)} \prod_{i=1}^{s-1} (1 - u'_{\pi(i)})} \right) \nonumber \\
&\quad + \left( \prod_{i=1}^{K} (1 - u_{\pi(i)}) \right) \log \left( \frac{\prod_{i=1}^{K} (1 - u_{\pi(i)})}{\prod_{i=1}^{K} (1 - u'_{\pi(i)})} \right)
\end{align}

Simplifying and regrouping terms for each position $i \in \{1, \dots, K\}$, we obtain:
\begin{equation}
D_\pi((u, v) \parallel (u', v')) = \sum_{i=1}^{K} \left( \prod_{s=1}^{i-1} (1 - u_{\pi(s)}) \right) \left[ D(u_{\pi(i)} \parallel u'_{\pi(i)}) + u_{\pi(i)} D(v_{\pi(i)} \parallel v'_{\pi(i)}) \right]
\end{equation}
\end{proof}

\begin{lemma}[Divergence Decomposition for Cascaded Multi-Armed Bandits]
\label{lemma:cascaded_divergence_decomposition}
Let $\mathcal{P}$ be a policy operating on two $L$-item cascaded MAB instances parameterized by $(u, v) = \left((u_1, v_1), \dots, (u_L, v_L)\right)$ and $(u', v') = \left((u'_1, v'_1), \dots, (u'_L, v'_L)\right)$, where $K \le L$ items are recommended at each step. Let $\mathcal{P}_{(u,v)}$ and $\mathcal{P}_{(u',v')}$ be the probability measures induced by $\mathcal{P}$ over trajectories of length $T$. Then, the KL-divergence between the two trajectory distributions decomposes as:
\begin{equation}
D\left(\mathcal{P}_{(u,v)} \parallel \mathcal{P}_{(u',v')}\right) = \sum_{a=1}^{L} \mathbb{E}_{(u,v)}\left[N_a(T)\right] \Big[ D(u_a \parallel u'_a) + u_a D(v_a \parallel v'_a) \Big],
\end{equation}
where $N_a(T) = \sum_{t=1}^{T} O_{t,a}$ is the total number of times item $a$ is observed up to time $T$, and $O_{t,a} = \mathbb{I}\{\text{item } a \text{ is observed at round } t\}$.
\end{lemma}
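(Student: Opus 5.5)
The plan is to follow the standard divergence-decomposition argument for stochastic bandits (as in Garivier--Kaufmann/Lattimore--Szepesv\'ari), adapted to the cascade. First I will fix the trajectory representation. The history up to round $T$ is the sequence $H_T=(\pi_1,Z_1,\dots,\pi_T,Z_T)$, where $Z_t\in\mathcal X$ is the terminal cascade state from \eqref{states}. For any (possibly randomized) policy $\mathcal P$, the likelihood factorizes as
\[
\mathcal P_{(u,v)}(H_T)=\prod_{t=1}^T \mathcal P(\pi_t\mid H_{t-1})\,P(Z_t\mid \pi_t,u,v).
\]
The policy factors are identical under both instances, so they cancel in the log-likelihood ratio. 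The chain rule for KL divergence then gives
\[
D\big(\mathcal P_{(u,v)}\,\|\,\mathcal P_{(u',v')}\big)=\sum_{t=1}^T \mathbb E_{(u,v)}\Big[D_{\pi_t}\big((u,v)\,\|\,(u',v')\big)\Big].
\]
This holds because, conditionally on $H_{t-1}$ (and hence on $\pi_t$), $Z_t$ is drawn from the transition law $P(\cdot\mid\pi_t,u,v)$ independently of the past.

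Second, I will apply Lemma~\ref{for_perm} to each summand. For each $t$ this gives
\[
D_{\pi_t}=\sum_{i=1}^K\Big[\prod_{s<i}(1-u_{\pi_t(s)})\Big]\big[D(u_{\pi_t(i)}\|u'_{\pi_t(i)})+u_{\pi_t(i)}D(v_{\pi_t(i)}\|v'_{\pi_t(i)})\big].
\]
The key observation is that the prefix product $\prod_{s<i}(1-u_{\pi_t(s)})$ equals $\mathbb P_{(u,v)}(O_{t,\pi_t(i)}=1\mid H_{t-1})$, the conditional probability that position $i$ is reached. I will then reindex the sum over positions as a sum over arms: each arm $a$ occupies at most one position of $\pi_t$, and arms not in $\pi_t$ have $O_{t,a}=0$. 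This yields
\[
D_{\pi_t}=\sum_{a=1}^L \mathbb E_{(u,v)}[O_{t,a}\mid H_{t-1}]\,\big[D(u_a\|u'_a)+u_aD(v_a\|v'_a)\big].
\]

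Third, I will take the outer expectation using the tower property and sum over $t$. Since $\sum_t O_{t,a}=N_a(T)$, linearity gives the claimed identity.

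The only delicate points are the following:
\begin{itemize}
\item Justifying the chain rule for a policy that may depend on past observations, which is handled by the cancellation of the policy factors.
\item Making sure that Lemma~\ref{for_perm} supplies exactly the per-round conditional divergence.
\item Identifying the survival product with the conditional observation probability, which is the main, though mild, step.
\end{itemize}
Infinite KL values, or degenerate parameters where $u'_a\in\{0,1\}$, can be excluded by assuming that the parameters lie in the open interval, with the identity read as $+\infty=+\infty$ otherwise.
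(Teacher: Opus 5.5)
Your proposal is correct and follows the paper's proof essentially step for step: cancel the policy factors in the likelihood ratio, reduce to $\sum_t \mathbb{E}_{(u,v)}[D_{\pi_t}((u,v)\parallel(u',v'))]$, apply Lemma~\ref{for_perm}, identify each prefix survival product with the conditional probability that the arm in that position is observed, reindex the sum from positions to arms, and finish with the tower property. Conditioning on the full history $H_{t-1}$ rather than only on $\pi_t$, and your remark about infinite KL values, are slightly more careful than the paper but do not change the argument.
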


\begin{proof}
Let $H_{t-1} = (\pi_1, x_1, \dots, \pi_{t-1}, x_{t-1})$ denote the history up to time $t-1$, where $\pi_t$ is the shortlist chosen by policy $\mathcal{P}$ at round $t$, and $x_t \in \mathcal{X}$ is the state of the cascaded system at round $t$ taking values in the state space $\mathcal{X}$ as defined in \eqref{states}.

By the chain rule for relative entropy and the Radon-Nikodym derivative between the induced measures $\mathcal{P}_{(u,v)}$ and $\mathcal{P}_{(u',v')}$:
\begin{equation}
D\left(\mathcal{P}_{(u,v)} \parallel \mathcal{P}_{(u',v')}\right) = \mathbb{E}_{(u,v)} \left[ \log \frac{d\mathcal{P}_{(u,v)}}{d\mathcal{P}_{(u',v')}} \right].
\end{equation}

The likelihood ratio decomposes over the $T$ steps as:
\begin{equation}
\frac{d\mathcal{P}_{(u,v)}}{d\mathcal{P}_{(u',v')}}(\pi_1, x_1, \dots, \pi_T, x_T) = \prod_{t=1}^{T} \frac{\mathbb{P}(\pi_t \mid H_{t-1}) \mathcal{P}_{(u,v)}(x_t \mid \pi_t)}{\mathbb{P}(\pi_t \mid H_{t-1}) \mathcal{P}_{(u',v')}(x_t \mid \pi_t)}.
\end{equation}

Since the decision policy $\mathbb{P}(\pi_t \mid H_{t-1})$ is independent of the underlying instance parameters given the history $H_{t-1}$, the policy terms cancel out:
\begin{equation}
\frac{d\mathcal{P}_{(u,v)}}{d\mathcal{P}_{(u',v')}} = \prod_{t=1}^{T} \frac{\mathbb{P}(x_t \mid \pi_t, u, v)}{\mathbb{P}(x_t \mid \pi_t, u', v')}.
\end{equation}

Taking the logarithm and applying double expectation conditioned on $\pi_t$:
\begin{align}
D\left(\mathcal{P}_{(u,v)} \parallel \mathcal{P}_{(u',v')}\right) &= \sum_{t=1}^{T} \mathbb{E}_{(u,v)} \left[ \log \frac{\mathbb{P}(x_t \mid \pi_t, u, v)}{\mathbb{P}(x_t \mid \pi_t, u', v')} \right] \nonumber \\
&= \sum_{t=1}^{T} \mathbb{E}_{(u,v)} \left[ \mathbb{E}_{(u,v)} \left[ \log \frac{\mathbb{P}(x_t \mid \pi_t, u, v)}{\mathbb{P}(x_t \mid \pi_t, u', v')} \;\middle|\; \pi_t \right] \right] \nonumber \\
&= \sum_{t=1}^{T} \mathbb{E}_{(u,v)} \left[ D_{\pi_t}\left((u,v) \parallel (u',v')\right) \right].
\end{align}

Applying Lemma~\ref{for_perm} for a single shortlist $\pi_t$:
\begin{equation}
D_{\pi_t}\left((u,v) \parallel (u',v')\right) = \sum_{i=1}^{K} \left[ \prod_{s=1}^{i-1} (1 - u_{\pi_t(s)}) \right] \Big[ D(u_{\pi_t(i)} \parallel u'_{\pi_t(i)}) + u_{\pi_t(i)} D(v_{\pi_t(i)} \parallel v'_{\pi_t(i)}) \Big].
\end{equation}

We rewrite the sum over positions $i \in \{1, \dots, K\}$ as a sum over all possible items $a \in \{1, \dots, L\}$ by introducing the indicator function $\mathbb{I}\{\pi_t(i) = a\}$:
\begin{align}
D_{\pi_t}\left((u,v) \parallel (u',v')\right) &= \sum_{i=1}^{K} \sum_{a=1}^{L} \mathbb{I}\{\pi_t(i) = a\} \left[ \prod_{s=1}^{i-1} (1 - u_{\pi_t(s)}) \right] \Big[ D(u_a \parallel u'_a) + u_a D(v_a \parallel v'_a) \Big] \nonumber \\
&= \sum_{a=1}^{L} \Big[ D(u_a \parallel u'_a) + u_a D(v_a \parallel v'_a) \Big] \sum_{i=1}^{K} \mathbb{I}\{\pi_t(i) = a\} \prod_{s=1}^{i-1} (1 - u_{\pi_t(s)}).
\end{align}

Note that $\prod_{s=1}^{i-1} (1 - u_{\pi_t(s)})$ is precisely the probability that position $i$ is reached under instance $(u,v)$ conditioned on choosing shortlist $\pi_t$. Hence:
\begin{equation}
\mathbb{E}_{(u,v)} \left[ \sum_{i=1}^{K} \mathbb{I}\{\pi_t(i) = a\} \prod_{s=1}^{i-1} (1 - u_{\pi_t(s)}) \;\middle|\; \pi_t \right] = \mathbb{E}_{(u,v)} [O_{t,a} \mid \pi_t],
\end{equation}
where $O_{t,a}$ is the indicator that item $a$ is observed at round $t$.

Taking expectation over $\pi_t$ and applying the tower property yields:
\begin{equation}
\mathbb{E}_{(u,v)} \left[ \mathbb{E}_{(u,v)} [O_{t,a} \mid \pi_t] \right] = \mathbb{E}_{(u,v)} [O_{t,a}].
\end{equation}

Substituting this back into the sum over $t$:
\begin{align}
D\left(\mathcal{P}_{(u,v)} \parallel \mathcal{P}_{(u',v')}\right) &= \sum_{t=1}^{T} \sum_{a=1}^{L} \Big[ D(u_a \parallel u'_a) + u_a D(v_a \parallel v'_a) \Big] \mathbb{E}_{(u,v)} [O_{t,a}] \nonumber \\
&= \sum_{a=1}^{L} \Big[ D(u_a \parallel u'_a) + u_a D(v_a \parallel v'_a) \Big] \mathbb{E}_{(u,v)} \left[ \sum_{t=1}^{T} O_{t,a} \right].
\end{align}

By definition, $N_a(T) = \sum_{t=1}^{T} O_{t,a}$. Therefore:
\begin{equation}
D\left(\mathcal{P}_{(u,v)} \parallel \mathcal{P}_{(u',v')}\right) = \sum_{a=1}^{L} \mathbb{E}_{(u,v)}\left[N_a(T)\right] \Big[ D(u_a \parallel u'_a) + u_a D(v_a \parallel v'_a) \Big].
\end{equation}
\end{proof}

\section{Proof of Instance-Dependent Regret Upper Bound}
\label{app:regret_proof}

In this section, we provide the complete proof of Theorem~\ref{thm:instance_dependent_bound}. Throughout, we use the concentration event $G_t$, its failure-probability bound, the sum-by-parts identity, and the regret decomposition from Definition~\ref{def:good_event}, Lemma~\ref{lem:concentration_failure}, Lemma~\ref{lem:sum_by_parts}, and Lemma~\ref{lem:hybrid_bound} in Appendix~\ref{app:aux_lemmas}.

\subsection{Regret Decomposition}

Without loss of generality, assume the candidate arms are indexed in descending order of their true effective scores, such that $s_1 > s_2 > \dots > s_L$.

At any round $t$, we define two key underlying events:
\begin{enumerate}
    \item \textbf{Clean Concentration Event ($G_t$):} As in Definition~\ref{def:good_event}.
    \item \textbf{Optimal Ordering Preservation Event ($\mathcal{E}_t$):} The event that the strict relative ordering among the top-$K$ optimal arms is preserved by their UCB estimates:
    \begin{equation}
        \mathcal{E}_t \triangleq \left\{ U_1^s(t) > U_2^s(t) > \dots > U_K^s(t) \right\}.
    \end{equation}
\end{enumerate}

Note that $\mathcal{E}_t$ is defined purely on the internal ordering of the optimal arm set $\{1, \dots, K\}$ and holds independently of whether sub-optimal arms ($a > K$) are selected into $S_t$. Its complement,
\begin{equation}
    \mathcal{E}_t^c = \bigcup_{1 \le i < K} \left\{ U_i^s(t) \le U_{i+1}^s(t) \right\},
\end{equation}
captures the event where a pairwise order inversion occurs between at least one pair of optimal arms.

\begin{lemma}[Regret Decomposition]
\label{lem:regret_decomp2}
Suppose the single-round regret $R_t$ satisfies $0 \le R_t \le \Delta_{\max} \le 1$. The expected cumulative regret up to time horizon $T$ can be decomposed as:
\begin{equation}
    \mathbb{E}[R(T)] = \sum_{t=1}^T \mathbb{E}[R_t] \le R_{11} + R_{12} + R_2,
\end{equation}
where:
\begin{align}
    R_{11} &\triangleq \sum_{t=1}^T \mathbb{E}\left[ R_t \mathbf{1}\{G_t \cap \mathcal{E}_t\} \right], \label{eq:R11_def} \\
    R_{12} &\triangleq \sum_{t=1}^T \mathbb{E}\left[ R_t \mathbf{1}\{G_t \cap \mathcal{E}_t^c\} \right], \label{eq:R12_def} \\
    R_2 &\triangleq \sum_{t=1}^T \mathbb{E}\left[ R_t \mathbf{1}\{G_t^c\} \right] \le \Delta_{\max} \sum_{t=1}^T \mathbb{P}(G_t^c) \le \frac{2 L \pi^2}{3} \Delta_{\max}. \label{eq:R2_def}
\end{align}
Here, $R_{11}$ captures the suboptimality regret incurred when top-$K$ arm ordering is preserved, $R_{12}$ captures the regret arising from ordering disruptions among optimal arms under concentration success, and $R_2$ bounds the concentration failure probability (via Lemma~\ref{lem:concentration_failure} in Appendix~\ref{app:aux_lemmas}).
\end{lemma}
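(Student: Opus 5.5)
The inequality $\mathbb{E}[R(T)] \le R_{11}+R_{12}+R_2$ is a three-way partition of the sample space, followed by a crude bound on the failure term. First I will write $\mathbb{E}[R(T)]=\sum_{t=1}^T\mathbb{E}[R_t]$ using linearity of expectation. The equality is exact, since $R(T)$ is by definition the sum of the instantaneous regrets $R_t=R(\pi^\star)-R(\pi_t)$.

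For each fixed $t$, the events $G_t\cap\mathcal{E}_t$, $G_t\cap\mathcal{E}_t^c$ and $G_t^c$ are pairwise disjoint and their union is the whole sample space. So pointwise
\[
R_t = R_t\mathbf{1}\{G_t\cap\mathcal{E}_t\}+R_t\mathbf{1}\{G_t\cap\mathcal{E}_t^c\}+R_t\mathbf{1}\{G_t^c\}.
\]
Taking expectations and summing over $t$ gives $\mathbb{E}[R(T)]=R_{11}+R_{12}+R_2$ exactly, which in particular yields the stated $\le$.

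For the bound on $R_2$, I will use the hypothesis $0\le R_t\le\Delta_{\max}$. This gives $R_t\mathbf{1}\{G_t^c\}\le\Delta_{\max}\mathbf{1}\{G_t^c\}$, and hence $\mathbb{E}[R_t\mathbf{1}\{G_t^c\}]\le\Delta_{\max}\mathbb{P}(G_t^c)$. Summing over $t$ and invoking Lemma~\ref{lem:concentration_failure}, which states $\sum_{t=1}^T\mathbb{P}(G_t^c)\le \frac{2L\pi^2}{3}$, yields $R_2\le \frac{2L\pi^2}{3}\Delta_{\max}$.

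The only point that needs care is that all three terms are well defined. The events $G_t$ and $\mathcal{E}_t$ are measurable with respect to the history up to round $t$, since they are functions of the empirical estimates and UCB indices. Also $R_t$ is bounded, so every expectation is finite. I do not expect any real obstacle here.
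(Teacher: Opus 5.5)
Your proof is correct and matches the argument implicit in the paper. The events $G_t\cap\mathcal{E}_t$, $G_t\cap\mathcal{E}_t^c$ and $G_t^c$ partition the sample space, so the decomposition actually holds with equality, and $R_2$ is controlled by $R_t\le\Delta_{\max}$ together with the summed failure bound $\sum_{t}\mathbb{P}(G_t^c)\le 2L\pi^2/3$ from Lemma~\ref{lem:concentration_failure}.

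One minor quibble: your claim that $G_t$ is measurable with respect to the history up to round $t$ is unnecessary, since measurability on the underlying probability space suffices. It is also not obviously true for the paper's definition of $G_t$, which indexes estimates by sample count $s\le t$ and so involves samples not yet observed.
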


\subsection{Bounding $R_{11}$}
In this subsection, Lemma~\ref{lem:r11_bound} serves as the primary result for bounding $R_{11}$, supported by the auxiliary result in Lemma~\ref{lem:optimal_arm_gap}.
\begin{lemma}[Non-Positive Gap for Optimal Arms under $\mathcal{E}_t$]
\label{lem:optimal_arm_gap}
Assume that the true item scores are strictly sorted in descending order according to the optimal policy $\pi^*$, i.e., $s_1 > s_2 > \dots > s_K$. Under the event $\mathcal{E}_t = \{ U_1^s(t) > U_2^s(t) > \dots > U_K^s(t) \}$, if the item selected at position $m \in \{1, \dots, K\}$ is an optimal item (i.e., $\pi_t(m) \le K$), then:
\begin{equation}
\label{eq:optimal_gap_bound}
s_m - s_{\pi_t(m)} \le 0.
\end{equation}
\end{lemma}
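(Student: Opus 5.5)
The plan is to reduce the inequality to a statement about indices. Because the arms are labelled so that $s_1 > s_2 > \dots > s_L$, it suffices to show that under $\mathcal{E}_t$ the optimal item $k \triangleq \pi_t(m) \le K$ satisfies $k \le m$. Then $s_{\pi_t(m)} = s_k \ge s_m$, which is exactly $s_m - s_{\pi_t(m)} \le 0$.

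\textbf{Step 1 (every better optimal arm is selected).} Fix $j < k$. Both $j$ and $k$ lie in $\{1,\dots,K\}$, so $\mathcal{E}_t$ gives the strict inequality $U_j^s(t) > U_k^s(t)$. Algorithm~\ref{alg:ucb_partial_exposure_case1} builds $S_t$ from the $K$ arms with the largest indices $U_i^s(t)$, and $k \in S_t$. Suppose $j \notin S_t$. Then $S_t$ contains an arm, namely $k$, whose index is strictly smaller than that of an excluded arm, contradicting the top-$K$ selection rule. Hence $j \in S_t$. Since the inequality is strict, this conclusion does not depend on how ties are broken.

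\textbf{Step 2 (every better optimal arm is placed earlier).} The shortlist $\pi_t$ lists $S_t$ in non-increasing order of $U_i^s(t)$. Since $U_j^s(t) > U_k^s(t)$ strictly, arm $j$ occupies a position strictly before $m = \pi_t^{-1}(k)$. Applying this to every $j = 1,\dots,k-1$ places $k-1$ distinct arms in positions $1,\dots,m-1$. Therefore $k-1 \le m-1$, that is, $k \le m$.

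\textbf{Step 3 (conclusion).} Since the true scores are strictly decreasing in the index, $k \le m$ gives $s_k \ge s_m$, and hence $s_m - s_{\pi_t(m)} \le 0$. Equality holds exactly when $k = m$. Strict inequality occurs only when some suboptimal arm sits ahead of $k$ and pushes $k$ to a later position. Those cases are the ones charged to the suboptimal arms in the bound on $R_{11}$ via Lemma~\ref{lem:hybrid_bound}, since $W_m(s) \ge 0$.

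The only delicate point is Step 1. The event $\mathcal{E}_t$ controls only the relative order of the optimal arms, so one must check that it also forces every better optimal arm into the selected set. That follows from the top-$K$ rule together with the strictness built into $\mathcal{E}_t$; nothing about $G_t$ is needed.
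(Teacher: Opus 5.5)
Your proof is correct and uses the same counting argument as the paper: the $k-1$ optimal arms with strictly larger UCB indices must sit in positions before $m$, which forces $\pi_t(m)\le m$ and hence $s_{\pi_t(m)}\ge s_m$. Your Step~1 spells out the selection step (that those arms are in $S_t$ at all), which the paper leaves implicit in its claim that the first $m$ positions hold the $m$ largest indices overall.
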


\begin{proof}
By definition of the algorithm's greedy selection rule with respect to upper confidence bounds $U_i^s(t)$, the empirical policy $\pi_t$ places items in decreasing order of their upper bounds:
\begin{equation}
\label{eq:ucb_ordering}
U_{\pi_t(1)}^s(t) \ge U_{\pi_t(2)}^s(t) \ge \dots \ge U_{\pi_t(K)}^s(t).
\end{equation}

Under the event $\mathcal{E}_t$, the optimal items $1, 2, \dots, K$ satisfy the strict chain $U_1^s(t) > U_2^s(t) > \dots > U_K^s(t)$. This implies that for any integer $k \in \{1, \dots, K\}$, there are at least $k$ optimal items whose upper bounds are greater than or equal to $U_k^s(t)$, specifically the items $\{1, 2, \dots, k\}$.

Now consider position $m$. By \eqref{eq:ucb_ordering}, the set of items chosen in the first $m$ positions, $\mathcal{S}_m \triangleq \{\pi_t(1), \dots, \pi_t(m)\}$, consists of the $m$ items with the largest $U^s(t)$ values overall.

Suppose $\pi_t(m) = i^* \le K$. Since $\pi_t(m)$ is selected at position $m$, there are at most $m-1$ items with upper bounds strictly greater than $U_{i^*}^s(t)$. Under $\mathcal{E}_t$, the items $\{1, 2, \dots, i^* - 1\}$ all have upper bounds strictly greater than $U_{i^*}^s(t)$. Therefore, these $i^* - 1$ items must be included among the preceding $m-1$ positions, which implies:
\[
i^* - 1 \le m - 1 \implies i^* \le m.
\]

Since the true optimal scores are sorted monotonically ($s_1 > s_2 > \dots > s_K$), $i^* \le m$ implies:
\[
s_{i^*} \ge s_m \implies s_{\pi_t(m)} \ge s_m.
\]
Rearranging terms yields $s_m - s_{\pi_t(m)} \le 0$, completing the proof.
\end{proof}

\begin{lemma}[Bound on Selection Regret $R_{11}$]
\label{lem:r11_bound}
Under Case 1, the selection regret component $R_{11}$ bounded over the horizon $T$ satisfies:
\begin{equation}
    R_{11} \le (1 - \tau) \sum_{a=K+1}^L \frac{16 \log T}{\Delta_{a,K}^s},
\end{equation}
where $\Delta_{a,K}^s = s_K - s_a$.
\end{lemma}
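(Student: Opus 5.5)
We bound the instantaneous regret on $G_t\cap\mathcal E_t$ using the exact decomposition of Lemma~\ref{lem:hybrid_bound}. In Case~1, $r_i=1-g_i=s_i$, so the reward has the form used in Lemmas~\ref{lem:sum_by_parts}--\ref{lem:hybrid_bound}, and
\[
\mathcal R_t=\sum_{m=1}^K\Big[\prod_{j<m}(1-s_{\pi_t(j)})\Big](s_m-s_{\pi_t(m)})\,W_m(s).
\]
By Lemma~\ref{lem:optimal_arm_gap}, on $\mathcal E_t$ every term with $\pi_t(m)\le K$ is nonpositive. We may therefore drop these terms and keep only the positions occupied by suboptimal arms $a>K$. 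For such a position, the plan is:
\begin{enumerate}
\item Bound $s_m-s_a\le s_1-s_a$. Better, compare against $s_K$ using the UCB selection rule: since $a$ was chosen over some optimal arm, which the DP need not reach, $s_m-s_a$ can be controlled through $\Delta^s_{a,K}$.
\item Bound the weight. Observe that $W_m(s)\le \tau\sum_{i>m}\prod(\cdot)+(1-K\tau)\prod(\cdot)\le 1-(m+1)\tau+\dots\le 1-\tau$, which follows by the same induction as in Lemma~\ref{lem:slot_sub}. This produces the factor $(1-\tau)$.
\item Note that the prefix product $\prod_{j<m}(1-s_{\pi_t(j)})$ equals $\mathbb P(O_t(a)=1\mid\mathcal F_{t-1})$. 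Hence
\[
\mathbb E[\mathcal R_t\mathbf 1\{G_t\cap\mathcal E_t\}]\le(1-\tau)\sum_{a>K}\mathbb E\big[O_t(a)\,\mathbf 1\{G_t,\ a\in\pi_t\}\,(s_m-s_a)\big],
\]
so regret is charged only on rounds in which $a$ is actually observed.
\end{enumerate}

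On $G_t$, $a\in S_t$ requires $U^s_a(t)\ge U^s_k(t)$ for some optimal arm $k$ that was displaced. Then
\[
s_K\le s_k\le U^s_k(t)\le U^s_a(t)\le s_a+2\sqrt{2\log t/N_a(t)},
\]
which gives $N_a(t)\le 8\log T/(\Delta^s_{a,K})^2$ whenever $a$ is selected. Summing the per-observation gap over the observations of $a$ then uses the standard trick of charging each observation with $\min\{\Delta,\,2\sqrt{2\log T/N}\}$, i.e.
\[
\sum_{n}2\sqrt{2\log T/n}\le 4\sqrt{2\log T\cdot 8\log T/\Delta^2}=16\log T/\Delta^s_{a,K}.
\]
Because $N_a$ increments only when $O_t(a)=1$, the sum over $n$ is over the observation counter.

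The main obstacle is the gap used in step~1. The hybrid gap at position $m$ is $s_m-s_a$, not $s_K-s_a$, and it can exceed $\Delta^s_{a,K}$ when $a$ sits early. I would try to fix this by reordering the telescoping. One option is to use the selection-exchange identity \eqref{eq:case1_slot}, swapping $a$ with the missing optimal arm together with the ordering regret. Another is to argue that under $\mathcal E_t$ and $G_t$ the UCB bound gives $s_m\le U^s_m\le U^s_a\le s_a+2\sqrt{2\log t/N_a}$ directly, since $a$ precedes or displaces the arm at rank $m$. The second route is the cleaner one: the confidence-width bound then applies uniformly, and $\Delta^s_{a,K}$ enters only through the cap on $N_a$.
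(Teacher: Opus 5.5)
Your argument is correct, but it finishes the proof by a different route than the paper.

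\textbf{What is shared.} Both proofs use the hybrid decomposition of Lemma~\ref{lem:hybrid_bound}, discard the optimal-arm positions via Lemma~\ref{lem:optimal_arm_gap}, bound $W_m(s)\le 1-m\tau\le 1-\tau$, and charge the prefix product to the observation of $a$.

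\textbf{Where you differ.} The difference is in how the position-dependent gap $s_m-s_a$ is handled, which you correctly identified as the crux.
\begin{itemize}
\item The paper keeps this gap explicit. It sets a separate threshold $T_{a,m}=8\log T/(\Delta^s_{a,m})^2$ for each position and counts observations $C_{a,m}$ subject to $C_{a,m}\le T_{a,m}$ and $\sum_m C_{a,m}\le T_{a,K}$. It then maximizes $\sum_m\Delta^s_{a,m}C_{a,m}$ with the Kveton et al.\ linear-programming/peeling argument. That step relies on the monotone ordering of the $T_{a,m}$ and on a summation inequality giving $2/\Delta^s_{a,K}$, which the paper asserts without proof.
\item You instead bound every charged gap by the current confidence width of $a$, namely $s_m-s_a\le 2\sqrt{2\log t/N_a(t)}$. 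You use $\Delta^s_{a,K}$ only to cap $N_a(t)\le 8\log T/(\Delta^s_{a,K})^2$ on rounds where $a$ is selected. You then sum the widths with $\sum_{n\le N}n^{-1/2}\le 2\sqrt N$.
\end{itemize}
Your route gives the same constant $16$ with no LP step, so it is more elementary and self-contained. The paper's route follows the standard cascading template and keeps the per-position structure, including the slack $-1/\Delta^s_{a,1}$.

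\textbf{Details to tighten.}
\begin{itemize}
\item Justify $s_m\le U^s_a(t)$ by pigeonhole rather than ``precedes or displaces'', which is loose under ties. Among the optimal arms $1,\dots,m$, at most $m-1$ can sit ahead of $a$. So some $k\le m$ has $s_m\le s_k\le U^s_k(t)\le U^s_a(t)$.
\item This step does not actually need $\mathcal E_t$. You need $\mathcal E_t$ only to drop the optimal-arm terms, so keep $\mathbf 1\{\mathcal E_t\}$ in the indicator in your step 3.
\item Account for the observation made when $N_a(t)=0$, whose gap is bounded only by $1$. The sharper bound $\sum_{n\le N}n^{-1/2}\le 2\sqrt N-1$ absorbs it for $T\ge 2$. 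The paper's count $C_{a,m}\le T_{a,m}$ has the same off-by-one.
\item Drop your tentative step 1 (charging $\min\{\Delta,\cdot\}$ and the reference to ``the DP''). Your final route supersedes it.
\end{itemize}
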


\begin{proof}
We analyze $R_{11}$ by first bounding the instantaneous regret $R_t$ under the joint event $G_t \cap \mathcal{E}_t$, and then summing over the time horizon $T$.

Under $G_t \cap \mathcal{E}_t$, Lemma~\ref{lem:optimal_arm_gap} ensures that $s_m - s_{\pi_t(m)} \le 0$ whenever an optimal item is selected ($\pi_t(m) \le K$), allowing us to restrict the summation in \eqref{eq:strategy_regret_exact} exclusively to positions occupied by suboptimal items.

Applying the exact hybrid telescoping decomposition from Lemma~\ref{lem:hybrid_bound} in Appendix~\ref{app:aux_lemmas}, the instantaneous regret $R_t = R_{\pi^*} - R_{\pi_t}$ satisfies
\begin{equation}
\label{eq:rt_hybrid_step}
    R_t = \sum_{m=1}^{K} \left[ \prod_{j=1}^{m-1} (1 - s_{\pi_t(j)}) \right] (s_m - s_{\pi_t(m)}) W_m(s),
\end{equation}
where $W_m(s)$ is the exact position-dependent weight factor defined in \eqref{eq:Wm_definition}:
\[
    W_m(s) =
    \begin{cases}
        \tau \displaystyle\sum_{i=m+1}^{K} \prod_{k=m+1}^{i-1} (1 - s_k) + (1 - K\tau) \prod_{k=m+1}^{K} (1 - s_k), & 1 \le m \le K-1, \\[2.5ex]
        1 - K\tau, & m = K.
    \end{cases}
\]

Since every product term satisfies $\prod (1 - s_k) \le 1$ and $\tau \le 1/K$, we obtain
\begin{equation}
\label{eq:Wm_bound}
    W_m(s) \le \tau(K - m) + (1 - K\tau) = 1 - m\tau \le 1 - \tau, \quad \forall m \in [K].
\end{equation}

Hence, bounding $W_m(s) \le 1 - \tau$ over suboptimal positions yields the instantaneous regret upper bound under $G_t \cap \mathcal{E}_t$:
\begin{equation}
    R_t \mathbf{1}\{G_t \cap \mathcal{E}_t\} \le (1 - \tau) \sum_{a=K+1}^L \sum_{m=1}^K \Delta_{a,m}^s \, \mathbf{1}\{a \in \pi_t,\, \pi_t^{-1}(a) = m\} \prod_{j=1}^{m-1} (1 - s_{\pi_t(j)}),
    \label{eq:instant_regret_bound}
\end{equation}
where $\Delta_{a,m}^s \triangleq s_m - s_a > 0$ defines the position-dependent gap between optimal item $m$ and suboptimal item $a$.

For any item $a \in \pi_t$ positioned at $m = \pi_t^{-1}(a)$, $O_{a,t} = \prod_{j=1}^{m-1} (1 - s_{\pi_t(j)})$ represents the probability that item $a$ is observed at round $t$. Conditioned on history $\mathcal{H}_{t-1}$,
\[
    \mathbb{E} \!\left[ \mathbf{1}\{\text{item } a \text{ is observed at position } m \text{ at round } t\} \middle| \mathcal{H}_{t-1} \right] = \mathbf{1}\{a \in \pi_t,\, \pi_t^{-1}(a) = m\} O_{a,t}.
\]
Define $\mathcal{O}_{a,m,t}$ as the event where item $a$ is placed at position $m$, observed at round $t$, and the concentration event $G_t$ holds. Summing \eqref{eq:instant_regret_bound} over $t = 1, \dots, T$ and taking expectations yields:
\begin{equation}
    R_{11} = \sum_{t=1}^{T} \mathbb{E} \!\left[ R_t \mathbf{1}\{G_t \cap \mathcal{E}_t\} \right] \le (1 - \tau) \sum_{a=K+1}^{L} \sum_{m=1}^K \Delta_{a,m}^s \sum_{t=1}^T \mathbf{1}\{\mathcal{O}_{a,m,t}\}.
\end{equation}

When event $\mathcal{O}_{a,m,t}$ occurs under $G_t$, suboptimal item $a$ is selected at position $m$ over optimal item $m$, implying $U_a^s(t) \ge U_m^s(t)$. Applying standard UCB concentration bounds gives:
\[
    s_a + 2\sqrt{\frac{2 \log T}{N_a(t-1)}} \ge U_a^s(t) \ge U_m^s(t) \ge s_m \implies N_a(t-1) \le \frac{8 \log T}{(\Delta_{a,m}^s)^2} \triangleq T_{a,m}.
\]

Let $C_{a,m} = \sum_{t=1}^T \mathbf{1}\{N_a(t-1) \le T_{a,m},\, \mathcal{O}_{a,m,t}\}$ count the observations of item $a$ at position $m$. Since the cumulative observation counter $N_a$ increments by $1$ whenever item $a$ is observed at any position, the terms $C_{a,m}$ satisfy two key constraints:
\[
    C_{a,m} \le T_{a,m} \quad \forall m \in \{1, \dots, K\}, \qquad \text{and} \qquad \sum_{m=1}^K C_{a,m} \le T_{a,K}.
\]
Because the optimal scores are sorted ($s_1 > s_2 > \dots > s_K$), the gaps are decreasing ($\Delta_{a,1}^s \ge \Delta_{a,2}^s \ge \dots \ge \Delta_{a,K}^s$), which implies $T_{a,1} \le T_{a,2} \le \dots \le T_{a,K}$. By standard linear programming duality for cascade model counters \cite{kveton2015cascading}, the inner sum $\sum_{m=1}^K \Delta_{a,m}^s C_{a,m}$ is maximized by setting $C_{a,1}^* = T_{a,1}$ and $C_{a,m}^* = T_{a,m} - T_{a,m-1}$ for $m \ge 2$:
\begin{equation}
    \sum_{m=1}^K \Delta_{a,m}^s C_{a,m} \le \Delta_{a,1}^s T_{a,1} + \sum_{m=2}^K \Delta_{a,m}^s (T_{a,m} - T_{a,m-1}) = 8 \log T \left[ \frac{1}{\Delta_{a,1}^s} + \sum_{m=2}^K \Delta_{a,m}^s \left( \frac{1}{(\Delta_{a,m}^s)^2} - \frac{1}{(\Delta_{a,m-1}^s)^2} \right) \right].
\end{equation}

The algebraic summation inequality bounds the term in brackets by $\frac{2}{\Delta_{a,K}^s}$. Substituting this back into the outer summation yields the final regret bound:
\begin{equation}
    R_{11} \le (1 - \tau) \sum_{a=K+1}^L \frac{16 \log T}{\Delta_{a,K}^s},
\end{equation}
where $\Delta_{a,K}^s = s_K - s_a$, completing the proof.
\end{proof}

\subsection{Bounding the Regret Component $R_{12}$}

In this section, we provide the complete formal proof showing that $R_{12} = \sum_{t=1}^T \mathbb{E}\left[ R_t \mathbf{1}\{G_t \cap \mathcal{E}_t^c\} \right] = O(1)$. The primary result establishing this $O(1)$ bound on $R_{12}$ is proved in Lemma~\ref{lem:disruption_bound}, supported by the auxiliary technical results in Lemmas~\ref{lem:obs_concentration}--\ref{lem:threshold_validity}. We recommend reviewing Definition~\ref{def:ht_event} prior to stepping through the proofs.
\begin{defn}[Suboptimal Selection Threshold and Good Event $h_t$]
\label{def:ht_event}
For any suboptimal arm $a \notin \pi^*$, define the selection threshold
\begin{equation}
    l_a(t) = \frac{16 \log t}{p \, (\Delta_{a,K}^s)^2}, \qquad \text{where } p = \prod_{i=1}^{K-1} (1 - s_i).
\end{equation}
We define the bounded-suboptimal-selections event $h_t$ and the observation concentration event $F_{t,a}$ at step $t$ as:
\begin{equation}
    h_t = \bigcap_{a \notin \pi^*} \left\{ M_a(t) \le 2 l_a(t) \right\}, \qquad F_{t,a} = \left\{ N_a(t) > \frac{p}{2} M_a(t) \right\},
\end{equation}
where $M_a(t)$ denotes the number of times arm $a$ is included in the execution shortlist $\pi_t$ up to round $t$, and $N_a(t)$ denotes the cumulative number of times arm $a$ is actually observed.
\end{defn}

\begin{lemma}[Observation Concentration]
\label{lem:obs_concentration}
Fix an arm $e$ and suppose $e$ has been selected exactly $M_e(t) = m$ times up to round $t$. Then:
\begin{equation}
    \mathbb{P}\left( N_e(t) < \frac{p}{2} m \;\middle|\; M_e(t) = m \right) \le \exp\left( -\frac{p}{8} m \right).
\end{equation}
\end{lemma}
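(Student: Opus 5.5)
The plan is to view $N_e(t)$, on the event $\{M_e(t)=m\}$, as a sum of $m$ reach indicators, one for each round in which $e$ is placed in the shortlist. We then lower-bound every reach probability uniformly by $p$ and apply a multiplicative Chernoff bound with relative deviation $1/2$. That bound gives exactly $\exp(-\tfrac12\cdot(\tfrac12)^2 pm)=\exp(-pm/8)$. Let $\tau_1<\dots<\tau_m\le t$ be the rounds in which $e\in\pi_u$, and write $N_e(t)=\sum_{j=1}^m O_{\tau_j}(e)$.

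\textbf{Step 1 (uniform reach probability).} Fix a round $u$ with $e\in\pi_u$ at position $k=\pi_u^{-1}(e)\le K$. Given $\pi_u$, the indicator $O_u(e)$ is Bernoulli with parameter $q_u=\prod_{j<k}(1-s_{\pi_u(j)})$. This holds because the signals $Y_{\pi_u(j)}(u)$ of round $u$ are independent across arms and of the past. At most $K-1$ distinct arms precede $e$. The product of $(1-s_i)$ over any at most $K-1$ distinct arms is minimized by taking the $K-1$ arms with the largest scores, namely $s_1,\dots,s_{K-1}$. Hence $q_u\ge\prod_{i=1}^{K-1}(1-s_i)=p$ deterministically.

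\textbf{Step 2 (reduction to independent trials).} We condition on the sequence of shortlists $(\pi_1,\dots,\pi_t)$, and hence on $\{M_e(t)=m\}$ and on $\tau_1,\dots,\tau_m$. We then argue that the reach indicators at the rounds $\tau_j$ are dominated from below by independent $\mathrm{Bernoulli}(p)$ variables. To make this precise, we use a coupling. In each round $u$, draw an independent uniform $V_u$ and set $O_u(e)=\mathbf 1\{V_u<q_u\}$, which has the correct law given $\pi_u$. Then put $Z_u=\mathbf 1\{V_u<p\}\le O_u(e)$ by Step 1. Consequently $N_e(t)\ge\sum_{j=1}^m Z_{\tau_j}$, and it suffices to bound $\mathbb P(\sum_{j\le m}Z_{\tau_j}<pm/2\mid M_e(t)=m)$.

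\textbf{Step 3 (Chernoff).} Treat $Z_{\tau_1},\dots,Z_{\tau_m}$ as i.i.d.\ $\mathrm{Bernoulli}(p)$, with mean sum $pm$. The lower-tail Chernoff bound $\mathbb P(S<(1-\delta)\mu)\le e^{-\delta^2\mu/2}$ with $\delta=1/2$ then yields the claim.

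\textbf{Main obstacle.} The delicate point is Step 2's independence claim under the conditioning. The shortlists chosen after round $\tau_j$, and therefore whether $e$ is included again so that $M_e(t)=m$, depend on past observations, including $V_{\tau_j}$. So conditioning on $M_e(t)=m$ could in principle tilt the law of the $Z_{\tau_j}$.

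I would first try a martingale route. Let $\mathcal F_u$ be the history through round $u$. Since $\pi_u$ is $\mathcal F_{u-1}$-measurable and $V_u$ is independent of $\mathcal F_{u-1}$, the process $\exp\!\big(-\lambda\sum_{j}(Z_{\tau_j}-p)-\tfrac{\lambda^2}{2}p(1-p)\cdot j\big)$ (suitably tuned) is a supermartingale along the inclusion times. Optional stopping at the $m$-th inclusion bounds the joint probability of $\{\text{fewer than }pm/2\text{ reaches in the first }m\text{ inclusions}\}$ by $e^{-pm/8}$.

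To pass to the conditional form stated in the lemma, I would switch to a stack (table) construction. There, the reach variable for the $j$-th inclusion of $e$ is drawn from a dedicated i.i.d.\ uniform $V^{(e)}_j$ that is independent of all other randomness. I would then show that the event $\{M_e(t)=m\}$, together with the positions occupied by $e$, can be written in terms of the remaining randomness plus the first $m$ stack entries only through $O$ itself. If that decoupling fails, I would fall back on conditioning on the full shortlist sequence, as above, where the Bernoulli lower bound $q_u\ge p$ applies round-by-round. This is the version the subsequent union bound over $m$ in Lemma~\ref{lem:disruption_bound} actually needs.
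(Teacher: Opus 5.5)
Your Steps 1--3 are, in substance, the paper's proof. The paper also bounds each inclusion's reach probability by $p$, asserts that conditional on $M_e(t)=m$ the count $N_e(t)$ dominates a sum of $m$ i.i.d.\ $\mathrm{Bernoulli}(p)$ variables, and applies Chernoff with $\delta=1/2$. The gap is exactly the one you flag, and none of your repairs closes it. The event $\{M_e(t)=m\}$ and the shortlist sequence $(\pi_1,\dots,\pi_t)$ are functions of the reach outcomes themselves: reaching $e$ changes $N_e$, hence its UCB index, hence whether and where $e$ is included later. So under either conditioning your $Z_{\tau_j}$, or the stack entries $V^{(e)}_j$, are not i.i.d.\ $\mathrm{Bernoulli}(p)$. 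The stack cannot ``decouple'' this, because dependence through $O$ is precisely what tilts the entries. In fact the per-inclusion bound $q_u\ge p$, which is all you and the paper use, cannot give the conditional inequality. Suppose $e$ is placed in round $1$ with reach probability $q\in[p,1)$, re-included in round $2$ iff it was reached, and never afterwards. Then for $t\ge 2$ we have $\{M_e(t)=1\}=\{e\text{ not reached in round }1\}$, so $\mathbb P(N_e(t)<p/2\mid M_e(t)=1)=1>e^{-p/8}$. The UCB rule has the same kind of feedback, since an observation shrinks $e$'s exploration bonus. So the paper's one-line conditional domination is itself unjustified. You have reproduced its route and found its weak point, but you have not proved the statement as written.

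What does hold, and what your martingale route already yields, is the joint bound; the stack construction gives it more directly. Reserve i.i.d.\ uniforms $V_1,V_2,\dots$ for the successive inclusions of $e$. Set $O_{\tau_j}(e)=\mathbf 1\{V_j<q_{\tau_j}\}$ and draw the preceding arms' signals conditionally on it. This reproduces the law of the process, because on $\{u=\tau_j\}$ the history $\mathcal F_{u-1}$ depends on $V_1,\dots,V_{j-1}$ and the other randomness, but not on $V_j$. Then $Z_j=\mathbf 1\{V_j<p\}\le O_{\tau_j}(e)$ are genuinely i.i.d., so $\mathbb P\big(M_e(t)\ge m,\ \sum_{j\le m}O_{\tau_j}(e)<pm/2\big)\le e^{-pm/8}$.

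This suffices downstream. In Lemma~\ref{lem:bounded_selections_prob}, replace the law-of-total-probability step by a union bound over $m\ge\lceil 2l_e(t)\rceil$; this costs only a factor $(1-e^{-p/8})^{-1}$. The conditional use in Lemma~\ref{lem:disruption_bound} can be repaired the same way, at the deterministic level $t-c(L-K)\log t$.

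If you keep the supermartingale, use $\exp\big(-\lambda S_j+jp(1-e^{-\lambda})\big)$, which follows from $q_u\ge p$. A sub-Gaussian proxy $p(1-p)$ fails for the lower tail when $p$ is near $1$. With $\lambda=\log 2$ this gives $e^{-(1-\log 2)pm/2}\le e^{-pm/8}$.
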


\begin{proof}
Whenever $e \in \pi_s$, arm $e$ is observed if every preceding arm in $\pi_s$ yields a zero-reward outcome. Thus,
\begin{equation}
    \mathbb{P}(e \text{ observed in round } s \mid e \in \pi_s) = \prod_{j < e} (1 - s_j) \ge \prod_{i=1}^{K-1} (1 - s_i) = p.
\end{equation}
Conditional on $M_e(t) = m$, there exist i.i.d.\ $\text{Bernoulli}(p)$ variables $\{Z_r\}_{r=1}^m$ such that $N_e(t) \mid \{M_e(t)=m\} \succeq \sum_{r=1}^m Z_r$. Applying the multiplicative Chernoff lower-tail bound with $\delta = 1/2$:
\begin{equation}
    \mathbb{P}\left( \sum_{r=1}^m Z_r < \frac{p}{2} m \right) \le \exp\left( -\frac{p}{8} m \right). \qedhere
\end{equation}
\end{proof}

\begin{lemma}[UCB Contradiction for Suboptimal Selections]
\label{lem:ucb_contradiction}
Under the confidence event $G_t$, if a suboptimal arm $a \notin \pi^*$ is selected in round $t$, $F_{t,a}$ holds, and $M_a(t-1) \ge 2 l_a(t)$, then arm $a$ cannot be selected in place of an optimal arm; that is:
\begin{equation}
    \mathbf{1}\left\{ a \in \pi_t, \, F_{t,a}, \, M_a(t-1) \ge 2 l_a(t) \right\} = 0.
\end{equation}
\end{lemma}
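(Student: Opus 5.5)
The proof is by contradiction. Suppose $G_t$ holds, $F_{t,a}$ holds, $M_a(t-1)\ge 2l_a(t)$, and yet $a\in\pi_t$. We will show that the UCB index of $a$ is then too small to displace every optimal arm, so the selection rule of Algorithm~\ref{alg:ucb_partial_exposure_case1} could not have put $a$ in $S_t$.

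\textbf{Step 1: $a$ has been observed many times.} Because the counts only increase, $M_a(t)\ge M_a(t-1)\ge 2l_a(t)$. On $F_{t,a}$ we then get
\[
N_a(t) > \tfrac{p}{2}M_a(t) \ge p\,l_a(t) = \frac{16\log t}{(\Delta^s_{a,K})^2}.
\]
The bound is stated at round $t$, but the index is computed from $N_a(t-1)$. The two differ by at most one observation, which shifts the constant only slightly. I would absorb this either by reading $F_{t,a}$ with the count available when the index is formed, or by noting that the factor $16$ leaves slack over the $8$ needed below. This bookkeeping between $N_a(t)$ and $N_a(t-1)$, and between $\log t$ and the counts, is the only delicate point, and I expect it to be the main obstacle. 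If the slack is insufficient, I would tighten the threshold or the definition of $F_{t,a}$.

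\textbf{Step 2: the index of $a$ is below $s_K$.} On $G_t$ the empirical mean satisfies $\hat s_a\le s_a+\sqrt{2\log t/N_a}$. Hence
\[
U^s_a(t)\le s_a+2\sqrt{2\log t/N_a} < s_a+2\sqrt{2\log t\,(\Delta^s_{a,K})^2/(16\log t)} = s_a+\tfrac{1}{\sqrt2}\Delta^s_{a,K} < s_K.
\]
For the Shared-Channel regime the score is $s_i=\mathbb P(Y_i=1)=1-g_i$. Its estimate is $1-\hat g_i$, so the $g$-part of $G_t$ gives the same deviation bound.

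\textbf{Step 3: the optimal arms have larger indices.} Again on $G_t$, every optimal arm $k\le K$ has $U^s_k(t)\ge \min\{1,s_k\}=s_k\ge s_K$. So all $K$ optimal arms have index strictly greater than $U^s_a(t)$. Since $S_t$ consists of the $K$ arms with the largest indices, $a\notin S_t=\pi_t$. This contradicts the assumption, so the indicator is $0$.
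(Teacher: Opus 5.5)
Your proof is correct and follows the paper's argument essentially step for step. You use $F_{t,a}$ and $M_a(t)\ge M_a(t-1)\ge 2l_a(t)$ to get $N_a > 16\log t/(\Delta^s_{a,K})^2$, which gives $U^s_a(t)\le s_a+\Delta^s_{a,K}/\sqrt{2}<s_K\le U^s_j(t)$ for the optimal arms on $G_t$ and contradicts selection. The paper glosses over the $N_a(t)$ versus $N_a(t-1)$ bookkeeping you flag, and your slack argument (the factor $16$ against the needed $8$, valid for $t\ge 2$, with $G_1$ vacuous) does close it.
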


\begin{proof}
If $a \in \pi_t$, the UCB selection rule implies $U_a^s(t) \ge U_j^s(t)$ for some optimal arm $j$. Under $G_t$, $U_j^s(t) \ge s_j^* \ge s_K^*$ and $U_a^s(t) \le s_a + 2\sqrt{\frac{2 \log t}{N_a(t)}}$. Under $F_{t,a}$ and $M_a(t) \ge M_a(t-1) \ge 2 l_a(t)$:
\begin{equation}
    N_a(t) \ge \frac{p}{2} M_a(t) \ge p l_a(t) = \frac{16 \log t}{(\Delta_{a,K}^s)^2}.
\end{equation}
Thus, $U_a^s(t) \le s_a + \frac{\Delta_{a,K}^s}{\sqrt{2}} < s_a + \Delta_{a,K}^s = s_K^* \le U_j^s(t)$, contradicting $U_a^s(t) \ge U_j^s(t)$.
\end{proof}

\begin{lemma}[Bounding the Probability of Exceeding Selection Thresholds]
\label{lem:bounded_selections_prob}
For every suboptimal arm $e \notin \pi^*$, the cumulative probability of exceeding $2 l_e(t)$ selections up to time horizon $T$ satisfies:
\begin{equation}
\sum_{t=1}^{T} \mathbb{P}\big(M_e(t) > 2 l_e(t)\big) \le \frac{\pi^2}{6}.
\end{equation}
Consequently, summing over all $L - K$ suboptimal arms yields:
\begin{equation}
\sum_{e \notin \pi^*} \sum_{t=1}^T \mathbb{P}\big(M_e(t) > 2l_e(t)\big) \le (L-K)\frac{\pi^2}{6}.
\end{equation}
\end{lemma}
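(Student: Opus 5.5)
The plan is to control the growth of $M_e(t)$ by combining the observation concentration of Lemma~\ref{lem:obs_concentration} with the UCB contradiction of Lemma~\ref{lem:ucb_contradiction}. The key observation is that once $M_e$ reaches $2l_e(t)$, arm $e$ can only be included again in round $s\le t$ if either the confidence event fails ($G_s^c$) or the observation event fails ($F_{s,e}^c$). Here I use that $l_e(\cdot)$ is nondecreasing in $s$, so $l_e(s)\le l_e(t)$.

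\textbf{Step 1: reduce to a bad round.} For $M_e(t) > 2l_e(t)$ to hold, there must be a round $s\le t$ with $e\in\pi_s$ and $M_e(s-1)\ge \lceil 2l_e(t)\rceil - 1$. I would take $s$ to be the round at which the count first crosses the threshold, adjusting by one unit if needed so that the hypothesis $M_e(s-1)\ge 2l_e(s)$ of Lemma~\ref{lem:ucb_contradiction} holds. By that lemma, at this round either $G_s^c$ or $F_{s,e}^c$ occurs. This gives
\[
\{M_e(t)>2l_e(t)\}\subseteq \bigcup_{s\le t}\Big(G_s^c\cup\big(F_{s,e}^c\cap\{M_e(s)\ge 2l_e(t)\}\big)\Big).
\]

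\textbf{Step 2: bound each failure probability.} Conditioning on $M_e(s)=m\ge 2l_e(t)$, Lemma~\ref{lem:obs_concentration} gives
\[
\mathbb P(F_{s,e}^c)\le \exp(-pm/8)\le \exp\!\big(-p\,l_e(t)/4\big)=t^{-4/(\Delta^s_{e,K})^2}\le t^{-4},
\]
since $\Delta\le1$. A union bound over $s\le t$ and over the at most $t$ values of $m$ then gives $O(t^{-2})$. Lemma~\ref{lem:concentration_failure} gives $\mathbb P(G_s^c)\le 4Ls^{-3}$.

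\textbf{Step 3: sum over $t$.} Summing over $t$ yields a convergent series, and I would aim to organize the constants so the total is at most $\sum_t t^{-2}=\pi^2/6$. The second claim then follows by summing over the $L-K$ suboptimal arms.

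\textbf{Main obstacle.} The hard part is the $G_s^c$ contribution. Summed naively over $s\le t$ and then over $t$, it gives $\sum_t\sum_{s\le t}s^{-3}$, which is $O(\log T)$ rather than $O(1)$. To avoid this, I would exploit the fact that $M_e$ can exceed $2l_e(t)$ by more than one unit only if there are \emph{repeated} bad rounds, and count only the single crossing round. Alternatively, I would apply Lemma~\ref{lem:ucb_contradiction} directly at round $t$ on $G_t$ (noting that the event is about $M_e(t)$ itself) and absorb $\mathbb P(G_t^c)$ into a separate $O(1)$ term. If the clean constant $\pi^2/6$ proves unreachable, I would settle for a bound of $O(1)$ with a $t^{-2}$-summable tail, which is all that the subsequent bound on $R_{12}$ needs.
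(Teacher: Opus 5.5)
Your $F_{s,e}^c$ analysis follows the paper's route. You union over the round $s\le t$ at which $e$ is included with $M_e(s-1)$ already at the threshold, use Lemma~\ref{lem:ucb_contradiction} to force a failure event, and use Lemma~\ref{lem:obs_concentration} to get $\exp(-p\,l_e(t)/4)=t^{-4/(\Delta^s_{e,K})^2}\le t^{-4}$ per term. The paper uses total probability over $m$ instead of a union, obtaining $t^{-3}$; your $t^{-2}$ also sums to $\pi^2/6$. Your one-unit caution at the crossing round is justified, since the paper's claim $M_e(s-1)\ge\lceil 2l_e(t)\rceil$ is off by one. It is harmless, though, because Lemma~\ref{lem:ucb_contradiction} has slack: its proof only needs $M_e\ge l_e$.

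\textbf{The gap.} The $G_s^c$ term is identified but never bounded, and it is worse than you estimate. For fixed $t$, $\sum_{s\le t}\mathbb{P}(G_s^c)$ does not decay in $t$, so the naive total is linear in $T$, not $O(\log T)$. Restricting to $s\ge 2l_e(t)$ (using $M_e(s-1)\le s-1$) still leaves a per-$t$ term of order $L/l_e(t)^2\asymp 1/\log^2 t$, which is not summable. Neither proposed fix closes this:
\begin{itemize}
\item Counting only the single crossing round does not help. For each fixed $t$ you still have to union over where that round falls in $[2l_e(t),t]$, which is the same divergent quantity.
\item Applying Lemma~\ref{lem:ucb_contradiction} on $G_t$ only excludes a selection at round $t$ itself. $M_e(t)$ was accumulated at earlier rounds $s<t$, where the algorithm used the narrower radius $\sqrt{2\log s/N}$. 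So $G_t$, whose radius is $\sqrt{2\log t/u}$, does not certify that the optimal arms' indices were optimistic at those rounds.
\end{itemize}
The fallback ``$O(1)$ with a worse constant'' is not available either, because the obstruction is summability, not the constant.

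\textbf{The paper's proof.} The paper does not resolve this either. It asserts that selecting $e$ with $M_e(s-1)\ge 2l_e(t)$ ``implies'' $F_{s,e}^c$, which holds only on $G_s$. So the $G_s^c$ contribution is silently dropped, and the stated $\pi^2/6$ accounts only for the $F^c$ part.

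\textbf{What a repair needs.} Both arguments lack a different bookkeeping:
\begin{itemize}
\item Write $\sum_t\mathbb{P}(M_e(t)>2l_e(t))=\mathbb{E}\,\#\{t: M_e(t)>2l_e(t)\}$.
\item Charge each bad round $s$ (a selection of $e$ that requires $G_s^c$ or $F_{s,e}^c$) only for the horizons $t$ it spoils, instead of union-bounding over $s$ separately for every $t$.
\item Because $l_e$ grows logarithmically, one excess selection at round $s$ keeps $M_e(t)>2l_e(t)$ only until $2l_e(t)$ catches up. That is, for $t\le s\,e^{O(p(\Delta^s_{e,K})^2)}$, which is $O(s)$ rounds.
\item This is weighed against probability $O(Ls^{-3})$, and the resulting series is summable.
\end{itemize}
You would still have to control several stacked excess selections. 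You should also expect an instance-dependent constant rather than $\pi^2/6$.
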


\begin{proof}
Fix a suboptimal arm $e \notin \pi^*$ and a time step $t \in [T]$. The counter $M_e(t) = \sum_{s=1}^t \mathbf{1}\{e \in \pi_s\}$ can strictly exceed $2l_e(t)$ only if there exists some round $s \le t$ where arm $e$ is selected into $\pi_s$ while its past selection count $M_e(s-1)$ was already at least $\lceil 2l_e(t) \rceil$. We express this via event inclusion and apply a union bound over $s = 1, \dots, t$:
\begin{equation}
\mathbb{P}(M_e(t) > 2l_e(t)) \le \sum_{s=1}^{t} \mathbb{P}\Big( e \in \pi_s \cap M_e(s-1) \ge 2l_e(t) \Big).
\end{equation}

By Lemma~\ref{lem:ucb_contradiction}, under the good concentration event $G_s$, arm $e$ cannot be selected into $\pi_s$ if $F_{s,e}$ holds and $M_e(s-1) \ge 2l_e(t) \ge 2l_e(s)$. Hence, selecting $e$ when $M_e(s-1) \ge 2l_e(t)$ implies that the complement observation event $F_{s,e}^c$ must occur. Applying the Law of Total Probability over all possible integer values $m \ge \lceil 2l_e(t) \rceil$ of the past selection count $M_e(s-1)$ gives:
\begin{align}
\mathbb{P}\Big( e \in \pi_s \cap M_e(s-1) \ge 2l_e(t) \Big)
&\le \sum_{m=\lceil 2l_e(t) \rceil}^{s-1} \mathbb{P}\Big( F_{s,e}^c \;\Big|\; M_e(s-1) = m \Big) \cdot \mathbb{P}\Big( M_e(s-1) = m \Big).
\end{align}

Conditional on $M_e(s-1) = m$, Lemma~\ref{lem:obs_concentration} and the multiplicative Chernoff lower-tail bound yield $\mathbb{P}( F_{s,e}^c \mid M_e(s-1) = m ) \le \exp\left( -p m / 8 \right)$. Bounding each term in the sum by $m = 2l_e(t)$ yields:
\begin{align}
\mathbb{P}\Big( e \in \pi_s \cap M_e(s-1) \ge 2l_e(t) \Big)
&\le \exp\left( -\frac{p \cdot 2l_e(t)}{8} \right) \sum_{m=\lceil 2l_e(t) \rceil}^{s-1} \mathbb{P}\Big( M_e(s-1) = m \Big) \notag \\
&\le \exp\left( -\frac{p \, l_e(t)}{4} \right).
\end{align}

Summing this uniform upper bound over $s = 1, \dots, t$ and substituting $l_e(t) = \frac{16 \log t}{p (\Delta_e^s)^2}$, we obtain:
\begin{align}
\mathbb{P}(M_e(t) > 2l_e(t)) &\le t \cdot \exp\left( -\frac{p}{4} \cdot \frac{16 \log t}{p (\Delta_{e,K}^s)^2} \right) = t \cdot t^{-\frac{4}{(\Delta_{e,K}^s)^2}}.
\end{align}
Since $\Delta_{e,K}^s \le 1$, we have $t^{-4 / (\Delta_e^s)^2} \le t^{-4}$, which simplifies the probability to $\mathbb{P}(M_e(t) > 2l_e(t)) \le t^{-3}$.

Finally, summing over the time horizon $T$ and using the standard Basel series upper bound $\sum_{t=1}^\infty t^{-2} = \pi^2 / 6$:
\begin{equation}
\sum_{t=1}^{T} \mathbb{P}\big(M_e(t) > 2l_e(t)\big) \le \sum_{t=1}^{T} t^{-3} \le \sum_{t=1}^{\infty} t^{-2} = \frac{\pi^2}{6}.
\end{equation}
Summing across all $L - K$ suboptimal arms $e \notin \pi^*$ yields $\sum_{e \notin \pi^*} \sum_{t=1}^T \mathbb{P}\big(M_e(t) > 2l_e(t)\big) \le (L-K)\frac{\pi^2}{6}$, which completes the proof.
\end{proof}

\begin{lemma}[Deterministic Lower Bound on Optimal Subset Plays Under $h_t$]
\label{lem:t_prime_bound}
Let $T'(t) = \sum_{s=1}^t \mathbf{1}\{\text{set}(\pi_s) = \pi^*\}$ be the number of rounds up to $t$ where the set of chosen arms equals the optimal subset $\pi^*$, regardless of the order in which they are arranged in $\pi_s$. Then $T'(t)$ serves as a deterministic lower bound on the number of times any optimal arm $i \in \pi^*$ is included in $\pi_s$. Under event $h_t$:
\begin{equation}
    T'(t) \ge t - c (L - K) \log t, \qquad \text{where } c = \frac{1}{L - K} \sum_{a \notin \pi^*} \frac{32}{p \, (\Delta_{a,K}^s)^2}.
\end{equation}
\end{lemma}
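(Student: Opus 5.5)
We argue by counting the rounds in which the played set differs from the optimal set. The key observation is that any round $s$ with $\text{set}(\pi_s)\neq\pi^\star$ must include at least one suboptimal arm $a\notin\pi^\star$. Both $\pi_s$ and $\pi^\star$ have exactly $K$ elements, so if the set of $\pi_s$ were contained in $\pi^\star$ it would equal $\pi^\star$. Hence
\begin{equation*}
\mathbf{1}\{\text{set}(\pi_s)\neq\pi^\star\}\le\sum_{a\notin\pi^\star}\mathbf{1}\{a\in\pi_s\}.
\end{equation*}
Summing over $s=1,\dots,t$ then gives
\begin{equation*}
t-T'(t)\le\sum_{a\notin\pi^\star}M_a(t),
\end{equation*}
where $M_a(t)$ is the selection counter of Definition~\ref{def:ht_event}.

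For the first claim, that $T'(t)$ lower-bounds the inclusion count of each optimal arm, we use the fact that every round counted in $T'(t)$ has $\text{set}(\pi_s)=\pi^\star$. In such a round every $i\in\pi^\star$ lies in $\pi_s$. Therefore $\sum_{s\le t}\mathbf{1}\{i\in\pi_s\}\ge T'(t)$ holds pathwise, for any ordering of the arms inside $\pi_s$. This step needs no probabilistic input.

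Next we bound the right-hand side on $h_t$. By definition, $h_t$ gives $M_a(t)\le 2l_a(t)=\frac{32\log t}{p(\Delta_{a,K}^s)^2}$ for every $a\notin\pi^\star$. Summing over the $L-K$ suboptimal arms yields
\begin{equation*}
t-T'(t)\le\log t\sum_{a\notin\pi^\star}\frac{32}{p(\Delta_{a,K}^s)^2}=c(L-K)\log t,
\end{equation*}
with $c$ exactly as in the statement. Rearranging gives $T'(t)\ge t-c(L-K)\log t$.

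There is no real analytic obstacle; the argument is deterministic once we are on $h_t$. The one point that needs care is the set-cardinality step. It relies on each shortlist consisting of exactly $K$ distinct arms, which holds because Algorithm~\ref{alg:ucb_partial_exposure_case1} always selects exactly $K$ arms. We should also note that $p>0$, so that $l_a(t)$ is finite; this follows from $s_i<1$ for the top $K-1$ arms, which is implicit in the definition of $l_a$.
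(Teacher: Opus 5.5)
Your proposal is correct and follows essentially the same argument as the paper. Every round with $\text{set}(\pi_s)\neq\pi^\star$ contains a suboptimal arm, so $t-T'(t)\le\sum_{a\notin\pi^\star}M_a(t)$; you then substitute $M_a(t)\le 2l_a(t)$ on $h_t$ and rearrange, and your explicit justifications of the cardinality step and of the pathwise inclusion claim fill in details the paper only asserts.
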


\begin{proof}
In any round $s \le t$ where $\text{set}(\pi_s) \neq \pi^*$, at least one suboptimal arm $a \notin \pi^*$ must be selected in $\pi_s$. Thus, the total number of non-optimal subset plays up to round $t$ is bounded above by the aggregate selection count of all suboptimal arms:
\begin{equation}
    t - T'(t) \le \sum_{a \notin \pi^*} M_a(t).
\end{equation}
Under event $h_t$, we have $M_a(t) \le \frac{32 \log t}{p \, (\Delta_{a,K}^s)^2}$ for every suboptimal arm $a \notin \pi^*$. Substituting this bound gives:
\begin{equation}
    t - T'(t) \le \sum_{a \notin \pi^*} \frac{32 \log t}{p \, (\Delta_{a,K}^s)^2} = c (L - K) \log t.
\end{equation}
Rearranging yields $T'(t) \ge t - c(L - K)\log t$.
\end{proof}

\begin{lemma}[Reduction of Inversion Probability to Observation Counts]
\label{lem:inversion_reduction}
Under event $G_t \cap \mathcal{E}_t^c$, there exists at least one adjacent pair $b \in \{1, \dots, K-1\}$ such that $U_{b+1}^s(t) > U_b^s(t)$. This implies:
\begin{equation}
    \mathbb{P}\left( G_t \cap \mathcal{E}_t^c \right) \le \sum_{b=1}^{K-1} \mathbb{P}\left( N_{b+1}(t) < \frac{8 \log t}{(\Delta_{b,b+1}^s)^2}, \, G_t \right),
\end{equation}
where $\Delta_{b,b+1}^s \triangleq s_b - s_{b+1} > 0$ is the gap between adjacent optimal scores.
\end{lemma}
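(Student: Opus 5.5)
Under $G_t \cap \mathcal{E}_t^c$, the plan is to use the definition of $\mathcal{E}_t^c$ to produce a bad adjacent pair, translate that pair's inversion into a sample-count deficiency on $G_t$, and finish with a union bound.

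\textbf{Step 1 (existence of an inverted pair).} By definition, $\mathcal{E}_t^c=\bigcup_{1\le b<K}\{U_b^s(t)\le U_{b+1}^s(t)\}$, so on this event some $b\in\{1,\dots,K-1\}$ satisfies $U_{b+1}^s(t)\ge U_b^s(t)$. The statement writes this with a strict inequality. Ties can be absorbed, since the final inequality we derive below is non-strict anyway.

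\textbf{Step 2 (optimism and the upper tail on $G_t$).} On $G_t$, the two-sided concentration in Definition~\ref{def:good_event} (read for the score estimates $\hat s_i$, exactly as the paper uses it in Lemma~\ref{lem:ucb_contradiction}) gives
\[
U_b^s(t)\ge s_b, \qquad U_{b+1}^s(t)\le s_{b+1}+2\sqrt{\tfrac{2\log t}{N_{b+1}(t)}}.
\]
The clipping at $1$ only lowers $U$, so optimism survives it because $s_b\le 1$. Chaining these with the inversion gives $s_{b+1}+2\sqrt{2\log t/N_{b+1}(t)}\ge s_b$, that is,
\[
N_{b+1}(t)\le \frac{8\log t}{(\Delta^s_{b,b+1})^2}.
\]
Two edge cases need care:
\begin{itemize}
\item If $N_{b+1}(t)=0$, the confidence radius uses $\max(1,N)$. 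The target event $\{N_{b+1}(t)<8\log t/(\Delta^s_{b,b+1})^2\}$ then holds trivially whenever $\log t>0$. The case $t=1$ contributes $O(1)$ and can be handled separately.
\item The boundary case where the inequality is tight but not strict has probability zero or can be absorbed. The cleanest fix is to use the strict inversion $U_{b+1}>U_b$ from Step~1, which yields the strict inequality $N_{b+1}(t)<8\log t/(\Delta^s_{b,b+1})^2$ directly.
\end{itemize}

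\textbf{Step 3 (union bound).} The inclusion
\[
G_t\cap\mathcal{E}_t^c\subseteq\bigcup_{b=1}^{K-1}\Bigl\{N_{b+1}(t)<8\log t/(\Delta^s_{b,b+1})^2\Bigr\}\cap G_t,
\]
combined with a union bound over $b$, gives the claim.

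\textbf{Main obstacle.} The delicate point is the strict-versus-non-strict bookkeeping. Here $\mathcal{E}_t^c$ contains ties $U_b=U_{b+1}$, while the displayed bound is strict in $N$. I would resolve this by observing that a tie still forces $2\sqrt{2\log t/N_{b+1}}\ge\Delta^s_{b,b+1}$. Then I would either use non-strict $\le$ in the final bound, which changes nothing downstream, or note that under $G_t$ the concentration inequalities in Definition~\ref{def:good_event} are strict. In the latter case $U_b>s_b$ strictly, which again yields the strict inequality in $N_{b+1}(t)$.
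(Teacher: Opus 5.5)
Your proposal is correct and is essentially the paper's argument: a union bound over the adjacent pairs, then optimism $U_b^s(t)\ge s_b$ and the upper tail $U_{b+1}^s(t)\le s_{b+1}+2\sqrt{2\log t/N_{b+1}(t)}$ on $G_t$, then solving for $N_{b+1}(t)$. Your treatment of ties is more careful than the paper's, which derives $\ge$ and then writes a strict $<$; one small point is that strictness is most safely taken from the $U_{b+1}^s$ side (strict concentration of $\hat s_{b+1}(t)$ on $G_t$, and clipping only lowers $U_{b+1}^s(t)$), since $U_b^s(t)>s_b$ can fail when $U_b^s(t)$ is clipped at $1$ and $s_b=1$.
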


\begin{proof}
By definition, the order-preserving event $\mathcal{E}_t = \{ U_1^s(t) > U_2^s(t) > \dots > U_K^s(t) \}$ holds if and only if all adjacent pairs among the top-$K$ optimal arms maintain their correct score ordering. Thus, under the complement event $\mathcal{E}_t^c$, there must exist at least one index $b \in \{1, \dots, K-1\}$ such that $U_{b+1}^s(t) \ge U_b^s(t)$. Applying a union bound over all $K-1$ adjacent pairs:
\begin{equation}
    \mathbb{P}\left( G_t \cap \mathcal{E}_t^c \right) \le \sum_{b=1}^{K-1} \mathbb{P}\left( U_{b+1}^s(t) \ge U_b^s(t), \, G_t \right).
\end{equation}

Under the clean concentration event $G_t$, the true score $s_b$ lies within the confidence interval of arm $b$, so $U_b^s(t) \ge s_b$. Similarly, $U_{b+1}^s(t) \le s_{b+1} + 2\sqrt{\frac{2 \log t}{N_{b+1}(t)}}$. Therefore, $U_{b+1}^s(t) \ge U_b^s(t)$ implies:
\begin{equation}
    s_{b+1} + 2\sqrt{\frac{2 \log t}{N_{b+1}(t)}} \ge s_b \implies 2\sqrt{\frac{2 \log t}{N_{b+1}(t)}} \ge s_b - s_{b+1} = \Delta_{b,b+1}^s.
\end{equation}
Squaring both sides and rearranging yields $N_{b+1}(t) < \frac{8 \log t}{(\Delta_{b,b+1}^s)^2}$, which establishes the claim.
\end{proof}

\begin{lemma}[Threshold Time Validity]
\label{lem:threshold_validity}
For each adjacent pair $b \in \{1, \dots, K-1\}$, let $p_{b+1}$ denote the lower bound on the observation probability of arm $b+1$ when $\pi^*$ is played. Define:
\begin{equation}
    A_b \triangleq 8 + \frac{c \, p_{b+1} \Delta_{b,b+1}^2}{2} (L-K), \quad B_b \triangleq \frac{p_{b+1} \Delta_{b,b+1}^2}{2}, \quad a_b \triangleq \frac{B_b}{A_b}, \quad \tau_b \triangleq \frac{4}{a_b} \log\left(\frac{4}{a_b}\right).
\end{equation}
For all $t \ge \tau_b$, the following inequality holds strictly:
\begin{equation}
    \frac{8 \log t}{\Delta_{b,b+1}^2} < \frac{p_{b+1}}{2} \left( t - c(L-K) \log t \right).
\end{equation}
\end{lemma}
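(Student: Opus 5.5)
The plan is to reduce the claimed inequality to the elementary statement $\log t < a_b t$ and then verify that statement at $t=\tau_b$ and beyond by a monotonicity argument. Throughout, write $\Delta \triangleq \Delta_{b,b+1}$, $p\triangleq p_{b+1}$, $A\triangleq A_b$, $B\triangleq B_b$ and $a\triangleq a_b$.

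\emph{Step 1 (algebraic reduction).} Multiply the target inequality by $\Delta^2>0$. Its right-hand side becomes
\[
\frac{p\Delta^2}{2}\big(t-c(L-K)\log t\big)=Bt-Bc(L-K)\log t .
\]
Moving the logarithmic term to the left gives $\big(8+Bc(L-K)\big)\log t<Bt$. By the definition of $A_b$ we have $8+Bc(L-K)=8+\tfrac{c\,p\Delta^2}{2}(L-K)=A$. Since $A>0$, the claim is therefore equivalent to
\[
\log t< a\,t .
\]

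\emph{Step 2 (size of $a$).} We have $p\le 1$ and $\Delta\le 1$, so $B\le 1/2$. Also $c\ge 0$, so $A\ge 8$. Hence $a=B/A\le 1/16$. Set $y\triangleq 4/a\ge 64$; in particular $\log y>1$. Then $\tau_b=y\log y$.

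\emph{Step 3 (check at $t=\tau_b$).} At $t=\tau_b$ the right-hand side is $a\tau_b=4\log y$. The left-hand side is $\log\tau_b=\log y+\log\log y$. Because $\log\log y<\log y$ whenever $\log y>0$, we get
\[
\log y+\log\log y<2\log y<4\log y ,
\]
so the inequality holds strictly at $t=\tau_b$.

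\emph{Step 4 (extension to all $t\ge\tau_b$).} Let $f(t)=a t-\log t$. Its derivative is $f'(t)=a-1/t$, which is positive for $t>1/a$. Since $\log y>1$, we have $\tau_b=y\log y>y=4/a>1/a$. So $f$ is increasing on $[\tau_b,\infty)$, and $f(t)\ge f(\tau_b)>0$ for every $t\ge\tau_b$. This gives the claim for all $t\ge\tau_b$.

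The only mildly delicate point is Step 2. The bound $a\le 1/16$, obtained from $A\ge 8$ and $B\le 1/2$, is what guarantees that $\log(4/a)>1$, and hence both $\tau_b>1/a$ and the positivity of $\log y$ used in Step 3. The remaining steps are routine calculus.
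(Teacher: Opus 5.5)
Your proof is correct and takes essentially the paper's route. The paper uses $f(t)=B_b t - A_b\log t$, which is your function scaled by $A_b$. It shows $f(\tau_b)=A_b\big[3\log(4/a_b)-\log\log(4/a_b)\big]>0$ and extends to all $t\ge\tau_b$ using $f'(t)>0$ for $t>1/a_b$, exactly as in your Steps 3--4. Your Step 2 is a small improvement: you derive $a_b\le 1/16$, hence $\log(4/a_b)>1$ and $\tau_b>1/a_b$, from $p_{b+1}\le 1$, $\Delta_{b,b+1}\le 1$ and $c\ge 0$, whereas the paper only asserts $4/a_b>e$ ``in non-trivial settings.''
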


\begin{proof}
Consider the auxiliary function $f(t) \triangleq B_b t - A_b \log t$ defined for $t > 0$. First, observe that its derivative with respect to $t$ is:
\[
f'(t) = B_b - \frac{A_b}{t}.
\]
Thus, $f'(t) > 0$ holds whenever $t > \frac{A_b}{B_b} = \frac{1}{a_b}$.

Next, we establish that $\tau_b > \frac{1}{a_b}$ and evaluate $f(\tau_b)$. Using the standard inequality $\log(x) > 1$ for $x > e$, and noting that $\frac{4}{a_b} > e$ in non-trivial settings, we have:
\[
\tau_b = \frac{4}{a_b} \log\left(\frac{4}{a_b}\right) > \frac{4}{a_b} > \frac{1}{a_b}.
\]
Evaluating $f(t)$ at the threshold $t = \tau_b$:
\begin{align*}
f(\tau_b) &= B_b \cdot \frac{4}{a_b} \log\left(\frac{4}{a_b}\right) - A_b \log\left( \frac{4}{a_b} \log\left(\frac{4}{a_b}\right) \right) \\
&= 4 A_b \log\left(\frac{4}{a_b}\right) - A_b \left[ \log\left(\frac{4}{a_b}\right) + \log\log\left(\frac{4}{a_b}\right) \right] \\
&= A_b \left[ 3 \log\left(\frac{4}{a_b}\right) - \log\log\left(\frac{4}{a_b}\right) \right].
\end{align*}
Since $3x > \log(x)$ for all $x > 0$, setting $x = \log\left(\frac{4}{a_b}\right) > 1$ ensures that $f(\tau_b) > 0$.

Because $f'(t) > 0$ for all $t \ge \tau_b$ and $f(\tau_b) > 0$, it follows by monotonicity that $f(t) > 0$ strictly holds for all $t \ge \tau_b$. Consequently:
\[
B_b t > A_b \log t.
\]
Substituting the explicit definitions of $A_b$ and $B_b$:
\[
\frac{p_{b+1} \Delta_{b,b+1}^2}{2} \, t > \left( 8 + \frac{c \, p_{b+1} \Delta_{b,b+1}^2}{2} (L-K) \right) \log t.
\]
Rearranging terms by moving the $(L-K)$ logarithmic expression to the left side yields:
\[
\frac{p_{b+1} \Delta_{b,b+1}^2}{2} \left( t - c(L-K) \log t \right) > 8 \log t.
\]
Dividing both sides by $\Delta_{b,b+1}^2 > 0$ completes the proof:
\[
\frac{8 \log t}{\Delta_{b,b+1}^2} < \frac{p_{b+1}}{2} \left( t - c(L-K) \log t \right).
\]
\end{proof}

\begin{lemma}[Regret Bound $R_{12} = O(1)$]
\label{lem:disruption_bound}
The expected regret $R_{12}$ incurred due to order disruption among optimal arms satisfies:
\begin{equation}
    R_{12} = \sum_{t=1}^T \mathbb{E}\left[ R_t \mathbf{1}\{G_t \cap \mathcal{E}_t^c\} \right] = O(1).
\end{equation}
\end{lemma}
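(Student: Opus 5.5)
Since $R_t\le\Delta_{\max}\le 1$, it suffices to bound $\sum_{t=1}^T \mathbb P(G_t\cap\mathcal E_t^c)$ by a constant independent of $T$. The plan is to combine the reduction in Lemma~\ref{lem:inversion_reduction} with the two facts from the preceding lemmas. First, with high probability the optimal subset is played at all but $O(\log t)$ rounds (Lemmas~\ref{lem:bounded_selections_prob} and~\ref{lem:t_prime_bound}). Second, each optimal arm $b+1$ is then observed linearly often (Lemma~\ref{lem:obs_concentration}). Together these show that an inversion among the top-$K$ indices requires a rare deviation.

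\textbf{Decomposition.} By Lemma~\ref{lem:inversion_reduction},
\[
\mathbb P(G_t\cap\mathcal E_t^c)\le \sum_{b=1}^{K-1}\mathbb P\Big(N_{b+1}(t)<\tfrac{8\log t}{\Delta_{b,b+1}^2},\,G_t\Big).
\]
Fix $b$ and intersect with $h_t$ and $h_t^c$. The $h_t^c$ part is handled directly: by a union bound, $\sum_t\mathbb P(h_t^c)\le\sum_{a\notin\pi^\star}\sum_t\mathbb P(M_a(t)>2l_a(t))\le (L-K)\pi^2/6$ by Lemma~\ref{lem:bounded_selections_prob}. On $h_t$, Lemma~\ref{lem:t_prime_bound} gives $M_{b+1}(t)\ge T'(t)\ge t-c(L-K)\log t$.

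\textbf{Observation count.} Whenever arm $b+1$ is included, it is reached with probability at least $p_{b+1}$; one can take $p_{b+1}=p=\prod_{i\le K-1}(1-s_i)$, which lower-bounds the probability of reaching any position. A Chernoff argument as in Lemma~\ref{lem:obs_concentration} then gives $N_{b+1}(t)\ge \tfrac{p_{b+1}}2 M_{b+1}(t)$, except with probability $\exp(-p_{b+1}M/8)$. To avoid conditioning on the random value of $M$, take a union over the possible values $m\ge t-c(L-K)\log t$, which costs at most a factor of $t$. For $t\ge\tau_b$, Lemma~\ref{lem:threshold_validity} gives $\tfrac{p_{b+1}}2(t-c(L-K)\log t)>\tfrac{8\log t}{\Delta_{b,b+1}^2}$. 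So on $h_t$ the event $N_{b+1}(t)<8\log t/\Delta_{b,b+1}^2$ forces the Chernoff failure, with probability at most $t\exp(-p_{b+1}(t-c(L-K)\log t)/8)$. This is summable in $t$: it decays like $e^{-\Theta(t)}$ times a polynomial. Rounds with $t<\tau_b$ contribute at most $\tau_b$.

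\textbf{Conclusion and main obstacle.} Summing over $b$ and $t$ gives
\[
R_{12}\le\sum_{b=1}^{K-1}\Big(\tau_b+\sum_t t\,e^{-p_{b+1}(t-c(L-K)\log t)/8}\Big)+(K-1)(L-K)\tfrac{\pi^2}{6}=O(1).
\]
The delicate step is the Chernoff step. The inclusion times of arm $b+1$ are random and history dependent, so the reach indicators are not literally i.i.d. I would handle this as Lemma~\ref{lem:obs_concentration} does: stochastically dominate the reach indicators by i.i.d.\ Bernoulli$(p)$ variables, which is valid because the reach probability is at least $p$ at every inclusion, conditional on the past. Alternatively, a martingale (Freedman/Azuma) lower-tail bound on $N_{b+1}(t)-p\,M_{b+1}(t)$ avoids the conditioning issue entirely and yields the same exponential rate.
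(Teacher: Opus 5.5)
Your proposal is correct and follows the paper's proof essentially step for step. Both arguments do the following: reduce via Lemma~\ref{lem:inversion_reduction}; split on $h_t$, with the $h_t^c$ part bounded by Lemma~\ref{lem:bounded_selections_prob}; use Lemma~\ref{lem:t_prime_bound} to get $M_{b+1}(t)\ge T'(t)\ge t-c(L-K)\log t$ on $h_t$; combine a Chernoff lower tail with Lemma~\ref{lem:threshold_validity} for $t\ge\tau_b$; and sum the resulting $t^{\gamma}e^{-at}$ tail. Your variations only change constants and make the concentration step more careful than the paper's informal conditioning on $M_{b+1}(t)\ge T'(t)$: you use the uniform reach probability $p$ over all inclusions rather than $p_{b+1}$ over optimal-set rounds, take a union over the values of $M_{b+1}(t)$, and split on $h_t$ separately for each pair $b$.
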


\begin{proof}
Using $R_t \le R_{\max}$ and partitioning the expectation directly on the event $h_t$ and its complement $h_t^c$, we split the regret $R_{12}$ as:
\begin{align}
\label{eq:r12_ht_split}
    R_{12} &= \sum_{t=1}^T \mathbb{E}\left[ R_t \mathbf{1}\{G_t \cap \mathcal{E}_t^c \cap h_t\} \right] + \sum_{t=1}^T \mathbb{E}\left[ R_t \mathbf{1}\{G_t \cap \mathcal{E}_t^c \cap h_t^c\} \right] \notag \\
    &\le \Delta_{\max} \sum_{t=1}^T \mathbb{P}\left( G_t \cap \mathcal{E}_t^c \cap h_t \right) + \Delta_{\max} \sum_{t=1}^T \mathbb{P}(h_t^c).
\end{align}

By Lemma~\ref{lem:bounded_selections_prob}, the second term due to $h_t^c$ is bounded across all suboptimal arms by $R_{\max} (L - K) \frac{\pi^2}{6} = O(1)$.

For the first term, applying the inversion reduction bound from Lemma~\ref{lem:inversion_reduction} under $G_t \cap \mathcal{E}_t^c$ gives:
\begin{align}
\label{eq:first_term_inversion}
    \sum_{t=1}^T \mathbb{P}\left( G_t \cap \mathcal{E}_t^c \cap h_t \right) \le \sum_{b=1}^{K-1} \sum_{t=1}^T \mathbb{P}\left( N_{b+1}(t) < \frac{8 \log t}{(\Delta_{b,b+1}^s)^2}, \, G_t, \, h_t \right).
\end{align}

We lower bound the cumulative observations $N_{b+1}(t)$ for the optimal arm $b+1 \in \pi^*$ using the round counter $T'(t) = \sum_{s=1}^t \mathbf{1}\{\text{set}(\pi_s) = \pi^*\}$. Since optimal arm $b+1$ is always chosen whenever $\text{set}(\pi_s) = \pi^*$, its total selection count explicitly satisfies $M_{b+1}(t) \ge T'(t)$.

In any round where $\text{set}(\pi_s) = \pi^*$, optimal arm $b+1$ is observed if all preceding optimal arms in $\pi_s$ fail to generate a reward. Even under the worst-case position in $\pi_s$, the probability of observing arm $b+1$ in such a round is at least:
\begin{equation}
    p_{b+1} \triangleq \prod_{\substack{j=1 \\ j \neq b+1}}^K (1 - s_j).
\end{equation}
Thus, conditional on $M_{b+1}(t) \ge T'(t)$, the observation count $N_{b+1}(t)$ stochastically dominates a sum of $T'(t)$ i.i.d.\ $\text{Bernoulli}(p_{b+1})$ random variables.

For $t \ge \tau_b$, Lemma~\ref{lem:t_prime_bound} guarantees $\frac{8 \log t}{(\Delta_{b,b+1}^s)^2} < \frac{p_{b+1}}{2} T'(t)$ under $h_t$. Applying the observation concentration bound (Lemma~\ref{lem:obs_concentration}) with parameter $p_{b+1}$ over the $T'(t)$ trials yields:
\begin{align}
    \mathbb{P}\left( N_{b+1}(t) < \frac{p_{b+1}}{2} T'(t) \;\middle|\; M_{b+1}(t) \ge T'(t) \right)
    &\le \exp\left( -\frac{p_{b+1}}{8} T'(t) \right) \notag \\
    &\le \exp\left( -\frac{p_{b+1}}{8} \left( t - c(L-K)\log t \right) \right) \notag \\
    &= e^{-p_{b+1} t / 8} \, t^{\frac{c p_{b+1}(L-K)}{8}}.
\end{align}

Summing over $t = 1, \dots, T$, we split the summation at the threshold time $\tau_b$:
\begin{align}
    \sum_{t=1}^T \mathbb{P}\left( N_{b+1}(t) < \frac{8 \log t}{(\Delta_{b,b+1}^s)^2}, \, G_t, \, h_t \right) &\le \sum_{t=1}^{\lfloor \tau_b \rfloor} 1 + \sum_{t=\lceil \tau_b \rceil}^\infty e^{-\frac{p_{b+1}}{8} t} \, t^{\frac{c p_{b+1}(L-K)}{8}} \notag \\
    &\le \tau_b + \sum_{t=1}^\infty t^\gamma e^{-a t},
\end{align}
where $a = \frac{p_{b+1}}{8} > 0$ and $\gamma = \frac{c \, p_{b+1}(L-K)}{8} > 0$.

The infinite series $\sum_{t=1}^\infty t^\gamma e^{-a t}$ converges by integral comparison to a finite constant $C_b < \infty$ that depends strictly on problem parameters $(p_{b+1}, \Delta^s, L, K)$ and is independent of $T$:
\begin{equation}
    \sum_{t=1}^\infty t^\gamma e^{-a t} \le \int_0^\infty x^\gamma e^{-a x} dx = \frac{\Gamma(\gamma + 1)}{a^{\gamma + 1}} = \xi_b < \infty.
\end{equation}

Summing across all adjacent optimal pairs $b \in \{1, \dots, K-1\}$ yields:
\begin{equation}
    \sum_{b=1}^{K-1} \sum_{t=1}^T \mathbb{P}\left( N_{b+1}(t) < \frac{8 \log t}{(\Delta_{b,b+1}^s)^2}, \, G_t, \, h_t \right) \le \sum_{b=1}^{K-1} (\tau_b + \xi_b) = O(1).
\end{equation}

Combining both bounds in \eqref{eq:r12_ht_split}, we conclude $R_{12} \le R_{\max} \sum_{b=1}^{K-1}(\tau_b + \xi_b) + R_{\max}(L-K)\frac{\pi^2}{6} = O(1)$.
\end{proof}

\subsection{Proof of Theorem~\ref{thm:instance_dependent_bound}}

Combining the regret decomposition from Lemma~\ref{lem:regret_decomp2}, the regret bound from Lemma~\ref{lem:disruption_bound}, and the suboptimal arm selection bound from Lemma~\ref{lem:r11_bound}, we have:
\begin{align}
    \mathbb{E}[R(T)] &\le R_{11} + R_{12} + R_2 \notag \\
    &\le (1 - \tau) \sum_{a=K+1}^L \frac{16 \log T}{\Delta_{a,K}^s} + \Delta_{\max} \sum_{b=1}^{K-1} (\tau_b + \xi_b) + \Delta_{\max} (L - K) \frac{\pi^2}{6} + \frac{2 L \pi^2}{3} \Delta_{\max}.
\end{align}
Grouping the $O(1)$ constant terms yields:
\begin{align}
    \mathbb{E}[R(T)] \le (1 - \tau) \sum_{a=K+1}^L \frac{16 \log T}{\Delta_{a,K}^s} + O(1),
\end{align}
which completes the proof of Theorem~\ref{thm:instance_dependent_bound}. \qed

\section{Dynamic Programming Regret bounds}
\label{DP_UCB}
In this section, we present the formal proof of Theorem~\ref{thm:instance_regret_DP_UCB}. The analysis relies on Lemmas~\ref{lmm:2_algo_sameproof}--\ref{lem:unif_diff}, which serve as auxiliary supporting results.

\begin{lemma}[Positional Recurrence of the Optimistic Shortlist Value]
\label{lmm:2_algo_sameproof}
Let $\pi_t = (\pi_t(1), \pi_t(2), \dots, \pi_t(K))$ be the $K$-length shortlist returned at round $t$ by Algorithm~\ref{alg:dp_ucb_case2}, run with $\tau=0$ (invoking \textsc{DP-Ordered}) or $\tau>0$ (invoking \textsc{DP-Subset}). Then, for every $m\in\{1,\dots,K\}$,
\begin{equation}
\hat V_{\pi_t}(m) = \min\Big\{1,\ (1-m\tau)\,U^r_{\pi_t(m)}(t) + U^g_{\pi_t(m)}(t)\,\hat V_{\pi_t}(m+1)\Big\},
\qquad \hat V_{\pi_t}(K+1) = 0.
\label{eq:lmm2_target}
\end{equation}
\end{lemma}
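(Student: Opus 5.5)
The plan is to define $\hat V_{\pi_t}(m)$ concretely as the DP table entry visited at step $m$ of the backtracking that produces $\pi_t$. Then \eqref{eq:lmm2_target} becomes the statement that, along the backtracked path, each visited table entry satisfies the displayed recursion with the arm actually appended at that step. I will treat the two subroutines separately.

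\textbf{DP-Subset ($\tau>0$).} Let $S_1=[L]$ and $S_{m+1}=S_m\setminus\{\pi_t(m)\}$, and set $\hat V_{\pi_t}(m)\triangleq W(S_m,m)$ evaluated at $(\rho,\gamma)=(U^r(t),U^g(t))$, which agrees with \eqref{eq:vhat_tau_pos}. By construction of Algorithm~\ref{alg:dp_subset}, $\pi_t(m)=i^\star(S_m,m)$. The update line then gives
\[
W(S_m,m)=\min\{1,(1-m\tau)U^r_{\pi_t(m)}(t)+U^g_{\pi_t(m)}(t)W(S_{m+1},m+1)\}.
\]
Together with $W(S,K+1)=0$, this is exactly \eqref{eq:lmm2_target}. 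This case is essentially a direct reading of the backtracking.

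\textbf{DP-Ordered ($\tau=0$).} Let $\pi$ be the global $U^{s_2}$-sorted order. Let $i_1<\dots<i_K$ be the indices at which the backtracking of Algorithm~\ref{alg:dp_ordered} appends, so that $\pi_t(m)=\pi(i_m)$. Put $j_1=1$ and $j_{m+1}=i_m+1$, and define $\hat V_{\pi_t}(m)\triangleq W(j_m,K-m+1)$, with $\hat V_{\pi_t}(K+1)=W(j_{K+1},0)=0$. The argument then has two steps.

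\emph{Skip steps.} For each index $i\in[j_m,i_m)$ we have $D(i,K-m+1)=0$, hence $b\le a$ there. Every table entry is either $-\infty$ or already clipped to at most $1$, so $a=W(i+1,\cdot)\le 1$. Therefore $W(i,K-m+1)=\min\{1,a\}=a$, and chaining these equalities gives $W(j_m,K-m+1)=W(i_m,K-m+1)$.

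\emph{Include step.} At $i_m$ we have $D=1$, so $b>a$. This gives
\[
W(i_m,K-m+1)=\min\{1,U^r_{\pi_t(m)}(t)+U^g_{\pi_t(m)}(t)W(j_{m+1},K-m)\},
\]
which is \eqref{eq:lmm2_target} with $\tau=0$.

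The step I expect to be the main obstacle is verifying that the DP-Ordered backtracking is well posed. It must terminate having appended exactly $K$ arms, and every visited entry must be finite, so that no $-\infty$ enters the recursion. I would establish this by a short backward induction showing $W(i,m)>-\infty$ exactly when $L-i+1\ge m$. Whenever the remaining pool has exactly $m$ arms, $a=W(i+1,m)=-\infty$ while $b$ is finite, so $D=1$ is forced. Hence the loop never runs out of candidates before $m$ reaches $0$.

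A secondary point is tie-breaking. If $b=a$ the algorithm skips, and the skip-step argument above still applies because $\min\{1,a\}=a$. So the identity does not depend on how ties are resolved.
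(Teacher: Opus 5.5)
Your proposal is correct and follows essentially the same route as the paper. For \textsc{DP-Subset} you read the recursion directly off the argmax update along the backtracked sets $S_m$; the paper only re-indexes the state by the remaining slot budget $K-m+1$. For \textsc{DP-Ordered} you use the same skip-telescoping and include steps over the indices $i_1<\dots<i_K$ at which $D=1$.

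Your version is slightly tighter than the paper's in three places, all of which the paper leaves implicit:
\begin{itemize}
\item Anchoring $\hat V_{\pi_t}(m)$ at the block start $j_m$ makes explicit that $\hat V_{\pi_t}(1)=W(1,K)$, the value the DP actually returns.
\item You justify why clipping does nothing in the skip steps.
\item You check that the backtracking is well posed.
\end{itemize}
One caveat: your finiteness induction needs the convention $\gamma\cdot(-\infty)=-\infty$ even when $\gamma=0$. This case really occurs at $t=1$, where $U^g_i(1)=0$ for all $i$.
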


\begin{proof}
\textbf{Case $\tau>0$ (\textsc{DP-Subset}, Algorithm~\ref{alg:dp_subset}).}
For a remaining slot budget $m'\in\{0,\dots,K\}$ and active candidate set $S\subseteq[L]$, the DP state satisfies
\begin{equation}
\hat V_{\pi_t}(S,m') = \min\Big\{1,\ \max_{i\in S}\big[(1-(K-m'+1)\tau)U_i^r(t) + U_i^g(t)\,\hat V_{\pi_t}(S\setminus\{i\},m'-1)\big]\Big\},
\qquad \hat V_{\pi_t}(S,0)=0.
\label{eq:lmm2_dp_state}
\end{equation}
Let $S_m\subseteq[L]$ denote the active candidate set immediately before the $m$-th slot is filled, so $S_1=[L]$ and $S_{m+1}=S_m\setminus\{\pi_t(m)\}$. Since $\pi_t(m)=\arg\max_{i\in S_m}\big[(1-m\tau)U_i^r(t)+U_i^g(t)\hat V_{\pi_t}(S_m\setminus\{i\},K-m)\big]$ by construction, evaluating \eqref{eq:lmm2_dp_state} at $S=S_m$, $m'=K-m+1$ gives
\begin{equation}
\hat V_{\pi_t}(S_m,K-m+1) = \min\Big\{1,\ (1-m\tau)U_{\pi_t(m)}^r(t) + U_{\pi_t(m)}^g(t)\,\hat V_{\pi_t}(S_{m+1},K-m)\Big\}.
\label{eq:lmm2_case2_step}
\end{equation}
Defining $\hat V_{\pi_t}(m)\triangleq\hat V_{\pi_t}(S_m,K-m+1)$ for $m\in\{1,\dots,K+1\}$, \eqref{eq:lmm2_case2_step} is exactly \eqref{eq:lmm2_target}, and $\hat V_{\pi_t}(K+1)=\hat V_{\pi_t}(S_{K+1},0)=0$ by the base condition.

\textbf{Case $\tau=0$ (\textsc{DP-Ordered}, Algorithm~\ref{alg:dp_ordered}).}
Let $\pi_t$ (overloaded here to denote the global sort of all $L$ candidates, as in Algorithm~\ref{alg:dp_ordered}) be the ordering of $[L]$ by $U_i^{s_2}(t)\triangleq U_i^r(t)/(1-U_i^g(t))$, with positions $\pi_t(1),\dots,\pi_t(L)$, and let $D(i,m')\in\{0,1\}$ and $\hat V_{\pi_t}(i,m')$ be as defined in Algorithm~\ref{alg:dp_ordered}, so that
\begin{equation}
\hat V_{\pi_t}(i,m') = \min\Big\{1,\ \max\big\{\hat V_{\pi_t}(i{+}1,m'),\ U^r_{\pi_t(i)}(t)+U^g_{\pi_t(i)}(t)\hat V_{\pi_t}(i{+}1,m'{-}1)\big\}\Big\},
\qquad \hat V_{\pi_t}(i,0)=0.
\label{eq:lmm2_dp_tau0_state}
\end{equation}
Let $i_1<i_2<\dots<i_K$ be the indices with $D(i_k,K-k+1)=1$, so the returned shortlist satisfies $\pi_t(k)=\pi_t(i_k)$ under this global ordering. For any $j$ with $i_k<j<i_{k+1}$, $D(j,K-k)=0$, so \eqref{eq:lmm2_dp_tau0_state} gives $\hat V_{\pi_t}(j,K-k)=\hat V_{\pi_t}(j+1,K-k)$, and telescoping over $j=i_k+1,\dots,i_{k+1}$ yields
\begin{equation}
\hat V_{\pi_t}(i_k+1,K-k) = \hat V_{\pi_t}(i_{k+1},K-k) \triangleq \hat V_{\pi_t}(k+1).
\label{eq:lmm2_tau0_skip}
\end{equation}
At $j=i_k$, $D(i_k,K-k+1)=1$, so \eqref{eq:lmm2_dp_tau0_state} reduces to
\begin{equation}
\hat V_{\pi_t}(i_k,K-k+1) = \min\Big\{1,\ U^r_{\pi_t(k)}(t) + U^g_{\pi_t(k)}(t)\,\hat V_{\pi_t}(i_k+1,K-k)\Big\}.
\label{eq:lmm2_tau0_include}
\end{equation}
Setting $\hat V_{\pi_t}(k)\triangleq\hat V_{\pi_t}(i_k,K-k+1)$ and substituting \eqref{eq:lmm2_tau0_skip} into \eqref{eq:lmm2_tau0_include} gives
\begin{equation}
\hat V_{\pi_t}(k) = \min\Big\{1,\ U^r_{\pi_t(k)}(t) + U^g_{\pi_t(k)}(t)\,\hat V_{\pi_t}(k+1)\Big\},
\end{equation}
which is \eqref{eq:lmm2_target} with $\tau=0$, since $(1-m\tau)=1$ for every $m\in\{1,\dots,K\}$ in this case. The boundary condition follows from $\hat V_{\pi_t}(K+1)=\hat V_{\pi_t}(i_K+1,0)=0$.

Both cases establish \eqref{eq:lmm2_target} for $m\in\{1,\dots,K\}$ with $\hat V_{\pi_t}(K+1)=0$.
\end{proof}

\begin{lemma}\label{lem:valid_UCB}
[Optimism of the Recursive Value Estimate]
For any round $t$, $\Pr(V_{\pi_t}>\hat V_{\pi_t}) \le 4Kt^{-2}$. Equivalently, with probability at least $1-4Kt^{-2}$, $V_{\pi_t}(m)\le \hat V_{\pi_t}(m)$ for all $m\in[K]$, where $V_{\pi_t}(m)$ is defined via the suffix recursion in \eqref{eq:suffix_recursion} with $R(\pi) = V_\pi(1)$.
\end{lemma}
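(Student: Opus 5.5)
We will prove the statement by backward induction on the position $m$, working on a concentration event under which every UCB index in \eqref{eq:ucb_rg} dominates its true parameter. Concretely, for the $K$ arms $\pi_t(1),\dots,\pi_t(K)$ appearing in the shortlist, define
\[
E_t \triangleq \bigcap_{k=1}^{K}\Big\{U^r_{\pi_t(k)}(t)\ge r_{\pi_t(k)}\Big\}\cap\Big\{U^g_{\pi_t(k)}(t)\ge g_{\pi_t(k)}\Big\}.
\]
We first bound $\Pr(E_t^c)$. For a fixed arm $i$ and each possible value $s\in\{1,\dots,t\}$ of $N_i(t)$, Hoeffding's inequality gives $\Pr\big(\hat r_i(s)+\sqrt{2\log t/s}<r_i\big)\le t^{-4}$, and the same bound holds for $\hat g_i$. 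Taking a union bound over $s\le t$ gives at most $t^{-3}\le t^{-2}$ per arm per parameter, exactly as in Lemma~\ref{lem:concentration_failure}. If $N_i(t)=0$, the index equals $1$ after clipping and is trivially optimistic. Clipping at $1$ never breaks optimism, since $r_i,g_i\le1$.

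Because $\pi_t$ is random and depends on the data, we cannot union bound over only the $K$ arms that were played. We will therefore union bound over all $L$ arms, or equivalently condition on $G_t$ from Definition~\ref{def:good_event}. This yields $\Pr(E_t^c)\le 4Lt^{-3}$, which is at most $4Kt^{-2}$ whenever $Lt^{-1}\le K$. For small $t$, the claimed bound $4Kt^{-2}\ge1$ holds trivially when $t\le 2\sqrt K$. Handling the remaining range, where neither bound applies directly, is the delicate bookkeeping step. If it does not close cleanly, I would simply state the bound as $4Lt^{-3}$, which is all that is used downstream.

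On $E_t$, we run the induction using Lemma~\ref{lmm:2_algo_sameproof}, which reduces both \textsc{DP-Ordered} and \textsc{DP-Subset} to the single recurrence $\hat V_{\pi_t}(m)=\min\{1,(1-m\tau)U^r_{\pi_t(m)}+U^g_{\pi_t(m)}\hat V_{\pi_t}(m+1)\}$. The true suffix value satisfies \eqref{eq:suffix_recursion}.

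\emph{Base case.} $\hat V_{\pi_t}(K+1)=0=V_{\pi_t}(K+1)$.

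\emph{Inductive step.} Assume $V_{\pi_t}(m+1)\le\hat V_{\pi_t}(m+1)$. Since $1-m\tau\ge0$ and all quantities are nonnegative,
\[
(1-m\tau)r_{\pi_t(m)}+g_{\pi_t(m)}V_{\pi_t}(m+1)\le(1-m\tau)U^r_{\pi_t(m)}+U^g_{\pi_t(m)}\hat V_{\pi_t}(m+1).
\]
The left-hand side equals $V_{\pi_t}(m)$, and we have $V_{\pi_t}(m)\le1$: it is a sub-probability bounded by $1-m\tau$, as shown by the induction in \eqref{eq:Phi_induction}. Hence $V_{\pi_t}(m)\le\min\{1,\cdot\}=\hat V_{\pi_t}(m)$. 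This monotonicity argument, namely that the map $(\rho,\gamma,v)\mapsto\min\{1,a\rho+\gamma v\}$ is coordinatewise nondecreasing on nonnegative inputs, is the core of the proof.

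The induction gives $V_{\pi_t}(m)\le\hat V_{\pi_t}(m)$ simultaneously for all $m\in[K]$ on $E_t$, which proves the lemma. The main obstacle I expect is not the induction but the probability accounting, because the shortlist is chosen using the same samples that the confidence bounds are built from. The fix is to make the optimism event hold uniformly over all arms and over all possible sample counts, which is what the union bound over $s\le t$ and over arms achieves.
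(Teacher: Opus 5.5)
Your proof is the paper's proof. On the concentration event $G_t$, every index dominates its true parameter, and clipping is harmless since $r_i,g_i\le 1$. Backward induction through the positional recurrence of Lemma~\ref{lmm:2_algo_sameproof} then gives $V_{\pi_t}(m)\le \hat V_{\pi_t}(m)$ for all $m$, using coordinatewise monotonicity on nonnegative inputs together with $V_{\pi_t}(m)\le 1$. Hence the failure event is contained in $G_t^c$.

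Your hesitation about the constant is justified, and it is not a weakness relative to the paper. The paper's proof ends with $\Pr(G_t^c)\le 4Kt^{-3}$, but Lemma~\ref{lem:concentration_failure}, which it cites, only gives $4Lt^{-3}$. As you point out, the union bound cannot be restricted to the $K$ arms of $\pi_t$, because $\pi_t$ is chosen from the same samples that the indices are built from. So neither your argument nor the paper's delivers $4Kt^{-2}$ in the window $2\sqrt{K}<t<L/K$.

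What is actually established is the bound $4Lt^{-3}$. That is all the proof of Theorem~\ref{thm:instance_regret_DP_UCB} uses, since it works with optimism on $G_t$ and with $\sum_t \Pr(G_t^c)$ directly.

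One small correction: at $t=1$ the radius $\sqrt{2\log t}$ is zero, so an unobserved arm's index is $0$, not $1$. This is harmless, because the claimed bound is trivial at $t=1$.
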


\begin{proof}
On event $G_t$, $r_i\le U_i^r(t)$ and $g_i\le U_i^g(t)$ for all $i$. We show $V_{\pi_t}(m)\le \hat V_{\pi_t}(m)$ for $m=K+1,\dots,1$ by backward induction. The base case holds as $V_{\pi_t}(K+1)=\hat V_{\pi_t}(K+1)=0$. Assuming the claim for $m+1$, the recursive definition of $V_{\pi_t}(m)$ and Lemma~\ref{lmm:2_algo_sameproof} yield
\[
V_{\pi_t}(m) = (1-m\tau)r_{\pi_t(m)} + g_{\pi_t(m)}V_{\pi_t}(m+1) \le (1-m\tau)U^r_{\pi_t(m)}(t) + U^g_{\pi_t(m)}(t)\hat V_{\pi_t}(m+1).
\]
Since $0\le V_{\pi_t}(m)\le1$ by definition, taking the minimum with $1$ matches the recurrence in Lemma~\ref{lmm:2_algo_sameproof}: $V_{\pi_t}(m) \le \min\big\{1,\ (1-m\tau)U^r_{\pi_t(m)}(t) + U^g_{\pi_t(m)}(t)\hat V_{\pi_t}(m+1)\big\} = \hat V_{\pi_t}(m)$, completing the induction. Thus $\{V_{\pi_t}>\hat V_{\pi_t}\} \subseteq G_t^c$. Applying Lemma~\ref{lem:concentration_failure} gives $\Pr(V_{\pi_t}>\hat V_{\pi_t}) \le \Pr(G_t^c) \le 4Kt^{-3} \le 4Kt^{-2}$.
\end{proof}

\begin{lemma}[Unified Optimistic Suffix Value Difference]
\label{lem:unif_diff}
Let $\pi_t = (\pi_t(1), \dots, \pi_t(K))$ be the $K$-length shortlist selected from $L$ candidate arms at time $t$ by Algorithm~\ref{alg:dp_ucb_case2}, run with $\tau=0$ (invoking \textsc{DP-Ordered}) or $\tau>0$ (invoking \textsc{DP-Subset}). Under the good event $\mathcal{G}_t$, the total optimism gap for the entire shortlist ($m=1$) is bounded as:
\begin{equation}
\hat{V}_{\pi_t}(1) - V_{\pi_t}(1) \le 4(1-\tau)\sum_{k=1}^{K} \left(\prod_{j=1}^{k-1} g_{\pi_t(j)}\right) \sqrt{\frac{2 \log t}{N_{\pi_t(k)}(t)}}
\end{equation}
\end{lemma}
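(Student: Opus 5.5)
We unroll the positional recurrences of Lemma~\ref{lmm:2_algo_sameproof} and \eqref{eq:suffix_recursion} along the single shortlist $\pi_t$. Define the one-step gap $D(m)\triangleq \hat V_{\pi_t}(m)-V_{\pi_t}(m)$, with $D(K+1)=0$. By Lemma~\ref{lem:valid_UCB}, $D(m)\ge 0$ on $G_t$. Dropping the clipping $\min\{1,\cdot\}$ can only increase $\hat V_{\pi_t}(m)$, so
\[
D(m)\le (1-m\tau)\big(U^r_{\pi_t(m)}(t)-r_{\pi_t(m)}\big)+\big(U^g_{\pi_t(m)}(t)-g_{\pi_t(m)}\big)\hat V_{\pi_t}(m+1)+g_{\pi_t(m)}D(m+1).
\]
This comes from writing $U^g\hat V(m+1)-gV(m+1)=(U^g-g)\hat V(m+1)+g\,(\hat V(m+1)-V(m+1))$.

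Next we bound each term on $G_t$. The definition of the good event together with the UCB construction in \eqref{eq:ucb_rg} gives $U^r_i(t)-r_i\le 2\sqrt{2\log t/N_i(t)}$ and $U^g_i(t)-g_i\le 2\sqrt{2\log t/N_i(t)}$, since each is at most the empirical deviation plus the bonus. The coefficient on the $r$-term is $1-m\tau\le 1-\tau$. For the $g$-term we need $\hat V_{\pi_t}(m+1)\le 1-\tau$. The clip only gives $\hat V\le 1$, so here we use the factor $(1-\tau)$ as the uniform envelope: under $\tau\le 1/K$ the true suffix satisfies $V_{\pi_t}(m+1)\le 1-(m+1)\tau$ (Lemma~\ref{lem:slot_sub}). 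If the clipped optimistic value is not automatically below $1-\tau$, we absorb the discrepancy by bounding $\hat V\le 1$ and noting that both deviation terms carry the common factor $2\sqrt{2\log t/N}$. Altogether the per-position contribution is at most $4(1-\tau)\sqrt{2\log t/N_{\pi_t(m)}(t)}$, which is where the constant $4$ comes from.

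Unrolling from $m=1$ to $K$ then gives $D(1)\le\sum_{k=1}^K\big(\prod_{j<k}g_{\pi_t(j)}\big)\,4(1-\tau)\sqrt{2\log t/N_{\pi_t(k)}(t)}$, because each $D(m+1)$ is weighted by the true $g_{\pi_t(m)}$. The weights are therefore true survival products, which is exactly the form stated. This matters downstream, because it matches the observation probability of $\pi_t(k)$ and lets $\mathcal R_1$ be summed against $N$ increments.

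The main obstacle is the $(1-\tau)$ factor on the $g$-deviation term, which requires controlling $\hat V_{\pi_t}(m+1)$ rather than $V$. First, we will try to show by backward induction that a clip at $1-m\tau$ is compatible with the DP, or that $\hat V(m+1)\le 1$ already suffices once we allow the constant. If neither works, we will state the bound with the factor $(1-\tau)$ only on the $r$-part and constant $2+2$, which still yields the $O(LK\log T/\Delta_{\min})$ rate. Positions with $N_{\pi_t(k)}(t)=0$ are handled by the convention $\max(1,N)$, where the bound is trivially at least $1$.
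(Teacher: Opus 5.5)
\textbf{There is a genuine gap at position 1, and it is the step you flagged yourself.} Your decomposition is the paper's: add and subtract $g_{\pi_t(m)}\hat V_{\pi_t}(m+1)$, bound $\hat V_{\pi_t}(m+1)\le 1$, and unroll with the true $g$'s as weights. Write $c_k\triangleq\sqrt{2\log t/N_{\pi_t(k)}(t)}$ and $P_k\triangleq\prod_{j<k}g_{\pi_t(j)}$. Your inequalities then give $\hat V_{\pi_t}(1)-V_{\pi_t}(1)\le\sum_k P_k\,(4-2k\tau)\,c_k$.
\begin{itemize}
\item For $k\ge2$, $4-2k\tau\le 4(1-\tau)$, so those positions are fine.
\item At $k=1$ the coefficient is $4-2\tau>4(1-\tau)$ whenever $\tau>0$. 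The sentence asserting a per-position bound of $4(1-\tau)c_m$ after ``absorbing the discrepancy'' is therefore false.
\item Lemma~\ref{lem:slot_sub} bounds the true suffix $V_{\pi_t}(m+1)$, not the optimistic $\hat V_{\pi_t}(m+1)$ that actually multiplies $U^g-g$.
\item The algorithm clips $\hat V$ at $1$, not at $1-m\tau$, and $\hat V_{\pi_t}(m)$ can equal $1>1-\tau$ for $m<K$. Your first rescue idea (a clip at $1-m\tau$) would change the algorithm rather than analyze it.
\end{itemize}
The paper's written proof makes the same jump: it derives $2\sqrt{\cdot}+2\sqrt{\cdot}=4\sqrt{\cdot}$ per position and then simply writes $4(1-\tau)$.

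\textbf{The ingredients you use cannot give the constant $4(1-\tau)$.} Those ingredients are the positional recurrence, $\hat V\le1$, and the good event. For a general shortlist evaluated by the same recurrence, the inequality is false. Take $K=3$ and $\tau=0.1$.
\begin{itemize}
\item Position 3: $r=0.9$, $c_3=0.1$, $U^r=1$.
\item Position 2: $(r,g)=(0.62,0.3)$, $c_2=0.2$, $(U^r,U^g)=(1,0.4)$.
\item Position 1: $(r,g)=(0.5,0.01)$, $c_1=0.1$, and $(U^r,U^g)$ arbitrarily close to $(0.7,0.21)$.
\end{itemize}
All of this is consistent with the good event. Then $\hat V(2)=\min\{1,0.8+0.4\cdot0.7\}=1$, and $\hat V(1)-V(1)\approx0.84-0.457=0.383$. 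The right-hand side is $3.6\,(c_1+g_1c_2+g_1g_2c_3)\approx0.368$, so the bound fails. This shortlist is not the DP maximizer, so it refutes only the strategy, not the lemma.

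\textbf{Any proof of the stated constant must use that $\pi_t$ maximizes DP-Subset (the $\tau>0$ case).} One way to start is the case $\hat V_{\pi_t}(1)<1$.
\begin{itemize}
\item In that case no ordering can have a clipped suffix. Moving such a suffix to the front would make the DP value $1$.
\item Compare $\pi_t$ with its rotation $(\pi_t(2),\dots,\pi_t(K),\pi_t(1))$. This gives $\hat V_{\pi_t}(1)\ge\frac{1-\tau}{1-2\tau}\hat V_{\pi_t}(2)$.
\item Hence $\hat V_{\pi_t}(2)<\frac{1-2\tau}{1-\tau}\le 1-\tau$, which is exactly what position 1 needs.
\end{itemize}
The case $\hat V_{\pi_t}(1)=1$ still needs its own argument, for example lower-bounding how far the chosen position-1 bracket exceeds $1$. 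Your proposal does not supply either piece.

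\textbf{Your fallback is correct but weaker than the lemma.} You can keep what you actually proved, $\sum_kP_k(4-2k\tau)c_k\le 4\sum_kP_kc_k$. Equivalently it is at most $8(1-\tau)\sum_kP_kc_k$, since $\tau\le 1/2$ when $K\ge2$ and the $g$-term vanishes when $K=1$. This changes only constants in Theorem~\ref{thm:instance_regret_DP_UCB}.
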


\begin{proof}
We establish the optimistic suffix value difference by backward induction on the position index $m \in \{1, \dots, K+1\}$.

\textbf{Base Case ($m = K+1$).} Beyond the $K$-th slot, no further candidate arms are examined, yielding $\hat{V}_{\pi_t}(K+1) = V_{\pi_t}(K+1) = 0$. Thus, $\hat{V}_{\pi_t}(K+1) - V_{\pi_t}(K+1) = 0$, satisfying the relation identically with an empty sum.

\textbf{Inductive Step ($m \le K$).} Assume the inductive hypothesis holds for position $m+1$. Using the positional recurrence established in Lemma~\ref{lmm:2_algo_sameproof} along with the true value definition $V_{\pi_t}(m) = (1-m\tau)r_{\pi_t(m)} + g_{\pi_t(m)}V_{\pi_t}(m+1)$, we consider the suffix difference at slot $m$:
\begin{equation}
\hat{V}_{\pi_t}(m) - V_{\pi_t}(m) \le (1-m\tau)U_{\pi_t(m)}^r(t) + U_{\pi_t(m)}^g(t) \hat{V}_{\pi_t}(m+1) - \left[ (1-m\tau)r_{\pi_t(m)} + g_{\pi_t(m)} V_{\pi_t}(m+1) \right]
\end{equation}
Adding and subtracting $g_{\pi_t(m)}\hat{V}_{\pi_t}(m+1)$ decouples the single-arm parameter estimation errors from downstream suffix terms:
\begin{align}
\hat{V}_{\pi_t}(m) - V_{\pi_t}(m) &\le (1-m\tau)\left(U_{\pi_t(m)}^r(t) - r_{\pi_t(m)}\right) + \left(U_{\pi_t(m)}^g(t) - g_{\pi_t(m)}\right)\hat{V}_{\pi_t}(m+1) \nonumber \\
&\quad + g_{\pi_t(m)}\left(\hat{V}_{\pi_t}(m+1) - V_{\pi_t}(m+1)\right)
\end{align}
Since $\hat{V}_{\pi_t}(m+1) \le 1$ by construction, we can upper-bound the coefficient of $(U_{\pi_t(m)}^g(t) - g_{\pi_t(m)})$ by $1$. Applying the induction hypothesis to the downstream term, we obtain for any generic position $m$:
\begin{equation}
\hat{V}_{\pi_t}(m) - V_{\pi_t}(m) \le \sum_{k=m}^{K} \left(\prod_{j=m}^{k-1} g_{\pi_t(j)}\right) \left[ (1-k\tau)\left(U_{\pi_t(k)}^r(t) - r_{\pi_t(k)}\right) + \left(U_{\pi_t(k)}^g(t) - g_{\pi_t(k)}\right) \right]
\end{equation}

\textbf{Specialization for the Full Shortlist ($m=1$) under $\mathcal{G}_t$.} Setting $m=1$ yields the total optimism gap:
\begin{equation}
\label{eq:unif_diff_unbounded}
\hat{V}_{\pi_t}(1) - V_{\pi_t}(1) \le \sum_{k=1}^{K} \left(\prod_{j=1}^{k-1} g_{\pi_t(j)}\right) \left[ (1-k\tau)\left(U_{\pi_t(k)}^r(t) - r_{\pi_t(k)}\right) + \left(U_{\pi_t(k)}^g(t) - g_{\pi_t(k)}\right) \right]
\end{equation}
Under the good event $\mathcal{G}_t$, the confidence bounds guarantee that for any arm $a \in [L]$ at time $t$:
\begin{equation}
U_a^r(t) - r_a \le 2 \sqrt{\frac{2 \log t}{N_a(t)}} \quad \text{and} \quad U_a^g(t) - g_a \le 2 \sqrt{\frac{2 \log t}{N_a(t)}}
\end{equation}
Furthermore, since $\tau \ge 0$ and $k \ge 1$, we have $(1 - k\tau) \le 1-\tau$. Substituting these bounds directly into \eqref{eq:unif_diff_unbounded} yields:
\begin{align}
(1-k\tau)\left(U_{\pi_t(k)}^r(t) - r_{\pi_t(k)}\right) + \left(U_{\pi_t(k)}^g(t) - g_{\pi_t(k)}\right) &\le 2 \sqrt{\frac{2 \log t}{N_{\pi_t(k)}(t)}} + 2 \sqrt{\frac{2 \log t}{N_{\pi_t(k)}(t)}} = 4 \sqrt{\frac{2 \log t}{N_{\pi_t(k)}(t)}}
\end{align}
Substituting this back into the summation completes the proof for position $m=1$:
\begin{equation}
\hat{V}_{\pi_t}(1) - V_{\pi_t}(1) \le 4(1-\tau) \sum_{k=1}^{K} \left(\prod_{j=1}^{k-1} g_{\pi_t(j)}\right) \sqrt{\frac{2 \log t}{N_{\pi_t(k)}(t)}}
\end{equation}
\end{proof}

\subsection{Proof of Theorem~\ref{thm:instance_regret_DP_UCB}}
\begin{proof}
Let $\mathcal{R}_1(T)$ denote the regret accrued under the good event ${G}_t$ and $\mathcal{R}_2(T)$ denote the regret under its complement ${G}_t^c$. We first bound $\mathbb{E}[\mathcal{R}_1(T)]$.

At time step $t$, let $\pi_t = (\pi_t(1), \dots, \pi_t(K))$ be the selected shortlist of $K$ arms from $L$ candidates. Let $O_{\pi_t(k), t} = \prod_{i=1}^{k-1} g_{\pi_t(i)}$ denote the probability that position $k$ is reached (i.e., the cascade does not terminate before slot $k$). By optimism (Lemma~\ref{lem:valid_UCB}) and the optimality of $\pi_t$ under the dynamic programming algorithm, we have $V_{\pi^\star}(1) \le \hat{V}_{\pi^\star}(1) \le \hat{V}_{\pi_t}(1)$. Applying Lemma~\ref{lem:unif_diff} at $m=1$ under ${G}_t$:
\begin{equation}
\Delta_t = V_{\pi^\star}(1) - V_{\pi_t}(1) \le \hat{V}_{\pi_t}(1) - V_{\pi_t}(1) \le 4(1-\tau) \sum_{k=1}^{K} O_{\pi_t(k), t} \sqrt{\frac{2 \log t}{N_{\pi_t(k)}(t)}}.
\label{eq:base_delta}
\end{equation}
where $N_a(t)$ is the number of times arm $a \in [L]$ has been \emph{observed} prior to step $t$. Doubling \eqref{eq:base_delta} and moving $\Delta_t$ across gives
\begin{equation}
2\Delta_t \le 8(1-\tau) \sum_{k=1}^{K} O_{\pi_t(k), t} \sqrt{\frac{2 \log t}{N_{\pi_t(k)}(t)}}
\quad\Longrightarrow\quad
\Delta_t \le 8(1-\tau) \sum_{k=1}^{K} O_{\pi_t(k), t} \sqrt{\frac{2 \log t}{N_{\pi_t(k)}(t)}} - \Delta_t.
\label{eq:double_delta}
\end{equation}
Since $\Delta_t\ge\Delta_{\min}$ whenever $\pi_t\ne\pi^\star$ (the only case with nonzero regret), \eqref{eq:double_delta} gives
\begin{equation}
\Delta_t \le 8(1-\tau) \sum_{k=1}^{K} O_{\pi_t(k), t} \sqrt{\frac{2 \log t}{N_{\pi_t(k)}(t)}} - \Delta_{\min}.
\label{eq:sub_delta_min}
\end{equation}
Since $O_{\pi_t(k),t}\le1$, $\sum_{k=1}^K O_{\pi_t(k),t}\le K$, so
\begin{equation}
\Delta_{\min} \ge \frac{\Delta_{\min}}{K}\sum_{k=1}^K O_{\pi_t(k),t}.
\label{eq:delta_min_distribute}
\end{equation}
Combining \eqref{eq:sub_delta_min} and \eqref{eq:delta_min_distribute}, using $\log t\le\log T$, and $(1-\tau)\Delta_{\min}/(2K)\le\Delta_{\min}/K$ (valid since $\tau\ge0$):
\begin{align}
\Delta_t &\le \sum_{k=1}^K O_{\pi_t(k),t}\left[8(1-\tau)\sqrt{\frac{2\log T}{N_{\pi_t(k)}(t)}}-\frac{\Delta_{\min}}{K}\right]
\le 8(1-\tau)\sum_{k=1}^K O_{\pi_t(k),t}\left[\sqrt{\frac{2\log T}{N_{\pi_t(k)}(t)}}-\frac{\Delta_{\min}}{16K}\right] \nonumber\\
&\le 8(1-\tau) \sum_{k=1}^{K} O_{\pi_t(k), t} \left[ \sqrt{\frac{2 \log T}{N_{\pi_t(k)}(t)}} - \frac{\Delta_{\min}}{16 K} \right]^+,
\label{eq:peeled_delta}
\end{align}
where the last step uses $[x]^+\ge x$. Defining the per-arm observation threshold $N_T \equiv \frac{512 K^2 \log T}{\Delta_{\min}^2}$, the bracketed term in \eqref{eq:peeled_delta} is zero whenever $N_{\pi_t(k)}(t) \ge N_T$.

Let $\mathbf{1}_{\{a, t, k\}} = \mathbf{1}\{\pi_t(k) = a \text{ and position } k \text{ is observed}\}$ be the indicator that arm $a$ is placed at slot $k$ and actually observed at time $t$. Note that $\mathbb{E}[\mathbf{1}_{\{a, t, k\}} \mid \pi_t] = \mathbf{1}\{\pi_t(k) = a\} O_{a, t}$, and $N_a(t) = \sum_{s=1}^{t-1} \sum_{k=1}^K \mathbf{1}_{\{a, s, k\}}$.

Summing $\Delta_t$ over $t=1, \dots, T$ under $\mathcal{G}_t$ and using \eqref{eq:peeled_delta}:
\begin{align}
\mathbb{E}[\mathcal{R}_1(T)] &\le 8(1-\tau) \sum_{t=1}^{T} \mathbb{E}\left[ \sum_{k=1}^{K} O_{\pi_t(k), t} \left[ \sqrt{\frac{2 \log T}{N_{\pi_t(k)}(t)}} - \frac{\Delta_{\min}}{16 K} \right]^+ \right] \nonumber \\
&= 8(1-\tau) \sum_{a=1}^{L} \mathbb{E}\left[ \sum_{t=1}^{T} \sum_{k=1}^{K} \mathbf{1}_{\{a, t, k\}} \left[ \sqrt{\frac{2 \log T}{N_a(t)}} - \frac{\Delta_{\min}}{16 K} \right]^+ \right] \nonumber \\
&\le 8(1-\tau) \sum_{a=1}^{L} \mathbb{E}\left[ \sum_{n=1}^{\min(N_a(T), N_T)} \sqrt{\frac{2 \log T}{n}} \right] \label{eq:change_of_var}\\
&\le 8(1-\tau) \sqrt{2 \log T} \sum_{a=1}^{L} \left( 2 \sqrt{N_T} \right) \nonumber \\
&= 16(1-\tau) \sqrt{2 \log T} \cdot L \cdot \frac{16\sqrt{2} K \sqrt{\log T}}{\Delta_{\min}} \nonumber \\
&= \frac{256(1-\tau) L K \log T}{\Delta_{\min}}
\end{align}
where, in \eqref{eq:change_of_var}, the change of variables from $t$ to $n = N_a(t)$ is exact because $N_a(t)$ increments by $1$ if and only if $\sum_{k=1}^K \mathbf{1}_{\{a, t, k\}} = 1$.

Finally, adding the bad-event bound $\mathbb{E}[\mathcal{R}_2(T)] \le \frac{2 \pi^2 L}{3}$ according to Lemma~\ref{lem:concentration_failure} gives:
\begin{equation}
\mathbb{E}[\mathcal{R}(T)] = \mathbb{E}[\mathcal{R}_1(T)] + \mathbb{E}[\mathcal{R}_2(T)] \le \frac{256(1-\tau) L K \log T}{\Delta_{\min}} + \frac{2 \pi^2 L}{3}
\end{equation}
completing the proof.
\end{proof}

\section{Proof of Theorem~\ref{thm:DP_const_regret_bounds}}
\label{pf:L=K}
In this section proof of Theorem \ref{thm:DP_const_regret_bounds} is split into two subsections specific to ($\tau>0$) and ($\tau=0$).
\subsection{Proof of Theorem~\ref{thm:DP_const_regret_bounds} when $\tau>0$}
\label{pf:tau>0}
\begin{proof}
\label{DP_UCB_K=L}
We begin by decomposing the expected instantaneous regret $\mathcal{R}_t$ at round $t$. Let $\mathcal{A}_{t}$ denote the favorable event that Algorithm~\ref{alg:dp_ucb_case2} selects the optimal ordering at round $t$, i.e., $\pi_t = \pi^*$. Its complement,
\begin{equation}
  \mathcal{A}_{t}^{c} = \bigl\{\exists\,\pi \in \Pi_{K} : \hat{V}_{\pi^{\star}}(1) < \hat{V}_{\pi}(1)\bigr\},
\end{equation}
is the event that a suboptimal ordering carries a strictly higher UCB index than $\pi^*$. Since no regret is incurred under $\mathcal{A}_t$, conditioning on the good concentration event $G_t$ allows us to partition the expected regret as:
\begin{equation}
  \mathbb{E}[\mathcal{R}_{t}] \le \Delta_{\max}\,\Pr(\mathcal{A}_{t}^{c} \cap {G}_{t}) + \Delta_{\max}\,\Pr({G}_{t}^{c}),
  \label{eq:round_regret}
\end{equation}
where $\Delta_{\max}$ denotes the maximum possible instantaneous regret.

We first analyze the event $\mathcal{A}_{t}^{c} \cap G_{t}$. Define the observation count threshold $L_{t} \triangleq \frac{32 K^{2}\log t}{\Delta_{\min}^{2}}$. Suppose for contradiction that $G_{t}$ holds, $\min_{k\in[K]} N_{\pi_{t}(k)}(t) > L_{t}$, yet $\pi_{t} \neq \pi^{*}$. The UCB selection rule guarantees $\hat{V}_{\pi_{t}}(1) \ge \hat{V}_{\pi^{*}}(1)$. Under $G_t$, optimistic estimation implies $V_{\pi^{\star}}(1) \le \hat{V}_{\pi^{*}}(1)$. Combining these with the definition of the minimum sub-optimality gap $\Delta_{\min} \le V_{\pi^{\star}}(1) - V_{\pi_{t}}(1)$ yields:
\begin{equation}
  \Delta_{\min} \le V_{\pi^{\star}}(1) - V_{\pi_{t}}(1) \le \hat{V}_{\pi_{t}}(1) - V_{\pi_{t}}(1).
  \label{eq:contradiction_start}
\end{equation}
Applying Lemma~\ref{lem:unif_diff} to bound the right-hand side of \eqref{eq:contradiction_start} under $G_t$, and noting that the running product $\prod_{j=1}^{k-1} g_{\pi_t(j)} \le 1$:
\begin{equation}
  \Delta_{\min} \le 4\sum_{k=1}^{K}\sqrt{\frac{2\log t}{N_{\pi_{t}(k)}(t)}} \le 4K\sqrt{\frac{2\log t}{\min_{k\in[K]}N_{\pi_{t}(k)}(t)}}.
  \label{eq:delta_bound}
\end{equation}
Squaring both sides and rearranging yields $\min_{k\in[K]} N_{\pi_{t}(k)}(t) \le \frac{32 K^{2} \log t}{\Delta_{\min}^{2}} = L_t$, which contradicts the assumption that all observation counts strictly exceed $L_t$. Taking the contrapositive, the joint event $\mathcal{A}_{t}^{c} \cap G_{t}$ implies that at least one selected arm has an observation count at or below $L_t$:
\begin{equation}
  \mathcal{A}_{t}^{c} \cap G_{t} \subseteq \bigcup_{i=1}^{K}\bigl\{N_{i}(t) \le L_{t}\bigr\}.
  \label{eq:event_inclusion}
\end{equation}
Applying a union bound over all $K$ arms in \eqref{eq:event_inclusion} and substituting into \eqref{eq:round_regret} gives:
\begin{equation}
  \mathbb{E}[\mathcal{R}_{t}] \le \Delta_{\max} \sum_{i=1}^{K} \Pr\!\left(N_{i}(t) \le L_{t}\right) + \Delta_{\max}\,\Pr(G_{t}^{c}).
  \label{eq:round_regret_arms}
\end{equation}

Next, we evaluate $\Pr(N_i(t) \le L_t)$. Arm $i$ is guaranteed to be observed at round $n$ whenever it is the unique arm with outcome $Y_i(n)=1$. Defining the indicator $D_{i}(n) = \mathbf{1}\bigl\{Y_{i}(n)=1, Y_{j}(n)=0,\;\forall\,j \neq i\bigr\}$, we have $D_i(n) \sim \mathrm{Ber}(\bar{p}_i)$ i.i.d., where $\bar{p}_i = p_i \prod_{j \neq i}(1-p_j)$. Since $N_i(t) \ge \sum_{n=1}^t D_i(n)$:
\begin{equation}
  \Pr\!\left(N_{i}(t) \le L_{t}\right) \le \Pr\!\Biggl(\sum_{n=1}^{t} D_{i}(n) \le L_{t}\Biggr).
  \label{eq:count_to_bernoulli}
\end{equation}

To bound this tail probability, we invoke Lemma~\ref{lem:threshold_validity} instantiated for candidate size $L = K$. Under $L = K$, the structural parameters simplify to $A_b = 8$ and $B_b = \frac{\bar{p}_i \Delta_{\min}^2}{2}$, yielding $a_i = \frac{\bar{p}_i \Delta_{\min}^2}{64 K^2}$ and threshold $T_i = \frac{4}{a_i} \log\left(\frac{4}{a_i}\right)$. Lemma~\ref{lem:threshold_validity} guarantees that for all $t \ge T_i$:
\begin{equation}
  L_t = \frac{32 K^2 \log t}{\Delta_{\min}^2} < \frac{t \bar{p}_i}{2}.
\end{equation}

Splitting the summation over time steps $t=1, \dots, T$ at threshold $T_i$, using a trivial bound of $1$ for $t < T_i$, and applying Hoeffding's inequality for $t \ge T_i$:
\begin{align}
  \sum_{t=1}^{T} \Pr\!\left(N_{i}(t) \le L_{t}\right) &\le \sum_{t=1}^{T_{i}-1} 1 + \sum_{t=T_{i}}^{T} \Pr\!\Biggl(\sum_{n=1}^{t} D_{i}(n) < \frac{t \bar{p}_i}{2}\Biggr) \nonumber \\
  &\le T_{i} + \sum_{t=T_{i}}^{T} e^{-t\bar{p}_{i}^{2}/2} \nonumber \\
  &\le T_{i} + \frac{2}{\bar{p}_{i}^{2}},
  \label{eq:arm_sum_bound}
\end{align}
where the geometric series upper bound uses $1 - e^{-x} \ge x/2$ for $x \in (0,1)$.

Finally, summing \eqref{eq:round_regret_arms} over $t = 1, \dots, T$, substituting \eqref{eq:arm_sum_bound}, and bounding the total concentration failure probability $\sum_{t=1}^T \Pr(G_t^c) \le \frac{2K\pi^2}{3}$ via Lemma~\ref{lem:concentration_failure}:
\begin{align}
  \mathcal{R}(T) = \sum_{t=1}^{T}\mathbb{E}[\mathcal{R}_{t}] &\le \Delta_{\max} \sum_{i=1}^{K}\sum_{t=1}^{T} \Pr\!\left(N_{i}(t) \le L_{t}\right) + \Delta_{\max}\sum_{t=1}^{T}\Pr(G_{t}^{c}) \nonumber \\
  &\le \Delta_{\max} \sum_{i=1}^{K}\!\left(T_{i} + \frac{2}{\bar{p}_{i}^{2}}\right) + \Delta_{\max}\,\frac{2K\pi^{2}}{3}.
  \label{eq:final_bound}
\end{align}
Since the upper bound is completely independent of the horizon $T$, $\mathbb{E}[\mathcal{R}(T)] = O(1)$, completing the proof.
\end{proof}

\subsection{Proof of Theorem~\ref{thm:DP_const_regret_bounds} when $\tau=0$}
\label{pf:tau=0}

\begin{proof}
\label{DP_UCB_K=L_tau=0}
We begin by decomposing the expected instantaneous regret $\mathcal{R}_t$ at round $t$. Let $\mathcal{A}_{t}$ denote the favorable event that Algorithm~\ref{alg:dp_ucb_case2} selects the optimal ordering at round $t$, i.e., $\pi_t = \pi^*$. Its complement,
\begin{equation}
  \mathcal{A}_{t}^{c} = \left\{\pi_t \neq \pi^* \;\text{that is, } \exists\, i \in [K-1] \text{ s.t. } \frac{U_i^r(t)}{1-U_i^g(t)} < \frac{U_{i+1}^r(t)}{1-U_{i+1}^g(t)}\right\},
\end{equation}
is the event that adjacent arms in the chosen candidate ordering are misordered. Since no regret is incurred under $\mathcal{A}_t$, conditioning on the good concentration event $G_t$ allows us to partition the expected regret as:
\begin{equation}
  \mathbb{E}[\mathcal{R}_{t}] \le \Delta_{\max}\,\Pr(\mathcal{A}_{t}^{c} \cap G_{t}) + \Delta_{\max}\,\Pr(G_{t}^{c}),
  \label{eq:round_regret}
\end{equation}
where $\Delta_{\max}$ denotes the maximum possible instantaneous regret.

We first evaluate the ordering error probability $\Pr(\mathcal{A}_t^c \cap G_t)$. On event $G_t$, the upper confidence bounds satisfy $U_i^r(t) \le r_i(t) + 2c_i(t)$ and $U_i^g(t) \le g_i(t) + 2c_i(t)$, where $c_i(t) \triangleq \sqrt{\frac{2\log t}{N_i(t)}}$. Defining the normalized adjacent gap $\widetilde{\Delta}_{i,i+1}$ as
\begin{equation}
\widetilde{\Delta}_{i,i+1} \triangleq \frac{(1 - g_{i+1})\left(\frac{r_i}{1 - g_i} - \frac{r_{i+1}}{1 - g_{i+1}}\right)}{1 - g_i + r_i},
\end{equation}
a union bound over adjacent misordered arms yields:
\begin{align}
\Pr(\mathcal{A}_t^c \cap G_t) 
&\le \sum_{i=1}^{K-1} \Pr\!\left( \frac{U_i^r(t)}{1 - U_i^g(t)} < \frac{U_{i+1}^r(t)}{1 - U_{i+1}^g(t)}, \, G_t \right) \nonumber \\
&\le \sum_{i=1}^{K-1} \Pr\!\left( \frac{r_i(t)}{1 - g_i(t)} < \frac{r_{i+1}(t) + 2\sqrt{\frac{2\log t}{N_{i+1}(t)}}}{1 - g_{i+1}(t) - 2\sqrt{\frac{2\log t}{N_{i+1}(t)}}} \right) \nonumber \\
&= \sum_{i=1}^{K-1} \Pr\!\left( \left[ \frac{r_i(t)(1 - g_{i+1}(t)) - r_{i+1}(t)(1 - g_i(t))}{1 - g_i(t) + r_i(t)} \right]^2 < \frac{8\log t}{N_{i+1}(t)} \right) \nonumber \\
&= \sum_{i=1}^{K-1} \Pr\!\left( N_{i+1}(t) < \frac{8\log t}{\widetilde{\Delta}_{i,i+1}^2} \right) \nonumber \\
&= \sum_{j=2}^{K} \Pr\!\left( N_j(t) < \frac{8\log t}{\widetilde{\Delta}_{j-1, j}^2} \right).
\label{eq:ordering_error_bound}
\end{align}

For each arm $j \in \{2, \dots, K\}$, define its observation threshold $\widetilde{L}_{j,t} \triangleq \frac{8\log t}{\widetilde{\Delta}_{j-1, j}^2}$. Substituting \eqref{eq:ordering_error_bound} into \eqref{eq:round_regret} gives:
\begin{equation}
  \mathbb{E}[\mathcal{R}_{t}] \le \Delta_{\max} \sum_{j=2}^{K} \Pr\!\left(N_{j}(t) < \widetilde{L}_{j,t}\right) + \Delta_{\max}\,\Pr(G_{t}^{c}).
  \label{eq:round_regret_arms}
\end{equation}

Next, we bound $\Pr(N_j(t) < \widetilde{L}_{j,t})$. Arm $j$ is observed at round $n$ whenever no preceding or succeeding arm prevents its observation. Defining the indicator $D_{j}(n) = \mathbf{1}\bigl\{Y_{l}(n)=0,\;\forall\,l \neq j\bigr\}$, we have $D_j(n) \sim \mathrm{Ber}(\bar{g}_j)$ i.i.d., where $\bar{g}_j \triangleq \prod_{l \neq j} g_l$. Since $N_j(t) \ge \sum_{n=1}^t D_j(n)$:
\begin{equation}
  \Pr\!\left(N_{j}(t) < \widetilde{L}_{j,t}\right) \le \Pr\!\Biggl(\sum_{n=1}^{t} D_{j}(n) < \widetilde{L}_{j,t}\Biggr).
  \label{eq:count_to_bernoulli}
\end{equation}

To bound this tail probability, we invoke Lemma~\ref{lem:threshold_validity} instantiated directly with the gap $\widetilde{\Delta}_{j-1,j}$. Defining 
\begin{equation}
  \tilde{a}_j \triangleq \frac{\widetilde{\Delta}_{j-1, j}^2 \bar{g}_j}{16} < 1
\end{equation}
and the threshold $\widetilde{T}_j \triangleq \frac{4}{\tilde{a}_j} \log\left(\frac{4}{\tilde{a}_j}\right)$, Lemma~\ref{lem:threshold_validity} guarantees that for all $t \ge \widetilde{T}_j$:
\begin{equation}
  \widetilde{L}_{j,t} = \frac{8 \log t}{\widetilde{\Delta}_{j-1,j}^2} < \frac{t \bar{g}_j}{2}.
\end{equation}

Splitting the summation over time steps $t=1, \dots, T$ at threshold $\widetilde{T}_j$, using a trivial bound of $1$ for $t < \widetilde{T}_j$, and applying Hoeffding's inequality for $t \ge \widetilde{T}_j$:
\begin{align}
  \sum_{t=1}^{T} \Pr\!\left(N_{j}(t) < \widetilde{L}_{j,t}\right) &\le \sum_{t=1}^{\widetilde{T}_{j}-1} 1 + \sum_{t=\widetilde{T}_{j}}^{T} \Pr\!\Biggl(\sum_{n=1}^{t} D_{j}(n) < \frac{t \bar{g}_j}{2}\Biggr) \nonumber \\
  &\le \widetilde{T}_{j} + \sum_{t=\widetilde{T}_{j}}^{T} e^{-t\bar{g}_{j}^{2}/2} \nonumber \\
  &\le \widetilde{T}_{j} + \frac{2}{\bar{g}_{j}^{2}},
  \label{eq:arm_sum_bound}
\end{align}
where the geometric series bound uses $1 - e^{-x} \ge x/2$ for $x \in (0,1)$.

Finally, summing \eqref{eq:round_regret_arms} over $t = 1, \dots, T$, substituting \eqref{eq:arm_sum_bound}, and bounding the concentration failure probability $\sum_{t=1}^T \Pr(G_t^c) \le \frac{2K\pi^2}{3}$ via Lemma~\ref{lem:concentration_failure}:
\begin{align}
  \mathcal{R}(T) = \sum_{t=1}^{T}\mathbb{E}[\mathcal{R}_{t}] &\le \Delta_{\max} \sum_{j=2}^{K}\sum_{t=1}^{T} \Pr\!\left(N_{j}(t) < \widetilde{L}_{j,t}\right) + \Delta_{\max}\sum_{t=1}^{T}\Pr(G_{t}^{c}) \nonumber \\
  &\le \Delta_{\max} \sum_{j=2}^{K}\!\left(\widetilde{T}_{j} + \frac{2}{\bar{g}_{j}^{2}}\right) + \Delta_{\max}\,\frac{2K\pi^{2}}{3}.
  \label{eq:final_bound}
\end{align}
Since the upper bound is completely independent of horizon $T$, $\mathbb{E}[\mathcal{R}(T)] = O(1)$, completing the proof.
\end{proof}

\section{Lower bounds for case 1 }
\label{app:Lower_proof}

In this section, Theorem~\ref{thm:score_lower_bounds} (specific to Case~1) is restated with complete technical details in Theorem~\ref{thm:main_lower_bound}; both theorems present the underlying score-based regret lower bound.

We now establish the asymptotic instance-dependent lower bound for Case 1. For any suboptimal arm $a > K$, the position-substitution constant $d_a \in (0,1]$ quantifies the observation probability decay under non-optimal placement:
\begin{equation}
\label{eq:da_def}
d_a \triangleq \min_{\pi \in \mathcal S_K(a)} \frac{\prod_{j=1}^{(\pi^\star(\pi))^{-1}(a)-1}(1 - s_{\pi^\star(\pi)(j)})}{\prod_{j=1}^{\pi^{-1}(a)-1}(1 - s_{\pi(j)})},
\end{equation}
where $\mathcal S_K(a)$ is the set of all ordered $K$-tuples containing arm $a$, and $\pi^\star(\pi)$ reorders $\pi$ (note that $\pi^\star(\pi)$ is not same as $ \pi^\star$) in descending score order. The constant $d_a$ is bounded by:
\begin{equation}
\label{eq:da_bounds}
d_a \in \left[ \prod_{j=1}^{K-1} \frac{1 - s_j}{1 - s_{L-j+1}}, \; 1 \right].
\end{equation}

A policy is \textbf{uniformly consistent} if $\mathbb E[\mathcal R(T)] = o(T^\alpha)$ for every $\alpha > 0$ across all instances. Let $d(p \parallel q) \triangleq p \log \frac{p}{q} + (1-p) \log \frac{1-p}{1-q}$ denote the binary KL-divergence, and define the reward-decay parameter $\tau' \triangleq \min\big\{\tau + (1-K\tau)\prod_{k=1}^{K}(1-s_k), \, 1-K\tau\big\}$.

At time round $t$, let $\pi_t = (s_{\pi_t(1)}, s_{\pi_t(2)}, \dots, s_{\pi_t(K)})$ be the ordered shortlist chosen by the algorithm.

The expected reward of playing shortlist $\pi_t$ under position measurement cost $\tau \in (0, 1/K)$ is:
\begin{equation}
\label{eq:reward_def}
R_{\pi_t} = \sum_{i=1}^K (1 - i\tau) s_{\pi_t(i)} \prod_{j<i} (1 - s_{\pi_t(j)}).
\end{equation}

Let $\pi_t^*$ denote the ideal descending sort of the elements selected in $\pi_t$, i.e.,
\[
\pi_t^* = (s_{\pi_t^*(1)}, s_{\pi_t^*(2)}, \dots, s_{\pi_t^*(K)}), \quad \text{with } s_{\pi_t^*(1)} \ge s_{\pi_t^*(2)} \ge \dots \ge s_{\pi_t^*(K)}.
\]

\begin{lemma}[Ordering Optimality and Regret Decomposition]
\label{lem:regret_decompo}
The single-step regret $R_t \triangleq R_{\pi^*} - R_{\pi_t}$ satisfies:
\begin{equation}
\label{eq:regret_decomp}
R_t = \underbrace{(R_{\pi^*} - R_{\pi_t^*})}_{\ge 0} + \underbrace{(R_{\pi_t^*} - R_{\pi_t})}_{\ge 0} \ge R_{\pi^*} - R_{\pi_t^*}.
\end{equation}
\end{lemma}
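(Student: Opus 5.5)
The identity itself is trivial: write $R_{\pi^*}-R_{\pi_t} = (R_{\pi^*}-R_{\pi_t^*}) + (R_{\pi_t^*}-R_{\pi_t})$ by adding and subtracting $R_{\pi_t^*}$. The content lies in the two sign claims. The final inequality then follows from the second bracket being nonnegative.

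\textbf{First bracket.} For $R_{\pi^*}-R_{\pi_t^*}\ge 0$, note that $\pi_t^*$ is an ordered $K$-tuple of distinct arms, hence feasible. Since $\pi^\star$ maximizes $R$ over all ordered $K$-subsets, this bracket is nonnegative by definition of the oracle. No Case~1 structure is needed.

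\textbf{Second bracket.} For $R_{\pi_t^*}-R_{\pi_t}\ge 0$, use the Case~1 adjacent-swap identity from Appendix~\ref{app:score_proofs_proof}, Part~I. There $\epsilon_{0,i}=0$, so $g_i=1-r_i=1-s_i$. Equation \eqref{eq:case1_swap} then gives, for $\pi'$ obtained by swapping positions $i,i+1$ of $\pi$,
\[
R(\pi)-R(\pi') = P_{i-1}\,\tau\,(s_{\pi(i)}-s_{\pi(i+1)}),
\]
with $P_{i-1}\ge 0$ and $\tau\ge 0$. Starting from $\pi_t$, run bubble sort: repeatedly swap an adjacent pair with $s_{\pi(i)}<s_{\pi(i+1)}$. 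Each such swap replaces $\pi$ by $\pi'$ with $R(\pi')-R(\pi)=P_{i-1}\tau(s_{\pi(i+1)}-s_{\pi(i)})\ge 0$, so the reward is nondecreasing. The procedure terminates after finitely many swaps at the descending sort of the same set, namely $\pi_t^*$. Ties give zero change and do not matter. Chaining the inequalities yields $R_{\pi_t^*}\ge R_{\pi_t}$.

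The only point needing care is confirming that the swap formula applies with the reward written as in \eqref{eq:reward_def}, i.e., with $r_i=s_i$ and $g_i=1-s_i$. This identification holds in Case~1 because $\mathbb P(X_i=1\mid Y_i=1)=1$, which makes $s_i=\mathbb P(Y_i=1)=r_i$. For $\tau=0$ the second bracket is identically zero and the claim is immediate. Otherwise there is no real obstacle.
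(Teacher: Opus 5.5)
Your proposal is correct and follows the same route as the paper: add and subtract $R_{\pi_t^*}$, and use the fact that, for a fixed set of arms, sorting them in descending order of $s_i$ maximizes the reward. The paper only asserts this ordering optimality, implicitly relying on the Case~1 swap argument of Appendix~\ref{app:score_proofs_proof}; you make it explicit through the identity $R(\pi)-R(\pi')=P_{i-1}\tau\,(s_{\pi(i)}-s_{\pi(i+1)})$ and a bubble-sort chain, which is a faithful filling-in of that step.
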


\begin{proof}
By the construction of $\pi_t^*$, placing larger success probabilities in earlier positions maximizes the expected payoff for any fixed set of arms, which implies $R_{\pi_t^*} \ge R_{\pi_t}$. Adding and subtracting $R_{\pi_t^*}$ to $R_t = R_{\pi^*} - R_{\pi_t}$ yields \eqref{eq:regret_decomp}.
\end{proof}

\begin{lemma}[Cumulative Regret Lower Bound via Observation Probability]
\label{lem:observation_bound}
Let $O_t(a) \triangleq \mathbf{1}\{a \in \pi_t\} \prod_{j=1}^{\pi_t^{-1}(a)-1} (1 - s_{\pi_t(j)})$ be the observation indicator of arm $a$ under actual play $\pi_t$, and let $N_a(T) \triangleq \sum_{t=1}^T O_t(a)$ denote the total observation count of arm $a$ up to time $T$. Then, the expected cumulative regret satisfies:
\begin{equation}
\label{eq:finite_sample_bound}
\mathbb{E}[R(T)] \ge \tau' \sum_{a=K+1}^L d_a \Delta_{a,K}^s \mathbb{E}[N_a(T)],
\end{equation}
where $\Delta_{a,K}^s \triangleq s_K - s_a$, $\tau' \triangleq \min\left\{ \tau + (1-K\tau)\prod_{k=1}^{K}(1-s_k), \; 1-K\tau \right\}$, and $d_a > 0$ is a position-substitution constant.
\end{lemma}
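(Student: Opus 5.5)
Starting from Lemma~\ref{lem:regret_decompo}, I would bound $\mathbb E[R(T)]\ge\sum_t\mathbb E[R_{\pi^*}-R_{\pi_t^*}]$, which discards the ordering part of the regret. The term $R_{\pi^*}-R_{\pi_t^*}$ then compares two descending-sorted shortlists, and I would split it into single-arm contributions using the hybrid telescoping identity of Lemma~\ref{lem:hybrid_bound}, applied with $\pi_t^*$ in place of $\pi_t$:
\[
R_{\pi^*}-R_{\pi_t^*}=\sum_{m=1}^K\Big[\prod_{j<m}(1-s_{\pi_t^*(j)})\Big](s_m-s_{\pi_t^*(m)})\,W_m(s).
\]
Because $\pi_t^*$ is sorted, $s_{\pi_t^*(m)}\le s_m$ at every position, so each summand is nonnegative. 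Dropping the positions occupied by optimal arms therefore keeps a valid lower bound. At a position holding a suboptimal arm $a>K$ we have $s_m-s_a\ge s_K-s_a=\Delta^s_{a,K}$.

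Next I need a lower bound on the weight $W_m(s)$ that is uniform in $m$. For $m=K$ the weight is $W_K=1-K\tau$. For $m<K$, the $i=m+1$ term of the sum is an empty product, so $W_m\ge\tau+(1-K\tau)\prod_{k=m+1}^K(1-s_k)\ge\tau+(1-K\tau)\prod_{k=1}^K(1-s_k)$. Combining the two cases gives $W_m\ge\tau'$. This leaves
\[
R_{\pi^*}-R_{\pi_t^*}\ge\tau'\sum_{a>K,\,a\in\pi_t}\Delta^s_{a,K}\prod_{j<(\pi_t^*)^{-1}(a)}(1-s_{\pi_t^*(j)}).
\]

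The remaining step, and the main obstacle, is to convert the observation probability of $a$ under the sorted list $\pi_t^*$ into its observation probability under the list $\pi_t$ that was actually played. The constant $d_a$ in \eqref{eq:da_def} is defined exactly for this purpose: taking the minimum over all $K$-tuples containing $a$ gives
\[
\prod_{j<(\pi_t^*)^{-1}(a)}(1-s_{\pi_t^*(j)})\ge d_a\prod_{j<\pi_t^{-1}(a)}(1-s_{\pi_t(j)})=d_a O_t(a).
\]
I would check that $d_a>0$ using the bounds in \eqref{eq:da_bounds}. These follow because each product has at most $K-1$ factors, each lying between $1-s_1$ and $1-s_L$. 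The check also needs every $s_j<1$, which I take as given for the instance.

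Finally I would sum over $t$ and take expectations. Since $O_t(a)$ is the conditional observation probability given $\pi_t$, the tower property gives $\sum_t\mathbb E[O_t(a)]=\mathbb E[N_a(T)]$, and this yields \eqref{eq:finite_sample_bound}.
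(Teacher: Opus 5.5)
Your proposal is correct and is essentially the paper's own proof. The steps match one for one: the reduction to $R_{\pi^*}-R_{\pi_t^*}$ via Lemma~\ref{lem:regret_decompo}, the hybrid identity of Lemma~\ref{lem:hybrid_bound} applied to the sorted list $\pi_t^*$, the two-case bound $W_m(s)\ge\tau'$, the $d_a$ substitution, and the summation over $t$.

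One correction concerns the lower end of \eqref{eq:da_bounds}, which does not follow from your factor count. The denominator in the definition of $d_a$ can have fewer than $K-1$ factors and can equal $1$. For example, $\pi=(a,1,\dots,K-1)$ gives ratio exactly $\prod_{j=1}^{K-1}(1-s_j)$. This is smaller than the stated bound whenever $\prod_{j=1}^{K-1}(1-s_{L-j+1})<1$, so that bound is false in general. Bounding the denominator by $1$ and the numerator by the product over at most $K-1$ distinct arms gives $d_a\ge\prod_{j=1}^{K-1}(1-s_j)>0$. By the example above this holds with equality, and positivity is all the lemma needs.
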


\begin{proof}
By the exact hybrid identity in Lemma~\ref{lem:hybrid_bound}, the single-step strategy regret expands as:
\begin{equation}
\label{eq:hybrid_ref_proof}
R_{\pi^*} - R_{\pi_t^*} = \sum_{m=1}^K \left[ \prod_{j=1}^{m-1} (1 - s_{\pi_t^*(j)}) \right] (s_m - s_{\pi_t^*(m)}) \cdot W_m(s),
\end{equation}
where the factor $W_m(s)$ is lower bounded as:
\begin{equation}
\label{eq:Wm_lower_bound}
W_m(s) \triangleq \tau \sum_{i=m+1}^{K} \prod_{k=m+1}^{i-1}(1-s_k) + (1-K\tau)\prod_{k=m+1}^{K}(1-s_k) \;\ge\; \tau',
\end{equation}
with $\tau' \triangleq \min\left\{ \tau + (1-K\tau)\prod_{k=1}^{K}(1-s_k), \; 1-K\tau \right\}$.

To establish the lower bound in~\eqref{eq:Wm_lower_bound}, note that $s_k \in [0,1]$ implies $(1-s_k) \in [0,1]$ for all $k$. For $m = K$, the sum in $W_m(s)$ vanishes over the empty index set while the product evaluates to $1$, yielding $W_K(s) = 1 - K\tau$. For $m < K$, truncating the non-negative summation to its initial empty-product term ($i = m+1$) gives $\sum_{i=m+1}^{K} \prod_{k=m+1}^{i-1}(1-s_k) \ge 1$. Furthermore, extending the product indices down to $k=1$ introduces additional factors bounded above by $1$, so $\prod_{k=m+1}^{K}(1-s_k) \ge \prod_{k=1}^{K}(1-s_k)$. Combining these inequalities yields $W_m(s) \ge \tau + (1-K\tau)\prod_{k=1}^{K}(1-s_k)$ for all $m < K$. Taking the minimum over both regimes establishes that $W_m(s) \ge \tau'$ holds for all $m \in \{1, \dots, K\}$.

By definition of the target shortlist $\pi_t^*$, we have $s_m \ge s_{\pi_t^*(m)}$ for all $m \in [K]$, ensuring that every term in the sum \eqref{eq:hybrid_ref_proof} is non-negative. Bounding $W_m(s) \ge \tau'$ and restricting the summation strictly to positions containing suboptimal arms $a \in \{K+1, \dots, L\}$ yields:
\begin{align}
\label{eq:strategy_intermediate_step1}
R_{\pi^*} - R_{\pi_t^*} &\ge \sum_{m=1}^K \left[ \prod_{j=1}^{m-1} (1 - s_{\pi_t^*(j)}) \right] (s_m - s_{\pi_t^*(m)}) \cdot \tau' \notag \\
&\ge \tau' \sum_{m=1}^K \left[ \prod_{j=1}^{m-1} (1 - s_{\pi_t^*(j)}) \right] \sum_{a=K+1}^L \mathbf{1}\{\pi_t^*(m) = a\} (s_m - s_a).
\end{align}

Since $s_m \ge s_K$ for all $m \le K$, the gap for any suboptimal arm $a > K$ placed at position $m$ satisfies $s_m - s_a \ge s_K - s_a = \Delta_{a,K}^s$. Exchanging the order of summations in \eqref{eq:strategy_intermediate_step1} gives:
\begin{align}
\label{eq:strategy_intermediate_step2}
R_{\pi^*} - R_{\pi_t^*} &\ge \tau' \sum_{a=K+1}^L \Delta_{a,K}^s \sum_{m=1}^K \mathbf{1}\{\pi_t^*(m) = a\} \left[ \prod_{j=1}^{m-1} (1 - s_{\pi_t^*(j)}) \right] \notag \\
&= \tau' \sum_{a=K+1}^L \Delta_{a,K}^s \cdot \mathbf{1}\{a \in \pi_t\} \left[ \prod_{j=1}^{(\pi_t^*)^{-1}(a)-1} (1 - s_{\pi_t^*(j)}) \right],
\end{align}
where $(\pi_t^*)^{-1}(a)$ denotes the index of arm $a$ in the sorted shortlist $\pi_t^*$.

To express this bound in terms of the actual execution shortlist $\pi_t$, we define the position-substitution ratio $d_a$ for each suboptimal arm $a \in \{K+1, \dots, L\}$ as:
\begin{equation}
\label{eq:da_}
d_a \triangleq \min_{\pi \in \mathcal S_K(a)} \frac{\prod_{j=1}^{(\pi^\star(\pi))^{-1}(a)-1}(1 - s_{\pi^\star(\pi)(j)})}{\prod_{j=1}^{\pi^{-1}(a)-1}(1 - s_{\pi(j)})},
\end{equation}
where $\mathcal S_K(a)$ is the set of all ordered $K$-tuples containing arm $a$, and $\pi^\star(\pi)$ reorders $\pi$ (note that $\pi^\star(\pi)$ is not same as $ \pi^\star$) in descending score order.
Notice that $d_a$ is strictly positive and bounded as:
\begin{equation}
d_a \in \left[ \frac{\prod_{j=1}^{K-1}(1-s_j)}{\prod_{j=1}^{K-1}(1-s_{L-j+1})}, 1 \right].
\end{equation}

Applying \eqref{eq:da_} to compare the survival probabilities under the sorted shortlist $\pi_t^*$ versus the played shortlist $\pi_t$ yields:
\begin{align}
\label{eq:da_sub_detailed}
\mathbf{1}\{a \in \pi_t\} \prod_{j=1}^{(\pi_t^*)^{-1}(a)-1} (1 - s_{\pi_t^*(j)}) &\ge d_a \cdot \mathbf{1}\{a \in \pi_t\} \prod_{j=1}^{\pi_t^{-1}(a)-1} (1 - s_{\pi_t(j)}) \notag \\
&= d_a O_t(a).
\end{align}

Substituting \eqref{eq:da_sub_detailed} back into \eqref{eq:strategy_intermediate_step2} establishes the single-step bound:
\begin{equation}
\label{eq:single_step_final}
R_t \ge R_{\pi^*} - R_{\pi_t^*} \ge \tau' \sum_{a=K+1}^L d_a \Delta_{a,K}^s O_t(a).
\end{equation}

Finally, summing \eqref{eq:single_step_final} over time steps $t = 1, \dots, T$, taking expectations, and using $N_a(T) \triangleq \sum_{t=1}^T O_t(a)$ yields:
\begin{equation}
\mathbb{E}[R(T)] = \sum_{t=1}^T \mathbb{E}[R_t] \ge \tau' \sum_{a=K+1}^L d_a \Delta_{a,K}^s \mathbb{E}\left[ \sum_{t=1}^T O_t(a) \right] = \tau' \sum_{a=K+1}^L d_a \Delta_{a,K}^s \mathbb{E}[N_a(T)],
\end{equation}
which completes the proof of \eqref{eq:finite_sample_bound}.
\end{proof}

\begin{theorem}[Instance-Dependent Regret Lower Bound]
\label{thm:main_lower_bound}
Under Case 1, any uniformly consistent policy satisfies
\begin{equation}
\liminf_{T\to\infty} \frac{\mathbb E[\mathcal R(T)]}{\log T} \ge \tau' \sum_{a=K+1}^{L} \frac{d_a \, \Delta_{a,K}^s}{d(s_a \parallel s_K)},
\end{equation}
yielding an asymptotic lower bound of $\Omega\big((L-K)\log T\big)$.
\end{theorem}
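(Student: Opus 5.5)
Combining Lemma~\ref{lem:observation_bound} with a per-arm bound $\mathbb E[N_a(T)]\gtrsim \log T/d(s_a\parallel s_K)$ gives the result. Lemma~\ref{lem:observation_bound} already provides the finite-sample inequality
\[
\mathbb E[\mathcal R(T)] \ge \tau' \sum_{a=K+1}^{L} d_a\,\Delta^s_{a,K}\,\mathbb E[N_a(T)],
\]
so it suffices to show, for every suboptimal arm $a>K$ and any uniformly consistent policy, that
\[
\liminf_{T\to\infty}\frac{\mathbb E[N_a(T)]}{\log T}\ge \frac{1}{d(s_a\parallel s_K)}.
\]
Dividing by $\log T$, taking $\liminf$, and summing over $a$ then yields the stated constant. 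Because $\tau'>0$, $d_a>0$ and $\Delta^s_{a,K}>0$ are instance constants, this gives the $\Omega((L-K)\log T)$ rate.

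For the per-arm bound I will use a standard change-of-measure argument. Fix $a>K$ and build an alternative Case~1 instance that modifies only arm $a$: keep $\epsilon_{0,a}=0$ and raise $\mu_a$ so that $s'_a=s_K+\eta$ for small $\eta>0$. This is possible whenever $s_K+\eta<1-\epsilon_{1,a}$, and otherwise I will lower $\epsilon_{1,a}$ instead. In Case~1, $v_a=\mathbb P(X_a=1\mid Y_a=1)=1$ in both instances, so in Lemma~\ref{lemma:cascaded_divergence_decomposition} the $u_a D(v_a\parallel v'_a)$ term vanishes. All other arms are unchanged, so
\[
D(\mathcal P\parallel\mathcal P')=\mathbb E[N_a(T)]\,d(s_a\parallel s_K+\eta).
\]
In the new instance arm $a$ belongs to the optimal top-$K$ set, since the selection score is $s_i=1-g_i$ by Theorem~\ref{thm:score-characterization}. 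Hence every round in which arm $a$ is excluded from the shortlist costs at least a fixed gap $\delta'>0$, computed from Lemma~\ref{lem:slot_sub}. In the original instance, every round that includes $a$ costs at least a gap $\delta>0$.

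Next I take the event $E=\{M_a(T)\le T/2\}$, where $M_a$ counts inclusions of $a$. Uniform consistency gives $\mathbb P(E^c)\cdot\delta T/2\le \mathbb E[\mathcal R(T)]=o(T^\alpha)$ under the original instance, and $\mathbb P'(E)\cdot\delta'T/2=o(T^\alpha)$ under the alternative. Applying the Bretagnolle--Huber inequality, or the data-processing form $D(\mathcal P\parallel\mathcal P')\ge d(\mathbb P(E)\parallel\mathbb P'(E))$ from \cite{garivier2011kl,kaufmann2016complexity}, gives
\[
\mathbb E[N_a(T)]\,d(s_a\parallel s_K+\eta)\ge (1-\alpha)\log T - O(1).
\]
Dividing by $\log T$, letting $T\to\infty$, then $\alpha\to0$ and $\eta\to0$, and using continuity of $d$, yields the per-arm claim.

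The main obstacle is the alternative instance. It must stay inside Case~1 and must alter only the observation law of arm $a$, so that the divergence decomposition isolates $N_a(T)$ rather than some other count. Raising $s_a$ also has to make $a$ strictly optimal, which requires handling ties at $s_K$; this is why I use $s_K+\eta$ and not $s_K$. A further point is that the regret gaps in both instances must be bounded below uniformly over shortlists. I expect the selection structure from Part~II of the proof of Theorem~\ref{thm:score-characterization} to supply this, since any shortlist omitting a top-$K$ arm is strictly suboptimal by a finite-instance gap.
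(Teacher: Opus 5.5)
Your proposal is correct and follows the paper's proof essentially step for step. It uses the same one-arm alternative with $s'_a=s_K+\varepsilon$ and $v_a=v'_a=1$, the cascaded divergence decomposition reducing the trajectory KL to $\mathbb E[N_a(T)]\,d(s_a\parallel s'_a)$, the change-of-measure bound $\liminf_{T} D/\log T\ge 1$, the combination with Lemma~\ref{lem:observation_bound}, and the limit $\varepsilon\to 0$. The only difference is that you derive the change-of-measure inequality explicitly, via the event $\{M_a(T)\le T/2\}$, the data-processing inequality and positive gaps in both instances, whereas the paper simply cites it from \cite{garivier2011kl,kaufmann2016complexity}.
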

\begin{proof}[Proof of Theorem \ref{thm:score_lower_bounds}, and \ref{thm:main_lower_bound}(case 1)]
Fix a suboptimal item $a \in \{K+1, \dots, L\}$. To lower bound the expected number of observations $\mathbb{E}_{s}[N_a(T)]$, we construct an alternative instance $s'$ that is identical to $s$ for all items $j \neq a$, but modifies item $a$ such that it becomes optimal in $s'$. Specifically, we set $s'_a = s_K + \varepsilon > s_K$ for a small $\varepsilon > 0$, while keeping $s'_j = s_j$ for all $j \neq a$.

To apply Lemma~\ref{lemma:cascaded_divergence_decomposition}, we map our parameter notation directly to the instance parametrization $(u, v)$ in the lemma by identifying $u_j = s_j = \mathbb{P}(Y_j = 1)$ and $v_j = 1$ for all $j \in [L]$. Under the alternative instance, $u'_a = s'_a$, $u'_j = u_j$ for $j \neq a$, and $v'_j = 1$ for all $j \in [L]$.

By Lemma~\ref{lemma:cascaded_divergence_decomposition}, the KL-divergence between the probability measures $\mathcal{P}_s$ and $\mathcal{P}_{s'}$ induced by policy $\mathcal{P}$ over horizon $T$ simplifies as:
\begin{align}
D\left(\mathcal{P}_{s} \parallel \mathcal{P}_{s'}\right) &= \sum_{j=1}^{L} \mathbb{E}_{s}\left[N_j(T)\right] \Big[ D(u_j \parallel u'_j) + u_j D(v_j \parallel v'_j) \Big] \nonumber \\
&= \sum_{j=1}^{L} \mathbb{E}_{s}\left[N_j(T)\right] \Big[ D(s_j \parallel s'_j) + s_j D(1 \parallel 1) \Big] \nonumber \\
&= \mathbb{E}_{s}\left[N_a(T)\right] D(s_a \parallel s'_a),
\end{align}
since $s_j = s'_j$ for all $j \neq a$, and $D(1 \parallel 1) = 0$.

By applying the change-of-measure inequality for consistent policies \cite{kaufmann2016complexity, garivier2011kl} to the alternative instance $s'$, we obtain:
\begin{equation}
\liminf_{T \to \infty} \frac{D\left(\mathcal{P}_{s} \parallel \mathcal{P}_{s'}\right)}{\log T} \ge 1.
\end{equation}

Substituting $D\left(\mathcal{P}_{s} \parallel \mathcal{P}_{s'}\right) = \mathbb{E}_{s}\left[N_a(T)\right] D(s_a \parallel s'_a)$ yields:
\begin{equation}
\label{eq:Na_lower_bound_case1}
\liminf_{T \to \infty} \frac{\mathbb{E}_{s}\left[N_a(T)\right]}{\log T} \ge \frac{1}{D(s_a \parallel s'_a)}.
\end{equation}

Now, applying the observation regret bound from Lemma~\ref{lem:observation_bound}:
\begin{equation}
\mathbb{E}_{s}[R(T)] \ge \tau' \sum_{a=K+1}^L d_a \Delta_{a,K}^s \mathbb{E}_{s}[N_a(T)].
\end{equation}

Dividing both sides by $\log T$, taking the limit inferior as $T \to \infty$, and substituting inequality~\eqref{eq:Na_lower_bound_case1} for each suboptimal arm $a \in \{K+1, \dots, L\}$ yields:
\begin{align}
\liminf_{T \to \infty} \frac{\mathbb{E}_{s}[R(T)]}{\log T} &\ge \tau' \sum_{a=K+1}^L d_a \Delta_{a,K}^s \left( \liminf_{T \to \infty} \frac{\mathbb{E}_{s}[N_a(T)]}{\log T} \right) \nonumber \\
&\ge \tau' \sum_{a=K+1}^L \frac{d_a \Delta_{a,K}^s}{D(s_a \parallel s'_a)}.
\end{align}

Finally, taking $\varepsilon \to 0$ gives $s'_a \to s_K$, which replaces $D(s_a \parallel s'_a)$ with the Bernoulli KL-divergence $d(s_a \parallel s_K)$, completing the proof.
\end{proof}

\section{Lower Bounds in Shared channel regime}
\label{lb:lower_bound}
In this section, Theorem~\ref{thm:score_lower_bounds} (specific to shared channel regime) is restated with complete technical details in Theorem~\ref{thm:cascaded_mab_lower_bound}; both theorems present the underlying score-based regret lower bound.
 we derive the asymptotic regret lower bound specialized to the shared-channel regime of Remark~\ref{rem:shared-channel} ($\epsilon_{0,i} = \epsilon_0$, $\epsilon_{1,i} = \epsilon_1$ for all $i \in [L]$, with $\epsilon_0 + \epsilon_1 < 1$). Let $u_i \triangleq \mathbb{P}(Y_i = 1)$ and $v_i \triangleq \mathbb{P}(X_i = 1 \mid Y_i = 1)$ denote the underlying parameters for arm $i \in \{1, \dots, L\}$, summarized by the parameter vectors $u = (u_1, u_2, \dots, u_L)$ and $v = (v_1, v_2, \dots, v_L)$. Under this setting, $ u_i = 1 - g_i$ and $v_i = r_i / (1 - g_i)$, so $u_i$, $v_i$, and $\mu_i$ all serve as valid score representations yielding identical arm rankings, as discussed in Remark~\ref{rem:shared-channel}.

Re-index arms such that $\mu_1 > \dots > \mu_L$. For any suboptimal arm $a > K$, define the gap $\Delta_{a,K}^\mu \triangleq \mu_K - \mu_a$. Under this shared-channel regime, we define:
\begin{equation}
\label{eq:tau_bar_def}
\begin{aligned}
\bar{\tau} \triangleq \min \Bigg\{ &\left(\frac{1-\varepsilon_1}{1-\varepsilon_0-\varepsilon_1}\right) \left[\tau + (1-K\tau) \prod_{k=1}^{K}(1-u_k)\right], \\
&\qquad (1-\varepsilon_1)(1-K\tau) \Bigg\}.
\end{aligned}
\end{equation}

\begin{lemma}[Cumulative Regret Lower Bound, Shared Channel]
\label{lem:observation_bound_shared}
Under the setup and definitions of Theorem~\ref{thm:score-characterization}, the expected cumulative regret satisfies
\begin{equation}
\label{eq:cumulative_regret_bound_shared}
\mathbb E\bigl[R(T)\bigr]\;\ge\;\bar{\tau}\sum_{a=K+1}^{L}d_a\,\Delta_a^\mu\,\mathbb E\bigl[N_a(T)\bigr],
\qquad \Delta_a^\mu\triangleq \mu_K - \mu_a,
\end{equation}
where $N_a(T)\triangleq\sum_{t=1}^{T}O_t(a)$ is the total observation count of arm $a$ under the executed slates, and the constants $\bar{\tau} > 0$ and $d_a > 0$ are defined as in Theorem~\ref{thm:cascaded_mab_lower_bound}.
\end{lemma}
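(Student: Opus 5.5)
I would mirror the proof of Lemma~\ref{lem:observation_bound} for Case~1, replacing the per-arm score by $\mu_i$ and translating gaps back to $\mu$ through the linear shared-channel relations $u_i = 1-g_i = \epsilon_0 + (1-\epsilon_0-\epsilon_1)\mu_i$ and $r_i = (1-\epsilon_1)\mu_i$ from Remark~\ref{rem:shared-channel}.

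\textbf{Step 1: split off the ordering loss.} By Part~III of Appendix~\ref{app:score_proofs_proof}, sorting any fixed set of arms by $\mu$ (equivalently by $u$) maximizes reward. So, as in Lemma~\ref{lem:regret_decompo}, $R_t \ge R(\pi^\star)-R(\pi_t^*)$, where $\pi_t^*$ is the $\mu$-sorted version of the played shortlist $\pi_t$.

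\textbf{Step 2: exact hybrid telescoping.} I would use the general hybrid decomposition of Lemma~\ref{lem:regret_decomp} with $\pi_t^*$ in place of $\pi_t$. This gives
\[
R(\pi^\star)-R(\pi_t^*)=\sum_m S_{m,\pi_t^*}\,\Delta_m(\pi_t^*(m)),
\qquad
\Delta_m(a)=(1-m\tau)(r_m-r_a)+(g_m-g_a)\Phi_m .
\]
Here $\Phi_m=R_{\pi^\star,m+1}$ is the optimal suffix value.

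Substituting the shared-channel relations,
\[
r_m-r_a=(1-\epsilon_1)(\mu_m-\mu_a),\qquad g_m-g_a=-(1-\epsilon_0-\epsilon_1)(\mu_m-\mu_a).
\]
Hence
\[
\Delta_m(a)=(\mu_m-\mu_a)\big[(1-m\tau)(1-\epsilon_1)-(1-\epsilon_0-\epsilon_1)\Phi_m\big].
\]
Since $\pi_t^*$ is sorted, $\mu_m\ge\mu_{\pi_t^*(m)}$ at every position, so every term is nonnegative. I can therefore drop the positions holding optimal arms and keep only suboptimal $a>K$, for which $\mu_m-\mu_a\ge\Delta_a^\mu$.

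\textbf{Step 3 (main obstacle): lower-bound the bracket by $\bar\tau$.} This is where the shape of $\bar\tau$ has to be reproduced, and I expect it to be the hardest step.

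For $m=K$ we have $\Phi_K=0$, so the bracket equals $(1-\epsilon_1)(1-K\tau)$, which is the second entry of the minimum in $\bar\tau$.

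For $m<K$, I would first write $\Phi_m$ through the sum-by-parts form of Lemma~\ref{lem:sum_by_parts}, in terms of survival products $\prod(1-u_k)$. However, $r_k\neq u_k$ here: the suffix value uses $r_k=v_ku_k$ with $v_k<1$. So I would bound $\Phi_m$ using $r_k=\frac{1-\epsilon_1}{1-\epsilon_0-\epsilon_1}\,(u_k-\epsilon_0)$ and then compare with the Case~1 weight $W_m$.

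The aim is to show that the bracket equals $\frac{1-\epsilon_1}{1-\epsilon_0-\epsilon_1}$ times a $W_m$-like quantity, which is at least $\tau+(1-K\tau)\prod_{k\le K}(1-u_k)$, exactly as in \eqref{eq:Wm_lower_bound}. The constant prefactor in $\bar\tau$ suggests the intended route is to rescale the $\mu$-gap into a $u$-gap, using $u_m-u_a=(1-\epsilon_0-\epsilon_1)(\mu_m-\mu_a)$, and then track the $r$-versus-$u$ discrepancy. I would first try to verify the inequality via the telescoping identity applied to $g$ and $r$ separately. If an $\epsilon_0$-dependent slack appears, I would try to absorb it using $\Phi_m\le 1-(m+1)\tau$ from Lemma~\ref{lem:slot_sub}.

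\textbf{Step 4: pass to the played order and sum.} Next I would compare the survival factor $S_{m,\pi_t^*}$ at the sorted position of $a$ with its survival factor in the actually played $\pi_t$. By the definition of $d_a$ (with scores $u$), this costs at most a factor $d_a$, giving the per-round bound
\[
R_t\ge\bar\tau\sum_{a>K}d_a\Delta_a^\mu\,O_t(a).
\]
Summing over $t$ and taking expectations, with $N_a(T)=\sum_t O_t(a)$, yields \eqref{eq:cumulative_regret_bound_shared}.
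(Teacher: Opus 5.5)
Your Steps 1, 2 and 4 follow the paper's route. Your Step 2 formula is in fact the correct one: writing $c\triangleq 1-\epsilon_0-\epsilon_1$, the bracket is $B_m=(1-\epsilon_1)(1-m\tau)-c\,\Phi_m$, whereas the paper's display of $\Delta_m(a)$ drops the factor $c$.

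The genuine gap is Step 3. The per-position inequality $B_m\ge\bar\tau$ for $m<K$, which you aim for (as does the paper), is false in general. So neither telescoping nor absorbing $\epsilon_0$-slack via $\Phi_m\le 1-(m+1)\tau$ can deliver it. Substituting $c\,\mu_j=u_j-\epsilon_0$ into $\Phi_m$ and summing by parts gives the exact identity
\[
B_m=(1-\epsilon_1)\Big[W_m(u)+\epsilon_0\sum_{j=m+1}^{K}(1-j\tau)\prod_{l=m+1}^{j-1}(1-u_l)\Big],
\]
where $W_m(u)$ is the Case~1 weight \eqref{eq:Wm_definition} evaluated at the scores $u$. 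Its natural prefactor is $(1-\epsilon_1)$, not $(1-\epsilon_1)/c$. The factor $(1-\epsilon_1)/c$ is correct only relative to the $u$-gap $u_m-u_a=c(\mu_m-\mu_a)$, not the $\mu$-gap used in the statement.

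Concretely, take $K=2$, $\tau=0$, $\epsilon_0=0.4$, $\epsilon_1=0.5$, $\mu_1=0.2$, $\mu_2=0.1$. Then $u_1=0.42$, $u_2=0.41$ and $\bar\tau=\min\{5\cdot 0.58\cdot 0.59,\ 0.5\}=0.5$, but $B_1=0.5-0.1\cdot 0.05=0.495<\bar\tau$. The paper reaches $\bar\tau$ only through an algebra slip: after bounding the tail, it effectively replaces $(1-\epsilon_1)(1-m\tau)$ by $\tfrac{1-\epsilon_1}{c}(1-m\tau)$. Following it therefore will not close your Step 3.

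The statement nonetheless holds if you merge Steps 3 and 4, instead of bounding the bracket and the survival ratio separately.

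\emph{Bracket.} The tail estimate you sketch ($u_j-\epsilon_0\le u_j$ and $1-j\tau\le 1-(m+1)\tau$), done without losing $c$, gives $B_m\ge(1-\epsilon_1)\big[\tau+(1-(m+1)\tau)P_m\big]\ge(1-\epsilon_1)(1-K\tau)P_m$, with $P_m\triangleq\prod_{k=m+1}^{K}(1-u_k)$. This also supplies the nonnegativity you used in Step 2.

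\emph{Survival factor.} For a suboptimal $a$ at position $m$ of $\pi_t^*$, the $m-1$ arms ahead of it have $u$-values dominated termwise by $u_1,\dots,u_{m-1}$. Hence $S_{m,\pi_t^*}\ge\prod_{j=1}^{m-1}(1-u_j)$, and using $u_m\ge u_K$,
\[
S_{m,\pi_t^*}\,B_m\ \ge\ (1-\epsilon_1)(1-K\tau)\prod_{j\le K,\ j\ne m}(1-u_j)\ \ge\ (1-\epsilon_1)(1-K\tau)\prod_{j=1}^{K-1}(1-u_j).
\]

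\emph{Conversion to $d_a$.} Evaluating the ratio in the definition of $d_a$ at $\pi=(a,1,\dots,K-1)$ gives $d_a\le\prod_{j=1}^{K-1}(1-u_j)$, and $O_t(a)\le\mathbf 1\{a\in\pi_t\}$. Therefore $R_t\ge(1-\epsilon_1)(1-K\tau)\sum_{a>K}d_a\Delta_a^\mu O_t(a)\ge\bar\tau\sum_{a>K}d_a\Delta_a^\mu O_t(a)$, and summing over $t$ finishes the proof.

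The point is that $d_a$ already carries the full product $\prod_{j<K}(1-u_j)$, which absorbs the factor $P_m$ that the bracket loses. No bound on the bracket alone can achieve this.
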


\begin{proof}
Let $\pi_t^\star$ denote the ideal descending sort of the arms selected in $\pi_t$ according to
their mean parameters $\mu_i$, so that $\mu_{\pi_t^\star(1)}\ge\dots\ge\mu_{\pi_t^\star(K)}$. Since
$\pi_t^\star$ orders the same set of arms according to the score $\mu_i$, we have
$R_{\pi_t^\star}\ge R_{\pi_t}$. Adding and subtracting $R_{\pi_t^\star}$ yields the single-step
reduction
\begin{equation}
\label{eq:reduction_shared}
R_t = R_{\pi^\star}-R_{\pi_t}
=\underbrace{\bigl(R_{\pi^\star}-R_{\pi_t^\star}\bigr)}_{\ge 0}
+\underbrace{\bigl(R_{\pi_t^\star}-R_{\pi_t}\bigr)}_{\ge 0}
\;\ge\; R_{\pi^\star}-R_{\pi_t^\star}.
\end{equation}

Applying Lemma~\ref{lem:regret_decomp} to the sorted slate $\pi_t^\star$ decomposes the single-step
regret into position-wise components,
\begin{equation}
\label{eq:sorted_decomp_shared}
R_{\pi^\star}-R_{\pi_t^\star}=\sum_{m=1}^{K}S_{m,\pi_t^\star}\,\Delta_m\bigl(\pi_t^\star(m)\bigr),
\qquad
S_{m,\pi_t^\star}=\prod_{j=1}^{m-1}\bigl(1-u_{\pi_t^\star(j)}\bigr),
\end{equation}
with $S_{1,\pi_t^\star}=1$. Since
$r_{\pi^\star(m)}-r_a=(1-\varepsilon_1)(\mu_m^\star-\mu_a)$ and
$g_{\pi^\star(m)}-g_a=-(1-\varepsilon_0-\varepsilon_1)(\mu_m-\mu_a)$, the single-position gap
factors directly as
\begin{equation}
\label{eq:delta_m_shared}
\Delta_m(a)=(\mu_m-\mu_a)\Bigl[(1-\varepsilon_1)(1-m\tau)-R_{\pi^\star,m+1}\Bigr].
\end{equation}

For $m<K$, we bound the tail reward $R_{\pi^\star,m+1}$ using
$u_j-\varepsilon_0=(1-\varepsilon_0-\varepsilon_1)\mu_j\le u_j$ and
$1-j\tau\le 1-(m+1)\tau$ for all $j\ge m+1$:
\begin{align}
R_{\pi^\star,m+1}
&=(1-\varepsilon_1)\sum_{j=m+1}^{K}(1-j\tau)\,\mu_j\prod_{l=m+1}^{j-1}\bigl(1-u_l\bigr)\notag\\
&=\left(\frac{1-\varepsilon_1}{1-\varepsilon_0-\varepsilon_1}\right)
\sum_{j=m+1}^{K}(1-j\tau)\bigl(u_j-\varepsilon_0\bigr)\prod_{l=m+1}^{j-1}\bigl(1-u_l\bigr)\notag\\
&\le\left(\frac{1-\varepsilon_1}{1-\varepsilon_0-\varepsilon_1}\right)\bigl(1-(m+1)\tau\bigr)
\left(1-\prod_{l=m+1}^{K}\bigl(1-u_l\bigr)\right).
\end{align}
Substituting this upper bound into \eqref{eq:delta_m_shared} gives, for every $m<K$,
\begin{align}
(1-\varepsilon_1)(1-m\tau)-R_{\pi^\star,m+1}
&\ge\left(\frac{1-\varepsilon_1}{1-\varepsilon_0-\varepsilon_1}\right)
\left[\tau+\bigl(1-(m+1)\tau\bigr)\prod_{l=m+1}^{K}\bigl(1-u_l\bigr)\right]\notag\\
&\ge\left(\frac{1-\varepsilon_1}{1-\varepsilon_0-\varepsilon_1}\right)
\left[\tau+(1-K\tau)\prod_{k=1}^{K}\bigl(1-u_k\bigr)\right],
\end{align}
while for $m=K$ we have $R_{\pi^\star,K+1}=0$, so the bracket in \eqref{eq:delta_m_shared} equals $(1-\varepsilon_1)(1-K\tau)$. Taking the minimum of these two expressions gives exactly $\bar{\tau}$:
\begin{equation}
\label{eq:tau_bar_shared_explicit}
\bar{\tau} \;\triangleq\; \min\left\{
\left(\frac{1-\varepsilon_1}{1-\varepsilon_0-\varepsilon_1}\right)
\left[\tau+(1-K\tau)\prod_{k=1}^{K}\bigl(1-u_k\bigr)\right],
\;(1-\varepsilon_1)(1-K\tau)\right\} > 0,
\end{equation}
and yields the uniform gap bound
\begin{equation}
\label{eq:delta_ge_taubar}
\Delta_m(a)\;\ge\;\bar{\tau}\,(\mu_m-\mu_a)
\qquad\text{whenever }\mu_m\ge\mu_a.
\end{equation}

Because $\pi_t^\star$ is ordered in descending order of $\mu$, we have
$\mu_m\ge\mu_{\pi_t^\star(m)}$ for all $m\in[K]$, so every summand in
\eqref{eq:sorted_decomp_shared} is nonnegative. Discarding the positions occupied by optimal arms
and applying $\mu_m-\mu_a\ge\mu_K-\mu_a$ for $m\le K$,
\begin{align}
R_{\pi^\star}-R_{\pi_t^\star}
&\ge\bar{\tau}\sum_{m=1}^{K}S_{m,\pi_t^\star}\sum_{a=K+1}^{L}
\mathbb I\{\pi_t^\star(m)=a\}\,(\mu_m-\mu_a)\notag\\
&\ge\bar{\tau}\sum_{a=K+1}^{L}\bigl(\mu_K-\mu_a\bigr)\,
\mathbb I\{a\in\pi_t\}\prod_{j=1}^{(\pi_t^\star)^{-1}(a)-1}\bigl(1-u_{\pi_t^\star(j)}\bigr).
\end{align}

Finally, relating the observation probabilities under the sorted slate $\pi_t^\star$ back to the executed slate $\pi_t$ through the position-substitution constant
\begin{equation}
\label{eq:da__}
d_a \triangleq \min_{\pi \in \mathcal S_K(a)} \frac{\prod_{j=1}^{(\pi^\star(\pi))^{-1}(a)-1}(1 - u_{\pi^\star(\pi)(j)})}{\prod_{j=1}^{\pi^{-1}(a)-1}(1 - u_{\pi(j)})},
\end{equation}
where $\mathcal S_K(a)$ is the set of all ordered $K$-tuples containing arm $a$, and $\pi^\star(\pi)$ reorders $\pi$ (note that $\pi^\star(\pi)$ is not same as $ \pi^\star$) in descending score order.
we obtain
\begin{equation}
\mathbb I\{a\in\pi_t\}\prod_{j=1}^{(\pi_t^\star)^{-1}(a)-1}\bigl(1-u_{\pi_t^\star(j)}\bigr)
\;\ge\;d_a\,\mathbb I\{a\in\pi_t\}\prod_{j=1}^{\pi_t^{-1}(a)-1}\bigl(1-u_{\pi_t(j)}\bigr)
=d_a\,O_t(a).
\end{equation}
Combining this with the single-step bound, summing over $t=1,\dots,T$, taking expectations, and substituting $N_a(T)=\sum_{t=1}^{T}O_t(a)$ completes the proof.
\end{proof}
\begin{theorem}[Regret Lower Bound Under a Shared Channel]
\label{thm:cascaded_mab_lower_bound}
Suppose $\varepsilon_{0,i}=\varepsilon_0$ and $\varepsilon_{1,i}=\varepsilon_1$ for every arm $i\in[L]$ with $\varepsilon_0+\varepsilon_1<1$, Any uniformly consistent policy $\mathcal{P}$ satisfies:
\begin{align}
\label{eq:asymptotic_lower_bound}
\liminf_{T \to \infty} \frac{\mathbb{E}_{(u,v)}[\mathcal R(T)]}{\log T} &\ge \bar{\tau} \sum_{a=K+1}^L d_a \Delta_{a,K}^\mu \times \nonumber \\
&\quad  \Big[ d(u_a \parallel u_K) + u_a \, d(v_a \parallel v_K) \Big]^{-1}.
\end{align}
where $d_a$ in equation~\ref{eq:da_def} with $u_i$ substituted for $s_i$ and $\bar{\tau}$ is given by \eqref{eq:tau_bar_def}.
\end{theorem}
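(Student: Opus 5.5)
The proof mirrors the Case~1 lower bound (Theorem~\ref{thm:main_lower_bound}). The regret side is already done by Lemma~\ref{lem:observation_bound_shared}. What remains is to show, for each suboptimal arm $a\in\{K+1,\dots,L\}$,
\[
\liminf_{T\to\infty}\frac{\mathbb E_{(u,v)}[N_a(T)]}{\log T}\ \ge\ \big[d(u_a\parallel u_K)+u_a\,d(v_a\parallel v_K)\big]^{-1},
\]
and then combine the two exactly as in the Case~1 argument.

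\textbf{Alternative instance.} Fix $a>K$. I would keep the shared channel $(\varepsilon_0,\varepsilon_1)$ and perturb only $\mu_a$ to $\mu'_a=\mu_K+\varepsilon$. All other arms are left unchanged.

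\begin{itemize}
\item Since $u_i=\varepsilon_0+(1-\varepsilon_0-\varepsilon_1)\mu_i$ and $v_i=(1-\varepsilon_1)\mu_i/u_i$ are both strictly increasing functions of $\mu_i$ (Remark~\ref{rem:shared-channel}, Part~III of Appendix~\ref{app:score_proofs_proof}), the modified arm $a$ has $u'_a>u_K$ and $v'_a>v_K$.
\item The perturbed instance is still shared-channel, so ranking by $\mu$ remains optimal there. Arm $a$ therefore enters the optimal set and arm $K$ leaves it.
\item As $\varepsilon\to0$, we get $(u'_a,v'_a)\to(u_K,v_K)$.
\end{itemize}

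Keeping the channel common is the key design choice. A free perturbation of $(u_a,v_a)$ could break the shared-channel structure, and the score-optimality of the alternative instance would then be unclear.

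\textbf{Change of measure.} By Lemma~\ref{lemma:cascaded_divergence_decomposition}, with only arm $a$ differing,
\[
D(\mathcal P_{(u,v)}\parallel\mathcal P_{(u',v')})=\mathbb E[N_a(T)]\big[d(u_a\parallel u'_a)+u_a\,d(v_a\parallel v'_a)\big].
\]
Under the alternative instance, $\pi^\star$ is suboptimal. Uniform consistency on both instances and the standard inequality of \cite{garivier2011kl,kaufmann2016complexity} then give $\liminf_T D/\log T\ge1$. This is the usual argument: apply the data-processing bound to the event that the arm set $\{1,\dots,K\}$ is played at least $T/2$ times, and note that this event has vanishing probability under one instance and probability tending to one under the other. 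Dividing by the KL factor gives the per-arm bound on $\mathbb E[N_a(T)]$ with $(u'_a,v'_a)$. Letting $\varepsilon\to0$ and using continuity of $d(\cdot\parallel\cdot)$ in its second argument replaces $(u'_a,v'_a)$ by $(u_K,v_K)$.

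\textbf{Combining.} Substitute the per-arm bound into $\mathbb E[R(T)]\ge\bar\tau\sum_a d_a\Delta^\mu_{a,K}\,\mathbb E[N_a(T)]$ from Lemma~\ref{lem:observation_bound_shared}, divide by $\log T$, and take $\liminf$. The $\liminf$ is superadditive over the finitely many nonnegative terms, which gives \eqref{eq:asymptotic_lower_bound}. Positivity of $\bar\tau$, $d_a$ and $\Delta^\mu_{a,K}$ yields the $\Omega((L-K)\log T)$ rate.

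\textbf{Main obstacle.} The delicate step is making sure the KL decomposition is used on the right observation model. In the cascade, $X_a$ is revealed only when $Y_a=1$. The state space \eqref{states} of Lemma~\ref{for_perm} encodes exactly this, which is why the $v$-term carries the weight $u_a$. I would verify that the trajectory likelihood factorises with the policy terms cancelling, and that the counts $N_a(T)$ appearing in the KL identity coincide with those in Lemma~\ref{lem:observation_bound_shared}. Both use expected observation probabilities, so they should agree.
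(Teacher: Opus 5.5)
Your proposal is correct and follows the paper's proof. Both arguments use the same one-arm alternative instance, the divergence decomposition of Lemma~\ref{lemma:cascaded_divergence_decomposition}, the change-of-measure inequality for uniformly consistent policies, and the combination with Lemma~\ref{lem:observation_bound_shared} before letting the perturbation vanish. The only difference is the choice of alternative. The paper perturbs $(u_a,v_a)$ freely to $(u_K+\delta_1,v_K+\delta_2)$, which generally leaves the shared-channel class and so relies on consistency over all instances; $\pi^\star$ still becomes suboptimal there because $r'_a>r_K$ makes substituting $a$ for arm $K$ in the last slot strictly better. Your move of $\mu_a$ to $\mu_K+\varepsilon$ stays inside the shared-channel class and yields the same limiting constant.
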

\begin{proof}[Proof of Theorem \ref{thm:score_lower_bounds} (shared channel regime )and~\ref{thm:cascaded_mab_lower_bound}]
Fix a suboptimal item $a \in \{K+1, \dots, L\}$. To bound the expected number of observations $\mathbb{E}_{(u,v)}[N_a(T)]$, we construct an alternative instance $(u', v')$ that is identical to $(u, v)$ for all items $j \neq a$, but simultaneously modifies item $a$ such that it becomes optimal in $(u', v')$. Specifically, we set $u'_a = u_K + \delta_1 > u_K$ and $v'_a = v_K + \delta_2 > v_K$ for small $\delta_1, \delta_2 > 0$, while keeping $(u'_j, v'_j) = (u_j, v_j)$ for all $j \neq a$.

By Lemma~\ref{lemma:cascaded_divergence_decomposition}, the KL-divergence between the probability measures $\mathcal{P}_{(u,v)}$ and $\mathcal{P}_{(u',v')}$ induced by policy $\mathcal{P}$ over horizon $T$ simplifies to:
\begin{align}
D\left(\mathcal{P}_{(u,v)} \parallel \mathcal{P}_{(u',v')}\right) &= \sum_{j=1}^{L} \mathbb{E}_{(u,v)}\left[N_j(T)\right] \Big[ d(u_j \parallel u'_j) + u_j d(v_j \parallel v'_j) \Big] \nonumber \\
&= \mathbb{E}_{(u,v)}\left[N_a(T)\right] \Big[ d(u_a \parallel u'_a) + u_a d(v_a \parallel v'_a) \Big],
\end{align}
since $(u_j, v_j) = (u'_j, v'_j)$ for all $j \neq a$, causing all other terms in the summation to evaluate to zero.

By applying the change-of-measure inequality for consistent policies \cite{kaufmann2016complexity, garivier2011kl} to the alternative instance $(u', v')$, we directly obtain:
\begin{equation}
\liminf_{T \to \infty} \frac{D\left(\mathcal{P}_{(u,v)} \parallel \mathcal{P}_{(u',v')}\right)}{\log T} \ge 1.
\end{equation}

Substituting the divergence decomposition from Lemma~\ref{lemma:cascaded_divergence_decomposition} into the inequality yields:
\begin{equation}
\label{eq:Na_lower_bound}
\liminf_{T \to \infty} \frac{\mathbb{E}_{(u,v)}\left[N_a(T)\right]}{\log T} \ge \frac{1}{d(u_a \parallel u'_a) + u_a d(v_a \parallel v'_a)}.
\end{equation}

Now, applying the single-step observation regret bound from Lemma~\ref{lem:observation_bound_shared}:
\begin{equation}
\mathbb{E}_{(u,v)}[R(T)] \ge \bar{\tau} \sum_{a=K+1}^L d_a \Delta_{a,K}^\mu \mathbb{E}_{(u,v)}[N_a(T)].
\end{equation}

Dividing both sides by $\log T$, taking the limit inferior as $T \to \infty$, and substituting inequality~\eqref{eq:Na_lower_bound} for each suboptimal arm $a \in \{K+1, \dots, L\}$ yields:
\begin{align}
\liminf_{T \to \infty} \frac{\mathbb{E}_{(u,v)}[R(T)]}{\log T} &\ge \bar{\tau} \sum_{a=K+1}^L d_a \Delta_{a,K}^\mu \left( \liminf_{T \to \infty} \frac{\mathbb{E}_{(u,v)}[N_a(T)]}{\log T} \right) \nonumber \\
&\ge \bar{\tau} \sum_{a=K+1}^L \frac{d_a \Delta_{a,K}^\mu}{d(u_a \parallel u'_a) + u_a d(v_a \parallel v'_a)}.
\end{align}
Taking $\delta_1 \to 0$ and $\delta_2 \to 0$ gives $u'_a \to u_K$ and $v'_a \to v_K$, completing the proof of Theorem~\ref{thm:cascaded_mab_lower_bound}.
\end{proof}

\section{Lower Bound for the General Regime}
\label{sec:general-lb}

Beyond the case 1 and shared-channel setting, we consider the instance-dependent regret lower bound for the general no-score cascading model. Let $R_{(u, v)}(\pi)$ denote the expected reward of shortlist $\pi$ according to \eqref{eq:reward_rg} with $r_i = u_i v_i$ and $g_i = 1 - u_i$.

The KL-divergence of shortlist $\pi$, denoted $D_\pi((u, v) \parallel (u', v'))$, is established in Lemma~\ref{for_perm} (detailed in Appendix~\ref{llowerb}):
\begin{equation}
D_\pi((u, v) \parallel (u', v')) = \sum_{i=1}^{K} \left[ \prod_{s=1}^{i-1} (1 - u_{\pi(s)}) \right] \Big[ d(u_{\pi(i)} \parallel u'_{\pi(i)}) + u_{\pi(i)} d(v_{\pi(i)} \parallel v'_{\pi(i)}) \Big].
\end{equation}

We define the set of alternative parameters $B(u, v)$ as those parameter configurations $(u', v')$ where a suboptimal shortlist becomes strictly superior to $\pi^*$, while remaining indistinguishable under $\pi^*$:
\begin{equation}
B(u, v) \triangleq \Big\{ (u', v') : D_{\pi^*}\big((u, v) \parallel (u', v')\big) = 0 \quad \text{and} \quad \exists \, \pi \neq \pi^* \text{ s.t. } R_{(u', v')}(\pi) > R_{(u, v)}(\pi^*) \Big\}.
\end{equation}

By the lower bound framework of \cite{lai1985asymptotically}, any uniformly consistent policy must satisfy:
\begin{equation}
\liminf_{T \to \infty} \frac{\mathbb{E}_{(u,v)}[\mathcal R(T)]}{\log T} \ge c(u, v),
\end{equation}
where $c(u, v)$ is the optimal value of the following linear optimization problem:
\begin{align}
c(u, v) = \inf_{\{c_\pi \ge 0\}} \quad & \sum_{\pi \neq \pi^*} c_\pi \big(R_{(u, v)}(\pi^*) - R_{(u, v)}(\pi)\big) \label{eq:gen_lb_lp} \\
\text{s.t.} \quad & \sum_{\pi \neq \pi^*} c_\pi D_\pi\big((u, v) \parallel (u', v')\big) \ge 1, \quad \forall \, (u', v') \in B(u, v). \nonumber
\end{align}

Obtaining an explicit lower bound in the general regime requires solving the optimization problem in \eqref{eq:gen_lb_lp} over all valid permutations $\pi$. Due to the non-linear coupling between $u_i$ and $v_i$ across arbitrary noise configurations, expressing the regret decomposition directly in terms of $\mathbb{E}[N_a(T)]$ as in \eqref{eq:cumulative_regret_bound_shared} is difficult, rendering an exact analytical solution to \eqref{eq:gen_lb_lp} for general parameters $(u, v)$ challenging to establish.

\section{Baseline Algorithms for Experiments}
\label{exps}
In this section, we provide detailed descriptions of the baseline algorithms used for performance comparison against our proposed methods.
\subsection{Naive Shortlist Bandit (Perm-UCB)}
A naive approach reduces the problem to a standard multi-armed bandit with action space $\mathcal{P}_{L,K}$ of size $|\mathcal{P}_{L,K}| = \frac{L!}{(L-K)!}$. Algorithm~\ref{perm_algo} applies UCB1 \cite{auer2002finite} directly over these meta-arms. We define $w_t \in [0, 1]$ as the instantaneous reward obtained at round $t$ by playing shortlist $\pi_t$ (with expected reward $w_\pi \triangleq \mathbb{E}[w_t \mid \pi_t = \pi]$), and $\hat{w}_\pi(t) \triangleq \frac{\sum_{\tau=1}^{t-1} w_\tau \mathbb{I}(\pi_\tau = \pi)}{\max(1, N_\pi(t))}$ as its empirical mean reward prior to round $t$.

\begin{algorithm}[H]
\caption{Naive UCB over Shortlists (Perm-UCB)} \label{perm_algo}
\begin{algorithmic}[1]
\State \textbf{Input:} Candidate arms $L$, shortlist length $K \le L$, action space $\mathcal{P}_{L,K}$
\State Initialize $\hat{w}_\pi(1)=0$ and $N_\pi(1)=0$ for all $\pi \in \mathcal{P}_{L,K}$
\For{$t = 1$ to $T$}
    \State Select $\pi_t = \arg\max_{\pi \in \mathcal{P}_{L,K}} \left( \hat{w}_\pi(t) + \sqrt{\frac{2 \log t}{\max(1, N_\pi(t))}} \right)$
    \State Play shortlist $\pi_t$, observe feedback $r_t$, and receive instantaneous reward $w_t$
    \State Update $N_{\pi_t}(t+1) = N_{\pi_t}(t) + 1$ and $\hat{w}_{\pi_t}(t+1) = \frac{N_{\pi_t}(t)\hat{w}_{\pi_t}(t) + w_t}{N_{\pi_t}(t+1)}$
    \State Carry forward $N_\pi(t+1) = N_\pi(t)$ and $\hat{w}_\pi(t+1) = \hat{w}_\pi(t)$ for all $\pi \neq \pi_t$
\EndFor
\end{algorithmic}
\end{algorithm}

Treating shortlists independently yields a gap-independent regret bound of $\mathcal{O}\left(\sqrt{\frac{L!}{(L-K)!} T \log T}\right)$, which scales poorly for large $L$ and $K$.

\subsection{Cascaded Correlated UCB (Corr-UCB)}
\label{corr}
To exploit overlap across shortlists, we adapt the C-UCB framework \cite{gupta2021multi}. Let $\mathcal{Z}_t = \{(X_i, Y_i)\}_{i=1}^{L}$ denote the latent realization at round $t$. Executing $\pi_t$ yields partial feedback $r_t = r(\pi_t, \mathcal{Z}_t)$. The consistency set of compatible latent states is:
\[
\Omega(r_t \mid \pi_t) = \{ \mathcal{Z} : r(\pi_t, \mathcal{Z}) = r_t \}.
\]
For any target shortlist $\pi'$, the optimistic conditional upper bound is $\gamma_{\pi', \pi_t, r_t} \triangleq \max_{\mathcal{Z} \in \Omega(r_t \mid \pi_t)} w(\pi', \mathcal{Z})$. Computationally, $\gamma_{\pi, \pi_t, r_t}$ is evaluated by scanning $\pi = (a_1, \dots, a_K)$ left-to-right until the first arm $a_{i^*}$ that is either unobserved in $r_t$ (assume $Y=1, X=1$) or terminates with $Y=1$, yielding $1 - i^*\tau$ (or $0$ if $X=0$).

\begin{example}[Evaluation of Optimistic Bound $\gamma_{\pi, \pi_t, r_t}$]
Consider a candidate pool $L=4$ with items $\{1, 2, 3, 4\}$, shortlist size $K=3$, and lookahead penalty $\tau = 0.1$. Suppose at round $t$, the executed shortlist is $\pi_t = (1, 2, 3)$ and the observed feedback vector is $r_t = \{(1, Y_1=0), (2, Y_2=0), (3, Y_3=1, X_3=0)\}$. This implies items $1$ and $2$ were skipped, while item $3$ was evaluated ($Y_3=1$) but yielded no reward ($X_3=0$). Item $4$ was unobserved.
\begin{itemize}
    \item \textbf{Target shortlist $\pi_A = (1, 2, 4)$:} Items $1$ and $2$ are known to have $Y_1=Y_2=0$. Item $4$ is unobserved, so the optimistic state sets $Y_4=1, X_4=1$. Scanning $\pi_A$ skips positions 1 and 2, evaluating item $4$ at position $i^*=3$, yielding $\gamma_{\pi_A, \pi_t, r_t} = 1 - 3(0.1) = 0.7$.
    \item \textbf{Target shortlist $\pi_B = (4, 1, 2)$:} Item $4$ appears at position $1$. Under the optimistic state ($Y_4=1, X_4=1$), the cascade terminates immediately at $i^*=1$, yielding $\gamma_{\pi_B, \pi_t, r_t} = 1 - 1(0.1) = 0.9$.
    \item \textbf{Target shortlist $\pi_C = (1, 2, 3)$:} Items $1$ and $2$ are skipped, and item $3$ at position $i^*=3$ has $X_3=0$. Thus, $\gamma_{\pi_C, \pi_t, r_t} = 0$.
\end{itemize}
\end{example}

The expected upper bound is $\psi_{\pi, \pi'} \triangleq \mathbb{E}_{\pi'}[\gamma_{\pi, \pi', r_t}]$, where $w_\pi \le \psi_{\pi, \pi'}$.

\begin{algorithm}[H]
\caption{Cascaded Correlated Upper Confidence Bound (Corr-UCB)}
\label{alg:c_ucb_shortlist}
\begin{algorithmic}[1]
\State \textbf{Input:} Action space $\mathcal{P}_{L,K}$, lookahead cost $\tau$
\State Initialize $N_{\pi}(1) = 0$, $\hat{w}_{\pi}(1) = 0$, and $\hat{\psi}_{\pi,\pi'}(1) = 0$ for all $\pi, \pi' \in \mathcal{P}_{L,K}$
\For{$t=1,\ldots,T$}
    \State Compute $U_{\pi}(t) = \hat{w}_{\pi}(t) + \sqrt{\frac{2 \log t}{\max(1, N_{\pi}(t))}}$ for all $\pi \in \mathcal{P}_{L,K}$
    \State Define explored set $S_t = \{ \pi \in \mathcal{P}_{L,K} : N_{\pi}(t) \geq \frac{t-1}{|\mathcal{P}_{L,K}|} \}$ and empirical leader $\pi_{\text{emp}}(t) = \arg\max_{\pi \in S_t} \hat{w}_{\pi}(t)$
    \State Form empirical competitive set $A_t = \{ \pi \in \mathcal{P}_{L,K} : \min_{\ell \in S_t} \hat{\psi}_{\pi, \ell}(t) \geq \hat{w}_{\pi_{\text{emp}}}(t) \}$
    \State Select and execute shortlist $\pi_t = \arg\max_{\pi \in A_t \cup \{\pi_{\text{emp}}(t)\}} U_{\pi}(t)$
    \State Observe feedback $r_t$ and instantaneous reward $w_t$; update $N_{\pi_t}(t+1)$ and $\hat{w}_{\pi_t}(t+1)$
    \State Update bounds $\hat{\psi}_{\pi, \pi_t}(t+1) = \frac{1}{N_{\pi_t}(t+1)}\sum_{\tau=1}^{t} \mathbb{I}(\pi_\tau = \pi_t) \gamma_{\pi, \pi_\tau , r_\tau}$ for all $\pi \in \mathcal{P}_{L,K}$
\EndFor
\end{algorithmic}
\end{algorithm}

\subsection{Structural Bottleneck and Regret Analysis}
\label{corr1}

Following \citet{gupta2021multi}, an action $\pi$ is \textbf{non-competitive} if $\delta_{\pi, \pi^*} \triangleq w_{\pi^*} - \psi_{\pi, \pi^*} > 0$, and \textbf{competitive} if $\delta_{\pi, \pi^*} \le 0$. Standard C-UCB achieves $\mathcal{O}(C \log T)$ regret, where $C$ is the number of competitive actions. However, Theorem~\ref{competitive proof} demonstrates that correlated feedback provides no pruning power in cascade models.

\begin{theorem}\label{competitive proof}[Universal Competitiveness of Shortlists]
Let $\pi^* = (a_1, \ldots, a_K) \in \mathcal{P}_{L,K}$ denote the optimal shortlist. For each $m \in \{1, \ldots, K\}$, let $p_m \triangleq (\prod_{t=1}^{m-1} g_{a_t}) r_{a_m}$, so that $w_{\pi^*} = \sum_{m=1}^{K} p_m(1 - m\tau)$. For any shortlist $\pi \in \mathcal{P}_{L,K}$, let $j(\pi, m) \triangleq \min ( \{ k \in \{1, \dots, K\} : \pi_k \notin \{a_1, \ldots, a_{m-1}\} \} \cup \{K+1\} )$. Then, $\psi_{\pi, \pi^*} \ge w_{\pi^*}$.
\end{theorem}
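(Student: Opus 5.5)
The plan is to compute $\psi_{\pi,\pi^*}=\mathbb{E}_{\pi^*}[\gamma_{\pi,\pi^*,r_t}]$ directly. We condition on the cascade outcome produced by playing $\pi^*=(a_1,\dots,a_K)$ and lower bound $\gamma$ pointwise on each outcome, so that summing recovers $w_{\pi^*}=\sum_m p_m(1-m\tau)$ term by term.

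First, partition the feedback space of $\pi^*$ into disjoint events.
\begin{itemize}
\item $E_m^1$: $Y_{a_1}=\dots=Y_{a_{m-1}}=0$, $Y_{a_m}=1$, $X_{a_m}=1$.
\item $E_m^0$: the same, but with $X_{a_m}=0$.
\item $E_\emptyset$: every $Y_{a_k}=0$.
\end{itemize}
By independence across arms and the definitions in \eqref{eq:rg-def}, $\mathbb{P}(E_m^1)=\big(\prod_{t<m}g_{a_t}\big)\,r_{a_m}=p_m$. Since $\gamma\ge 0$ always (every reward value is nonnegative because $\tau\le 1/K$), it suffices to show $\gamma_{\pi,\pi^*,r_t}\ge 1-m\tau$ on $E_m^1$. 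Then $\psi_{\pi,\pi^*}\ge\sum_m p_m(1-m\tau)=w_{\pi^*}$.

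The core combinatorial step is to fix $m$ and work on $E_m^1$. The scan of $\pi$ described before the theorem skips exactly those leading positions of $\pi$ occupied by arms known to have $Y=0$, namely arms in $\{a_1,\dots,a_{m-1}\}$. It therefore stops at position $j(\pi,m)$. We claim $j(\pi,m)\le m$. The first $j(\pi,m)-1$ entries of $\pi$ are distinct elements of a set of size $m-1$, so $j(\pi,m)-1\le m-1$. In particular $j(\pi,m)\neq K+1$, because $m\le K$.

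The arm $b=\pi_{j(\pi,m)}$ is not in $\{a_1,\dots,a_{m-1}\}$, so there are two cases.
\begin{itemize}
\item $b=a_m$: its revealed state is $Y=1$, $X=1$ on $E_m^1$.
\item $b$ is unobserved under $r_t$ (any arm outside $\{a_1,\dots,a_m\}$): the maximizing latent state sets $Y_b=X_b=1$.
\end{itemize}
In either case the cascade for $\pi$ halts at $j(\pi,m)$ with a success. Hence $\gamma\ge 1-j(\pi,m)\tau\ge 1-m\tau$, since $j(\pi,m)\le m$.

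The only delicate point is confirming that the maximization over $\Omega(r_t\mid\pi^*)$ is consistent with these assignments, that is, that it may set unobserved arms freely while respecting the revealed values. This follows directly from the definition of the consistency set, because unobserved arms do not constrain $r(\pi^*,\mathcal{Z})$. I expect this, together with the counting argument $j(\pi,m)\le m$, to be the main place where care is needed.

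Taking expectations over the disjoint events completes the argument. Every shortlist is thus competitive, so C-UCB gains no pruning and its regret order stays $O\big(\tfrac{L!}{(L-K)!}\log T\big)$.
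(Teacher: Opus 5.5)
Your proposal is correct and follows the paper's proof essentially step for step: condition on the events where $\pi^*$ stops with a success at position $m$ (each of probability $p_m$), drop the remaining nonnegative terms, and use the pigeonhole argument $j(\pi,m)\le m$ to get $\gamma_{\pi,\pi^*,r}\ge 1-m\tau$ on each such event. Your explicit case split on whether $\pi_{j(\pi,m)}$ equals $a_m$ or is an unobserved arm, together with the check that the consistency set allows the optimistic assignment, spells out a step the paper states in a single line.
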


\begin{proof}
Fix $\pi \in \mathcal{P}_{L,K}$. Conditioning on the stopping outcome of $\pi^*$ gives:
\[
\psi_{\pi, \pi^*} = \mathbb{E}\left[\gamma_{\pi,\pi^*,r}\right] \ge \sum_{m=1}^{K} \mathbb{E}\left[ \gamma_{\pi,\pi^*,r} \;\middle|\; Y_{a_1}=0, \dots, Y_{a_{m-1}}=0, X_{a_m}=1, Y_{a_m}=1 \right] p_m,
\]
where non-negative zero-reward terms are dropped. Under this event, the optimistic evaluation scans $\pi$ left-to-right and stops at position $j(\pi, m) \le K$ outside $F_m = \{a_1, \ldots, a_{m-1}\}$, yielding reward $1 - j(\pi, m)\tau$. 

Since $|F_m| = m-1$, the first $m$ positions of $\pi$ cannot all lie in $F_m$ by the pigeonhole principle. Hence, $j(\pi, m) \le m$, implying $1 - j(\pi, m)\tau \ge 1 - m\tau$. Weighting by $p_m \ge 0$ and summing over $m$ yields:
\[
\psi_{\pi, \pi^*} \ge \sum_{m=1}^{K} p_m \bigl(1 - j(\pi, m)\tau\bigr) \ge \sum_{m=1}^{K} p_m (1 - m\tau) = w_{\pi^*},
\]
completing the proof.
\end{proof}

\begin{remark}[Collapse of Pruning]
Theorem~\ref{competitive proof} establishes $\delta_{\pi, \pi^*} \le 0$ for all $\pi \in \mathcal{P}_{L,K}$. Thus, \textit{every} shortlist is competitive ($C = |\mathcal{P}_{L,K}| = \frac{L!}{(L-K)!}$). Direct mapping of cascade feedback onto correlated multi-armed bandits yields zero structural pruning, leading to an expanded regret bound of $\mathcal{O}\left(\frac{L!}{(L-K)!} \log T\right)$.
\end{remark}

\subsection{Score-based Evaluation}
\begin{algorithm}[H]
\label{score_algo}
\caption{Top-$K$-$P(X=1|Y=1)$-UCB: Indexing via $\frac{U_i^r(t)}{1 - U_i^g(t)}$}
\label{alg:top_k_ratio_ucb}
\begin{algorithmic}[1]
\State \textbf{Input:} Candidate arms $L$, shortlist size $K \le L$, small constant $\delta > 0$
\State Initialize $N_{i}(1) = 0$, $\hat{r}_i(1) = 0$, and $\hat{g}_i(1) = 0$ for all $i \in [L]$
\For{$t = 1$ to $T$}
    \State Compute $U_i^r(t)$ and $U_i^g(t)$ via \eqref{eq:ucb_rg} for all $i \in [L]$
    \State Form shortlist $\pi_t = (a_1, \dots, a_K)$ sorting top-$K$ indices $I_i(t) = \frac{U_i^r(t)}{1 - U_i^g(t)}$ such that $I_{a_1}(t) \ge \dots \ge I_{a_K}(t)$
    \State Play $\pi_t$, observe cascade feedback up to evaluated item $a_{m^*}$ (or end of list $K$), and update $N_i(t+1)$
    \State Update empirical estimates $\hat{r}_i(t+1)$ and $\hat{g}_i(t+1)$ via \eqref{eq:emp_rg} for observed arms $i$; carry forward unobserved arms
\EndFor
\end{algorithmic}
\end{algorithm}

\section{Additional Experimental Setup \& Supplementary Results}
\subsection{Synthetic supplementary:}
\label{app:extra_experiments}

\begin{figure*}[htbp]
    \centering
    \begin{subfigure}[b]{0.32\textwidth}
        \centering
        \includegraphics[width=\textwidth]{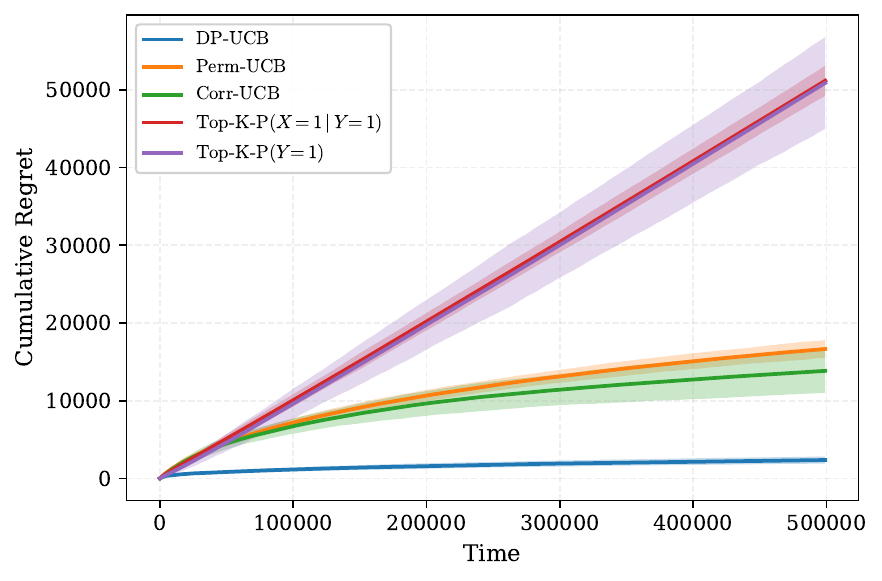}
        \caption{Case 2: $\tau = 0, K = 3 < L, \pi^* = (2, 4, 5)$}
        \label{fig:app_case2_tau0}
    \end{subfigure}
    \hfill
    \begin{subfigure}[b]{0.32\textwidth}
        \centering
        \includegraphics[width=\textwidth]{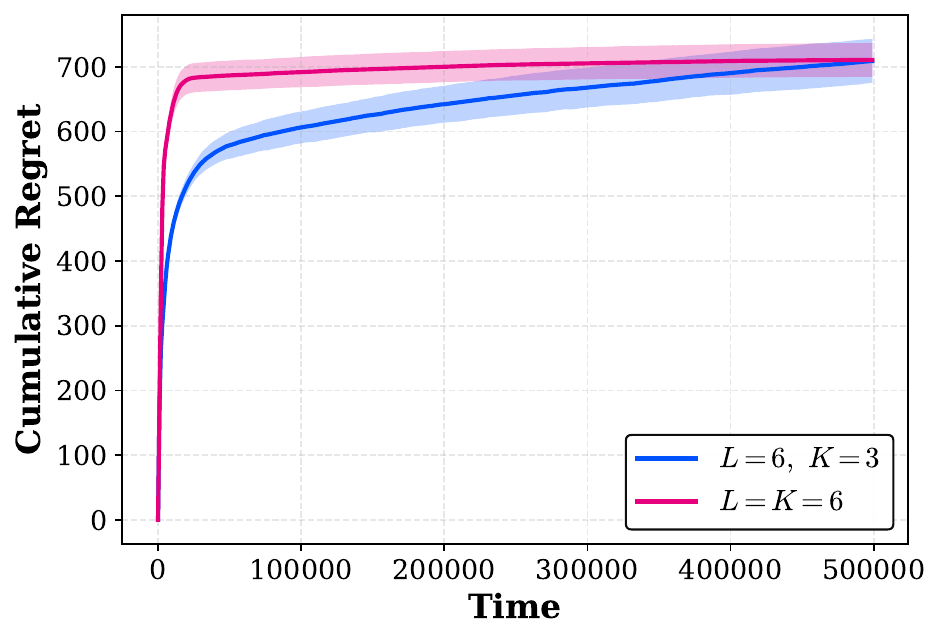}
        \caption{Case 2: \textsc{DP-UCB} ($K < L$ vs. $K = L$)}
        \label{fig:app_case2_scaling}
    \end{subfigure}
    \hfill
    \begin{subfigure}[b]{0.32\textwidth}
        \centering
        \includegraphics[width=\textwidth]{regret_comparison_aistats_commom_epsilon.pdf}
        \caption{Shared-Channel: $\tau = 0, \pi^* = (4, 2, 3)$}
        \label{fig:app_common_eps_tau0}
    \end{subfigure}

   \caption{cumulative regret on MovieLens-1M . \textbf{(a)} Zero-cost regime ($\tau=0, K=3$). \textbf{(b)} \textsc{DP-UCB} regret scaling ($K=3$ vs.\ $K=6$). \textbf{(c)} Shared-channel zero-cost setting ($\tau=0, \boldsymbol{\epsilon}_1=0.3\mathbf{1}_6, \boldsymbol{\epsilon}_0=0.2\mathbf{1}_6$).}
    \label{fig:appendix_excluded_results}
\end{figure*}

\subsection*{Experimental Parameters synthetic }
All experiments are conducted with $L=6$ arms over a horizon of $T=500{,}000$ rounds, averaged across 100 independent trials. Table~\ref{tab:exp_parameters} specifies the exact baseline true reward probabilities $\boldsymbol{\mu} \in (0,1)^L$, false-negative error rates $\boldsymbol{\epsilon}_1 \in [0,0.5)^L$, and false-positive error rates $\boldsymbol{\epsilon}_0 \in [0,0.5)^L$ corresponding to all sub-figures in the main text (Figure~\ref{fig:unified_empirical_results}) and this appendix (Figure~\ref{fig:appendix_excluded_results}).
Parameters are chosen arbitrarily.
\begin{table}[htbp]
\centering
\caption{Parameter configurations across all empirical evaluation regimes ($L=6$).}
\label{tab:exp_parameters}
\small
\begin{tabular}{@{}llcc@{}}
\toprule
\textbf{Regime} & \textbf{Vector} & \textbf{Parameter Values} & \textbf{Figure Reference} \\ \midrule
\textbf{Case 1} & $\boldsymbol{\mu}$ & $[0.5595, 0.9104, 0.2653, 0.9089, 0.3995, 0.4887]^\top$ & Fig.~\ref{fig:unified_empirical_results}(a)--(c) \\
(Score) & $\boldsymbol{\epsilon}_1$ & $[0.3987, 0.4253, 0.3167, 0.1038, 0.1080, 0.4177]^\top$ & \\
 & $\boldsymbol{\epsilon}_0$ & $[0.0000, 0.0000, 0.0000, 0.0000, 0.0000, 0.0000]^\top$ & \\ \midrule
\textbf{Case 2} & $\boldsymbol{\mu}$ & $[0.5595, 0.9104, 0.2653, 0.9089, 0.3995, 0.4887]^\top$ & Fig.~\ref{fig:unified_empirical_results}(d)--(e), \\
(General) & $\boldsymbol{\epsilon}_1$ & $[0.3987, 0.4253, 0.3167, 0.1038, 0.1080, 0.4177]^\top$ & Fig.~\ref{fig:appendix_excluded_results}(a)--(b) \\
 & $\boldsymbol{\epsilon}_0$ & $[0.1266, 0.1573, 0.4879, 0.2698, 0.4306, 0.3222]^\top$ & \\ \midrule
\textbf{Shared-} & $\boldsymbol{\mu}$ & $[0.6000, 0.8000, 0.7000, 0.9000, 0.4000, 0.5000]^\top$ & Fig.~\ref{fig:unified_empirical_results}(f), \\
\textbf{Channel} & $\boldsymbol{\epsilon}_1$ & $0.3 \cdot \mathbf{1}_6 = [0.3, 0.3, 0.3, 0.3, 0.3, 0.3]^\top$ & Fig.~\ref{fig:appendix_excluded_results}(c) \\
 & $\boldsymbol{\epsilon}_0$ & $0.2 \cdot \mathbf{1}_6 = [0.2, 0.2, 0.2, 0.2, 0.2, 0.2]^\top$ & \\ \bottomrule
\end{tabular}
\end{table}

\subsection*{Discussion of Supplementary Results}
Under zero position cost ($\tau=0$), non-zero false positives cause greedy score heuristics to select suboptimal orderings and incur linear regret, whereas \textsc{DP-UCB} recovers the optimal shortlist (Fig.~\ref{fig:app_case2_tau0}). Figure~\ref{fig:app_case2_scaling} isolates \textsc{DP-UCB} for clear visualization, highlighting its sub-linear regret when $K < L$ compared to significantly lower bounded regret when $K = L$. Finally, under uniform channel noise with $\tau=0$, \textsc{Top-K-P(Y=1)-UCB}, \textsc{Top-K-P(X=1|Y=1)-UCB}, and \textsc{DP-UCB} all three outperform the remaining two baselines (Fig.~\ref{fig:app_common_eps_tau0}).

\subsection{Supplementary MovieLens-1M Results \& Parameters}
\label{_app:movielens_details}
\subsection*{Dataset details}
We validate our framework using a recommendation setup derived from the MovieLens-1M benchmark dataset \cite{harper2015movielens}, which contains $1{,}000{,}000$ ratings across $3{,}883$ movies from $6{,}040$ users (on a scale of 1 to 5). Using this dataset, our goal is to recommend the best movie genres to a target user population.

To map each movie to a single category, we select one genre uniformly at random from the set of genres associated with that movie. We model each genre as an arm $i \in \{1, \dots, L\}$, selecting the top $L = 10$ most top rated genres as our candidate arm pool. Ratings within each genre are quantized into binary outcomes $X_i \in \{0, 1\}$ ($X_i = 1$ for ratings $\ge 4$, and $X_i = 0$ otherwise), and the mean 
$\mu_i = \mathbb{P}(X_i = 1)$ for arm $i$ is computed as the empirical average of these quantized ratings across all user interactions in that genre. Synthetic false-negative ($\boldsymbol{\epsilon}_1$) and false-positive ($\boldsymbol{\epsilon}_0$) noise rates are then introduced to model noisy feedback channels for each genre recommendation arm.

\subsection*{Additional experiments and Parameters}
Figure~\ref{fig:app_movielens_k5} presents the supplementary evaluation on the MovieLens-1M dataset corresponding to case 2 (no score) $K=5$.

\begin{figure}[htbp]
    \centering
    \includegraphics[width=0.48\textwidth]{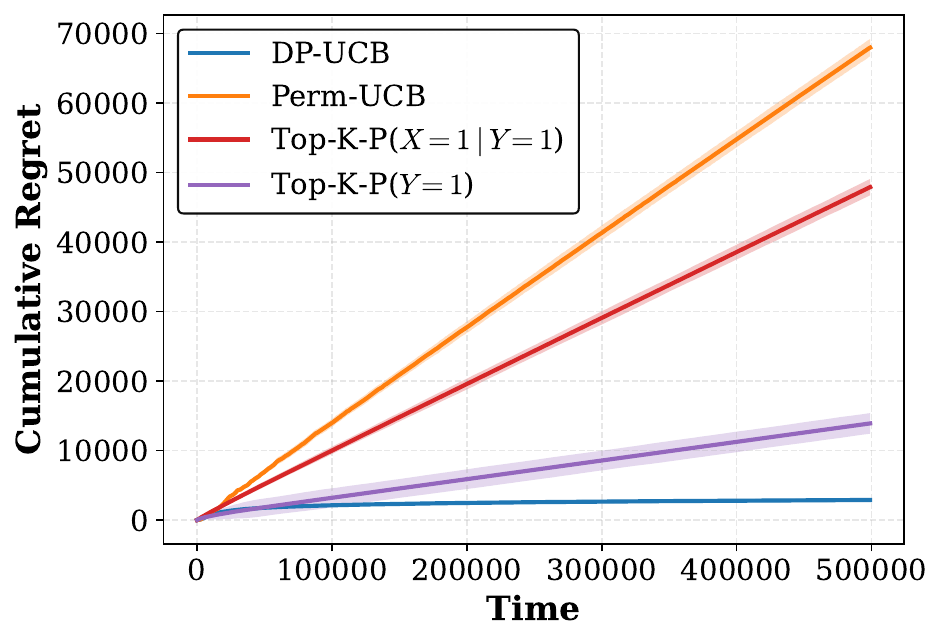}
    \caption{Empirical cumulative regret on MovieLens-1M under a larger cascade slate ($L=10, K=5, \tau=0.15, T=500{,}000$, 100 trials). Heuristic rankings based on $P(Y_i=1)$ or $P(X_i=1 \mid Y_i=1)$ suffer linear regret, whereas \textsc{DP-UCB} scales efficiently and maintains sub-linear regret. }
    \label{fig:app_movielens_k5}
\end{figure}

\paragraph{Discussion of Figure~\ref{fig:app_movielens_k5}.}
When expanding the slate size to $K=5$, \textsc{Corr-UCB} is omitted due to exponential computational complexity over $\binom{10}{5} \times 5!$ shortlists and the overhead of updating pseudo-rewards over the expanded search space. Even within this larger search space, \textsc{DP-UCB} scales efficiently, consistently recovering the optimal cascade order while maintaining sub-linear regret.

\begin{table}[]
\centering
\caption{MovieLens-1M parameter configurations ($L=10$). Note that $\boldsymbol{\epsilon}_0 = \mathbf{0}_{10}$ rest of the parameters being same is used for Figure~\ref{fig:movielens_results_combined}(a) (score-based regime).}
\label{tab:movielens_parameters}
\small
\begin{tabular}{@{}lccc@{}}
\toprule
\textbf{Genre (Arm $i$)} & \textbf{Mean ($\mu_i$)} & \textbf{False Negative ($\epsilon_{1,i}$)} & \textbf{False Positive ($\epsilon_{0,i}$)} \\ \midrule
Film-Noir   & $0.7250$ & $0.1289$ & $0.1780$ \\
Mystery     & $0.6560$ & $0.3979$ & $0.4692$ \\
Crime       & $0.6250$ & $0.4771$ & $0.0202$ \\
Documentary & $0.5930$ & $0.4290$ & $0.0170$ \\
Drama       & $0.5550$ & $0.1072$ & $0.0684$ \\
Musical     & $0.5330$ & $0.1154$ & $0.4110$ \\
War         & $0.5330$ & $0.4594$ & $0.4003$ \\
Adventure   & $0.5130$ & $0.3838$ & $0.4338$ \\
Western     & $0.5100$ & $0.1310$ & $0.0642$ \\
Animation   & $0.5040$ & $0.3960$ & $0.3558$ \\ \bottomrule
\end{tabular}
\end{table}

\paragraph{Parameters.}
The candidate pool consists of $L=10$ top genres selected from MovieLens-1M. Table~\ref{tab:movielens_parameters} details the empirical mean reward vector $\boldsymbol{\mu} \in (0,1)^{10}$, false-negative error rates $\boldsymbol{\epsilon}_1 \in [0,0.5)^{10}$, and false-positive error rates $\boldsymbol{\epsilon}_0 \in [0,0.5)^{10}$ for each arm. The arm reward means $\boldsymbol{\mu}$ reflect empirical performance on the benchmark dataset. In contrast, the noise probabilities $\boldsymbol{\epsilon}_0$ and $\boldsymbol{\epsilon}_1$ were selected arbitrarily to evaluate the algorithm across arbitrary observation channels.

\subsection{LLM experiments supplement}

\label{app:LLM_data}
To demonstrate the practical, real-world application of our proposed setup to automated LLM output routing via model cascading \cite{chen2023frugalgpt}, we evaluate it on a financial headlines sentiment benchmark \cite{sinha2021impact}. The dataset consists of $11{,}412$ human-annotated headlines scraped from financial news outlets and aggregators (2000--2019), labeled with gold-standard sentiment tags (\textsc{Up}, \textsc{Down}, \textsc{Neutral}, \textsc{None}) indicating the implied trajectory of gold prices.

In this setup, the input is a financial news headline, which is sent through a cascade of LLM generators (arms) $i \in \{1, \dots, L\}$ (where $L = 5$). Each arm $i$ is an autoregressive LLM generator—chosen from SmolLM2 \cite{allal2025smollm2}, Qwen2.5 \cite{qwen2025qwen25technicalreport}, Phi-2 \cite{abdin2024phi3}, or StableLM-2 \cite{bellagente2024stablelm}—paired with an LLM judge acting as a noisy observation channel, selected from BERT \cite{devlin2018bert}, RoBERTa \cite{liu2019roberta}, ELECTRA \cite{clark2020electra}, ALBERT \cite{lan2019albert}, or DeBERTa-v3 \cite{he2021debertav3}.

In each round, a financial headline is sampled uniformly at random. When queried, the LLM generator (arm $i$) predicts one of the sentiment labels, which we map to a binary state $X_i \in \{0, 1\}$, where $X_i = 1$ if the predicted label matches the gold-standard target and $0$ otherwise. The paired LLM judge evaluates the prediction and produces a binary confidence signal $Y_i \in \{0, 1\}$. Here, $Y_i = 0$ means the LLM judge indicates that the LLM generator is incorrect or could not satisfy the learner's requirement.

The core idea behind this setup is that observation noise originates directly from judge imperfectness: sometimes the LLM generator genuinely satisfies the learner's requirement ($X_i = 1$), but the LLM judge is inaccurate and rejects it ($Y_i = 0$), whereas other times the LLM generator fails to satisfy the requirement ($X_i = 0$), but the LLM judge misjudges and accepts it ($Y_i = 1$).

The learner can only observe the output of the LLM judge ($Y_i$). If $Y_i = 0$, the request cascades to arm $i + 1$. Whenever the LLM judge outputs $1$ ($Y_i = 1$), the cascading halts at position $i$, revealing $X_i$, and the learner receives a net reward of $(1 - i\tau)X_i$, where $\tau$ represents the per-step cost penalty. 

\paragraph{Parameter Extraction \& Ground Truth Verification.}
The empirical generator accuracies $\boldsymbol{\mu}$, false-negative rates $\boldsymbol{\epsilon}_1$, and false-positive rates $\boldsymbol{\epsilon}_0$ computed as empirical averages over the entire dataset are detailed in Table~\ref{tab:llm_cascade_params}. Using these parameters, the theoretical optimal sequence $\pi^*$ was computed via Eq.~\eqref{eq:reward_rg}. To cross-verify this ground truth empirically, we evaluated every possible candidate shortlist across $100{,}000$ independent trials. In each trial, a headline was sampled uniformly at random from the dataset to run the cascade experiment, and the shortlist yielding the highest empirical average reward was identified as the oracle strategy, matching the theoretical optimum perfectly.
\end{document}